\documentclass[msom,nonblindrev]{workingpaperJJB}

\usepackage{xspace,nicefrac,placeins,booktabs}
\RequirePackage{tgtermes}
\RequirePackage{newtxtext}
\RequirePackage{newtxmath}
\RequirePackage{bm}
\RequirePackage{endnotes}\newcommand{\ma}[1]{\mathbf{#1}}

\OneAndAHalfSpacedXII 

\usepackage{algorithm}
\usepackage{algpseudocode}
\usepackage{tikz}
\newcommand{\R}{\mathbb{R}}
\newcommand{\ve}[1]{\boldsymbol{#1}}
\newcommand{\mac}[1]{\mathcal{#1}}
\newcommand{\pr}{\mathbf{P}\xspace}
\newcommand{\ex}{\mathbf{E}\xspace}

\DeclareMathOperator{\trace}{trace}

\usepackage{natbib}
 \bibpunct[, ]{(}{)}{,}{a}{}{,}%
 \def\bibfont{\small}%
\let\textcite\citet

\EquationsNumberedThrough    

\TheoremsNumberedThrough     
\ECRepeatTheorems  %

\MANUSCRIPTNO{MSOM-0001-2024.00}

\begin{document}


\RUNAUTHOR{}

\RUNTITLE{Optimizing Digital Therapeutic Interventions}

\TITLE{Optimizing Digital Therapeutic Interventions: Online Learning under Endogenous Adherence}

\ARTICLEAUTHORS{%
\AUTHOR{Eric Pulick}
\AFF{Department of Industrial and Systems Engineering,
University of Wisconsin-Madison, \EMAIL{pulick@wisc.edu}}

\AUTHOR{Stephanie Carpenter}
\AFF{College of Health Solutions,
Arizona State University, \EMAIL{stephanie.m.carpenter@asu.edu}}

\AUTHOR{Matthew Buman}
\AFF{College of Health Solutions,
Arizona State University, \EMAIL{matthew.buman@asu.edu}}

\AUTHOR{Yonatan Mintz}
\AFF{Department of Industrial and Systems Engineering,
University of Wisconsin-Madison, \EMAIL{ymintz@wisc.edu}}
} 

\ABSTRACT{
\textbf{Problem definition:} A critical challenge facing clinicians managing chronic disease interventions is sustaining long-run patient health given limited information and resources. Digital therapeutics (DTs) provide a cost-effective way to manage interventions at scale through repeated interactions (e.g. daily treatment recommendations), but patient success is highly dependent on their adherence. Behavioral psychology suggests that both treatment recommendations and past adherence affect future adherence, yet existing decision support frameworks for DTs model only recommendation effects or treat adherence as exogenous context, leaving a key gap in model and algorithm development. \textbf{Methodology/results:} To address this gap, we present a DT decision support framework that captures both recommendation and adherence effects, allowing clinicians to better plan treatment recommendations. We model a patient's time-varying capacity for engagement with treatment using a linear dynamical system (LDS) that captures both recommendation and adherence effects, endogenously connected to adherence behavior with a logit link. We establish finite-time identification guarantees for this model, extending LDS results to our setting. Next, we propose an optimism-based algorithm, \textsc{UCB-BOLD}, for online treatment selection and prove that it achieves sublinear regret. We evaluate \textsc{UCB-BOLD} against benchmarks via ablation studies on a synthetic patient cohort generated using micro-randomized trial data collected to develop a workplace sedentary behavior intervention. \textbf{Managerial implications:} DT decision support tools can include dynamical models to enable decision makers to efficiently use the data in DT settings to improve patient health through effective resource allocation. While myopic or heuristic approaches suffice for some patient types, the benefits of explicitly planning around recommendation and adherence effects are significant for others; \textsc{UCB-BOLD} achieves 2-3x lower conditional value-at-risk regret than the next-best benchmark.}



\maketitle

\section{Introduction}
Chronic disorders, including metabolic disease and addiction, affect millions of Americans (34.7\% and 16.8\% of adults, respectively) \citep{palaniappan20262026,SAMHSA2025nsduh}. A cost-effective approach to manage treatment of these disorders is through the use of digital therapeutics (DTs). DTs provide individualized behavioral interventions to address a wide range of conditions, such as lack of physical activity (PA) \citep{springAdaptiveBehavioralIntervention2024}, 
 mental illness \citep{ben-zeevMobileHealthMHealth2018}, substance use disorders (SUDs) \citep{gustafsonSmartphoneApplicationSupport2014d}, and diabetes \citep{sepahEngagementOutcomesDigital2017}. DTs interact repeatedly with patients but depend on patient adherence to be effective. This creates a sequential decision-making (SDM) problem for the DT provider; at each decision point, the provider gathers relevant patient data and chooses which (if any) treatment to recommend, while the patient chooses whether to adhere to the given recommendation. Both the DT's recommendation and the patient's adherence decision can affect the patient's future adherence behavior. For instance, a recommendation may impose cognitive burden regardless of adherence, and adherence itself can be either fatiguing or habit-forming. Further, these effects may vary by individual or treatment, and must be learned online. Existing DT frameworks model only recommendation effects or treat adherence as exogenous context. Thus there is a critical need for frameworks that model both adherence and recommendation effects to help DTs better support long-term patient health. 

In this paper, we present an SDM framework for DT decision support. We model a patient's capacity for engaging with treatment as a linear dynamical system (LDS) whose dynamics depend on previous capacity, treatment recommendations, and patient adherence, with capacity connected endogenously to adherence using a logit link. This framework can capture both beneficial (e.g., habit-forming) and detrimental (e.g., burden or fatigue) effects. The model is deliberately parsimonious, designed to capture the relevant recommendation and adherence phenomena while remaining interpretable for clinicians and tractable for optimization. We present a statistical analysis of this model and demonstrate finite-time identification guarantees. We also propose an online optimization algorithm for DT treatment recommendations, \textsc{UCB-BOLD}, and prove that it achieves sublinear regret in this setting. We conclude with a case study on a synthetic patient cohort, calibrated using data from a sedentary behavior micro-randomized trial (MRT), where we show that \textsc{UCB-BOLD} outperforms benchmark algorithms. This framework is aimed particularly at long-running behavioral interventions, where sustaining patient engagement over time is especially relevant.

\subsection{Problem Context: Digital Therapeutics and Adaptive Interventions}
Sustained patient engagement is an important determinant of long-term patient outcomes in conditions targeted by DT interventions. Within SUD, for instance, patient engagement with months-to-years of continuing care (i.e., lower intensity, but sustained monitoring and support) is an important predictor of durable patient recovery \citep{mclellanCanSubstanceUse2013,proctorContinuingCareModel2014}. Similarly, long-term patient engagement is needed to achieve the health benefits associated with increased PA or weight management \citep{jakicicPhysicalActivityPrevention2019}, reflected in the length of recent studies like the $12$-month HeartSteps II trial \citep{spruijt-metzAdvancingBehavioralIntervention2022}. Across these settings, however, patient engagement has been observed to decrease over time \citep{baumelObjectiveUserEngagement2019}, with diminished intervention effects following repeated prompt delivery \citep{klasnjaEfficacyContextuallyTailored2019}. This motivates DT approaches that better manage patient engagement to improve long-run patient outcomes. 

The just-in-time-adaptive-intervention (JITAI) and mHealth literature has increasingly characterized engagement as a time-varying rather than static construct \citep{perskiConceptualisingEngagementDigital2017a,nahum-shaniEngagementDigitalInterventions2022a,nahum-shaniScienceEngagementDigital2024b}. Theoretical frameworks, such as the Cumulative Complexity model \citep{shippeeCumulativeComplexityFunctional2012a} and its extension to the digital health setting \citep{crossDigitalCumulativeComplexity2024}, propose viewing patient adherence and engagement behavior as functions of an underlying, time-varying patient capacity. This capacity is affected by mechanisms like burden, fatigue, and habit formation that the JITAI literature has highlighted as important considerations for future intervention design \citep{nahum-shaniJustinTimeAdaptiveInterventions2026}. These conceptual frameworks align with a broader shift in the psychology literature toward dynamical system models as tools to understand and approximate individual behavior \citep{brigantiNetworkAnalysisOverview2024a,perskiIterativeDevelopmentRefinement2025}, further motivating the integration of explicit generative models into DT decision support tools. 

To this point, though, few personalized JITAI (pJITAI) algorithms have incorporated formal dynamics models in their construction \citep{perskiIterativeDevelopmentRefinement2025} as a way to capture both recommendation and adherence effects. Prior work has made significant methodological and deployment contributions (see Section~\ref{sec:lit_review}), but no prior approach combines a parametric dynamical model for mHealth patient engagement with system identification and algorithmic regret guarantees as we do here.

\subsection{Contribution}\label{sec:contributions}
We develop a decision support framework for DT providers with four specific contributions:
\begin{enumerate} 
\item We propose an LDS with stochastic, state-dependent adherence to model a patient's time-varying engagement capacity. This framework enables digital therapeutics to plan based on both recommendation and adherence effects. It also frames time-varying patient behavior in a clinically interpretable way, aligned with trends toward dynamic patient models in the psychology literature.
\item We establish finite-time system identification guarantees under a structured policy class that permits arbitrary policy behavior (possibly adaptive or adversarial) outside of exploration steps. We show that dynamics and adherence parameters are identifiable at the typical $\tilde {\mac O}(\nicefrac{1}{\sqrt{T}})$ rate, extending LDS guarantees to a setting with $1$-sparse inputs and state-dependent adherence.
\item  We develop an optimism-based algorithm, \textsc{UCB-BOLD}, to optimize DT treatment recommendations. We demonstrate that \textsc{UCB-BOLD} achieves $\tilde{\mac O}(\sqrt{T})$ regret in this context, making it competitive with other reinforcement learning (RL) algorithms.
\item We fit a Bayesian hierarchical model with data from a sedentary behavior MRT. We use this model to generate synthetic patients and evaluate \textsc{UCB-BOLD} against heuristic, myopic, and model-free RL benchmark algorithms on simulated trajectories in ablation studies. We find that \textsc{UCB-BOLD} achieves 2-3x lower conditional value-at-risk regret than the next best benchmark.
\end{enumerate}

\section{Literature review}\label{sec:lit_review}
Our work draws from four main streams of literature: SDM in healthcare operations, JITAIs, mHealth-motivated bandits, and model-based control.
\subsection{SDM in healthcare operations}
There is a rich body of work exploring patient-specific SDM in healthcare operations. For example, Markov decision process (MDP) and partially-observed Markov decision process (POMDP) models have been used to analyze organ transplantation \citep{alagozOptimalTimingLivingDonor2004}, treatment initiation \citep{shechterOptimalTimeInitiate2008}, chronic disease management \citep{masonOptimizingStatinTreatment2012,helmDynamicForecastingControl2015}, and disease screening \citep{ayerForumPOMDPApproach2012}. Recent work has emphasized perspectives from machine learning \citep{bertsimasPredictivePrescriptiveAnalytics2020a}, focusing on data-driven guarantees for personalized healthcare settings \citep{bastaniOnlineDecisionMaking2020b}. Our work contributes to this stream of literature, specifically focused on the setting of online learning of individual behavior models \citep{aswaniBehavioralModelingWeight2019b}. In contrast to these works, we focus on explicit modeling of patient engagement and its impact on policy decisions.

\subsection{Just-in-time-adaptive-interventions}
JITAIs are principled frameworks for delivering personalized behavioral interventions \citep{nahum-shaniJustinTimeAdaptiveInterventions2018a}, often optimized with data from MRTs \citep{klasnjaMicrorandomizedTrialsExperimental2015}. JITAIs have been deployed effectively at scale, for instance in the HeartSteps program for physical activity \citep{klasnjaEfficacyContextuallyTailored2019,spruijt-metzAdvancingBehavioralIntervention2022} and the Oralytics program for oral care \citep{trellaDeployedOnlineReinforcement2025a}. Our work complements this literature, particularly for long-running interventions such as SUD continuing care \citep{proctorContinuingCareModel2014}.

\subsection{mHealth contextual and non-stationary bandits}
Recently deployed pJITAIs use Thompson sampling contextual bandits that handle temporal dynamics through hand-crafted features, such as recent treatment dosage \citep{liaoPersonalizedHeartStepsReinforcement2020a}. Extensions include partial pooling of information across patients to better leverage scarce samples \citep{tomkinsIntelligentPoolingPracticalThompson2021,abbottRoMERobustMixedEffects2024}. Importantly, these methods do not have explicit dynamics models, instead treating context as exogenous and tuning rewards to account for time-varying processes. Non-stationary and restless bandit approaches are an alternative means of capturing time-varying behavior in the bandit setting \citep{whittleRestlessBanditsActivity1988}, with extensions exploring various structural assumptions \citep{garivier2008upper,besbes2014stochastic}. mHealth-motivated non-stationary bandit work has emerged from both model-free \citep{baekPolicyOptimizationPersonalized2025} and model-based \citep{mintzNonstationaryBanditsHabituation2020b} perspectives. \citet{mintzNonstationaryBanditsHabituation2020b} and extensions \citep{heNonstationaryBanditsHabituation2024a,liPowerConstrainedNonstationary2025} are closest to our work, modeling the case where each arm has action-dependent habituation and recovery dynamics. Our approach differs in modeling how both controller actions \textit{and} endogenous patient adherence decisions affect future engagement. We also model a common, time-varying engagement state rather than per-arm states. Methodologically, our  algorithm relies on optimism \citep{laiAsymptoticallyEfficientAdaptive1985,auerNearoptimalRegretBounds2008a}, adapting upper confidence bound (UCB) techniques deployed across linear \citep{abbasi-yadkoriImprovedAlgorithmsLinear2011} and logistic \citep{fauryImprovedOptimisticAlgorithms2020} bandit settings. Our proposed algorithm adapts the structure of LSVI-UCB in \textcite{jinProvablyEfficientReinforcement2023}, constructed for episodic linear MDPs, to the discounted, infinite horizon setting with hybrid linear-logistic system dynamics.

\subsection{Model-based control approaches}
A related research area applies control-system tools such as system identification and model predictive control (MPC) to behavioral interventions \citep{riveraUsingEngineeringControl2007,heklerAdvancingModelsTheories2016,elmistiriModelPredictiveControl2025a}. Closely related to our modeling focus on mHealth engagement are \citet{elmistiriSystemIdentificationUser2024} and \citet{delatorreDynamicBayesianNetwork2025}, which model time-varying engagement states with dynamics motivated by continuous time fluid processes. This line of research shares our model-based approach, but typically evaluates performance empirically in simulation rather than with formal regret analysis. Closer to our work is a line of model-based RL mHealth literature targeting provable guarantees, such as \citet{heModelBasedReinforcement2023a} for heparin dosing, \citet{liAdaptiveOptimizationApproach2026} for weight-loss, and \citet{pulickAdaptiveControlApproach2025b} for substance use disorders. Similar to our work, these fit patient-specific dynamics models, but their dynamics are deterministic with noise affecting an observation model, while in our formulation process noise affects the state dynamics directly. 

Our system identification result
follows recent work establishing finite-time identification and regret guarantees for specific control model classes \citep{tsiamisStatisticalLearningTheory2023}. We build on the block-martingale small ball machinery for finite-time LDS analysis from \citet{simchowitzLearningMixingSharp2018}, which has been extended to settings like the linear quadratic regulator \citep{deanSampleComplexityLinear2020} and LDS with non-linear policies \citep{liNonasymptoticSystemIdentification2023}. We extend this core machinery to our LDS setting with $1$-sparse inputs and state-dependent adherence under a flexible exploratory policy class.
        
\section{Engagement state dynamics and adherence model}\label{sec:model}
Our motivating setting is a DT where the decision-maker selects a daily treatment recommendation to support long-term patient health. Treatment recommendations themselves can impact a patient's future adherence (e.g., recommendation burden). Similarly, adherence itself can affect future patient adherence negatively (e.g., fatigue) or positively (e.g., increased self-efficacy). Our model captures these effects with a scalar, time-varying construct: the patient's engagement state, representing their capacity for adherence. We model patient adherence as stochastic decisions depending endogenously on the engagement state, with higher values yielding higher adherence probability. The model can express empirically observed behavior, such as burden or fatigue reducing patient adherence, via the patient's time-varying engagement state. This structure re-frames the decision-maker's problem from myopic decision-making (i.e., selecting actions with the largest immediate reward) to long-term planning around relevant engagement state dynamics (i.e., accounting for recommendation- and adherence-driven effects on a patient's future engagement state). A planning algorithm can strategically ask more or less of a patient over time to maintain engagement and accrue greater long-run rewards than myopic approaches. To reflect patient heterogeneity, each patient has an unknown set of dynamics and adherence parameters; the DT decision-maker does not know these parameters \emph{a priori} and must learn them online through repeated interaction with the patient.
\subsection{Model definition}
We model the patient's time-varying engagement state and adherence behavior as follows: 
\begin{align}
    x_{t+1} &= ax_t+b^\top u_t +c^\top d_t+w_t,\label{eq:dynamics_adh}
        &&d^i_t|x_t,u^i_t\sim\text{Ber}\bigl(\hat p_i(x_t,u^i_t)\bigr),\quad \hat p_i(x,u^i) = u^i \sigma(x+\mu_i)
\end{align}
Let there be $M$ available treatments and at most one treatment can be recommended per day, indexed by $t=1,\dots,T$. The patient's engagement state is given by $x_t\in \R$. $u_t \in \mac U:=\{v\in \{0,1\}^M:\Vert v\Vert_0\leq 1\}$ is the $1$-sparse treatment recommendation, where $u^i_t$ is $1$ if recommending treatment $i$ and $0$ otherwise. $|\mac U|=M+1$ as the decision maker may choose the null action $u_t=\ve{0}_M$. $d_t \in \mac U$ is the patient's $1$-sparse adherence decision. $d_t^i$ is $1$ if the patient adheres to the recommendation and $0$ otherwise. $\sigma$ is the standard sigmoid. The initial state, $x_1$, has the same distribution as the noise $w$.
\begin{assumption}{\textbf{(Full observation)}}\label{asm:observed_quantities}
$x_t$, $u_t$, and $d_t$ are fully observed with no missingness.
\end{assumption}
This assumption comes from our DT setting, where it is common to gather self-reported quantities from patients through daily check-ins, called ecological momentary assessments (EMAs) \citep{shiffmanEcologicalMomentaryAssessment2008a}. Further, assuming treatments are hosted on the DT itself, the platform can reasonably monitor the patient's adherence decisions in response to the recommended treatment. 

Parameter $a\in \Theta_a :=[0,\bar a]$ captures the state's persistence over time for known $\bar a \in (0,1)$. $b \in \Theta_b :=\{v \in \R^M:\Vert v \Vert_\infty \leq  \bar b\}$ captures the effect of recommendation on the next state, for known $\bar b \in \R_+$. $c \in \Theta_c:=\{v \in \R^M: \Vert v \Vert_\infty \leq \bar c\}$ captures the effect of adherence on the next state, for known $\bar c \in \R_+$. $\mu\in \Phi:=[-\bar \mu, \bar \mu]^M$ contains the treatment-specific shift parameters in the adherence model for known $\bar \mu \in \R_+$. Collectively, $\theta = [a \ b^\top \ c^\top]^\top$ and $\mu = [\mu_1,\dots,\mu_M]^\top$ are fixed but unknown and must be learned online. Last, we assume the following properties for the process noise $w$:

\begin{assumption}{\textbf{(Noise)}}\label{asm:noise_properties}
$\{w_t\}_{t=1}^T$ are independent, identically distributed, zero-symmetric, $\sigma_s^2$-subgaussian, bounded as $\vert w_t\vert \leq \bar w$ $(\bar w >0)$, have a log-concave density, and known variance $\sigma_w^2>0$.
\end{assumption}
These assumptions are relatively mild and are satisfied by zero-mean, symmetrically truncated Gaussian or uniform noise. Similar analysis can be used without assuming boundedness at the expense of heavier technical machinery. Last, we note two important properties for our system. The construction of the system ensures $x_t$ is bounded, and this boundedness sets upper and lower bounds for patient adherence. Both properties play an important role in our downstream analysis.
\begin{proposition}{\textbf{(Bounded state)}}\label{prop:absolute_bounded_x}
Under Assumptions~\ref{asm:observed_quantities}-\ref{asm:noise_properties}, $x_t \in \mac X:=[-C_x, C_x]$ for $t\geq 1$, with $C_x:=\frac{\bar b +\bar c +\bar w}{1-\bar a}$. Further, $\Vert [x_t \ u_t^\top \ d_t^\top]^\top\Vert^2_2 \leq C_x^2+2$.
\end{proposition}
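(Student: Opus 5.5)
The plan is to prove the state bound by induction on $t$, using the dynamics in \eqref{eq:dynamics_adh} together with the structural constraints on the parameters and inputs. Then I will obtain the bound on the stacked vector from the $1$-sparsity and binary nature of $u_t$ and $d_t$.

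\textbf{Base case.} By construction, $x_1$ has the same distribution as $w$, so Assumption~\ref{asm:noise_properties} gives $|x_1|\le \bar w$. Since $\bar a\in(0,1)$ and $\bar b,\bar c\ge 0$, we have $\bar w \le \bar b+\bar c+\bar w \le \frac{\bar b+\bar c+\bar w}{1-\bar a}=C_x$, so $x_1\in\mac X$.

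\textbf{Inductive step.} Suppose $|x_t|\le C_x$. I bound each term of $x_{t+1}=ax_t+b^\top u_t+c^\top d_t+w_t$ separately.
\begin{itemize}
\item Since $a\in[0,\bar a]$, we get $|ax_t|\le \bar a C_x$.
\item Since $u_t\in\mac U$ has at most one nonzero entry, equal to $1$, Hölder's inequality gives $|b^\top u_t|\le \Vert b\Vert_\infty\Vert u_t\Vert_1\le \bar b$.
\item The adherence decision $d_t$ also lies in $\mac U$: indeed, $d_t^i=1$ requires $u_t^i=1$ because $\hat p_i=u^i\sigma(\cdot)$. Hence $|c^\top d_t|\le \bar c$ in the same way.
\item Finally, $|w_t|\le \bar w$.
\end{itemize}
Summing these bounds,
\begin{align*}
|x_{t+1}|\le \bar a C_x+\bar b+\bar c+\bar w = \bar a C_x+(1-\bar a)C_x = C_x,
\end{align*}
where the middle equality is the defining identity $(1-\bar a)C_x=\bar b+\bar c+\bar w$. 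This closes the induction. The argument is deterministic and holds for every realization, since it uses only the almost-sure noise bound.

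\textbf{Norm bound.} Because $u_t,d_t\in\{0,1\}^M$ with $\Vert\cdot\Vert_0\le 1$, we have $\Vert u_t\Vert_2^2\le 1$ and $\Vert d_t\Vert_2^2\le 1$. Therefore
\begin{align*}
\Vert [x_t\ u_t^\top\ d_t^\top]^\top\Vert_2^2=x_t^2+\Vert u_t\Vert_2^2+\Vert d_t\Vert_2^2\le C_x^2+2.
\end{align*}

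The only step needing care is confirming that $d_t$ is $1$-sparse and supported on the support of $u_t$, so that $|c^\top d_t|\le\bar c$. This follows directly from the adherence model, since a zero recommendation gives zero adherence probability. Everything else is routine.
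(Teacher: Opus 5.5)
Your proposal is correct and follows essentially the same argument as the paper: induction on $t$ with the base case $|x_1|\le\bar w\le C_x$, a triangle-inequality step giving $|x_{t+1}|\le \bar a C_x+\bar b+\bar c+\bar w=C_x$, and the feature-norm bound from $1$-sparsity of $u_t$ and $d_t$. Your extra justification that $d_t$ is supported on the support of $u_t$ is not needed, since the paper takes $d_t\in\mac U$ as part of the model definition, but it is consistent with the adherence model.
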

The proof (Appendix~\ref{app:gen_model}) follows from strictly stable dynamics and bounded inputs. 
\begin{remark}{\textbf{(Bounded probabilities)}}\label{rem:adh_prob_bounded}
By Proposition~\ref{prop:absolute_bounded_x} and $\mu\in[-\bar \mu,\bar \mu]^M$, the probability of adherence (when $u^i=1$) is bounded below by $\underline p := \sigma(-C_x-\bar \mu)$ and above by $\overline p:= \sigma(C_x+\bar \mu)$.
\end{remark}

\section{System identification}\label{sec:identification}
An important precondition for data-driven DT algorithm design is that the patient model can be learned from observed trajectory data. We characterize high-probability guarantees for learning $\theta$ and $\mu$ that hold for finite samples, bounding the rate at which estimation error decays with respect to trajectory length, $T$, the number of treatments, $M$, and other system-specific parameters. Extending these types of finite-time guarantees to a variety of model classes has become an active focus of the system identification literature \citep{tsiamisStatisticalLearningTheory2023}. Our model departs from existing LDS work due to the presence of $1$-sparse inputs and state-dependent adherence decisions. Our identification result, given as Theorem~\ref{thm:system_sample_complexity}, demonstrates that we can recover the standard $\tilde{\mac O}(\nicefrac{1}{\sqrt{T}})$ learning rate for LDS systems under a flexible exploratory policy class that we define. This complements the Section~\ref{sec:policy_optimization} analysis of the separate \textsc{UCB-BOLD} algorithm, together establishing both identification guarantees for the patient model, and algorithmic regret guarantees for the proposed SDM setting.

\subsection{Estimators and exploratory policy class}
We estimate the dynamics parameters, $\theta$, with ordinary least squares (OLS), regressing the next state $x_{t+1}$ on the feature vector $z_t:=[x_t \ u_t^\top \ d_t^\top]^\top$: $\hat \theta_T := \arg \min_{\theta \in \R^{2M+1}} \nicefrac{1}{2}\sum_{t=1}^T (x_{t+1} - \theta^\top z_t)^2$. We estimate the adherence parameters, $\mu_i$, by maximum likelihood estimation (MLE): $\hat\mu_{i,T}:= \arg \max_{\mu_i \in [-\bar \mu, \bar \mu]}\sum_{t=1}^T \mathbf{1}_{\{u_t=e_i\}}\bigl[d^i_t\log(\sigma(x_t+\mu_i))+(1-d^i_t)\log(1-\sigma(x_t+\mu_i))\bigr]$. We assume that the decision-maker acts per an \textit{$(r,k)$-exploratory policy}. This policy class adapts continuous white noise excitation commonly used in system identification to our discrete action set.
\begin{definition}{\textbf{$(r,k)$-Exploratory Policy}.}
Let $r\in(0,1]$ and $k \in \mathbb N$. A policy is an $(r,k)$-exploratory policy if for every $k$-block in a trajectory (i.e., $t \in \{j+1,\dots,j+k\}$ for any $j\geq0$), the policy almost surely takes at least $\lceil r \cdot k\rceil$ actions drawn uniformly from $\mac U$ independent of $\mac F_{t-1}$.
\end{definition}

This definition permits the decision-maker to choose which steps are exploratory based on history (including the current state $x_t$), so long as the block-level exploration condition is satisfied. The limiting cases of $r=1$ or $k=1$ imply a purely random policy, but this framework can flexibly accommodate a randomized trial where the exploration level is fixed (i.e., to roughly $r$) and a separate policy is used in non-exploratory time steps. Further, our identification result does not rely on the policy for non-exploratory actions, which can be adversarial or non-stationary.

\subsection{Identification strategy}
We refer to the patient's true parameters as $\theta^*$ and $\mu^*$. Our identification result combines separate analysis of the OLS and MLE estimators. The OLS-portion of our system identification result builds on a result by \textcite{simchowitzLearningMixingSharp2018}, restated as Theorem~\ref{thm:simch_sample_complexity}. The analysis of the MLE estimators for the sigmoid shifts relies on standard concentration arguments for generalized linear models (GLM) \citep{filippiParametricBanditsGeneralized2010} (see Appendix~\ref{app:GLM_identification}). 
\begin{theorem}{(\textcite{simchowitzLearningMixingSharp2018} Theorem 2.4)}\label{thm:simch_sample_complexity}
    Fix $\delta\in(0,1)$, $T \in \mathbb N$, and $0\prec \Gamma_{sb} \preceq \bar \Gamma$. Consider a random process $(Z_t,Y_t)_{t\geq 1} \in (\R^d\times \R^n)$ and a filtration $\{\mathcal F_t\}_{t\geq 1}$. Suppose the following conditions hold: (a) $Y_t=\ma{\theta}^*Z_t + \eta_t$, where $\eta_t|\mac F_{t}$ is $\sigma^2$-sub-Gaussian and mean zero, (b) $(Z_t)_{t\geq 1}$ is an $\{\mathcal F_t\}_{t\geq 1}$-adapted random process satisfying the $(k,\Gamma_{sb},p)$-small ball condition, and (c) $\pr[\sum_{t=1}^TZ_tZ_t^\top \not \preceq T \bar \Gamma] \leq \delta$.
    Then if $T\geq \nicefrac{10k}{p^2}(\log(\nicefrac{1}{\delta})+2d\log(\nicefrac{10}{p})+\log \det(\bar \Gamma \Gamma_{sb}^{-1}) )$,
    we have $\pr[\Vert \hat{\ma{\theta}}(T) - \ma{\theta}^* \Vert_{\text{op}}> 
    \frac{90\sigma}{p} \sqrt{\tfrac{n+d\log(\nicefrac{10}{p})+\log\det\bar\Gamma\Gamma_{sb}^{-1}+\log(\nicefrac{1}{\delta})}{T\lambda_{\text{min}}(\Gamma_{sb})}}] \leq 3\delta$. 
\end{theorem}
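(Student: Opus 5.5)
This is the block-martingale small-ball argument of Simchowitz et al.; I would reproduce it in four steps.

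\textbf{Step 1 (error decomposition).} Write the OLS error as $\hat{\ma\theta}(T)-\ma\theta^* = \bigl(\sum_t \eta_t Z_t^\top\bigr)\bigl(\sum_t Z_tZ_t^\top\bigr)^{-1}$. Then $\Vert\hat{\ma\theta}(T)-\ma\theta^*\Vert_{\text{op}}$ is at most a self-normalized martingale term $\Vert(\sum_t Z_tZ_t^\top)^{-1/2}\sum_t Z_t\eta_t^\top\Vert_{\text{op}}$ times $\lambda_{\min}(\sum_t Z_tZ_t^\top)^{-1/2}$. The proof reduces to two facts: a lower bound on $\lambda_{\min}$ of the empirical covariance, and an upper bound on the self-normalized term.

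\textbf{Step 2 (lower bound via small ball).} For a fixed unit vector $v$, split $[T]$ into $\lfloor T/k\rfloor$ blocks of length $k$. The $(k,\Gamma_{sb},p)$-small ball condition says that, conditionally on the start of each block, the block-averaged event $\frac1k\sum_{s\le k}\mathbf 1\{\langle v,Z_{j+s}\rangle^2\ge v^\top\Gamma_{sb}v\}$ has conditional mean at least $p$.
\begin{itemize}
\item Applying Paley--Zygmund and Azuma--Hoeffding across blocks gives $\sum_t\langle v,Z_t\rangle^2\gtrsim p^2 T\, v^\top\Gamma_{sb}v$ with probability $1-e^{-cTp^2/k}$.
\item To pass to all $v$ simultaneously, take a $\Gamma_{sb}$-normalized net on the ellipsoid. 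Its size is $(10/p)^{d}\det(\bar\Gamma\Gamma_{sb}^{-1})^{1/2}$, and the upper bound $\sum Z_tZ_t^\top\preceq T\bar\Gamma$ from (c) controls the discretization error.
\item The union bound over the net produces exactly the burn-in requirement on $T$, yielding $\sum_t Z_tZ_t^\top\succeq \frac{p^2T}{c}\Gamma_{sb}$ with probability $1-\delta$.
\end{itemize}

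\textbf{Step 3 (upper bound on the martingale term).} Apply the Abbasi-Yadkori self-normalized inequality with regularizer $V=Tp^2\Gamma_{sb}/c$, componentwise over a net of the $n$ output directions, which accounts for the $n$ term. On the event from Step 2, $\sum Z_tZ_t^\top+V\preceq 2\sum Z_tZ_t^\top$. The log-determinant ratio is bounded using (c) by $d\log(10/p)+\log\det(\bar\Gamma\Gamma_{sb}^{-1})$.

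\textbf{Step 4 (combine).} Multiply the two bounds; this gives the $\frac{90\sigma}{p}\sqrt{\cdot/(T\lambda_{\min}(\Gamma_{sb}))}$ rate. The union of the three failure events (small ball, event (c), self-normalized) gives probability $3\delta$.

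The main obstacle is Step 2, specifically getting the covering argument to mesh with the block-martingale concentration. The net must be taken in the geometry induced by $\Gamma_{sb}$ and $\bar\Gamma$ so that the discretization error is absorbed. Only on the high-probability event from (c) can the fine net be made to cost only $\log\det(\bar\Gamma\Gamma_{sb}^{-1})$ plus $d\log(10/p)$. I would first try to follow the original paper's approach of halving the constant, i.e.\ accepting a factor $1/2$ loss in $p$.
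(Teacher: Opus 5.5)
The paper does not prove this statement: it imports it verbatim as Theorem~2.4 of \textcite{simchowitzLearningMixingSharp2018} and only verifies preconditions (a)--(c) for its own system, through Propositions~\ref{prop:bmsb_formal} and~\ref{prop:ub-cond-formal} in the proof of Theorem~\ref{thm:system_sample_complexity}. Your outline is essentially the original Simchowitz et al.\ argument: the OLS error split into a self-normalized term times $\lambda_{\min}(\sum_t Z_tZ_t^\top)^{-1/2}$, a per-direction block small-ball lower bound extended to all directions by a net in the $\Gamma_{sb}$/$\bar\Gamma$ geometry on event (c), an Abbasi-Yadkori bound regularized at the lower-bound level plus a net over the $n$ output directions, and a union over three failure events. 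Only constant bookkeeping is left unchecked; for example, the log-det ratio actually gives $d\log(1+c/p^2)$ rather than $d\log(10/p)$, and the constant $90$ absorbs the difference.
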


This result provides flexible machinery for finite-time linear system analysis. Intuitively, precondition (b) ensures that the system is sufficiently excited while precondition (c) ensures the empirical covariance is bounded such that self-normalized martingale arguments can be used. Precondition (b) relies on the following definition from \textcite{simchowitzLearningMixingSharp2018}:
\begin{definition}{\textbf{Block Martingale Small Ball (BMSB)}.}\label{def:bmsb}
    Let $(Z_t)_{t\geq 1}$ be an $\{\mac F_t\}_{t\geq 1}$-adapted random process taking values in $\R^d$. $(Z_t)_{t\geq 1}$ satisfies the $(k,\Gamma_{sb},p)$-BMSB condition for a positive integer $k$, a positive definite matrix $\Gamma_{sb}$, and $0\leq p \leq 1$, if for any fixed $\lambda \in \R^d$ with $\Vert \lambda \Vert_2=1$, the process $(Z_t)_{t\geq 1}$ satisfies $\frac{1}{k}\sum_{i=1}^k\pr (\vert \lambda^\top Z_{j+i}\vert \geq \sqrt{\lambda^\top \Gamma_{sb}\lambda} \vert \mac F_j)\geq p$ almost surely for any $j\geq 0$.
\end{definition}

Our primary contribution in this section is adapting the BMSB machinery to our system. We provide our system-specific preconditions (b) and (c) here.
\begin{proposition}{\textbf{Informal precondition (b)}.}\label{prop:bmsb}
    Let Assumptions~\ref{asm:observed_quantities}-\ref{asm:noise_properties} hold and actions come from an $(r,k)$-exploratory policy. Then for an appropriate $\tau \in (0,1)$ (see Lemma~\ref{lem:tau} in Appendix~\ref{app:ols_identification}), our system satisfies the $(k,\frac{1-\tau^2}{8M} I_{2M+1},p)$ BMSB condition for $p$ of order $\frac{r}{M}$.
\end{proposition}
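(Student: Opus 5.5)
Fix a unit vector $\lambda=[\lambda_x\ \lambda_u^\top\ \lambda_d^\top]^\top\in\R^{2M+1}$ and a block start $j\ge 0$. We will show that each exploratory step in the block contributes a conditional probability of order $1/M$ to the event $\{|\lambda^\top z_t|\ge\sqrt{(1-\tau^2)/(8M)}\}$. Since there are at least $\lceil rk\rceil$ exploratory steps, the block average is then at least of order $r/M$.

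\textbf{Reduction to one exploratory step.} Take an exploratory step $t$ in the block. Conditional on $\mac F_{t-1}$ (which fixes $x_t$ through $w_{t-1}$), the action $u_t$ is uniform on $\mac U$ and independent of $\mac F_{t-1}$. Given $u_t=e_i$, $d_t=e_i$ with probability $\sigma(x_t+\mu_i)\in[\underline p,\overline p]$ by Remark~\ref{rem:adh_prob_bounded}; otherwise $d_t=\ve 0$. Hence $\lambda^\top z_t$ takes the following values:
\begin{itemize}
\item $\lambda_x x_t$ on $u_t=\ve 0$, with probability $\tfrac{1}{M+1}$;
\item $\lambda_x x_t+\lambda_u^i$ on $u_t=e_i,\ d_t=\ve0$, with probability at least $\tfrac{1-\overline p}{M+1}$;
\item $\lambda_x x_t+\lambda_u^i+\lambda_d^i$ on $u_t=e_i,\ d_t=e_i$, with probability at least $\tfrac{\underline p}{M+1}$.
\end{itemize}
The problem is that $x_t$ is fixed given $\mac F_{t-1}$, so the $\lambda_x$ direction gets no excitation at step $t$ itself. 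I would split into two cases using $\tau$. The role of $\tau$ (as in Lemma~\ref{lem:tau}) is to threshold how much mass $\lambda$ places on $x$.

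\textbf{Case 1: $\lambda_x^2\le\tau^2$.} Then $\|[\lambda_u;\lambda_d]\|^2\ge 1-\tau^2$, so some coordinate $i$ has $(\lambda_u^i)^2+(\lambda_d^i)^2\ge (1-\tau^2)/(2M)$. Among the three values $v_0=\lambda_x x_t$, $v_1=v_0+\lambda_u^i$ and $v_2=v_1+\lambda_d^i$, the differences $v_1-v_0$ and $v_2-v_1$ cannot both be small. So at least one of the values has magnitude at least $\tfrac12\max(|\lambda_u^i|,|\lambda_d^i|)$, which is at least $\sqrt{(1-\tau^2)/(8M)}$. This is where the constant $8M$ comes from, and I would check the arithmetic carefully. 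That value is realized with probability at least $\min(1,1-\overline p,\underline p)/(M+1)$, which is of order $1/M$.

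\textbf{Case 2: $\lambda_x^2>\tau^2$.} Here the state direction dominates, and we need randomness in $x_t$ itself. I would therefore look at step $t$ together with step $t-1$, or more simply use the noise directly: $x_t=(\text{$\mac F_{t-2}$-measurable part})+w_{t-1}$ when conditioning one step earlier. The cleaner route is to condition on $\mac F_j$ and use the fact that $x_{t}$ contains $w_{t-1}$, which is independent, symmetric, log-concave and has variance $\sigma_w^2$. Such a distribution satisfies an anti-concentration (Paley–Zygmund type) bound: $\pr(|s+w|\ge \tfrac12\sigma_w)\ge c_0$ for every shift $s$. Symmetry together with log-concavity gives this uniformly in $s$.

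Applying this to $\lambda^\top z_t=\lambda_x w_{t-1}+(\text{rest})$ needs care, because the rest contains $u_t$ and $d_t$, and $d_t$ depends on $x_t$. To avoid the dependence, I would restrict to the event $u_t=\ve 0$, which has probability $\tfrac1{M+1}$ and is independent of the past. On that event $\lambda^\top z_t=\lambda_x x_t$, and $|\lambda_x x_t|\ge\tau\cdot\tfrac12\sigma_w$ with probability at least $c_0$. Lemma~\ref{lem:tau} should choose $\tau$ so that $\tau\sigma_w/2\ge\sqrt{(1-\tau^2)/(8M)}$ (for instance, $\tau$ solving equality with $\tau<1$). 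This case requires $t\ge j+2$ within the block, so one exploratory step may be lost. That costs only a constant factor, or it can be handled by using $r k-1$ steps, which is still of order $rk$ when $k\ge 2/r$.

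\textbf{Averaging, and the main obstacle.} Summing over the at least $\lceil rk\rceil$ exploratory steps and dividing by $k$ gives $p\gtrsim r\min(\underline p,1-\overline p,c_0)/(M+1)$, which is of order $r/M$. The main obstacle is Case 2. There we need a uniform anti-concentration constant $c_0$ for shifted log-concave symmetric noise, we have to choose $\tau$ consistently with Case 1's threshold, and we have to confirm that conditioning on $\mac F_{t-2}$ while the exploratory choice may depend on $x_t$ still yields the $\tfrac1{M+1}$ probability for $u_t=\ve 0$. The last point holds because exploratory draws are independent of $\mac F_{t-1}$, which contains $w_{t-1}$.
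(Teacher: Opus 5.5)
Your Case~1 ($\lambda_x^2\le\tau^2$) is sound. It merges the paper's Cases~2 and~3 into one argument, and because it uses only the randomness of $u_t$ and $d_t$ given $\mathcal F_{t-1}$ and the exploration event $E_t$, adaptive scheduling of exploratory steps does not affect it. One arithmetic fix: averaging over the $M$ coordinates gives $(\lambda_u^i)^2+(\lambda_d^i)^2\ge(1-\tau^2)/M$. With your $(1-\tau^2)/(2M)$ you only reach $\sqrt{(1-\tau^2)/(16M)}$.

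The genuine gap is Case~2. Let $\nu:=\sqrt{(1-\tau^2)/(8M)}$. You lower-bound $\Pr(|\lambda^\top z_t|\ge\nu\mid\cdot)$ by $\Pr(|\lambda_x x_t|\ge\nu,\,E_t,\,u_t=\mathbf 0\mid\cdot)$, and then factor this as $c_0\cdot\frac{1}{M+1}\Pr(E_t\mid\cdot)$. That factorization needs $E_t$ to be independent of $w_{t-1}$. The $(r,k)$ class, however, explicitly lets the decision to explore depend on $x_t$, and hence on $w_{t-1}$. Independence of the exploratory draw only gives $\Pr(u_t=\mathbf 0\mid\mathcal F_{t-1},E_t)=\frac{1}{M+1}$; it says nothing about $\Pr(|\lambda_x x_t|\ge\nu\mid E_t,\dots)$.

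Here is a concrete failure. Take $a=0$ and $b=c=\mathbf 0$, so $x_t=w_{t-1}$. Take $\lambda$ to be the first coordinate vector and $\lceil rk\rceil=1$. Use a policy that explores whenever $|x_t|<\nu$, and otherwise only when the sliding-window quota forces it. Condition on a history in which step $j$ was exploratory. Then within the block, an exploratory step with $|x_t|\ge\nu$ can occur only at step $j+k$, and only if $x_{j+1},\dots,x_{j+k}$ are all large. So the quantity your Case~2 bounds, $\frac1k\sum_t\Pr(|\lambda^\top z_t|\ge\nu,\,E_t,\,u_t=\mathbf 0\mid\mathcal F^w_j)$, is at most $(1-q)^k/(k(M+1))$ with $q=\Pr(|w|<\nu)>0$. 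This is exponentially smaller in $k$ than the $c_0/(k(M+1))$ your argument asserts, even though the BMSB inequality itself holds easily for this $\lambda$.

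The fix, which is the paper's route, is to not use exploration in this case at all. At every step of the block, 1-sparsity gives the deterministic bound $|\lambda_u^\top u_t+\lambda_d^\top d_t|\le\sqrt{2(1-\tau^2)}$, whatever the policy does. It therefore suffices that $|\lambda_x x_t|\ge\nu+\sqrt{2(1-\tau^2)}$. Write $\lambda_x x_t=\lambda_x w_{t-1}+Q_{t-1}$ with $Q_{t-1}$ independent of $w_{t-1}$. Corollary~\ref{cor:shift} (Anderson's inequality, which is exactly the uniform-in-shift anti-concentration you wanted) then gives probability at least $p_1(\tau)=\Pr\bigl(|w|\ge(\nu+\sqrt{2(1-\tau^2)})/\tau\bigr)$ at \emph{every} step of the block.

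This is why the threshold in Lemma~\ref{lem:tau} is $\frac{\sqrt{1-\tau^2}}{\tau}\frac{1+4\sqrt M}{2\sqrt{2M}}$ rather than your condition $\tau\sigma_w/2\ge\nu$, with $\tau$ pushed toward $1$ so that bounded noise can clear it. It is also why the final $p=\min\{p_1(\tau),\frac{r\underline p}{M+1}\}$ carries no factor $r$ on the noise term.

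Relatedly, condition on $\mathcal F^w_j$, which contains $x_j,u_j,d_j$ but not $w_j$, rather than on the full $\mathcal F_j$. That is the filtration under which precondition~(a) of Theorem~\ref{thm:simch_sample_complexity} holds. Under it $x_{j+1}$ is still excited by $w_j$, so you neither lose the first step of the block nor need an extra condition like $k\ge 2/r$.
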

The proof partitions the unit sphere into three cases, showing that the excitation condition can always be satisfied by randomness from one of the process noise, random action selection, and random adherence outcomes. The formal version is given as Proposition~\ref{prop:bmsb_formal} in Appendix~\ref{app:ols_identification}.

\begin{proposition}{\textbf{Precondition (c)}}\label{prop:ub-cond-formal}
Let Assumptions~\ref{asm:observed_quantities}-\ref{asm:noise_properties} hold. Then for $\bar \Gamma:= (C_x^2+2)\ma{I}_{2M+1}$, $\pr[\sum_{t=1}^Tz_tz_t^\top \not \preceq T \bar \Gamma] =0$, i.e. $\sum_{t=1}^T z_tz_t^\top \preceq T \bar \Gamma$ almost surely.
\end{proposition}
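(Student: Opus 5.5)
The plan is to reduce the matrix inequality to a pointwise scalar bound on each rank-one term, using Proposition~\ref{prop:absolute_bounded_x}. Fix any unit vector $\lambda\in\R^{2M+1}$. Then
\begin{align*}
\lambda^\top\Bigl(\sum_{t=1}^T z_tz_t^\top\Bigr)\lambda=\sum_{t=1}^T(\lambda^\top z_t)^2\leq \sum_{t=1}^T\Vert z_t\Vert_2^2,
\end{align*}
where the inequality is Cauchy--Schwarz, $(\lambda^\top z_t)^2\le \Vert\lambda\Vert_2^2\Vert z_t\Vert_2^2$. Equivalently, each $z_tz_t^\top$ is rank one with largest eigenvalue $\Vert z_t\Vert_2^2$, so $z_tz_t^\top\preceq \Vert z_t\Vert_2^2\,\ma I_{2M+1}$.

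Next I bound $\Vert z_t\Vert_2^2$ on every sample path. By Proposition~\ref{prop:absolute_bounded_x}, which uses the boundedness $|w_t|\le\bar w$ from Assumption~\ref{asm:noise_properties} together with $a\le\bar a<1$, every realization has $|x_t|\le C_x$. Since $u_t,d_t\in\mac U$ are $1$-sparse binary vectors, $\Vert u_t\Vert_2^2\le 1$ and $\Vert d_t\Vert_2^2\le 1$. Hence $\Vert z_t\Vert_2^2\le C_x^2+2$ for all $t$, which is exactly the second claim of that proposition. Summing over $t$ gives $\lambda^\top\bigl(\sum_t z_tz_t^\top\bigr)\lambda\le T(C_x^2+2)=\lambda^\top (T\bar\Gamma)\lambda$ for every unit $\lambda$, that is, $\sum_t z_tz_t^\top\preceq T\bar\Gamma$.

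The only point needing care is that the bound holds surely, not merely with high probability. This is what upgrades the statement to probability zero for the complementary event. It follows because Proposition~\ref{prop:absolute_bounded_x} is a deterministic consequence of bounded noise and bounded inputs, holding for every policy and every adherence outcome, including $x_1$, which is distributed as $w$ and so satisfies $|x_1|\le\bar w\le C_x$. I expect no real obstacle here. The step to double-check is that the induction in Proposition~\ref{prop:absolute_bounded_x} is valid pathwise, so that no exceptional set appears.
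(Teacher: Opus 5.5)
Your proposal is correct and follows the paper's proof essentially verbatim: Cauchy--Schwarz gives $z_tz_t^\top \preceq \Vert z_t\Vert_2^2\,\ma{I}_{2M+1}$, Proposition~\ref{prop:absolute_bounded_x} gives $\Vert z_t\Vert_2^2\le C_x^2+2$ on every path, and summing over $t$ finishes. Restricting to unit $\lambda$ instead of an arbitrary vector is harmless, since the Loewner order is invariant under positive scaling.
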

\textcite{simchowitzLearningMixingSharp2018} derive a high probability bound using $\bar \Gamma$ from the controllability Gramian, as this matrix is available in closed form for their system. This type of analysis is not possible for our system due to the sigmoidal adherence mechanism, but our system's bounded covariates permit us to construct $\bar \Gamma$ such that our bound holds deterministically.

\subsection{Identification result}
We synthesize the $\theta$ and $\mu$ concentration arguments into an identification result for our system.
\begin{theorem}{}\label{thm:system_sample_complexity}
    Let Assumptions~\ref{asm:observed_quantities}-\ref{asm:noise_properties} hold and actions come from an $(r,k)$-exploratory policy. Fix $\delta\in(0,1)$, $\tau\in (0,1)$ per Lemma~\ref{lem:tau}, and $T \in \mathbb N$. Let $p = \min\{p_1(\tau),\frac{r\underline p}{M+1}\}$ (see  Appendix~\ref{app:ols_identification}). If $T\geq \max\{T_{\text{OLS}},T_{\text{MLE}},2k\}$ for $T_{\text{OLS}} = \tfrac{10k}{p^2}[\log(\nicefrac{6}{\delta}) +(2M+1)(2\log(\nicefrac{10}{p})+\log (\frac{8M(C_x^2+2)}{1-\tau^2}))]$, $T_{\text{MLE}}=k + \nicefrac{8}{r}(M+1)\log(\nicefrac{4M}{\delta})$, then with probability at least $1-\delta$:
    \begin{align}
        &\Vert \hat \theta(T) - \theta^* \Vert_{2}\leq 
        \tfrac{90\sigma_s}{p} \sqrt{\tfrac{8M}{T(1-\tau^2)}[1+\log(\nicefrac{6}{\delta})+(2M+1)\log(\tfrac{80M(C_x^2+2)}{p(1-\tau^2)})]},\\
        \text{and} \qquad
        &\max\nolimits_{i \in \{1,\dots,M\}} \vert \hat \mu_{i,T} - \mu_i^*\vert \leq \tfrac{4(M+1)\sqrt{2\log(\nicefrac{8M}{\delta})}}{\underline p^2 r\sqrt{T}}.
    \end{align}
\end{theorem}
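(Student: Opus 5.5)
The proof has two independent parts, one for the OLS estimate of $\theta$ and one for the MLE of $\mu$. I will allot failure probability $\delta/2$ to each part and finish with a union bound.

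\emph{OLS part.} I will apply Theorem~\ref{thm:simch_sample_complexity} with the following identifications:
\begin{itemize}
\item $Z_t=z_t$ and $Y_t=x_{t+1}$, so that $n=1$ and $d=2M+1$;
\item the filtration $\mac F_t$ generated by $(x_1,u_1,d_1,\dots,x_t,u_t,d_t)$;
\item $\eta_t=w_t$.
\end{itemize}
Condition (a) holds because $w_t$ is independent of $\mac F_t$, has zero mean, and is $\sigma_s^2$-subgaussian by Assumption~\ref{asm:noise_properties}. Condition (b) is Proposition~\ref{prop:bmsb} in its formal version, with $\Gamma_{sb}=\frac{1-\tau^2}{8M}I_{2M+1}$ and $p=\min\{p_1(\tau),\frac{r\underline p}{M+1}\}$. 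Condition (c) holds with $\bar\Gamma=(C_x^2+2)I$ by Proposition~\ref{prop:ub-cond-formal}, with failure probability zero.

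Next I substitute these into the theorem:
\begin{itemize}
\item $\log\det(\bar\Gamma\Gamma_{sb}^{-1})=(2M+1)\log\frac{8M(C_x^2+2)}{1-\tau^2}$;
\item $\lambda_{\min}(\Gamma_{sb})=\frac{1-\tau^2}{8M}$;
\item $\delta$ is replaced by $\delta/6$, so that the theorem's $3\delta$ becomes $\delta/2$. This is the source of the $\log(6/\delta)$ term, and $T_{\text{OLS}}$ matches the theorem's sample-size requirement.
\end{itemize}
Since $n=1$, the operator norm equals the Euclidean norm. For the final form of the bound, I will merge $d\log(10/p)$ with the log-det term into $(2M+1)\log\frac{80M(C_x^2+2)}{p(1-\tau^2)}$, absorbing $n=1$ as the additive $1$.

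\emph{MLE part.} I will handle each $i$ separately and then union bound over the $M$ treatments.

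The first step is a count of informative samples. Let $N_{i,T}=\sum_t\mathbf 1\{u_t=e_i\}$. Each $k$-block contains at least $\lceil rk\rceil$ exploratory draws, and each draw selects $e_i$ with probability $1/(M+1)$ independently of the past. Since $T\ge 2k$, there are at least $\lfloor T/k\rfloor\ge (T-k)/k$ full blocks, so at least $r(T-k)$ exploratory draws. A multiplicative Chernoff bound on the Bernoulli$(1/(M+1))$ draws, applied to the exploratory steps only, gives $N_{i,T}\ge \frac{r(T-k)}{2(M+1)}\ge\frac{rT}{4(M+1)}$ with probability at least $1-\delta/(4M)$. This is where $T\ge T_{\text{MLE}}$ enters: the Chernoff exponent $\frac{r(T-k)}{8(M+1)}$ must be at least $\log(4M/\delta)$.

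The second step is concentration of the score. On the selected steps, the score of the one-dimensional logistic MLE is $S_i(\mu)=\sum_t\mathbf 1\{u_t=e_i\}(d_t^i-\sigma(x_t+\mu))$. At $\mu_i^*$ this is a martingale with increments bounded by $1$. Azuma--Hoeffding, with a self-normalization or a union over the random count, gives $|S_i(\mu_i^*)|\le\sqrt{2N_{i,T}\log(8M/\delta)}$ with probability at least $1-\delta/(4M)$.

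The third step converts this into a parameter bound. Strong concavity holds because $\sigma'(x+\mu)\ge \underline p(1-\overline p)\ge\underline p^2$ on $\mac X\times\Phi$, using $1-\overline p=\underline p$ by the symmetry of the sigmoid. A mean-value argument, as in \citet{filippiParametricBanditsGeneralized2010}, then gives $|\hat\mu_{i,T}-\mu_i^*|\le |S_i(\mu_i^*)|/(\underline p^2N_{i,T})$. Projection onto $[-\bar\mu,\bar\mu]$ only helps, since $\mu_i^*$ lies in that interval. Combining with the first two steps yields
\[
\frac{\sqrt{2\log(8M/\delta)}}{\underline p^2\sqrt{N_{i,T}}}\le\frac{2\sqrt{M+1}\sqrt{2\log(8M/\delta)}}{\underline p^2\sqrt{rT}},
\]
which is bounded by the stated $\frac{4(M+1)\sqrt{2\log(8M/\delta)}}{\underline p^2 r\sqrt T}$ because $r\le1$ and $M+1\ge1$.

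The union bound over the $M$ treatments, with two events of probability $\delta/(4M)$ each, costs $\delta/2$ in total. Together with the OLS part, the overall failure probability is at most $\delta$.

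\emph{Main obstacle.} The delicate step is the count of informative samples under an adaptive, possibly adversarial, policy. The exploratory steps are chosen based on history, so the Chernoff argument must be phrased through a martingale, for example Freedman's inequality on $\sum_t \mathbf 1\{t\text{ exploratory},u_t=e_i\}$ compared against its predictable compensator. The compensator is at least $\frac{r(T-k)}{M+1}$ almost surely by the block condition. A secondary subtlety is that the score martingale is summed over a random index set. I would handle this by taking the bound at the deterministic horizon and dividing by $N_{i,T}$, paying a slightly looser constant. This is consistent with the loose $(M+1)/r$ factor in the stated bound.
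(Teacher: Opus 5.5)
Your proposal is correct and follows the paper's proof essentially step for step. Both use the same instantiation of Theorem~\ref{thm:simch_sample_complexity} with $\delta/6$ for the OLS part. For the MLE part, both use the same $\delta/(4M)$-per-treatment Chernoff count event, Azuma--Hoeffding on the score, and the $\underline p^2$ curvature/mean-value step. For the count, the paper simply treats the first $\lfloor T/k\rfloor\lceil rk\rceil$ exploratory draws as a $\mathrm{Bin}(N,\frac{1}{M+1})$ sample, so no Freedman-type argument is needed. One correction: with a random number of nonzero increments, plain Azuma--Hoeffding does not directly give $|S_i(\mu_i^*)|\le\sqrt{2N_{i,T}\log(8M/\delta)}$, so your ``fallback'' is the actual argument. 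The paper bounds $|S_i(\mu_i^*)|\le\sqrt{2T\log(8M/\delta)}$ over the deterministic horizon and divides by $\underline p^2 N_{i,T}\ge \underline p^2 rT/(4(M+1))$, which reproduces the stated bound exactly rather than a looser one.
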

\begin{remark}{}
Under the conditions of Theorem~\ref{thm:system_sample_complexity}, the burn-in time for OLS estimation dominates that of MLE estimation (since $p$ is of order $\tfrac{r}{M}$) and the $T\geq 2k$ condition is non-binding for typical values of $M,r,\delta$. In this case, the burn-in requirement is $T= \tilde \Omega (\tfrac{kM^3}{r^2} )$, the OLS estimation error $\Vert \hat \theta(T) - \theta^* \Vert_{2} = \tilde{\mac O}( \tfrac{M^2}{r\sqrt{T}})$, and the MLE estimation error $\max_{i\in \{1,\dots,M\}}\vert \hat \mu_{i,T}-\mu^*_i\vert = \tilde {\mac O} ( \tfrac{M}{r\sqrt{T}})$.
\end{remark}

The proof for Theorem~\ref{thm:system_sample_complexity} can be found in Appendix~\ref{app:ec_sys_id_result}. Thus we demonstrate that the $\theta$ and $\mu$ parameters for the patient model proposed in Section~\ref{sec:model} are identified at the standard parametric $\tilde {\mac O}\left(\nicefrac{1}{\sqrt{T}}\right)$ rate under $(r,k)$-policies. We emphasize the flexibility of this policy class, allowing the decision-maker to enact arbitrary policies (i.e., non-stationary or adversarial) in non-exploratory steps. Further, the result only depends on the model and policy parameters, not the properties of any specific realized trajectory. The primary identification bottleneck is the estimation of the dynamics parameters, $\tilde{\mac O}\left(\frac{M^2}{r\sqrt{T}}\right)$, as compared to the adherence shift parameters, $\tilde{\mac O}\left(\frac{M}{r\sqrt{T}}\right)$. This makes sense, as we use the bounded state space to lower bound per-sample Fisher information for $\mu_i$ in each adherence Bernoulli trial; this lower bound means that we do not need to account for the autocorrelation of $x_t$ and thus reduces $\mu$ estimation to Bernoulli concentration for each of the $M$ treatments. In contrast, we cannot sidestep the autocorrelation of the feature vector, which we handle with the heavier BMSB machinery.

The BMSB machinery provides a natural framework for analyzing our system; while the state-dependent adherence mechanism complicates the distribution of the covariates themselves, the regression problem is still linear in $\theta$ (i.e., precisely the motivating setting of the BMSB machinery). We show that this machinery can be adapted to handle the unique departures of our system from a standard LDS. In particular, we show how the exploration induced by the $(r,k)$-policy ensures that the BMSB condition is satisfied and we exploit our bounded state space to bound the empirical covariance. One consequence of the combination of the $(r,k)$-policy class, $1$-sparse inputs, and this BMSB machinery is that the resulting rate $\tilde{\mac O}\left(\frac{M^2}{r\sqrt{T}}\right)$ is likely loose in $M$ (i.e., the \textcite{simchowitzLearningMixingSharp2018} LDS result is $\tilde{\mac O}(\sqrt{d})$ in the covariate dimension $d$, but assumes continuous noise that excites all directions simultaneously). We suspect that an analysis that separates learning of the autoregressive parameter $a$ from the bandit-like parameters $b$ and $c$ may yield tighter rates, but this separation is non-trivial, as the state-dependent adherence mechanism couples analysis of the parameters. Such analysis is a natural direction for future work.

\section{Policy optimization}\label{sec:policy_optimization}
Section~\ref{sec:identification} demonstrated that our patient model is identifiable under the $(r,k)$-exploratory policy class. This type of forced exploration is needed to identify all patient parameters to a given accuracy, but comes at the cost of linear regret rates. While this makes sense in some settings, for instance randomized trials, a DT decision-maker is typically motivated to learn patient parameters only to an accuracy sufficient to make good recommendation decisions. To this end, we adopt the principle of optimism \citep{laiAsymptoticallyEfficientAdaptive1985} underpinning upper confidence bound (UCB) approaches. In this paradigm, algorithms adapt their exploration based on current parameter uncertainty, only exploring where uncertainty is large enough that a better policy might still exist. We propose an optimism-based algorithm, \textsc{UCB-BOLD}, that extends UCB approaches from bandit settings to the RL setting faced by DT decision-makers, and prove that it achieves sublinear regret in this context.

\subsection{Problem formulation}
In our DT setting, a decision-maker is ultimately motivated to achieve long-term patient health. Patient adherence is needed to provide the clinical benefit of treatment, but this is conditional on the patient staying engaged with the DT and not dropping out. To capture these separate motivations, we make the following assumption about the reward structure for our RL setting.
\begin{assumption}{\textbf{(Reward structure)}}\label{asm:rewards}
    Time invariant rewards $r(x,u,d) := -\beta \sigma(\beta_0-x)+\sum_{i=1}^M\rho_id_i$ for known $\rho_i\in[0,\bar \rho]$, $\beta \in \R_+$, and $\beta_0 \in \R$. Rewards are discounted by known $\gamma \in (0,1)$.
\end{assumption}
The decision maker receives a fixed, known reward, $\rho_i$, when the patient adheres to the $i^{th}$ treatment recommendation, implying clinical guidance is available to assign values to the treatment set. Since we do not explicitly model the possibility of a patient being completely removed from the DT, the $\beta$-scaled sigmoid enforces a tunable penalty for low state values. A decision-maker can tune the penalty and the discount factor $\gamma$ to match their specific setting. The decision-maker must identify a policy $\pi$ to maximize their discounted, infinite horizon return: $J^*(x) =\max_{\pi\in \Pi} \ex[\sum_{t=1}^\infty \gamma^{t-1}r(x_t,u_t,d_t)]$, where the system structure and constraints are defined as per Section~\ref{sec:model} and $J^*(x)$ is the optimal value function. While we focus our analysis on sigmoidal penalties and rewards, our approach readily extends to any bounded and Lipschitz continuous stage cost functions.

\subsection{Optimism-based algorithm}
Here we present our epoch-based algorithm for optimizing treatment recommendation decisions: Upper Confidence Bound with Binary Outcomes and Linear Dynamics (\textsc{UCB-BOLD}), shown as Algorithm~\ref{alg:ucb_epoch}. At the start of each epoch, indexed by $k$, \textsc{UCB-BOLD} uses observations up to the current time step to calculate parameter estimates and confidence sets, i.e., $\hat \theta^{(k)}=\hat \theta_t$ and $\hat \mu^{(k)}=\hat \mu_t$ (with the confidence sets calculated similarly). These quantities define an optimistic surrogate system, identical to the true system except that it is parameterized by $\hat \theta^{(k)}$ and $\hat \mu^{(k)}$ and it uses an augmented, optimistic reward that encodes current uncertainty in the parameters via the confidence sets. The algorithm solves for the optimal value function of the surrogate system using value iteration (VI) on a discretized grid, and implements the resulting stationary policy for the remainder of the epoch. A new epoch begins when a triggering condition is met, based on either the determinant of the regularized second moment matrix or the number of times each action has been taken. When a new epoch begins, the estimators and confidence sets are re-calculated and the process repeats. While the within-epoch policy is deterministic, the algorithm is prompted to explore strategically based on parameter uncertainty via the surrogate system's augmented reward. Unlike the forced exploration of an $(r,k)$-policy, this optimism-driven exploration only reduces parameter uncertainty to a point where actions can be determined to be suboptimal, leading to a sublinear regret rate. 
\begin{algorithm}[h]
\small
\caption{Upper Confidence Bound with Binary Outcomes, Linear Dynamics (UCB-BOLD)}
\label{alg:ucb_epoch}
\begin{algorithmic}[1]
\Require Confidence $\delta \in (0,1)$, discount factor $\gamma \in (0,1)$, regularizers $\lambda_1,\lambda_2 >0$, constants $C_d,C_N>0$.
\State Set epoch count $k=1$ and initialize action counts $N_{i}=0$ ($i=1,\dots,M$).
\State Set $\bar V^{(k)} = \lambda_1 I_{2M+1}$, $\hat\theta^{(k)} = \hat \theta_1$, $\hat \mu^{(k)} = \hat \mu_1$, $\alpha^{\theta, (k)}=\alpha^\theta_1(\nicefrac{\delta}{2})$, $\alpha_{i}^{\mu,(k)}=\alpha^\mu_{i,1}(\nicefrac{\delta}{2})$, action counts $N_{i}^{(k)}=N_i\; \forall i$.
\State Calculate optimistic $\tilde J_{\delta}^{(k)}(x)$ using VI with $\tilde {\mac T}_{\delta}^{(k)}$ and calculate stationary optimal policy $\pi^{(k)}$ per $\tilde J_{\delta}^{(k)}$
\For{t = 1,...,T}
    \State Observe $d_{t-1}$, $x_t$, update $\bar V_t$
    \If {$\det(\bar V_t)> (1+C_d)\det(\bar V^{(k)})$ OR $N_{i} > (1+C_N) N_{i}^{(k)}$ for any $i=1,\dots,M$}
        \State Set $k = k +1$, $\bar V^{(k)}=\bar V_{t}$, $\hat\theta^{(k)} = \hat \theta_{t}$, $\hat \mu^{(k)}=\hat\mu_{t}$, $\alpha^{\theta,(k)} = \alpha^\theta_{t}(\nicefrac{\delta}{2})$, $\alpha^{\mu,(k)}_{i} = \alpha^\mu_{i,t}(\nicefrac{\delta}{2})$, $N_{i}^{(k)}=N_{i}$ for $i=1,\dots,M$
        \State Calculate optimistic $\tilde J_{\delta}^{(k)}(x)$ using VI with $\tilde {\mac T}_{\delta}^{(k)}$ and calculate stationary optimal policy $\pi^{(k)}$ per $\tilde J_{\delta}^{(k)}$
    \EndIf
    \State Take action $u_t=\pi^{(k)}(x_t)$
    \State Set $N_{i}=N_{i}+\mathbf{1}_{\{u_t=e_i\}}$, $i=1,\dots, M$
\EndFor
\end{algorithmic}
\end{algorithm}
\vspace{-10pt}

\subsection{\textsc{UCB-BOLD} components}
As in Section~\ref{sec:identification}, we use separate estimation strategies for the dynamics ($\theta$) and adherence ($\mu$) parameters, as well as separate techniques for constructing their respective confidence sets. These elements allow us to define the optimistic surrogate system necessary for our regret analysis.

\subsubsection{Estimator and confidence set construction}
At each time step, we use $\ell_2$-regularized regression to compute a regularized least squares estimator for $\theta$, i.e., for some $\lambda_{1}>0$: 
\begin{align}
    \hat{\theta}^{RLS}_t := \argmin\nolimits_{\theta \in \R^{2M+1}}\tfrac{1}{2}[ \textstyle\sum\nolimits_{s=1}^{t-1} (x_{s+1} - \theta^\top z_s)^2+\lambda_{1} \Vert \theta\Vert_2^2]
\end{align}
$\hat \theta_t$ is the weighted projection of $\hat \theta_t^{RLS}$ onto the known parameter space $\Theta=\Theta_a\times\Theta_b\times\Theta_c$: $\hat \theta_t := \arg\min_{\theta \in \Theta} \Vert \theta - \hat \theta^{RLS}_t \Vert_{\bar V_t}^2$, where $\bar{V}_{t} := \sum_{s=1}^{t-1} z_sz_s^\top + \lambda_1 I_{2M+1}$ is the regularized second moment matrix available at the beginning of time step $t$. As $\Theta$ is a convex set, we can compute $\hat\theta_t$ using the quadratic program: $\min_{\theta\in\Theta}\nicefrac{1}{2}\theta^\top \bar V_t \theta -(\bar V_t\hat \theta_t^{RLS})^\top \theta$. We form confidence sets for $\theta$ by Proposition~\ref{prop:theta_confidence_set}.
\begin{proposition}{\textbf{($\theta$ confidence set)}}\label{prop:theta_confidence_set}
Let Assumptions~\ref{asm:observed_quantities}-\ref{asm:noise_properties} hold. Let $\delta \in (0,1)$.
Then with probability at least $1-\delta$, the event $\mac E_\theta(\delta) := \{\theta^* \in \mac C^\theta_t(\delta), \; t\geq 1\}$ holds, where: 
$\label{eq:c_t}
        \mac C^\theta_t(\delta) = \{ \theta \in \R^{2M+1} : \Vert\hat{\theta}_t - \theta\Vert_{\overline{V}_t} \leq \alpha^\theta_t(\delta)   \}$,
    with $\alpha^\theta_t(\delta) := \sigma_s\sqrt{(2M+1)\log(\frac{1}{\delta} +\frac{t(C_x^2 +2)}{\delta\lambda_1} )} +\sqrt{\lambda_1(\bar a^2 + M \bar b^2 + M\bar c^2)}$.
\end{proposition}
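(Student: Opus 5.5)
The plan is to use the self-normalized martingale bound of \citet{abbasi-yadkoriImprovedAlgorithmsLinear2011} for the unconstrained ridge estimator, and then check that the weighted projection onto $\Theta$ costs at most a factor of two. Proposition~\ref{prop:absolute_bounded_x} supplies the uniform covariate bound. The statement has no factor $2$ in $\alpha^\theta_t(\delta)$, so the projection step needs a sharper argument; this is addressed at the end.

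\textbf{Filtration and noise.} Let $\mac F_s$ be generated by $x_1,\dots,x_s$, $u_1,\dots,u_s$ and $d_1,\dots,d_s$. Then $z_s$ is $\mac F_s$-measurable. The noise satisfies $w_s=x_{s+1}-\theta^{*\top}z_s$ and is independent of $\mac F_s$, since it is i.i.d.\ and enters only through $x_{s+1}$. By Assumption~\ref{asm:noise_properties}, $w_s$ is conditionally $\sigma_s^2$-subgaussian with mean zero.

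\textbf{Ridge error decomposition.} Let $S_t=\sum_{s<t} z_s w_s$. A direct calculation gives
$\hat\theta^{RLS}_t-\theta^*=\bar V_t^{-1}S_t-\lambda_1\bar V_t^{-1}\theta^*$. Hence
\begin{align*}
\Vert\hat\theta^{RLS}_t-\theta^*\Vert_{\bar V_t}\le \Vert S_t\Vert_{\bar V_t^{-1}}+\sqrt{\lambda_1}\Vert\theta^*\Vert_2 .
\end{align*}

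\textbf{Bounding the two terms.} For the first term, Theorem~1 of Abbasi-Yadkori et al.\ gives, with probability at least $1-\delta$ and simultaneously for all $t$,
$\Vert S_t\Vert^2_{\bar V_t^{-1}}\le 2\sigma_s^2\log\bigl(\det(\bar V_t)^{1/2}\det(\lambda_1 I)^{-1/2}/\delta\bigr)$. Proposition~\ref{prop:absolute_bounded_x} gives $\Vert z_s\Vert_2^2\le C_x^2+2$. The AM--GM determinant inequality then yields $\det\bar V_t\le(\lambda_1+t(C_x^2+2)/(2M+1))^{2M+1}$. Substituting and loosening constants (dropping the $2M+1$ divisor and using $\delta<1$) gives the first term $\sigma_s\sqrt{(2M+1)\log(\tfrac1\delta+\tfrac{t(C_x^2+2)}{\delta\lambda_1})}$. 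For the second term, $\theta^*\in\Theta$ gives $\Vert\theta^*\Vert_2^2\le \bar a^2+M\bar b^2+M\bar c^2$.

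\textbf{Projection step (main obstacle).} The naive bound $\Vert\hat\theta_t-\theta^*\Vert_{\bar V_t}\le\Vert\hat\theta_t-\hat\theta^{RLS}_t\Vert_{\bar V_t}+\Vert\hat\theta^{RLS}_t-\theta^*\Vert_{\bar V_t}$ loses a factor $2$. Instead, use that $\hat\theta_t$ is the projection onto the convex set $\Theta$ in the $\bar V_t$-inner product and that $\theta^*\in\Theta$. The first-order optimality (Pythagorean) inequality for such projections then gives
\begin{align*}
\Vert\hat\theta_t-\theta^*\Vert_{\bar V_t}\le\Vert\hat\theta^{RLS}_t-\theta^*\Vert_{\bar V_t},
\end{align*}
so the projection is non-expansive and no factor is lost.

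\textbf{Conclusion.} On the probability-$(1-\delta)$ event from the self-normalized bound, $\theta^*\in\mac C^\theta_t(\delta)$ for all $t\ge1$ simultaneously, which is the event $\mac E_\theta(\delta)$.

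\textbf{Expected difficulty.} The only delicate points are the conditional subgaussianity, which needs the measurability of $z_s$ including the adherence draw $d_s$, and the loosened log-determinant constants. Both are routine.
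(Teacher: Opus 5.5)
Your proof is correct and follows the paper's approach. You apply the Abbasi-Yadkori confidence bound to the ridge estimator; you re-derive their Theorem~2 from the self-normalized Theorem~1, the ridge error decomposition, and the AM--GM determinant bound, whereas the paper cites Theorem~2 directly. You then remove the projection cost with the first-order optimality inequality for the $\bar V_t$-weighted projection, exactly as in the paper's Lemma~\ref{lem:est_bound}. One unstated detail: absorbing $2\log(1/\delta)$ into $(2M+1)\log(1/\delta)$ uses $2M+1\geq 2$, which holds since $M\geq 1$.
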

The proof is in Appendix~\ref{app:ec_confidence_sets}. The dynamics portion of our model maps to a result for the linear bandit setting given by \textcite{abbasi-yadkoriImprovedAlgorithmsLinear2011}. The proof shows that our system satisfies the conditions of their result, and that the confidence sets can be re-centered from $\hat \theta_t^{RLS}$ to $\hat \theta_t$. To estimate each sigmoid shift $\mu_i$ we use regularized MLE at each time step, i.e., for some $\lambda_{2} >0$:
\begin{align}
    \hat\mu_{i,t}^{MLE} = \argmax\nolimits_{\mu_i \in \R} \textstyle\sum\nolimits_{s=1}^{t-1}\mathbf{1}_{\{u_s=e_i\}}\left [d^i_s\log(\sigma(x_s+\mu_i))+(1-d^i_s)\log(1-\sigma(x_s+\mu_i))\right] -\tfrac{\lambda_{2}}{2}\mu_i^2\label{eq:regloglik}
\end{align}
Our final estimator $\hat\mu_{i,t}$ is the projection of $\hat \mu_{i,t}^{MLE}$ onto the known parameter space $[-\bar \mu, \bar \mu]$. We form confidence sets for $\mu$ per Proposition~\ref{prop:mu_confidence_set}.
\begin{proposition}{\textbf{($\mu$ confidence set)}}\label{prop:mu_confidence_set}
Let Assumptions~\ref{asm:observed_quantities}-\ref{asm:noise_properties} hold. Let $\delta\in(0,1)$. Then with probability at least $1-\delta$ the event $\mac E_\mu(\delta):=\{\mu^* \in \mac C^{\mu}_t(\delta), t\geq 1\}$ holds where:
$\mac C^\mu_t(\delta) = \{\mu\in \R^M: \left \vert \hat \mu_{i,t} - \mu_i \right \vert \leq \alpha^\mu_{i,t}(\delta),\; i=1,\dots,M \}$,
for $\alpha^\mu_{i,t}(\delta) := \frac{e^{3\bar \mu}}{\sqrt{H_t(\hat \mu_{i,t})}}\bigl[\frac{\sqrt{\lambda_2}}{2}+\frac{2}{\sqrt{\lambda_2}}\bigl(\bar \mu +\log \bigl(\frac{2M\sqrt{H_t(\hat \mu_{i,t})}}{\delta \sqrt{\lambda_2}} \bigr)\bigr)\bigr]+\frac{e^{2\bar \mu}\lambda_2 \bar \mu}{H_t(\hat \mu_{i,t})}$ and $H_t(\hat \mu_{i,t}) := \lambda_2 + \sum\nolimits_{s=1}^{t-1}\mathbf{1}_{\{u_s=e_i\}}\sigma'(x_s+\hat \mu_{i,t})$.
\end{proposition}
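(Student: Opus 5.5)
The plan is to handle each treatment $i$ separately and take a union bound over the $M$ treatments, allocating $\delta/M$ to each. This allocation is where the $2M/\delta$ inside the logarithm of $\alpha^\mu_{i,t}$ comes from: $\delta/M$ per arm, with a further factor of $2$ from a two-sided or peeling argument. Fix $i$ and restrict attention to the times $s$ with $u_s=e_i$. On those rounds, $d^i_s$ given $\mac F_{s-1}$ and $x_s$ is Bernoulli with mean $\sigma(x_s+\mu_i^*)$. So $\eta_s:=d^i_s-\sigma(x_s+\mu^*_i)$ is a martingale difference that is bounded in $[-1,1]$, hence $\tfrac14$-subgaussian, and its conditional variance is exactly $\sigma'(x_s+\mu_i^*)$. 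This is a one-dimensional logistic GLM with covariate $1$ and offset $x_s$. I would follow the self-concordant analysis of \citet{fauryImprovedOptimisticAlgorithms2020} rather than the cruder \citet{filippiParametricBanditsGeneralized2010} one. The factors $e^{3\bar\mu}$ and $\sqrt{\lambda_2}/2+\tfrac{2}{\sqrt{\lambda_2}}(\cdots)$ in $\alpha^\mu_{i,t}$ strongly suggest their Bernstein-type self-normalized bound together with the self-concordance property $|\sigma''|\le\sigma'$.

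\textbf{Step 1 (concentration).} Let $S_t:=\sum_{s<t}\mathbf 1_{\{u_s=e_i\}}\eta_s$ and $H_t(\mu):=\lambda_2+\sum_{s<t}\mathbf 1_{\{u_s=e_i\}}\sigma'(x_s+\mu)$. I would apply the scalar version of Faury et al.'s Theorem 1, a Freedman/Bernstein self-normalized martingale inequality proved by the method of mixtures. It gives, uniformly over $t$ with probability $1-\delta/M$,
\[
|S_t|/\sqrt{H_t(\mu_i^*)}\le \tfrac{\sqrt{\lambda_2}}{2}+\tfrac{2}{\sqrt{\lambda_2}}\log\bigl(\tfrac{2M\sqrt{H_t(\mu^*_i)}}{\delta\sqrt{\lambda_2}}\bigr),
\]
where in one dimension $\log\det$ reduces to $\log\sqrt{H_t/\lambda_2}$.

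\textbf{Step 2 (from gradient to parameter).} The first-order condition for \eqref{eq:regloglik} is $g_t(\mu):=\sum_{s<t}\mathbf 1_{\{u_s=e_i\}}\sigma(x_s+\mu)+\lambda_2\mu$, which satisfies $g_t(\hat\mu^{MLE}_{i,t})=\sum_{s<t}\mathbf 1_{\{u_s=e_i\}} d^i_s$. Hence $g_t(\hat\mu^{MLE}_{i,t})-g_t(\mu^*_i)=S_t-\lambda_2\mu^*_i$. By the mean value theorem, $g_t(\mu_1)-g_t(\mu_2)=G_t(\mu_1,\mu_2)(\mu_1-\mu_2)$ with $G_t=\lambda_2+\sum\mathbf 1\int_0^1\sigma'(\cdots)$. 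To compare $G_t$ with $H_t$, I would use self-concordance, $\sigma'(z+\Delta)\ge \sigma'(z)e^{-|\Delta|}$. The difficulty is that $\hat\mu^{MLE}$ is unconstrained, so $|\Delta|$ need not be bounded by $2\bar\mu$. I would therefore work with the projected $\hat\mu_{i,t}\in[-\bar\mu,\bar\mu]$ and use monotonicity of $g_t$: projection onto an interval containing $\mu^*_i$ does not increase $|\hat\mu-\mu^*|$. With both points in $[-\bar\mu,\bar\mu]$, ratios of $\sigma'$ at shifts differing by at most $2\bar\mu$ are at most $e^{2\bar\mu}$, so $H_t(\mu^*_i)\le e^{2\bar\mu}H_t(\hat\mu_{i,t})$ and $G_t\ge e^{-2\bar\mu}H_t(\hat\mu_{i,t})$, up to the precise constants.

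\textbf{Step 3 (assembling).} This gives
\[
|\hat\mu_{i,t}-\mu^*_i|\le \frac{|S_t|+\lambda_2\bar\mu}{G_t}\le e^{2\bar\mu}\frac{|S_t|}{H_t(\hat\mu_{i,t})}+\frac{e^{2\bar\mu}\lambda_2\bar\mu}{H_t(\hat\mu_{i,t})}.
\]
In the first term I substitute $|S_t|\le\sqrt{H_t(\mu_i^*)}\cdot(\text{Step 1})$ and then $\sqrt{H_t(\mu_i^*)}\le e^{\bar\mu}\sqrt{H_t(\hat\mu_{i,t})}$. The same substitution turns $\log\sqrt{H_t(\mu^*_i)}$ into $\bar\mu+\log\sqrt{H_t(\hat\mu_{i,t})}$, which produces the $\bar\mu+\log(\cdot)$ term and the total factor $e^{3\bar\mu}/\sqrt{H_t(\hat\mu_{i,t})}$. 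A union bound over $i$ finishes the proof.

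The main obstacle is Step 2. First, the projection and monotonicity argument for the one-dimensional strictly monotone $g_t$ has to be made rigorous. Second, the conversion between $H_t(\mu^*)$ and $H_t(\hat\mu)$ must be done in a way that yields exactly $e^{3\bar\mu}$ and $e^{2\bar\mu}$ rather than looser exponents. If the exponents do not match cleanly, I would absorb the slack into the constants as the proposition's form permits.
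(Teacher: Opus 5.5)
Your proposal is correct and follows essentially the same route as the paper. The paper applies the scalar form of Faury et al.'s Theorem~1 per arm, with unit covariate at level $\delta/M$. It then uses the mean value theorem on the regularized score, and monotonicity of the score to pass from $\hat\mu^{MLE}_{i,t}$ to the projected $\hat\mu_{i,t}$. Self-concordance over the $2\bar\mu$-wide interval replaces $H_t(\mu_i^*)$ and the curvature at the intermediate point by $e^{\pm 2\bar\mu}H_t(\hat\mu_{i,t})$, and a union bound over treatments finishes. Your Step 3 already yields exactly $e^{3\bar\mu}/\sqrt{H_t(\hat\mu_{i,t})}$ and $e^{2\bar\mu}\lambda_2\bar\mu/H_t(\hat\mu_{i,t})$, so no slack needs absorbing. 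Also, the factor $2$ in $2M/\delta$ is already part of Faury's bound, not an extra two-sided step.
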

The proof is provided in Appendix~\ref{app:ec_confidence_sets}. We build on a result from \textcite{fauryImprovedOptimisticAlgorithms2020}, which considers the adjacent case of estimating scale parameters in the GLM bandit setting. We show that their result (see Theorem~\ref{thm:faury_tail}) can be adapted to form confidence sets for our shift parameters $\mu_i$.

\subsubsection{Optimistic surrogate system} The surrogate system is structured identically to the true system, except that it is parameterized by the current estimates $\hat \theta$ and $\hat \mu$ and adds an exploration bonus \citep{azarMinimaxRegretBounds2017,jinProvablyEfficientReinforcement2023} to the reward. This bonus is constructed specifically so that the surrogate system's optimal value function overestimates $J^*$ and is given by: 
\begin{align}
b_{t,\delta}(x,u):= (\bar \rho + \gamma L_1\bar c)\bigl[\textstyle\sum\nolimits_{i=1}^Mu^i\kappa(x,\hat \mu_{i,t})\alpha_{i,t}^\mu(\nicefrac{\delta}{2})\bigr] + \gamma L_1 \alpha^\theta_t(\nicefrac{\delta}{2})\ex_{\hat \mu_t}\big[\Vert z \Vert_{\bar V_t^{-1}}\vert x,u\big] \label{eq:bonus}
\end{align}
$L_1$ is a Lipschitz constant (see Lemma~\ref{lem:j*_xlip}), $\kappa(x,\hat \mu_{i,t}):=\min(\nicefrac{1}{4},e^{2\bar \mu}\sigma'(x+\hat\mu_{i,t}))$ is a bound on the derivative of the sigmoid, and the $\alpha^\theta_t$ and $\alpha^\mu_{i,t}$ terms come from the confidence sets in Proposition~\ref{prop:theta_confidence_set} and Proposition~\ref{prop:mu_confidence_set}. The expectation is over the adherence outcome. Using an additive bonus simplifies VI as parameter uncertainty is captured in the reward instead of the dynamics. The bonus term encourages exploration by rewarding actions that reduce parameter uncertainty (i.e., high parameter uncertainty manifests as a large bonus value). This bonus allows us to define a modified Bellman operator for the optimistic surrogate system, $\tilde{\mac T}_{t,\delta}$, parameterized by $\hat \theta_t$ and $\hat \mu_t$ for $\delta\in(0,1)$. Our analysis also uses the typical Bellman operator, ${\mac T}$, and the policy Bellman operator, $\mac T^{\pi}$, both parameterized by $\theta^*$ and $\mu^*$, defined as follows:
\begin{align}
\tilde {\mac T}_{t,\delta} J(x) &:= \max_{u\in \mac U} \big\{\ex_{\hat\theta_t,\hat\mu_t}\big [r(x,u,d)+b_{t,\delta}(x,u)+\gamma J(f(x,u,d,w))\big]\big\}\\
\mac T J(x)&:= \max_{u\in \mac U}\bigl\{\ex_{\theta^*,\mu^*} \bigl [r(x,u,d)+\gamma J(f(x,u,d,w))\bigr]\bigr\}\\
\mac T^{\pi} J(x)&:= \ex_{\theta^*,\mu^*} \big [r(x,\pi(x),d)+\gamma J(f(x,\pi(x),d,w))\big]
\end{align}
\begin{lemma}{}\label{lem:unique_opt}
$\mac T$, $\mac T^{\pi}$, and $\tilde{\mac T}_{t,\delta}$ are monotone and converge uniformly to unique fixed points $J^*(x)$, $ J^\pi(x)$, and $\tilde J_{t,\delta}(x)$ (with $J^*(x)$, $\tilde J_{t,\delta}(x)$ optimal). $J^*(x)$ and $ J^\pi(x)$ are bounded by $C_J := \frac{\beta + \bar \rho}{1-\gamma}$. $\tilde J_{t,\delta}(x)$ is bounded for $t\leq T$ by $C_{\tilde J,T,\delta} := \frac{\beta +\bar \rho +C_b(T,\delta)}{1-\gamma}$, where $C_b(T,\delta)$ is the bonus bound (Lemma~\ref{lem:bounded_bonus}). 
\end{lemma}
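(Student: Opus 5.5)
The approach is the standard contraction argument for discounted dynamic programming on the space $\mathcal B(\mac X)$ of bounded (measurable) functions on $\mac X=[-C_x,C_x]$, equipped with the sup norm. Proposition~\ref{prop:absolute_bounded_x} guarantees that every successor state $f(x,u,d,w)=ax+b^\top u+c^\top d+w$ stays in $\mac X$ for all parameters in $\Theta$ and all $|w|\le\bar w$. This holds for the true $\theta^*$ and also for the projected estimates $\hat\theta_t\in\Theta$, so each operator maps functions on $\mac X$ to functions on $\mac X$.

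The proof proceeds in four steps.

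\textbf{Well-definedness on $\mathcal B(\mac X)$.} The stage reward is bounded: $|r(x,u,d)|\le \beta+\bar\rho$, because $\sigma\in(0,1)$, $d$ is $1$-sparse and $\rho_i\le\bar\rho$. The bonus $b_{t,\delta}$ is bounded for $t\le T$ by $C_b(T,\delta)$, by Lemma~\ref{lem:bounded_bonus}. Hence for bounded $J$ each expectation is finite, and the maximum over the finite set $\mac U$ is attained.

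\textbf{Monotonicity.} If $J\le J'$ pointwise, then for each fixed $u$ the expectation of $r+b+\gamma J(f)$ is at most that of $r+b+\gamma J'(f)$. Taking the maximum over $u$ (or fixing $u=\pi(x)$) preserves this inequality.

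\textbf{Contraction.} Use the elementary inequality $|\max_u g_u-\max_u h_u|\le\max_u|g_u-h_u|$. It gives $\|\mac TJ-\mac TJ'\|_\infty\le\gamma\|J-J'\|_\infty$, and the identical estimate holds for $\mac T^\pi$ and $\tilde{\mac T}_{t,\delta}$, since the reward and bonus terms cancel in the difference. Banach's fixed point theorem then yields unique fixed points, and iterates $\mathcal T^nJ_0$ converge to them uniformly at rate $\gamma^n$ from any bounded $J_0$.

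\textbf{Bounds and optimality.} If $\|J\|_\infty\le C$, then $\|\mac TJ\|_\infty\le \beta+\bar\rho+\gamma C$. So the ball of radius $C_J=(\beta+\bar\rho)/(1-\gamma)$ is invariant, and by uniqueness the fixed points $J^*$ and $J^\pi$ lie in it. Equivalently, one can bound the discounted series $\sum_t\gamma^{t-1}|r|$ directly. Replacing $\beta+\bar\rho$ by $\beta+\bar\rho+C_b(T,\delta)$ gives $C_{\tilde J,T,\delta}$.

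For optimality, I will argue as follows. Let $\pi^*$ be greedy with respect to the fixed point $J$ of $\mac T$, which is attainable since $\mac U$ is finite. Then $\mac T^{\pi^*}J=\mac TJ=J$, so $J=J^{\pi^*}$. For any policy $\pi$, $\mac T^\pi J\le\mac TJ=J$, and iterating this with monotonicity gives $J^\pi=\lim(\mac T^\pi)^nJ\le J$. The same argument applies to the surrogate operator. For history-dependent policies, I would invoke the standard result that a stationary Markov policy suffices for discounted bounded-reward MDPs.

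\textbf{Main obstacle.} The delicate point is measurability. I need the maximum over $u$ of $\ex[J(f(x,u,d,w))]$ to be measurable in $x$, so that iterates stay in $\mathcal B(\mac X)$ and greedy selectors exist. I would handle this by noting that $\mac U$ is finite and that the adherence probabilities $\sigma(x+\mu_i)$ and the noise density are continuous. Then $x\mapsto\ex[J(f)]$ is measurable whenever $J$ is, and a pointwise maximum of finitely many measurable functions is measurable. A measurable greedy selector can be chosen by taking the lowest-index maximizer.
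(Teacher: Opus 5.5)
Your proposal is correct and follows essentially the same route as the paper. The paper's proof checks that the bounded reward (plus the bonus bound of Lemma~\ref{lem:bounded_bonus}) and discounting satisfy Bertsekas's contraction and monotonicity assumptions, then cites his Propositions 1.6.1--1.6.2 for the unique fixed points, uniform convergence, and optimality, and reads off $C_J$ and $C_{\tilde J,T,\delta}$ from the bounded stage reward. You instead unpack those cited results by hand (Banach fixed point, an invariant sup-norm ball, and a greedy-policy argument); your extra checks that successor states stay in $\mathcal{X}$ for any $\hat\theta_t \in \Theta$ and that the iterates remain measurable supply details the paper leaves implicit.
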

These properties follow from the bounded and discounted setting (proof in Appendix~\ref{app:ec_value_function}), and allow us to show the following optimism result ($T+1$ is needed for downstream regret analysis).
\begin{lemma}{\textbf{(Valid optimism)}}\label{lem:valid_optimism}
Let $\mac E_\theta\left(\nicefrac{\delta}{2}\right)$ and $\mac E_\mu\left(\nicefrac{\delta}{2}\right)$ hold (Propositions~\ref{prop:theta_confidence_set}-\ref{prop:mu_confidence_set}) for $\delta\in(0,1)$. Then $\tilde J_{t,\delta}(x)\geq J^*(x)$ for any $x\in \mac X$, and $t\in \{1,\dots,T+1\}$.
\end{lemma}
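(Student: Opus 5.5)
We prove the claim by monotonicity of the Bellman operators together with a one-step comparison. By Lemma~\ref{lem:unique_opt}, $\tilde{\mac T}_{t,\delta}$ is monotone and its iterates converge uniformly to $\tilde J_{t,\delta}$. It therefore suffices to show the pointwise inequality
\begin{equation*}
\tilde{\mac T}_{t,\delta} J^*(x)\geq \mac T J^*(x)=J^*(x)\quad \text{for all } x\in\mac X .
\end{equation*}
Given this, induction with monotonicity yields $\tilde{\mac T}_{t,\delta}^n J^*\geq J^*$ for all $n$. Letting $n\to\infty$ then gives $\tilde J_{t,\delta}\geq J^*$.

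To establish the pointwise inequality, fix $x$ and $u$. For $u=\ve 0$ the adherence and bonus terms vanish, and only the dynamics term matters. It is enough to show that the bonus $b_{t,\delta}(x,u)$ dominates the gap
\begin{equation*}
\ex_{\theta^*,\mu^*}\bigl[r+\gamma J^*(f)\bigr]-\ex_{\hat\theta_t,\hat\mu_t}\bigl[r+\gamma J^*(f)\bigr].
\end{equation*}
Taking the max over $u$ afterwards preserves the inequality. We split the gap into two pieces.

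\emph{(i) Adherence mismatch.} For $u=e_i$, the outcome $d$ is Bernoulli under $\mu^*_i$ versus $\hat\mu_{i,t}$. The integrand's dependence on $d$ is through $\rho_i d_i$ and through the next state $ax+b_i+c_id_i+w$. By Lipschitzness of $J^*$ with constant $L_1$ (Lemma~\ref{lem:j*_xlip}), the difference between the $d=1$ and $d=0$ branches is at most $\bar\rho+\gamma L_1\bar c$ in absolute value. The gap in this piece is therefore at most $(\bar\rho+\gamma L_1\bar c)\,|\sigma(x+\mu_i^*)-\sigma(x+\hat\mu_{i,t})|$.

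By the mean value theorem, $|\sigma(x+\mu_i^*)-\sigma(x+\hat\mu_{i,t})|\leq \sigma'(\xi)\,|\mu_i^*-\hat\mu_{i,t}|$ for some intermediate $\xi$. Two bounds control $\sigma'(\xi)$. First, $\sigma'\leq 1/4$. Second, since both shifts lie in $[-\bar\mu,\bar\mu]$, they differ by at most $2\bar\mu$, and $\sigma'(y+s)\leq e^{|s|}\sigma'(y)$ then gives $\sigma'(\xi)\leq e^{2\bar\mu}\sigma'(x+\hat\mu_{i,t})$. Together these yield $\sigma'(\xi)\leq\kappa(x,\hat\mu_{i,t})$. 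On $\mac E_\mu(\nicefrac{\delta}{2})$ (Proposition~\ref{prop:mu_confidence_set}) we have $|\mu_i^*-\hat\mu_{i,t}|\leq\alpha^\mu_{i,t}(\nicefrac{\delta}{2})$, so this piece is bounded by the first bonus term.

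\emph{(ii) Dynamics mismatch.} Here we use the same distribution of $d$, namely $\hat\mu_t$, and compare $\theta^*$ against $\hat\theta_t$. The same noise $w$ is coupled in both. Lipschitzness gives
\begin{equation*}
|J^*(\theta^{*\top}z+w)-J^*(\hat\theta_t^\top z+w)|\leq L_1|(\theta^*-\hat\theta_t)^\top z|\leq L_1\Vert\theta^*-\hat\theta_t\Vert_{\bar V_t}\Vert z\Vert_{\bar V_t^{-1}}
\end{equation*}
by Cauchy--Schwarz. On $\mac E_\theta(\nicefrac{\delta}{2})$ (Proposition~\ref{prop:theta_confidence_set}) the first factor is at most $\alpha^\theta_t(\nicefrac{\delta}{2})$. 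Multiplying by $\gamma$ and taking $\ex_{\hat\mu_t}$ recovers the second bonus term. The reward $r$ depends only on $(x,u,d)$, so it contributes nothing beyond piece (i).

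We order the telescoping as $(\theta^*,\mu^*)\to(\theta^*,\hat\mu_t)\to(\hat\theta_t,\hat\mu_t)$, so that the expectations match the bonus exactly. For $t=T+1$, the confidence events are stated for all $t\geq1$, so the argument is unchanged.

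The main obstacle is the Lipschitz property of $J^*$, which is assumed here via Lemma~\ref{lem:j*_xlip}. A secondary subtlety is that next states must remain in $\mac X$ when evaluated under $\hat\theta_t$. This holds because the projection places $\hat\theta_t\in\Theta$, so Proposition~\ref{prop:absolute_bounded_x} applies to the surrogate system as well.
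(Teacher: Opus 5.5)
Your proposal is correct and follows essentially the same route as the paper: a one-step comparison $\tilde{\mac T}_{t,\delta}J^*\geq J^*$ through the intermediate parameterization $(\theta^*,\hat\mu_t)$. The adherence mismatch is handled by the mean value theorem plus sigmoid self-concordance (yielding $\kappa$), the dynamics mismatch by $L_1$-Lipschitzness and Cauchy--Schwarz in the $\bar V_t$ norm, so the bonus cancels the error exactly, and monotonicity with convergence of the iterates finishes. The differences are cosmetic: you compare every action rather than only the true optimal action $u^*_x$, and you pass to the limit directly instead of arguing by contradiction; note also that for $u=\ve 0$ only the $\mu$-part of the bonus vanishes, while the $\theta$-part remains and is exactly what covers the dynamics mismatch there.
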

The proof is provided in Appendix~\ref{app:ec_value_function}. Broadly, it shows that a single application of $\tilde {\mac T}_{t,\delta}$ to $J^*$ is optimistic (i.e., $\tilde{\mac T}_{t,\delta}J^*(x) \geq J^*(x)$), then uses the monotonicity and uniform convergence properties of the operators to show that this implies $\tilde J_{t,\delta}(x)\geq J^*(x)$. The core proof step is leveraging the $L_1$-Lipschitz continuity of $J^*(x)$ (see Lemma~\ref{lem:j*_xlip}) to express value function differences as estimation errors (i.e., $\vert \hat \theta_t - \theta^*\vert$), which are bounded by the confidence set Propositions~\ref{prop:theta_confidence_set}-\ref{prop:mu_confidence_set}. This implicitly extends to \textsc{UCB-BOLD}'s epoch quantities, which are a subset of the time-indexed quantities.

\subsection{Regret analysis}
We use a notion of regret from the discounted MDP literature \citep{heNearlyMinimaxOptimal2021} that measures the cumulative sub-optimality gap of the enacted policy: $R_T=\sum_{t=1}^T J^*(x_t)-J^{\pi_t}(x_t)$. This is the natural choice for regret in our bounded and discounted setting, as it captures convergence to an optimal policy even when the value functions themselves are bounded. From this definition, we construct our main regret result for the performance of \textsc{UCB-BOLD}.
\begin{theorem}{\textbf{(Main regret result)}}\label{thm:regret}
Let $\delta_{all}\in (0,1)$ and Assumptions~\ref{asm:observed_quantities}-\ref{asm:rewards} hold. Let $C_d,C_N>0$ be given by Algorithm~\ref{alg:ucb_epoch}. Then with probability at least $1-\delta_{all}$ it holds that:
\begin{align}
    R_T &\leq \tfrac{1}{4(1-\gamma)}[(2\bar \rho + \gamma \bar c (L_1+L_2))\nu_T(\nicefrac{\delta_{all}}{6})\sqrt{1+C_N}[\tfrac{M}{\sqrt{\lambda_2}}+2\sqrt{\tfrac{MT}{\underline q}} ] ]\nonumber\\
    &\qquad +\tfrac{\gamma}{1-\gamma}[ C_J+C_{\tilde J,T+1,\nicefrac{\delta_{all}}{3}}+(L_1+L_2)\alpha_T^\theta(\nicefrac{\delta_{all}}{6}) [C_1+C_2+C_3] ]\nonumber\\
    &\qquad + \tfrac{\gamma}{1-\gamma}[2(C_{\tilde J,T+1,\nicefrac{\delta_{all}}{3}}+C_J)(\sqrt{2T\log(\nicefrac{3}{\delta_{all}})}+K_d +K_N) ]
\end{align}
with $L_1=\frac{\beta+\bar \rho}{4(1-\gamma \bar a)}\left(1+\frac{2\gamma}{1-\gamma}\right)$, 
$L_2 = \frac{1}{1-\gamma\bar a}[\nicefrac{(\beta+\bar\rho)}{4}+(\nicefrac{e^{2\bar \mu}}{4})(\bar \rho + \gamma L_1\bar c)\overline \alpha^\mu(\nicefrac{\delta_{all}}{6})+\frac{5\gamma L_1\alpha^\theta_T(\nicefrac{\delta_{all}}{6})}{4\sqrt{\lambda_1}}+(\nicefrac{\gamma}{2})C_{\tilde J,T,\nicefrac{\delta_{all}}{3}}]$,
$\nu_T(\delta) = e^{3\bar \mu}[\frac{\sqrt{\lambda_2}}{2}+\frac{2\bar\mu}{\sqrt{\lambda_2}}+\frac{2}{\sqrt{\lambda_2}}\log (\frac{2M\sqrt{\lambda_2 + \nicefrac{T}{4}}}{\delta \sqrt{\lambda_2}} )+e^{-\bar \mu}\sqrt{\lambda_2} \bar \mu]$, 
$C_1=\frac{\nu_T(\nicefrac{\delta_{all}}{6})\sqrt{1+C_N}}{4\sqrt{\lambda_1}}[\nicefrac{M}{\sqrt{\lambda_2}}+2\sqrt{\nicefrac{MT}{\underline q}}]$, 
$C_2=\sqrt{2\log(\frac{3}{\delta_{all}})\frac{T}{\lambda_1}}$, 
$C_3=\sqrt{T(1+C_d)(2M+1)\log (1+\frac{T(C_x^2+2)}{(2M+1)\lambda_1} ) [2 +\frac{C_x^2+2}{\log(2)\lambda_1}]}$, 
$K_d = \frac{(2M+1)}{\log(1+C_d)}\log (1+\frac{T(C_x^2+2)}{(2M+1)\lambda_1} )$, 
$K_N= M(1+\frac{\log(T)}{\log(1+C_N)})$, $\underline q=\sigma'(C_x+\bar \mu)$, and 
$\overline \alpha^\mu(\nicefrac{\delta}{2})=\frac{e^{3\bar \mu}}{\sqrt{\lambda_2}}[\frac{\sqrt{\lambda_2}}{2}+\frac{2}{\sqrt{\lambda_2}}(\bar \mu+ \log (\frac{4M}{\delta \sqrt{\lambda_2}} ))]+\frac{2e^{3\bar \mu -1}}{\sqrt{\lambda_2}} +e^{2\bar \mu}\bar \mu$.
\end{theorem}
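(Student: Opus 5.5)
Fix $\delta_{all}$ and split it into three pieces of size $\nicefrac{\delta_{all}}{3}$: one for $\mac E_\theta(\nicefrac{\delta_{all}}{6})$, one for $\mac E_\mu(\nicefrac{\delta_{all}}{6})$ (Propositions~\ref{prop:theta_confidence_set}--\ref{prop:mu_confidence_set}, invoked with $\delta=\nicefrac{\delta_{all}}{3}$ so that $\nicefrac{\delta}{2}$ matches the algorithm's confidence level), and one for two Azuma--Hoeffding martingale concentrations. On this event, Lemma~\ref{lem:valid_optimism} gives $\tilde J^{(k)}\geq J^*$ for every epoch. Hence
\[
R_T\leq \textstyle\sum_{t=1}^T \bigl(\tilde J^{(k_t)}(x_t)-J^{\pi_t}(x_t)\bigr),
\]
and I control each summand by the standard optimism telescoping argument used in LSVI-UCB, adapted to the discounted setting. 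Since $\pi_t=\pi^{(k_t)}$ is greedy for $\tilde J^{(k_t)}$, writing both value functions via their Bellman equations gives the recursion
\begin{align*}
\tilde J^{(k)}(x_t)-J^{\pi_t}(x_t)
&= b_{t}(x_t,u_t) + \bigl(\ex_{\hat\theta,\hat\mu}-\ex_{\theta^*,\mu^*}\bigr)\bigl[r+\gamma \tilde J^{(k)}(x')\bigr]\\
&\quad +\gamma\,\ex_{\theta^*,\mu^*}\bigl[\tilde J^{(k)}(x')-J^{\pi_t}(x')\bigr].
\end{align*}
I replace the last expectation by its realized value at $x_{t+1}$ plus a martingale difference. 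Summing over $t$ and telescoping leaves three contributions: terms at epoch switches, where $\tilde J^{(k)}$ changes; boundary terms; and the realized difference at $T+1$. This geometric unrolling produces the prefactor $\tfrac{\gamma}{1-\gamma}$ in front of the $C_J+C_{\tilde J}$ and martingale pieces. The martingale sum of differences bounded by $2(C_{\tilde J}+C_J)$ gives the $\sqrt{2T\log(\nicefrac{3}{\delta_{all}})}$ term. Each epoch switch costs at most $2(C_{\tilde J}+C_J)$, and I count switches via the two triggers: $\det\bar V$ can grow by a factor $(1+C_d)$ at most $K_d$ times (determinant--trace inequality with $\Vert z\Vert^2\leq C_x^2+2$), and each $N_i$ can grow by $(1+C_N)$ at most $1+\log T/\log(1+C_N)$ times, which gives $K_N$.

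Next I bound the model-mismatch term by the bonus. Using the Lipschitz property of $\tilde J^{(k)}$ in $x$ (constant $L_2$, which I would prove analogously to Lemma~\ref{lem:j*_xlip} but now including the Lipschitz constants of the bonus and of $\kappa$; this is why $L_2$ contains $\overline\alpha^\mu$, $\alpha^\theta_T/\sqrt{\lambda_1}$, and $C_{\tilde J}$), I split the difference into two parts. The first is the adherence-probability error, of size $\kappa\,\alpha^\mu_{i}$ by the mean value theorem and Proposition~\ref{prop:mu_confidence_set}, times $(\bar\rho+\gamma(L_1+L_2)\bar c)$. The second is the dynamics error, bounded by Cauchy--Schwarz as $\Vert\hat\theta-\theta^*\Vert_{\bar V}\Vert z\Vert_{\bar V^{-1}}\leq \alpha^\theta\Vert z\Vert_{\bar V^{-1}}$, times $\gamma (L_1+L_2)$. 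Together with the bonus itself, the per-step regret is therefore $O\bigl((2\bar\rho+\gamma\bar c(L_1+L_2))\sum_i u^i\kappa\alpha^\mu_i+\gamma(L_1+L_2)\alpha^\theta\,\ex\Vert z\Vert_{\bar V^{-1}}\bigr)$. The factor $\tfrac{1}{4(1-\gamma)}$ comes from $\kappa\leq\nicefrac14$ and the discount unrolling.

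It remains to sum the two uncertainty terms. For $\mu$, I bound $\alpha^\mu_{i,t}\leq \nu_T/\sqrt{H_t}$ using $H_t\leq\lambda_2+\nicefrac{T}{4}$ inside the logarithm, and lower bound $H_t\geq \lambda_2+\underline q N_i$. Frozen epoch statistics inflate this by at most $\sqrt{1+C_N}$. The sum $\sum_t 1/\sqrt{\lambda_2+\underline q N_{i,t}}$ over the visits to arm $i$ is then $\leq 1/\sqrt{\lambda_2}+2\sqrt{N_i/\underline q}$, and Cauchy--Schwarz over arms gives $\tfrac{M}{\sqrt{\lambda_2}}+2\sqrt{\nicefrac{MT}{\underline q}}$. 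For $\theta$, I swap the expectation over $d$ for the realized $\Vert z_t\Vert_{\bar V^{-1}}$ plus a bounded martingale ($\Vert z\Vert_{\bar V^{-1}}\leq\sqrt{(C_x^2+2)/\lambda_1}$), which gives $C_2$. Using $\bar V^{(k)}\succeq \bar V_t/(1+C_d)$ and the elliptical potential lemma yields $C_3$. The cross term where $\alpha^\mu$ enters the $\Vert z\Vert$ expectation accounts for $C_1$.

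I expect the main obstacle to be establishing the Lipschitz continuity of $\tilde J^{(k)}$ with constant $L_2$. Since the bonus depends on $x$ through $\kappa$ and through $\ex_{\hat\mu}\Vert z\Vert_{\bar V^{-1}}$, I would bound the $x$-derivative of each component and propagate it through value iteration using contraction with factor $\gamma\bar a$.
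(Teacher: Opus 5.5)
Your plan follows the paper's proof essentially step for step. The shared steps are:
\begin{itemize}
\item optimism on $\mathcal E_\theta(\delta_{all}/6)\cap\mathcal E_\mu(\delta_{all}/6)$;
\item the one-step Bellman expansion with a cross term under $(\theta^*,\hat\mu)$ and the realized $x_{t+1}$, followed by discounted telescoping;
\item $L_2$-Lipschitz bounds on the model mismatch and the count-reindexed $\mu$ sum;
\item the split of $\mathbf{E}_{\hat\mu}\Vert z\Vert_{\bar V^{-1}}$ into a $\hat\mu$-versus-$\mu^*$ cross term ($C_1$), a martingale ($C_2$) and the realized potential ($C_3$);
\item the $K_d+K_N$ epoch count.
\end{itemize}
Because the theorem states explicit constants, a few steps as written would not reproduce the stated bound.

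\textbf{Union bound.} $\mathcal E_\theta(\delta_{all}/6)$ and $\mathcal E_\mu(\delta_{all}/6)$ each fail with probability at most $\delta_{all}/6$, so together they use only one third. Each of the two Azuma--Hoeffding steps then gets its own $\delta_{all}/3$, which is where $\log(3/\delta_{all})$ comes from in both $C_2$ and the last line. Your split charges a full third to each confidence event and leaves $\delta_{all}/6$ per martingale, which is inconsistent with the $\log(3/\delta_{all})$ you then use.

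\textbf{The constant $C_2$.} The increment bound $\sqrt{(C_x^2+2)/\lambda_1}$ is too crude. Conditional on $(x_t,u_t)$, only the adherence coordinate of $z_t$ is random. Hence $|\xi_t|\le\Vert z_1-z_0\Vert_{(\bar V^{\circ}_t)^{-1}}\le 1/\sqrt{\lambda_1}$, and this is what yields $\sqrt{2\log(3/\delta_{all})T/\lambda_1}$.

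\textbf{The constant $C_3$.} The elliptical potential lemma only controls $\sum_t\min\{1,\Vert z_t\Vert^2_{\bar V_t^{-1}}\}$. The factor $[2+\frac{C_x^2+2}{\log(2)\lambda_1}]$ comes from handling separately the steps with $\Vert z_t\Vert^2_{\bar V_t^{-1}}\ge 1$:
\begin{itemize}
\item each such step at least doubles $\det\bar V_t$, so there are at most $\log_2(\det\bar V_{T+1}/\det(\lambda_1 I))$ of them;
\item each contributes at most $(C_x^2+2)/\lambda_1$.
\end{itemize}

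\textbf{Coefficient bookkeeping.} The estimation errors in the mismatch are multiplied by $L_2$ only, since the function being compared is $\tilde J^{(k)}$. This gives $(\bar\rho+\gamma L_2\bar c)$ for the adherence part and $\gamma L_2$ for the dynamics part. The $L_1$ contributions enter solely through the bonus. Attributing $L_1+L_2$ to the mismatch and then adding the bonus would double count $L_1$, although the aggregate coefficient $(2\bar\rho+\gamma\bar c(L_1+L_2))$ you end with is the correct one.
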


\begin{corollary}\label{cor:expected_regret}
Under the conditions of Theorem~\ref{thm:regret} and $\delta_{all}=\nicefrac{1}{T}$, $\ex[R_T] = \tilde{\mac O}\left(\frac{M^{\nicefrac{3}{2}}\sqrt{T}}{(1-\gamma)^3} \right)$.
\end{corollary}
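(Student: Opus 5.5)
The corollary follows from Theorem~\ref{thm:regret} by splitting the expectation on the success event and then tracking orders. Let $\mac G$ be the event of probability at least $1-\delta_{all}$ on which the bound of Theorem~\ref{thm:regret} holds, and write $B_T(\delta_{all})$ for its right-hand side. Then
\begin{align}
\ex[R_T]\leq B_T(\delta_{all}) + \pr(\mac G^c)\sup R_T .
\end{align}
Since $J^*$ and $J^{\pi_t}$ are both bounded by $C_J$ (Lemma~\ref{lem:unique_opt}), each summand of $R_T$ is at most $2C_J$, so $R_T\leq 2C_JT$ almost surely. With $\delta_{all}=\nicefrac{1}{T}$ the failure contribution is at most $2C_J=\mac O(\nicefrac{1}{1-\gamma})$. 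This is lower order, so it remains to show $B_T(\nicefrac{1}{T})=\tilde{\mac O}\bigl(\nicefrac{M^{3/2}\sqrt{T}}{(1-\gamma)^3}\bigr)$.

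Next I would record the orders of the constituent quantities. Treat $\bar a,\bar b,\bar c,\bar w,\bar\mu,\beta,\bar\rho,\lambda_1,\lambda_2,C_d,C_N$ as constants, so $C_x,\underline q,\underline p$ are constants. Replacing $\delta$ by $\nicefrac{1}{6T}$ or $\nicefrac{1}{3T}$ only adds $\log T$ factors. This gives the following orders:
\begin{itemize}
\item $\nu_T=\tilde{\mac O}(1)$ and $\overline\alpha^\mu=\tilde{\mac O}(1)$;
\item $\alpha^\theta_T=\tilde{\mac O}(\sqrt M)$, from the $\sqrt{(2M+1)\log(\cdot)}$ term and the $\sqrt{\lambda_1 M}$ term;
\item $L_1=\mac O(\nicefrac{1}{(1-\gamma)^2})$.
\end{itemize}
For $L_2$ I need the bonus bound $C_b(T,\delta)$ of Lemma~\ref{lem:bounded_bonus}. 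I expect it to be $\tilde{\mac O}(L_1\sqrt M)$, since the bonus is $(\bar\rho+\gamma L_1\bar c)\cdot\tilde{\mac O}(1)+\gamma L_1\alpha^\theta_T\cdot\mac O(1)$. Then $C_{\tilde J}=\tilde{\mac O}\bigl(\nicefrac{\sqrt M}{(1-\gamma)^3}\bigr)$, and $L_2$, which contains $\gamma C_{\tilde J}/2$ times $\nicefrac{1}{1-\gamma\bar a}=\mac O(1)$, is also of this order. The remaining terms are
\begin{align}
C_1=\tilde{\mac O}(\sqrt{MT}),\quad C_2=\tilde{\mac O}(\sqrt T),\quad C_3=\tilde{\mac O}(\sqrt{MT}\cdot\sqrt M)=\tilde{\mac O}(M\sqrt T),\quad K_d=\tilde{\mac O}(M),\quad K_N=\tilde{\mac O}(M).
\end{align}

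I would then multiply out the three lines of $B_T$.
\begin{itemize}
\item The first line gives $\tfrac{1}{1-\gamma}(L_1+L_2)\sqrt{MT}$, of order $\tilde{\mac O}(M\sqrt T/(1-\gamma)^4)$ if $L_2$ is taken at its crude order.
\item The third line gives $\tfrac{1}{1-\gamma}C_{\tilde J}(\sqrt T+M)$, of order $\tilde{\mac O}(\sqrt{M}\sqrt{T}/(1-\gamma)^4)$.
\item The second line contains $\tfrac{1}{1-\gamma}(L_1+L_2)\alpha^\theta_T C_3$, which is $\tfrac{1}{1-\gamma}\cdot L\cdot\sqrt M\cdot M\sqrt T$. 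This is the term that produces $M^{3/2}\sqrt T$.
\end{itemize}
The main obstacle is matching the stated $\gamma$-exponent $(1-\gamma)^{-3}$. Naively, $L_2$ contains $C_{\tilde J}$, which already carries $(1-\gamma)^{-3}$, and the outer factor $\tfrac{1}{1-\gamma}$ would push this to $(1-\gamma)^{-4}$. It would also introduce an extra $\sqrt M$ through $C_{\tilde J}\alpha^\theta_T$.

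To resolve this I would first recheck the dependence of $C_b$. The bonus bound may be controlled using $\Vert z\Vert_{\bar V^{-1}}\leq\sqrt{(C_x^2+2)/\lambda_1}$, with the $\gamma$ factors absorbed by treating $\nicefrac{1}{1-\gamma\bar a}$ and $L_1(1-\gamma)$ appropriately. It is also possible that the intended accounting is that $L_1$ scales as $(1-\gamma)^{-2}$ and the outer prefactor supplies the third power, with $L_2$'s $C_{\tilde J}$ term regarded as subdominant under the paper's convention for the $\gamma$ exponent. In the write-up I would state these order computations explicitly. I would identify $\tfrac{\gamma}{1-\gamma}L_1\alpha^\theta_T C_3=\tilde{\mac O}\bigl(\nicefrac{M^{3/2}\sqrt T}{(1-\gamma)^3}\bigr)$ as the leading term. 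I would then verify that every other term is at most this order in $M$ and $T$, using that $M\leq M^{3/2}$ and that additive $\tilde{\mac O}(M)$ terms are dominated for $T\gtrsim M$. If the $L_2$ term cannot be tamed, I would flag that the honest $\gamma$-dependence may be $(1-\gamma)^{-4}$.
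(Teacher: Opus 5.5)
Your skeleton matches the paper's proof: split on the good event, bound each per-step regret by $2C_J$ so the failure contribution is $2C_JT\delta_{all}=2C_J$, and then do order-tracking on $B_T(\nicefrac{1}{T})$. The gap is in the order-tracking. Two of your orders are wrong, and together they create an obstacle that does not exist. You are then left with a hedged conclusion instead of a proof.

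\textbf{First error: $L_1$.} We have $L_1=\frac{\beta+\bar\rho}{4(1-\gamma\bar a)}\cdot\frac{1+\gamma}{1-\gamma}$. Since $\bar a<1$ is a fixed constant, $1-\gamma\bar a\geq 1-\bar a$, so $L_1=\mathcal O(\nicefrac{1}{1-\gamma})$, not $\mathcal O(\nicefrac{1}{(1-\gamma)^2})$. Your structural estimate $C_b=\tilde{\mathcal O}(L_1\sqrt M)$ is right. With the corrected $L_1$ it gives $C_b=\tilde{\mathcal O}(\nicefrac{\sqrt M}{1-\gamma})$ and $C_{\tilde J}=\tilde{\mathcal O}(\nicefrac{\sqrt M}{(1-\gamma)^2})$. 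The prefactor $\nicefrac{1}{1-\gamma\bar a}$ in $L_2$ is again $\mathcal O(1)$, so $L_2=\tilde{\mathcal O}(\nicefrac{\sqrt M}{(1-\gamma)^2})$.

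\textbf{Second error: $C_3$.} The radicand of $C_3$ is $T(1+C_d)(2M+1)\log(\cdot)\,[2+\nicefrac{(C_x^2+2)}{(\log(2)\lambda_1)}]$. This is linear in $M$, and $C_x$ does not depend on $M$. Hence $C_3=\tilde{\mathcal O}(\sqrt{MT})$, not $\tilde{\mathcal O}(M\sqrt T)$.

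\textbf{Corrected accounting.} The leading term is the $L_2$-part of the second line, exactly as the paper says:
\[
\tfrac{\gamma}{1-\gamma}L_2\,\alpha^\theta_T\,(C_1+C_3)=\tilde{\mathcal O}\bigl(\tfrac{1}{1-\gamma}\cdot\tfrac{\sqrt M}{(1-\gamma)^2}\cdot\sqrt M\cdot\sqrt{MT}\bigr)=\tilde{\mathcal O}\bigl(\tfrac{M^{3/2}\sqrt T}{(1-\gamma)^3}\bigr).
\]
The term you singled out, $\frac{\gamma}{1-\gamma}L_1\alpha^\theta_T C_3$, is only $\tilde{\mathcal O}(\nicefrac{M\sqrt T}{(1-\gamma)^2})$. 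Your suggested fallback of treating the $C_{\tilde J}$ contribution inside $L_2$ as subdominant is therefore backwards, since it is the dominant contribution. There is also no $(1-\gamma)^{-4}$ or $M^2\sqrt T$ to flag.

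\textbf{Remaining terms.} With the correct orders, each of these fits the claimed rate:
\begin{itemize}
\item The first line is $\tilde{\mathcal O}(\nicefrac{(M\sqrt T+M^{3/2})}{(1-\gamma)^3})$.
\item The third line is $\tilde{\mathcal O}(\nicefrac{(\sqrt{MT}+M^{3/2})}{(1-\gamma)^3})$.
\item The term $C_{\tilde J}/(1-\gamma)$ is $\tilde{\mathcal O}(\nicefrac{\sqrt M}{(1-\gamma)^3})$.
\item The $M$-part of $C_1$ in the second line gives $\tilde{\mathcal O}(\nicefrac{M^2}{(1-\gamma)^3})$, which is absorbed once $T\gtrsim M$.
\end{itemize}
So $B_T(\nicefrac{1}{T})=\tilde{\mathcal O}(\nicefrac{M^{3/2}\sqrt T}{(1-\gamma)^3})$, and the corollary follows from your decomposition.
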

Proofs of these results can be found in Appendix~\ref{app:ec_regret}. The proof of Theorem~\ref{thm:regret} relies on the Lipschitz continuity of the rewards, bonus, and the value functions $J^*$ and $\tilde J_{t,\delta}$. We add and subtract cross terms to generate pairs of quantities (e.g., differences in the value-to-go) under different $\theta$ and $\mu$ parameterizations. Similar to the approach used to demonstrate valid optimism for the surrogate system, we use Lipschitz continuity to express differences as estimation errors in either $\mu$ or $\theta$. Propositions~\ref{prop:theta_confidence_set} and \ref{prop:mu_confidence_set} then provide high probability bounds on these estimation errors. We bound the remaining terms either by standard concentration arguments for martingale difference sequences or using the epoch-based construction of the algorithm to show that the number of bounded terms grows sublinearly. We then union bound these events to obtain the final result.

Thus \textsc{UCB-BOLD} achieves the $\tilde{\mac O}(\sqrt{T})$ regret rate typical for related settings, such as discounted tabular MDPs \citep{heNearlyMinimaxOptimal2021} and linear mixture MDPs \citep{zhouProvablyEfficientReinforcement2021}, linear and logistic bandits \citep{abbasi-yadkoriImprovedAlgorithmsLinear2011,fauryImprovedOptimisticAlgorithms2020} and the average reward linear quadratic regulator \citep{simchowitzNaiveExplorationOptimal2020}. The leading order term in our regret bound comes from the $(L_1+L_2)\alpha_T^\theta(\nicefrac{\delta_{all}}{6}) \left[C_1+C_2+C_3\right]$ quantity; this term includes the $\frac{\gamma}{1-\gamma}$ term from the telescoping decomposition, the $\tilde{\mac O}\left(\frac{\sqrt{M}}{(1-\gamma)^2}\right)$ $L_2$-Lipschitz continuity of $\tilde J_{t,\delta}$, the $\tilde{\mac O}(\sqrt{M})$ confidence set width $\alpha^\theta_T$, and the $\tilde{\mac O}(\sqrt{MT})$ sum of $C_1$ and $C_3$. $C_1$ captures joint interactions between $\mu$ uncertainty and $\theta$-based exploration while $C_3$ captures typical dynamics exploration costs for learning $\theta$ in linear bandit or linear MDP settings. A regret lower bound in our model class remains open. For context, \textcite{zhouProvablyEfficientReinforcement2021} established the $\Omega\left(\frac{d\sqrt{T}}{(1-\gamma)^{1.5}} \right)$ rate for the related linear mixture MDP setting, which assumes a transition kernel that is linear in a $d$-dimensional feature set. Our model's noise and state-dependent adherence do not fit this modeling class, but their bound suggests that tighter analysis may be possible in our setting and serves as a motivating reference point for future work. 

\section{Case study and computational experiments}\label{sec:computational_experiments}
We evaluate \textsc{UCB-BOLD} against benchmark algorithms on a synthetic patient cohort, tasking the algorithms with recommending daily treatments from a small ($M=3$) treatment set. Our experiments are derived from an MRT where digital prompts were used to promote non-sedentary behavior in the workplace. We use Bayesian inference to fit a hierarchical model (HM) to the MRT data and generate synthetic patients. We evaluate algorithm performance on simulated patient trajectories, including extensive ablation studies to highlight algorithm performance across parameter regimes. We find that \textsc{UCB-BOLD} consistently outperforms the benchmarks across ablation conditions.

\subsection{Physical activity case study}
We rely on MRT study data from \textcite{zotero-item-2999}. This study followed 16 participants for up to 4 work weeks. Within each hour of the participant's 8-hour workday, participants were randomly assigned a message type: (1) no message, (2) a prompt to stand up, and (3) a prompt to move (i.e., to take a brief walk). The study set internal thresholds for standing and moving to convert measured physical activity to binary adherence outcomes for each prompt.

\subsection{Synthetic cohort generation}
This MRT largely maps onto the model described in Section~\ref{sec:model}. One exception is the absence of a self-reported engagement state; this is treated as a latent quantity during Bayesian HM fitting. The other key difference is the MRT's hourly, not daily, timescale. Modeling both within- and between-day effects was impractical for a dataset of this size, so we chose to treat each day as a standalone trajectory. Thus, we consider the following patient-level model: $ x_{t+1}^{i,j} = a^ix_t^{i,j}+{b^i}^\top u^{i,j}_t+{c^i}^\top d^{i,j}_t +w^j_t$, $x^{i,j}_1\sim w^j_0$, where $i$ indexes the participant, $j$ indexes the trajectory for that participant, and $t$ indexes the time step within the day. Adherence decisions $d^{i,j}_t|x^{i,j}_t,u^{i,j}_t$ follow the model in Section~\ref{sec:model}. The HM models patient parameters as samples from a population distribution, i.e., letting $m$ index the non-null action type, $b^i_m \sim \mac N(\bar b_m, \sigma_{b_m}^2)$. The same approach is used for $a$, $c$, and $\mu$, with a logit transformation to ensure $a\in(0,1)$. We emphasize that we do not use the MRT dataset to validate our particular model structure against alternatives; rather, we use the dataset as a principled way to generate a plausible patient parameter distribution for this model class.

We use \verb|Stan| \citep{carpenterStanProbabilisticProgramming2017a} to perform Markov chain Monte Carlo Bayesian inference on the HM and MRT dataset described above. We used weakly informative priors, resulting in a roughly uniform prior for the sampling distribution of $a$ and zero-mean Gaussian priors for the sampling distributions of $b$, $c$, and $\mu$. Figure~\ref{fig:priors_posteriors} shows the sampling distribution for the synthetic patients under the weakly informative priors and under the posterior of the MRT-fitted model.
\begin{figure}
\FIGURE
{\includegraphics[width=0.85\textwidth]{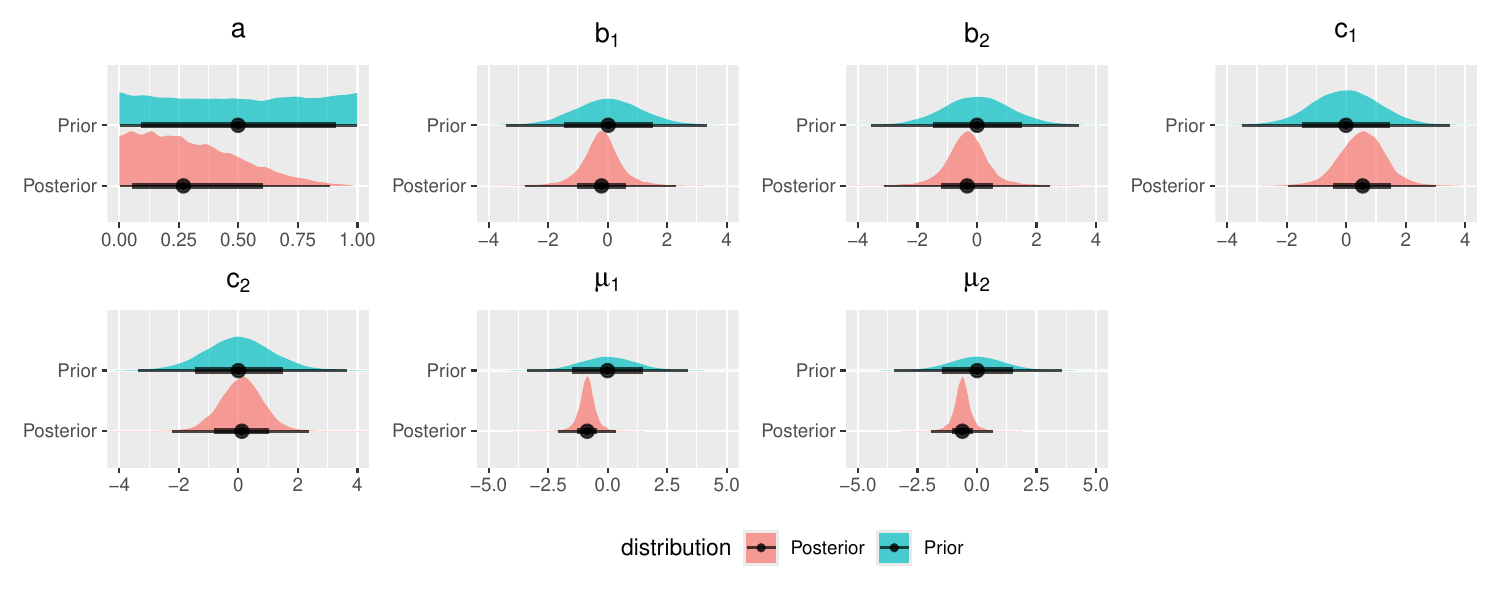}}
{Synthetic patient population sampling distributions \label{fig:priors_posteriors}}
{Plots display kernel density estimates, medians, and $80\%$ and $99\%$ credible intervals for each model parameter.}
\end{figure}
While there is still meaningful uncertainty in the posterior, its shape agrees with intuition. The uniform prior over $a$ becomes right skewed, suggesting low-to-moderate state persistence in this context. The zero-mean Gaussian priors for $b$ tighten and shift left (i.e., indicating burden), with the more demanding move request being more negative than the stand request. The zero-mean Gaussian priors for $c$ shrink and shift right, suggesting adherence can increase engagement capacity in this context (with standing having a stronger effect than moving). Both $\mu$ quantities tighten and shift left, capturing that baseline adherence in this context is less than $50\%$ for both prompt types.

\subsection{Experiment description}
All experiments simulated patient trajectories with once-daily treatment recommendations. We performed $25$ replications with different random seeds for each patient. While $x$ was a latent quantity for Bayesian inference using the MRT data, here it is fully-observed, matching Section~\ref{sec:model}. Using the joint parameter posterior described previously, we sampled $100$ synthetic patients for evaluation. The sampling process fixed most relevant patient parameters, i.e., $a^i$, $b^i_1$, $b^i_2$, $c^i_1$, $c^i_2$, $\mu^i_1$, and $\mu^i_2$. The process noise was zero-mean truncated Gaussian noise with original $\sigma_w^2=1$, truncated using $\bar w=2.5$. Given the importance of reducing dropout in DT interventions, we tested the impact of including an additional, motivating action. As a result, the experiments had $3$ non-null actions (two from the MRT plus the motivating action) and the null action. We set $\rho_3$ and $b_3$ for the motivating action to $0$, meaning it carries no treatment reward but also no cost or benefit associated with recommendation. We also fixed $\mu_3$ for this action to $0$, implying $50\%$ adherence at $x=0$. Ablation studies explored the effect of the remaining free parameters, namely the strength of the motivating action ($c_3$), the relative scaling of rewards ($\rho_1$ vs. $\rho_2$), the trajectory length ($T$), the discount factor ($\gamma$), and the state-based penalty (via $\beta$ and $\beta_0$).

Our experiments compared \textsc{UCB-BOLD} to several benchmark algorithms. \textsc{Optimal} is the optimal policy, obtained by VI using the true system; its value function serves as the reference point for regret calculations. \textsc{Fixed1} and \textsc{Fixed2} are fixed action policies for the two treatment actions. \textsc{Random} selects actions uniformly at random from $\mac U$. We also included epsilon-greedy Q-learning algorithms, using linear function approximation (\textsc{LFA-Q}) and tile-coding (\textsc{TC-Q}), given as Algorithms~\ref{alg:linq} and \ref{alg:tcq} in Appendix~\ref{app:computation}, to serve as model-free baselines. As a myopic benchmark, we include \textsc{GLM-Bandit}, given as Algorithm~\ref{alg:glm_bandit} in Appendix~\ref{app:computation}, which estimates only the shift parameters (i.e., Algorithm~\ref{alg:ucb_epoch} with no planning). We performed grid-search hyperparameter tuning to ensure the Q-learners were configured to be fair model-free baselines (see Appendix~\ref{app:computation}).

\subsubsection{Experiment 1: Reward scaling and motivation strength}
This experiment fixed $\gamma = 0.8$, $T=730$ days, and explored a $4\times 4$ ablation grid, studying the impact of varied reward scaling between treatments and different values of $c$ (i.e., the strength) for the motivating action. For simplicity, we fixed the reward associated with the first treatment action to $\rho_1=1$ and vary $\rho_2=[0.5, 1.0, 1.5, 2.0]$. For the $c$-dimension, we vary a patient-scaled value of $c^i_3=[0,1, 2,3]\cdot (1-a^i)$. We chose this structure as it normalizes $c_3$ to have a more similar impulse response across patients. For simplicity, this experiment assumes $\beta=0$ such that the state-based penalty is absent; we would expect planning-based algorithms to improve in the presence of a state-based penalty (compared to myopic approaches), so removing the penalty represents a floor for their performance.
\subsubsection{Experiment 2: Shortened trajectory}
This experiment tests the robustness of findings with respect to trajectory length, testing Experiment 1's conditions for shorter, $180$-day trajectories. 

\subsubsection{Experiment 3: Effect of discount factor}
The discount factor is ultimately a clinical decision, tied to the particular condition of interest. This experiment shows how regret performance varies under different discounting assumptions. We tested a $730$-day horizon, fixing a plausible setting on the interior of the primary ablation grid: reward scaling of $1.5$ and a patient-scaled $c$ value of $2$. For this configuration, we tested discount factors $\gamma =[0.0, 0.5, 0.8, 0.9, 0.95, 0.98]$, representing effective discounting horizons of $1$, $2$, $5$, $10$, $20$, and $50$ days. 

\subsubsection{Experiment 4: Effect of the state-based penalty}
DT-targeted conditions may carry meaningfully different disengagement or dropout costs; our model allows providers to encode this motivation via the state penalty. This experiment evaluated algorithm performance across a range of penalty types. Similar to the prior experiment, we used a $730$-day horizon, reward scaling of $1.5$, and patient scaled $c$ value of $2$. With these conditions fixed, we tested a grid of penalty magnitudes $\beta=[0,1,2,3]$ and penalty shift parameters $\beta_0=[-2,-4]$ (i.e., setting the midpoint of the penalty sigmoid roughly $2$ or $4$ standard deviations of process noise below the baseline state value). 

\subsection{Results}
To place experiments with different patients (and thus different optimal value functions) on a common scale, we normalized cumulative regret by the expected regret of the \textsc{Random} algorithm for the same patient and trajectory length. We averaged replications for each patient-algorithm-ablation point combination, so plotted variation reflects performance differences across the population.
\subsubsection{Experiment 1: Reward scaling and motivation strength}
Figure~\ref{fig:ablation_box} shows box plots for the normalized regret of the algorithms across the simulated patient population. 
\begin{figure}[]
\FIGURE
{\includegraphics[width=0.8\textwidth]{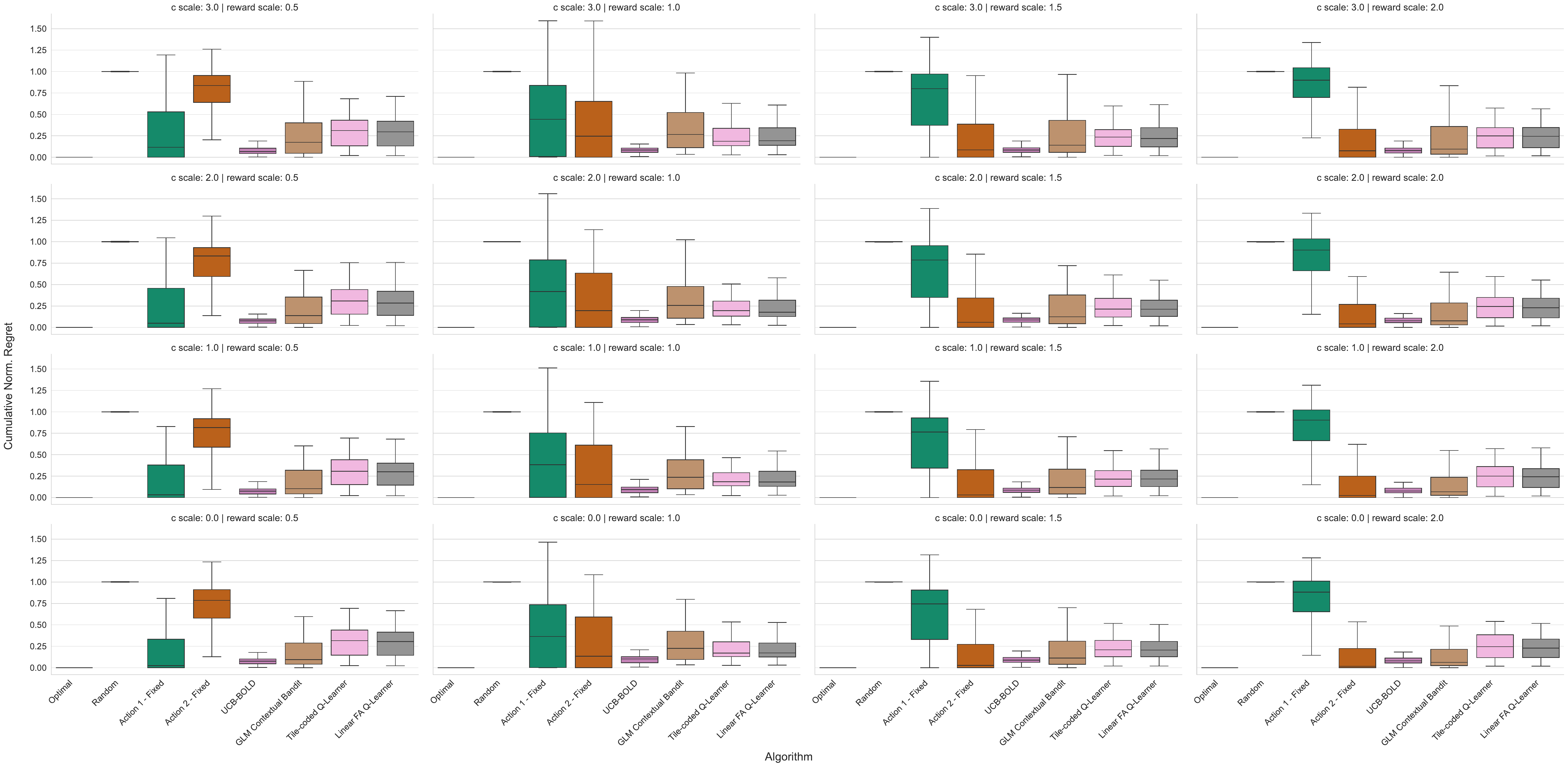}}
{Cumulative normalized regret after $730$ days over the population for the primary ablation grid \label{fig:ablation_box}}
{The scaling of the reward for the second treatment compared to the first increases from $0.5$ to $2.0$ from left-to-right and the patient-scaled effect of the motivating action increases from $0.0$ to $3.0$ from bottom-to-top.}
\end{figure}
The impact of varied reward scaling is most visible in the performance reversal of the fixed action policies, as it does not significantly impact the learning algorithms. The impact of the motivation strength ablation is most visible in the worsening tail performance of the fixed policies and the myopic \textsc{GLM-Bandit}. This makes sense as the presence of a stronger motivating action can increase the optimal value function by managing the engagement state (i.e., selecting the motivating action, even though it carries no reward, to increase the engagement state and thus the likelihood of future adherence). \textsc{Optimal} and \textsc{Random} are included for reference, achieving normalized regret of $0$ and roughly $1$, respectively. Overall, \textsc{UCB-BOLD} shows near-uniform best performance among the algorithms across the grid, both in terms of median regret and regret variation across the population. We see exceptions in the most extreme cases of the ablation grid. For instance, the bottom right subplot shows a case where the \textsc{Fixed2} policy and the myopic \textsc{GLM-Bandit} show better median performance; here the reward scaling and lack of a strong motivating action make it optimal, for many patients, to recommend the second treatment at most (or all) engagement states. Even in such cases, \textsc{UCB-BOLD} shows much lower variation in performance across the population, indicating that while simpler policy structures may serve some patients well, they perform substantially worse for the remaining population (i.e., yielding more extreme worst-case outcomes). The model-free benchmarks perform similarly to one another, typically ranked between \textsc{UCB-BOLD} and \textsc{GLM-Bandit}, depending on the ablation point. \textsc{TC-Q} and \textsc{LFA-Q} are capable of planning with the engagement state, but suffer from sample complexity limitations compared to model-based approaches in this setting. 

Figure~\ref{fig:ablation_ecdf} provides empirical cumulative distribution functions (ECDFs), showing the proportion of the population for which the algorithms achieve a given cumulative normalized regret. This type of plot enables a richer assessment of algorithm behavior over the population. \textsc{Optimal} and \textsc{Random} appear as vertical lines, incurring $0$ and roughly $1$ normalized regret for every patient. We clearly see the cases where fixed or myopic policies serve portions (typically $30-40\%$) of the population well (i.e., the curves vertically track on or close to that of \textsc{Optimal} before shifting right). We found that the upper tail contained individuals where \textsc{Random} sets better baseline performance (i.e., because the optimal policy tends to involve the complete action set, not a trivial collapse to a single action). \textsc{UCB-BOLD} separates itself in this part of the population, suggesting that algorithms that plan around engagement in a sample-efficient way show significantly improved performance for a large portion of the population, even if such planning is not a critical concern for every patient.
\begin{figure}[h]
\FIGURE
{\includegraphics[width=0.8\textwidth]{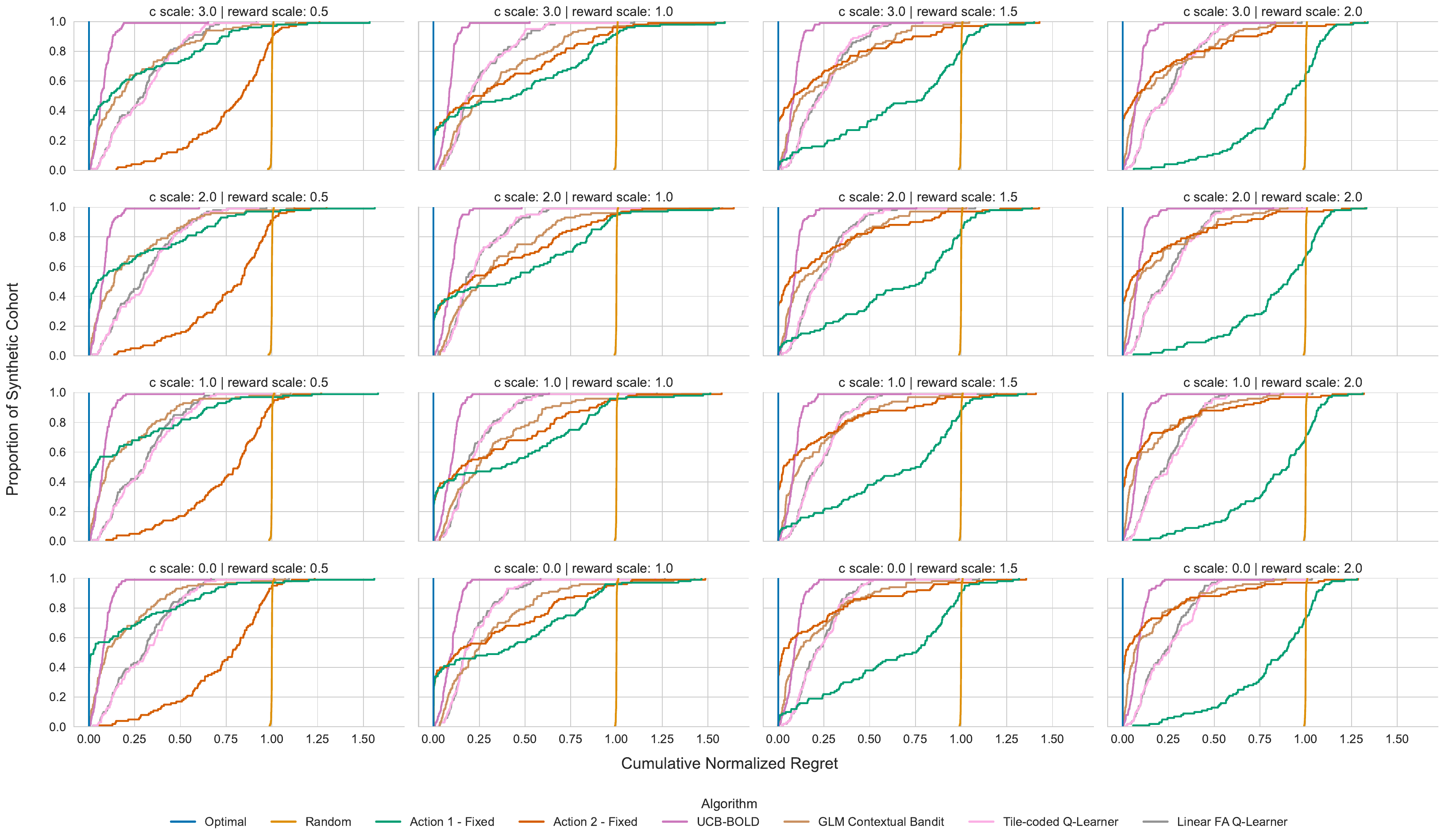}}
{ECDFs for regret across the patient population for the primary ablation grid. \label{fig:ablation_ecdf}}
{The scaling of the reward for the second treatment compared to the first increases from $0.5$ to $2.0$ from left-to-right and the patient-scaled effect of the motivating action increases from $0.0$ to $3.0$ from bottom-to-top.}
\end{figure}
Table~\ref{tab:cvar} shows the empirical conditional value-at-risk (CVaR) regret for various tail widths, averaged over the ablation grid; CVaR effectively measures the expectation of outcomes in the upper $1-\alpha$ tail. \textsc{UCB-BOLD} outperforms the next best algorithm by a factor of $2$-$3\times$, depending on the tail width, with the absolute gap between algorithms widening as the tail width decreases.
\begin{table}[]
    \TABLE
    {Experiment 1 cumulative normalized regret CVaR. \label{tab:cvar}}
    {\begin{tabular}{lcccc}
    \toprule
    Algorithm & $1-\alpha=0.5$ & $1-\alpha=0.25$ & $1-\alpha = 0.1$ & $1-\alpha=0.05$ \\
    \midrule
    \textsc{UCB-BOLD}   & \textbf{0.130} (0.123, 0.134) & \textbf{0.168} (0.156, 0.170) & \textbf{0.229} (0.212, 0.242) & \textbf{0.298} (0.273, 0.322) \\
    \textsc{GLM-Bandit} & 0.395 (0.358, 0.466) & 0.576 (0.521, 0.643) & 0.742 (0.701, 0.827) & 0.870 (0.834, 0.988) \\
    \textsc{TC-Q}       & 0.379 (0.352, 0.406) & 0.464 (0.449, 0.507) & 0.565 (0.551, 0.609) & 0.656 (0.636, 0.700) \\
    \textsc{LFA-Q}      & 0.369 (0.353, 0.397) & 0.463 (0.442, 0.511) & 0.567 (0.552, 0.629) & 0.670 (0.629, 0.713) \\
    \textsc{Fixed1}     & 0.872 (0.685, 0.983) & 1.006 (0.888, 1.067) & 1.123 (1.046, 1.145) & 1.203 (1.158, 1.228) \\
    \textsc{Fixed2}     & 0.532 (0.397, 0.730) & 0.777 (0.643, 0.911) & 1.012 (0.929, 1.058) & 1.111 (1.068, 1.137) \\
    \bottomrule
    \end{tabular}}{CVaR regret for varying tail widths (1-$\alpha$), averaged over the ablation grid. Values given as median (Q1, Q3). Lowest values bolded.}
\end{table}

\subsubsection{Experiment 2: Shortened trajectory}
\begin{figure}[]
\FIGURE
{\includegraphics[width=0.8\textwidth]{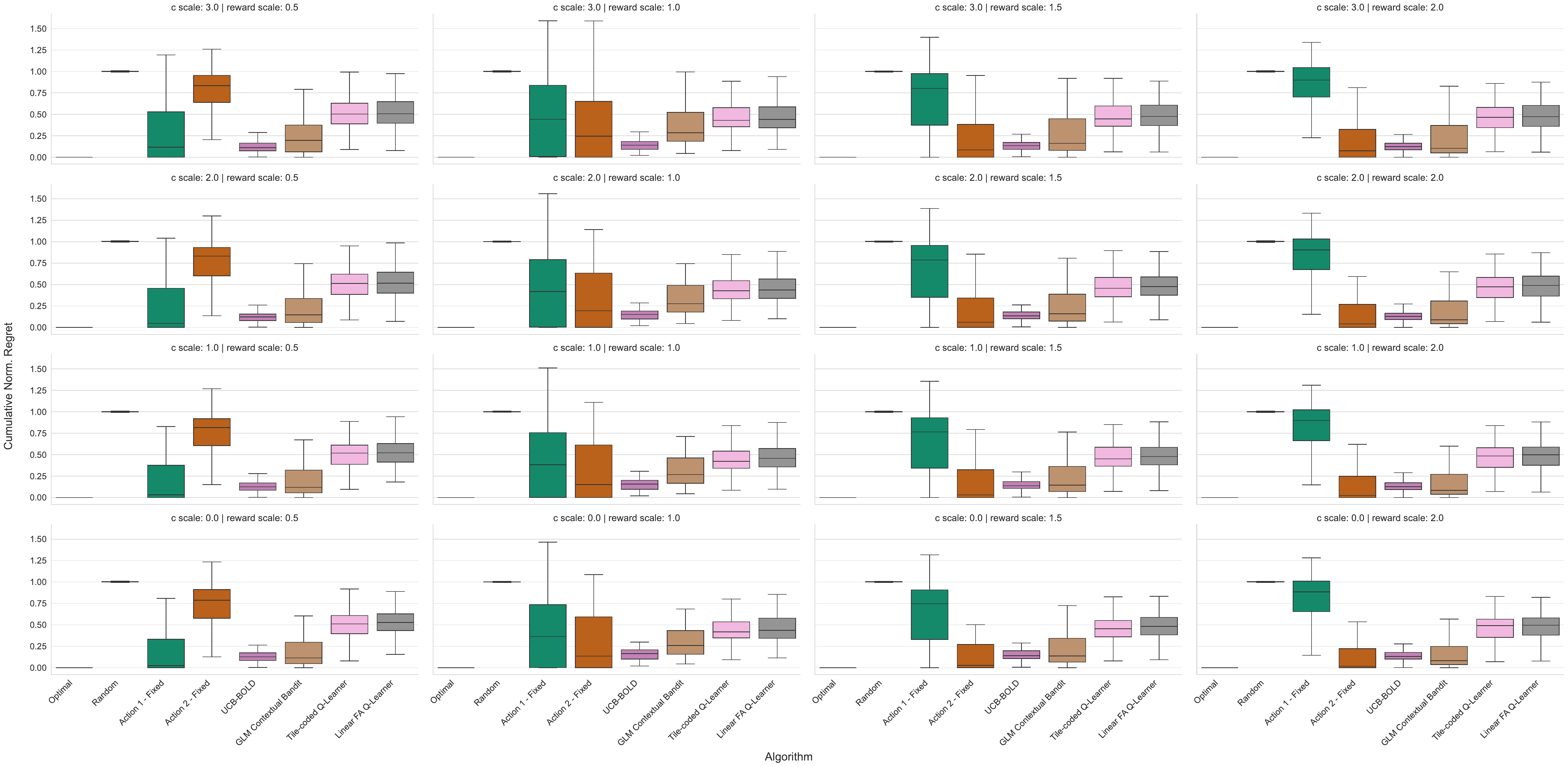}}
{Cumulative normalized regret after $180$ days over the population for the primary ablation grid \label{fig:ablation_box180}}
{The scaling of the reward for the second treatment compared to the first increases from $0.5$ to $2.0$ from left-to-right and the patient-scaled effect of the motivating action increases from $0.0$ to $3.0$ from bottom-to-top.}
\end{figure}
Figure~\ref{fig:ablation_box180} shows box plots for the normalized regret performance of the algorithms for $180$ day trajectories, showing essentially the same performance trends as in the $730$-day trajectories of Figure~\ref{fig:ablation_box}. \textsc{UCB-BOLD} is a consistently strong performer across ablation conditions, with \textsc{GLM-Bandit} showing competitive median performance for some ablation points but substantially larger variance over the population. The Q-learning algorithms perform worse, relative to their competition, in this shorter horizon setting, owing to the greater sample complexity demands of model-free approaches.

\subsubsection{Experiment 3: Effect of discount factor}
\begin{figure}[]
\FIGURE
{\includegraphics[width=0.9\textwidth]{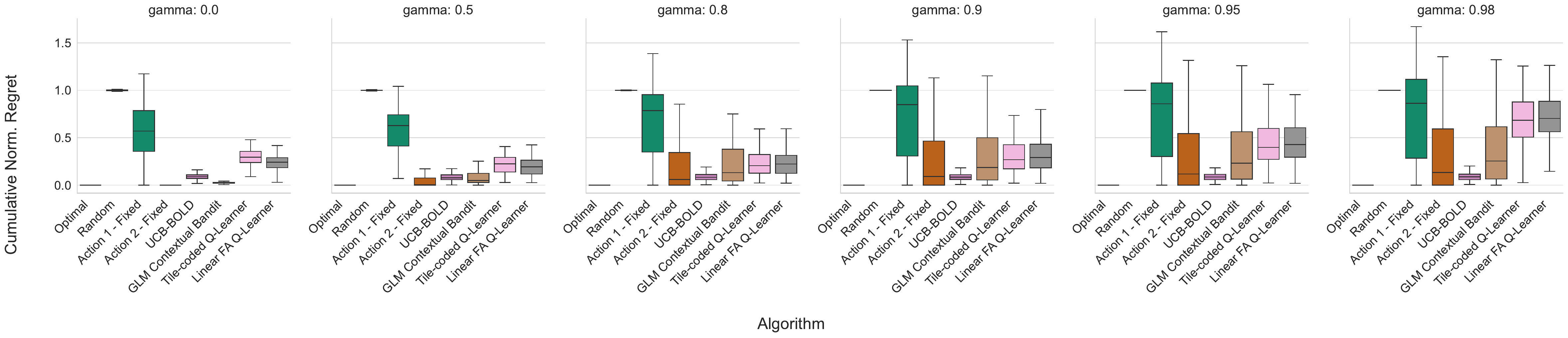}}
{Cumulative normalized regret after $730$ days over the population for different discount values $\gamma$ \label{fig:ablation_box_gamma}}
{Discount factor increases from left-to-right. Effective horizons, $\left(\nicefrac{1}{(1-\gamma)}\right)$, range from $1$ day to $50$ days. This sweep fixes the reward scaling value to $1.5$ and the patient-scaled $c$ value for the motivating action to $2.0$.} 
\end{figure}
Figure~\ref{fig:ablation_box_gamma} shows box plots for $730$-day normalized regret performance with varying discount factors. Myopic approaches such as \textsc{GLM-Bandit} perform well for lower discount factors, but as the discount factor increases, \textsc{UCB-BOLD} separates as the best performer. In practice, $\gamma$ is a clinical choice; these results suggest that planning algorithms are critical for settings focused on long-term outcomes. We see a separation between \textsc{UCB-BOLD} and the Q-learners, owing to a poorer sample complexity scaling of these methods in $\gamma$ compared to model-based approaches. 

\subsubsection{Experiment 4: Effect of state-based penalty}
\begin{figure}[]
\FIGURE
{\includegraphics[width=0.8\textwidth]{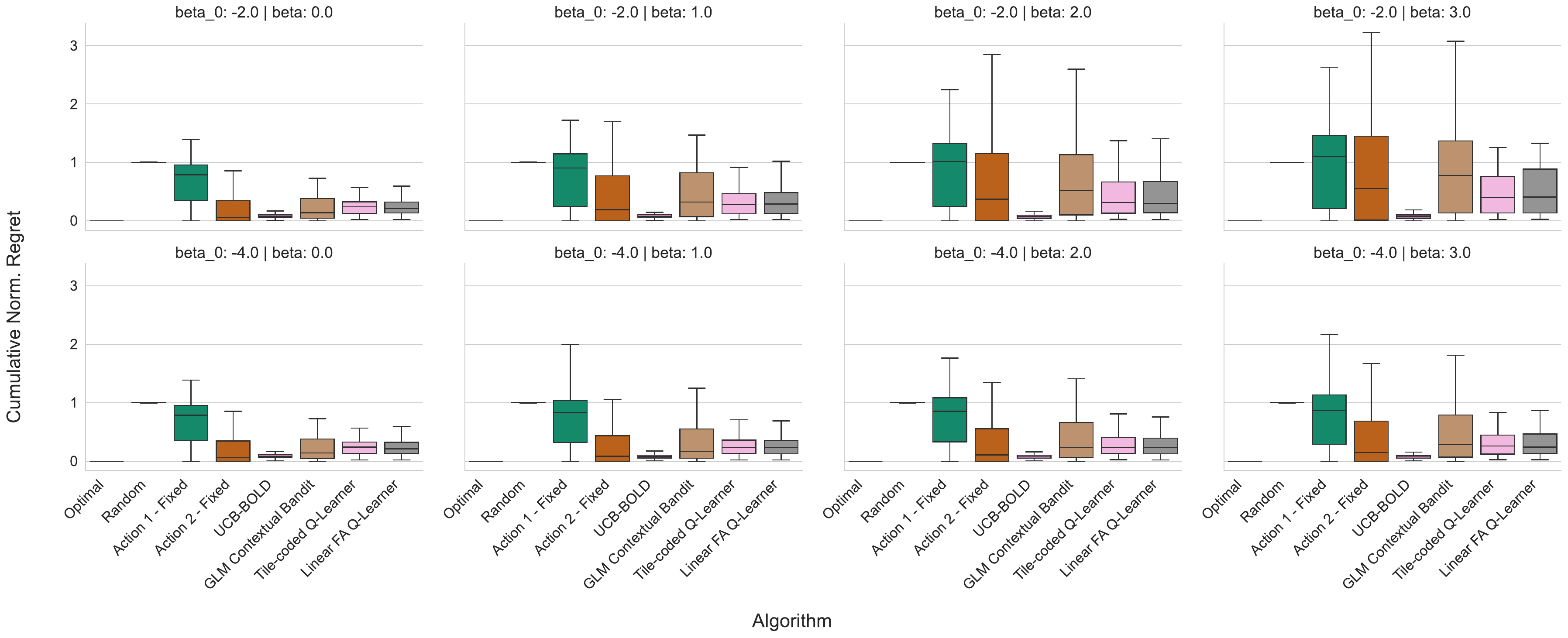}}
{Cumulative normalized regret after $730$ days over the population for various state penalties \label{fig:ablation_box_beta}}
{$\beta$ values, denoting penalty strength, increase from $0$ to $3$ moving left-to-right. The location parameter $\beta_0$ is $-4$ in the bottom row and $-2$ in the top row. A more negative value means the sigmoidal penalty does not `turn on' until the state goes more strongly negative. This sweep fixes the reward scaling value to $1.5$ and the patient-scaled $c$ value for the motivating action to $2.0$.} 
\end{figure}
Figure~\ref{fig:ablation_box_beta} explores different state-based penalties, varying both penalty function magnitude ($\beta$) and location ($\beta_0$). This penalty, designed to reflect a provider's motivation to keep a patient from disengaging, increases the importance of engagement planning. We observe that \textsc{UCB-BOLD} performs uniformly well across this ablation study while myopic approaches like \textsc{GLM-Bandit} suffer with a stronger penalty, particularly in the upper tail of the population. The performance of the model-free Q-learners degrades moderately with higher penalties, likely due to the fact that the fixed set of hyperparameters used across the ablation studies are not optimal in these separate system configurations and that they are not designed to exploit a (partially) known reward structure. However, the purpose of this ablation is not to highlight differences between model-free and model-based approaches, but rather to highlight scenarios where engagement planning is especially important versus where myopic policies suffice.

\section{Managerial implications and limitations}\label{sec:managerial}
Our work suggests the following key implications:
\begin{enumerate}
    \item \textit{The benefits of engagement planning (i.e., selecting treatments based on recommendation and adherence effects on the patient's future engagement state) may differ meaningfully across a patient population.} Our computational experiments relied on a synthetic patient cohort, calibrated using MRT data, allowing examination of algorithm performance across a range of plausible patient types. For some patient types (up to roughly $50\%$ of the population under some ablation conditions), myopic or heuristic policies sufficed due to a single action dominating the optimal policy for those patients. Among the rest of the population, there were clear benefits to strategically planning around the patient's time-varying engagement dynamics (i.e., higher state persistence or larger recommendation and adherence effects meant the optimal policy was not dominated by a single action). \textsc{UCB-BOLD} showed drastically better performance for this sub-cohort, showing 2-3 times lower CVaR regret than the next-best benchmark (Table~\ref{tab:cvar}). These trends were robust to trajectory length (Figure~\ref{fig:ablation_box180}) and were even stronger in ablation settings that favor planning, for instance higher discount factors (Figure~\ref{fig:ablation_box_gamma}), a stronger motivating action (Figure~\ref{fig:ablation_box}), and a stronger state-based penalty (Figure~\ref{fig:ablation_box_beta}). Explicitly accounting for engagement dynamics may be an important way for DT providers to ensure quality treatment recommendation across a complete patient population.
    \item \textit{Dynamical modeling can bridge descriptive behavioral models with long-run planning for mHealth interventions.} Personalized mHealth interventions operate in a sample-scarce regime and incorporating explicit dynamics models allows decision-makers to exploit problem structure and plan for concerns like burden and fatigue in ways that myopic approaches cannot. However, doing so can introduce model misspecification concerns when models do not approximate reality well. Our case study suggested that the benefits of engagement state planning may be meaningful for some, but not all, patients; decision-makers will need to explicitly weigh the tradeoffs of imposing planning-oriented structure in their intervention context. Our work provides a blueprint for demonstrating relevant theoretical guarantees under the posited dynamics. As the psychology community develops better-grounded dynamical models for behavior \citep{brigantiNetworkAnalysisOverview2024a,perskiIterativeDevelopmentRefinement2025}, this same approach can be used to match methodological development to current behavioral science.
\end{enumerate}
\subsection{Limitations and extensions}
We focus on a purposefully parsimonious model class; there are many extensions not addressed here, for instance, hierarchical models, time-varying dynamics, multi-dimensional or latent states, or integration of context into the adherence model. Further, treating engagement as synonymous with adherence behavior may be incomplete (i.e., a patient can invest different amounts of energy into completing treatment) \citep{nahum-shaniEngagementDigitalInterventions2022a}. Models capturing these distinctions offer rich theoretical and practical directions for future work. Next, our MRT data have an hourly timescale, which we mapped to a daily timescale for our computational experiments. It is not clear whether interventions operating at a daily timescale will involve more or less persistent dynamics; these effects may also vary significantly by intervention setting. Additional exploratory and descriptive work is needed to refine the functional form of model-based approaches, but existing work has shown that such approximations perform well empirically \citep{elmistiriModelPredictiveControl2025a,pulickIdiographicLapsePrediction2025b}.

\section{Conclusion}\label{sec:conclusion}
In this paper, we developed a framework modeling how DT treatment decisions affect patient adherence over time based on a time-varying engagement state. Our model operates on a daily timescale and is aimed at long-running interventions, where sustaining engagement is especially important for providers. We chose a parsimonious linear dynamical system model to capture the phenomena of interest, namely separate effects for recommending a treatment and for patient adherence. We showed finite-time system identification guarantees for this model under a flexible policy class. We proposed an optimism-based algorithm, \textsc{UCB-BOLD}, and proved that it achieves sublinear regret. Last, we used MRT data to generate synthetic patients and demonstrate that \textsc{UCB-BOLD} outperformed benchmarks over a wide range of system conditions. Our work offers a model-based approach for JITAIs to incorporate dynamical systems models from behavioral psychology, and our framework serves as a foundation for natural model extensions in future work.



\bibliographystyle{informs2014} 
\bibliography{refs} 

\begin{APPENDICES}
\section{Computational experiments}\label{app:computation}
We provide pseudocode for \textsc{LFA-Q}, \textsc{TC-Q}, and \textsc{GLM-Bandit} as Algorithms~\ref{alg:linq}-\ref{alg:glm_bandit}. 
\begin{algorithm}[htbp]
\footnotesize
\caption{Linear Function Approximation Q-Learner (LFA-Q)}
\label{alg:linq}
\begin{algorithmic}[1]
\Require Learning rate $\alpha\in(0,1)$, decay $\epsilon_d\in(0,1)$, radial basis function (RBF) count $n$, discount factor $\gamma\in(0,1)$, grid bound $C_g$
\State Place RBF centers $\{c_j\}_{j=1}^n$ uniformly on $[-1,1]$, set width $\sigma=0.8\cdot(c_2-c_1)$
\State Define features $\phi_j(x)=\exp\big(-(\tilde x - c_j)^2 / (2\sigma^2) \big)$, where $\tilde x$ rescales $x$ to $[-1,1]$ using $C_g$
\State Initialize weights $W \in \R^{(M+1)\times n}$
\For{t = 1,...,T}
    \State Observe state $x_t$, set exploration level $\epsilon_t = \nicefrac{1}{t^{\epsilon_d}}$
    \State With prob. $\epsilon_t$, select $u_t$ randomly from $\mac U$, otherwise $u_t=\arg \max_{i}W_i^\top \phi(x_t)$
    \State Observe $r(x_t,u_t,d_t)$ and $x_{t+1}$
    \State Set TD error $\delta_t = r(x_t,u_t,d_t) + \gamma \max_{i} W_i^\top \phi(x_{t+1}) - W_{u_t}^\top \phi(x_t)$
    \State Update $W_{u_t} = W_{u_t} + \alpha \delta_t \phi(x_t)$
\EndFor
\end{algorithmic}
\end{algorithm}
\begin{algorithm}[htbp]
\footnotesize
\caption{Tile-coded Q-Learner (TC-Q)}
\label{alg:tcq}
\begin{algorithmic}[1]
\Require Learning rate $\alpha\in(0,1)$, decay $\epsilon_d\in(0,1)$, bin width $w$, tiling count $n_t$, discount factor $\gamma\in(0,1)$, grid bound $C_g$
\State Construct $n_t$ tilings of the state space, each with bin width $w$, offset by $\nicefrac{kw}{n_t}$ for $k=0,\dots,n_t-1$
\State Let $b_k(x)$ denote the bin index for state $x$ under tiling $k$
\State Initialize $Q_k(b,u)$ for all tilings $k$, bin indices $b$, and actions $u$, set $Q(x,u)=\sum_kQ_k(b_k(x),u)$
\For{t = 1,...,T}
    \State Observe state $x_t$, set exploration level $\epsilon_t = \nicefrac{1}{t^{\epsilon_d}}$
    \State With prob. $\epsilon_t$, select $u_t$ randomly from $\mac U$, otherwise $u_t=\arg \max_{u}Q(x_t,u)$
    \State Observe $r(x_t,u_t,d_t)$ and $x_{t+1}$
    \State Set TD error $\delta_t = r(x_t,u_t,d_t) + \gamma \max_{u} Q(x_{t+1},u) - Q(x_t,u_t)$
    \State Update $Q_k(b_k(x_t),u_t) = Q_k(b_k(x_t),u_t) + \nicefrac{\alpha}{n_t} \delta_t$ for $k=0,\dots,n_t-1$
\EndFor
\end{algorithmic}
\end{algorithm}
\begin{algorithm}[htbp]
\footnotesize
\caption{GLM-Bandit}
\label{alg:glm_bandit}
\begin{algorithmic}[1]
\Require Confidence $\delta \in (0,1)$, $C_N>0$, epoch count $k=1$, action counts $N_{i}=0$ ($i=1,\dots,M$).
\State Estimate $\hat \mu_1$, set $\alpha_{i}^{\mu,(k)}=\alpha^\mu_{i,1}(\nicefrac{\delta}{2})$, $\tilde \mu_i^{(k)} = \Pi_{[-\bar \mu, \bar \mu]} \big[\hat \mu_{i,1}+\alpha^{\mu}_{i,1}\big]$, action counts $N_{i}^{(k)}=N_i$ ($i=1,\dots,M)$.
\For{t = 1,...,T}
    \State Observe $d_{t-1}$, $x_t$
    \If {$N_{i} > (1+C_N) N_{i}^{(k)}$ for any $i=1,\dots,M$}
        \State Set $k = k +1$, $\alpha^{\mu,(k)}_{i} = \alpha^\mu_{i,t}(\nicefrac{\delta}{2})$, $\tilde \mu_i^{(k)} = \Pi_{[-\bar \mu, \bar \mu]} \big[\hat \mu_{i,t}+\alpha^{\mu}_{i,t}\big]$, $N_{i}^{(k)}=N_{i}$ for $i=1,\dots,M$
    \EndIf
    \State Take action $u_t=\arg\max_{u \in \mac U} \ex_{\tilde \mu^{(k)}}\big [r(x_t,u_t,d_t)\big]$
    \State Set $N_{i}=N_{i}+\mathbf{1}_{\{u_t=e_i\}}$, $i=1,\dots, M$
\EndFor
\end{algorithmic}
\end{algorithm}
\vspace{-16pt}
We ran experiments on a university computing cluster, using parameters: $\bar a = 0.85$, $\bar b = 3.75$, $\bar c = 2.75$, $\bar w=2.5$, and $\bar \mu=2.5$. Where needed, we discretized the state $x$ to a grid with resolution $0.1$. While $C_x=\frac{\bar b + \bar c +\bar w}{1-\bar a}$ bounds the state, we used the tighter range $[\nicefrac{-C_x}{3}, \nicefrac{C_x}{3}]$ since $C_x$ is loose in practice. 
For hyperparameter tuning, we generated a separate $50$-person cohort from our HM posterior and ran coarse and fine grid searches. We selected the tuning configurations with the lowest end-of-trajectory cumulative regret, averaged over $10$ replications, the $4 \times 4$ ablation grid from Experiment 1, and the population. For computational efficiency during tuning, we evaluated algorithm value functions every $20$ time steps for regret calculations. During coarse tuning we found that both Q-learners performed well with fast $\epsilon$ decay rates (i.e., greater than $1$), due to the built-in exploration from optimistic initialization. We found a negligible difference between fast-decay cases and the no exploration case, so we fixed the $\epsilon$ decay parameter ($1.5$) to shrink the tuning grids. The final grid search for \textsc{TC-Q} explored bin widths of $[20, \mathbf{40}]$, tiling counts of $[\mathbf{64},96,128,160,192]$, and learning rates of $[0.3,0.4,\mathbf{0.5},0.6]$ (final values bolded). As \textsc{TC-Q} constructs bins to be the given value or finer, these bin widths capture when the grid is divided into either $2$ or $1$ interior bins; the algorithm performed best with very few bins and a high tiling count. Tile counts of $64$ and $192$ performed similarly, so we chose the simpler model. The final grid search for \textsc{LFA-Q} tested RBF center counts of $[4,5,\mathbf{6},7]$ and learning rates of $[0.2,\mathbf{0.3},0.4,0.5]$. To reduce computation, Experiments 2-4 used the same hyperparameters as Experiment 1. We tested $C_d,C_N=[0.5, 0.75, 1.0]$ for \textsc{UCB-BOLD} and \textsc{GLM-Bandit} and found marginal improvements with lower values, so we selected $0.5$ for each to balance performance and computational efficiency. Both algorithms used regularization values of $1$. As the theoretically derived constants for \textsc{UCB-BOLD} and \textsc{GLM-Bandit} are loose we found the algorithms performed well when the $\theta$- and $\mu$-derived bonus terms were rescaled so that the maximum initial values over the grid were $\nicefrac{\bar \rho}{2}$. 

\section{Supporting proofs}
\subsection{Supporting analysis for model definition (Section~\ref{sec:model})}\label{app:gen_model}
Define filtrations $\mac F^d_t:=\sigma\left(x_{0:t},u_{0:t},d_{0:t-1},w_{0:t-1}\right)$, $\mac F^w_t := \sigma(x_{0:t},u_{0:t},d_{0:t},w_{0:t-1})$, and $\mac F_t:=\sigma\left(x_{0:t},u_{0:t},d_{0:t},w_{0:t}\right)$ for separate parts of the following analysis (we use $x_0:=0$, $u_0,d_0:=\ve{0}_M$, $w_0\sim w$ for notational convenience). 
\proof{Proof of Proposition~\ref{prop:absolute_bounded_x}}
$x_1\sim w$ and $|w|\leq \bar w\leq C_x$ so $x_1\in\mac X$ by construction and we proceed by induction on $t$. Assume $x_{t} \in \mac X$ for $t \geq 1$. Then $|x_{t+1}| \leq |ax_{t}| + |b^\top u_{t}| + |c^\top d_{t}| + |w_{t}| \leq \bar a C_x + \bar b + \bar c +\bar w =\frac{\bar b +\bar c +\bar w}{1-\bar a}=C_x$ implying $x_{t+1}\in \mac X$. The bound on the feature vector follows by the $1$-sparsity of $u_t$ and $d_t$ under the assumed model structure. \halmos
\endproof
\begin{remark}{\textbf{(Parameter space bounded)}}\label{rem:parameter_space_bounded}
For $\Theta=\Theta_a\times \Theta_b\times \Theta_c$, $\max_{\theta \in \Theta}\Vert\theta\Vert_2\leq \sqrt{\bar a^2 + M\bar b^2 + M \bar c^2}$
\end{remark}

\subsection{OLS identification supporting proofs (Section~\ref{sec:identification})}\label{app:ols_identification}
\begin{corollary}{(Corollary to Anderson's Inequality)}\label{cor:shift}
    For log-concave and zero-symmetric RV $X\in\R$, $\nu>0$, and $y\in \R$, then $\pr(|X+y|\geq \nu)\geq \pr(|X|\geq \nu)$. If RV $Y\in \R$ is independent of $X$ then $\pr(|X+Y|\geq \nu)\geq \pr(|X|\geq \nu)$.
\end{corollary}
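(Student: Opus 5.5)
We reduce everything to Anderson's inequality in one dimension. Anderson's inequality says: if $X$ has a density that is symmetric about zero and unimodal (log-concave symmetric densities qualify), then for any symmetric convex set $K$ and any shift $y$, $\pr(X+y\in K)\leq \pr(X\in K)$. We apply it with $K=(-\nu,\nu)$, which is symmetric and convex. This gives $\pr(|X+y|<\nu)\leq \pr(|X|<\nu)$, and taking complements yields $\pr(|X+y|\geq \nu)\geq \pr(|X|\geq \nu)$.

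If one prefers not to cite Anderson in full generality, there is an elementary one-dimensional argument. A log-concave symmetric density $f$ is nonincreasing in $|x|$. Set $g(y)=\int_{-\nu-y}^{\nu-y}f(s)\,ds$. Then $g'(y)=f(-\nu-y)-f(\nu-y)=f(\nu+y)-f(\nu-y)$, which is $\leq 0$ for $y\geq 0$ because $|\nu+y|\geq|\nu-y|$. By symmetry $g$ is even, so it is maximized at $y=0$, which is the first claim. The only point to watch is that $f$ is unimodal at zero. This follows because log-concavity makes $f$ quasi-concave, and quasi-concavity together with symmetry puts a mode at $0$. I expect this step to be the only mildly delicate one.

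For the independent case we condition on $Y$. Independence lets us write $\pr(|X+Y|\geq\nu)=\ex[\,\pr(|X+y|\geq \nu)|_{y=Y}\,]$ by Fubini. The deterministic-shift bound holds for every fixed $y$, so the integrand is at least $\pr(|X|\geq\nu)$ pointwise. Taking expectations gives the second claim. No integrability issues arise because all quantities are probabilities in $[0,1]$.
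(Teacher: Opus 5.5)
Your proposal is correct and follows the paper's proof. The paper likewise notes that log-concavity gives quasiconcavity (convex superlevel sets), applies Anderson's inequality with $k=0$ to the symmetric interval $(-\nu,\nu)$, takes complements, and dismisses the independent-$Y$ case as ``an identical argument,'' which your conditioning/Fubini step makes explicit. Your optional elementary route via $g(y)=\pr(|X+y|<\nu)$ is also sound, provided you read $g'$ as an almost-everywhere derivative of the absolutely continuous $g$, since a log-concave density such as the uniform one can jump at the edge of its support.
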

\proof{Proof of Corollary~\ref{cor:shift}}
    Let $E:=\{x\in \R: -\nu< x <\nu\}$. We have $f(x)=f(-x)$ by assumption. Note that log-concavity of a distribution implies quasiconcavity, meaning that the superlevel sets for $f(x)$ are convex \citep{boydConvexOptimization2004}. Then by Anderson's Inequality (\textcite{andersonIntegralSymmetricUnimodal1955}, Corollary 2 and Theorem 2) with $k=0$ we have $\pr(|X|< \nu) \geq \pr(|X+y|< \nu)\Rightarrow \pr(|X+y|\geq \nu)\geq \pr(|X|\geq \nu)$. The random variable case follows by an identical argument.\halmos
\endproof
The following assume Assumptions~\ref{asm:observed_quantities}-\ref{asm:noise_properties} hold and an $(r,k)$-exploratory policy. Lemma~\ref{lem:tau} demonstrates that there exists an appropriate choice of $\tau$ for Proposition~\ref{prop:bmsb_formal}. $\tau$ is a proof tool that comes from the decomposition of the unit sphere into three cases and the optimal choice of $\tau$ depends on the properties of the chosen noise distribution.
\begin{lemma}{}\label{lem:tau}
There exists $\tau\in(0,1)$ with $\pr\big(|w|\geq \frac{\sqrt{(1-\tau^2)}}{\tau}\frac{1+4\sqrt{M}}{2\sqrt{2M}}\big)>0$.
\end{lemma}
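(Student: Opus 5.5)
The plan is to exploit the fact that the threshold $g(\tau) := \frac{\sqrt{1-\tau^2}}{\tau}\cdot\frac{1+4\sqrt{M}}{2\sqrt{2M}}$ is a continuous, strictly decreasing function of $\tau$ on $(0,1)$, with $g(\tau)\to\infty$ as $\tau\downarrow 0$ and $g(\tau)\to 0$ as $\tau\uparrow 1$. The statement therefore reduces to showing that $\pr(|w|\geq \nu)>0$ for some sufficiently small $\nu>0$. Once such a $\nu$ is in hand, we choose $\tau$ with $g(\tau)\le \nu$, and monotonicity of the tail probability then finishes the argument.

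First, I would obtain the small-threshold positivity from Assumption~\ref{asm:noise_properties}. The noise has known variance $\sigma_w^2>0$, so $w$ is not almost surely zero. Hence there exists $\nu>0$ with $\pr(|w|\geq\nu)>0$. To see this, suppose instead that $\pr(|w|\ge \nu)=0$ for every $\nu>0$. Taking $\nu=1/n$ and a countable union gives $\pr(w\neq 0)=0$, which contradicts $\ex[w^2]=\sigma_w^2>0$. A concrete choice is also available: since $\ex[w^2]=\sigma_w^2$ and $|w|\le\bar w$, a Paley--Zygmund-type bound gives $\pr(|w|\ge \sigma_w/\sqrt2)\ge \frac{\sigma_w^2/2}{\bar w^2}>0$. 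So one can take $\nu=\sigma_w/\sqrt{2}$, which also yields an explicit lower bound that may be useful downstream.

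Second, I would solve $g(\tau)=\nu$ explicitly. Set $K:=\frac{1+4\sqrt M}{2\sqrt{2M}}>0$. Then $\frac{\sqrt{1-\tau^2}}{\tau}=\nu/K$ gives $\tau^\star = \frac{K}{\sqrt{K^2+\nu^2}}$, and clearly $\tau^\star\in(0,1)$. For any $\tau\in[\tau^\star,1)$ we have $g(\tau)\le\nu$, so $\pr(|w|\ge g(\tau))\ge\pr(|w|\ge\nu)>0$, because $\{|w|\ge\nu\}\subseteq\{|w|\ge g(\tau)\}$.

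There is no real obstacle here. The only point needing care is the nondegeneracy step, namely that positive variance rules out $w\equiv 0$. I would also remark that the optimal $\tau$ depends on the noise law, as the paper notes, and that the explicit choice $\tau^\star$ with $\nu=\sigma_w/\sqrt2$ gives a concrete value for later use in Proposition~\ref{prop:bmsb_formal}.
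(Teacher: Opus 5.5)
Your argument is correct and has the same skeleton as the paper's proof. Both note that $g(\tau)=\frac{\sqrt{1-\tau^2}}{\tau}\frac{1+4\sqrt{M}}{2\sqrt{2M}}$ decreases to $0$ as $\tau\to 1^-$. Both then pick $\tau$ close enough to $1$ that the threshold drops below a level at which $|w|$ has positive tail mass.

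The two proofs differ only in how that level is obtained. The paper asserts that $w$ is fully supported on some $[-C,C]$ with $C>0$. This implicitly uses that a log-concave density has interval support, plus symmetry and nondegeneracy. Every $\tau$ with $g(\tau)<C$ is then admissible, which describes the whole admissible range but gives no quantitative handle on $p_1(\tau)$.

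You use only $\sigma_w^2>0$, so log-concavity is not needed. Your tail bound yields the explicit choice $\tau^\star=K/\sqrt{K^2+\sigma_w^2/2}$ with $p_1(\tau^\star)\ge \sigma_w^2/(2\bar w^2)$. That bound is the elementary truncation inequality $\sigma_w^2\le \sigma_w^2/2+\bar w^2\,\pr(w^2\ge \sigma_w^2/2)$; Paley--Zygmund itself, with $\theta=1/2$, would only give $\sigma_w^2/(4\bar w^2)$.

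This makes the constant $p$ in Theorem~\ref{thm:system_sample_complexity} computable from $\sigma_w$ and $\bar w$, and hence also the burn-in $T_{\text{OLS}}$. The paper's existential argument leaves both implicit.
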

\proof{Proof of Lemma~\ref{lem:tau}}
By Assumption \ref{asm:noise_properties}, $w$ must be fully supported on $[-C,C]$ for $C>0$. Let $g(\tau)=\frac{\sqrt{(1-\tau^2)}}{\tau}\frac{1+4\sqrt{M}}{2\sqrt{2M}}$. $g(\tau)$ is continuous for $\tau \in (0,1)$ and $\lim_{\tau\to 1^-}g(\tau)=0$. Thus $\exists\tau$ such that $g(\tau)<C$ and $\pr(\vert w\vert\geq g(\tau))>0$.
\halmos
\endproof
\begin{proposition}{}\label{prop:bmsb_formal}
Choose $\tau\in(0,1)$ such that $p_1(\tau):=\pr\big(|w_{t-1}|\geq \frac{\sqrt{(1-\tau^2)}}{\tau}\frac{1+4\sqrt{M}}{2\sqrt{2M}}\big)>0$. Let $p:=\min\{p_1(\tau),\frac{r\underline p}{M+1}\}$. Then this system satisfies the $(k,\frac{1-\tau^2}{8M}I_{2M+1},p)$ BMSB condition.
\end{proposition}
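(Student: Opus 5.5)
Fix a unit vector $\lambda=[\lambda_x\ \lambda_u^\top\ \lambda_d^\top]^\top\in\R^{2M+1}$ and a block start $j\geq 0$. Write $\nu:=\sqrt{(1-\tau^2)/(8M)}$, so that $\sqrt{\lambda^\top\Gamma_{sb}\lambda}=\nu$. By the definition of an $(r,k)$-exploratory policy, at least $\lceil rk\rceil$ of the steps $t\in\{j+1,\dots,j+k\}$ are uniform draws from $\mac U$ independent of $\mac F_{t-1}$. I will show that for every such exploratory step $t$ (and, in one case, for every step of the block) we have $\pr(|\lambda^\top z_t|\geq\nu\mid\mac F_{t-1})\geq p$. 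Conditioning on $\mac F_j$ then gives a block average of at least $\lceil rk\rceil p/k\cdot(\ldots)$. The bookkeeping is arranged so that the guarantee is $r\underline p/(M+1)$ or $p_1(\tau)$, whichever case applies, which yields the BMSB condition with $p=\min\{p_1(\tau),r\underline p/(M+1)\}$.

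I split the sphere into three cases.

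\emph{Case 1: $|\lambda_x|\geq\tau$.} Here $x_t=ax_{t-1}+b^\top u_{t-1}+c^\top d_{t-1}+w_{t-1}$, where $w_{t-1}$ is independent of everything else in $z_t$ given $\mac F^w_{t-1}$. This includes $u_t$ and $d_t$, whose randomness is independent of $w_{t-1}$ given the past, even though they depend on $x_t$. So I write $\lambda^\top z_t=\lambda_x w_{t-1}+Y$, where $Y$ collects the remaining terms. The difficulty is that $Y$ depends on $x_t$ through $u_t,d_t$, so Corollary~\ref{cor:shift} does not apply directly. Instead I bound the non-noise part deterministically: $|\lambda_u^\top u_t+\lambda_d^\top d_t|\leq\|\lambda_u\|_\infty+\|\lambda_d\|_\infty\leq 2\sqrt{1-\tau^2}$. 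The only genuine shift is then $\lambda_x(ax_{t-1}+b^\top u_{t-1}+c^\top d_{t-1})$, which is $\mac F^w_{t-1}$-measurable, and Corollary~\ref{cor:shift} handles it. The event $|\lambda_x||w_{t-1}+s|\geq\nu+2\sqrt{1-\tau^2}$ has probability at least $\pr(|w|\geq(\nu+2\sqrt{1-\tau^2})/\tau)$. This threshold equals $\frac{\sqrt{1-\tau^2}}{\tau}\frac{1+4\sqrt{2M}}{2\sqrt{2M}}$, which matches the constant in $p_1(\tau)$ up to the exact arithmetic; I will adjust the bound on $\lambda_u,\lambda_d$ contributions to reproduce the stated constant. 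This case holds at every step, so the block average is at least $p_1(\tau)$.

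\emph{Case 2: $|\lambda_x|<\tau$.} Then $\|(\lambda_u,\lambda_d)\|_2^2>1-\tau^2$, so some coordinate $i$ has $(\lambda_u^i)^2+(\lambda_d^i)^2\geq(1-\tau^2)/(2M)$. Only exploratory steps are used here. At such a step, $u_t$ takes each of the $M+1$ values with probability $1/(M+1)$ independent of $x_t$, and given $u_t=e_i$ the adherence $d^i_t$ is $1$ with probability in $[\underline p,\overline p]$ by Remark~\ref{rem:adh_prob_bounded}. The candidate values of $\lambda^\top z_t$ are $\lambda_xx_t$ (null action), $\lambda_xx_t+\lambda_u^i$ (recommend, no adherence), and $\lambda_xx_t+\lambda_u^i+\lambda_d^i$ (adhere). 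If all three had absolute value below $\nu$, pairwise differences would give $|\lambda_u^i|<2\nu$ and $|\lambda_d^i|<2\nu$, hence $(\lambda_u^i)^2+(\lambda_d^i)^2<8\nu^2=(1-\tau^2)/M$. This needs to contradict the lower bound $(1-\tau^2)/(2M)$, so I will tune constants, e.g.\ by using a sharper difference argument or choosing a coordinate via $\max$ rather than the average. Hence at least one of the three outcomes gives $|\lambda^\top z_t|\geq\nu$. Each outcome has conditional probability at least $\underline p/(M+1)$, using $1-\overline p=\underline p$ by the symmetry of the bounds. Averaging over the block gives at least $\lceil rk\rceil/k\cdot\underline p/(M+1)\geq r\underline p/(M+1)$.

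The main obstacle is Case 1. Because the policy and the adherence draws depend on $x_t$, I cannot treat the action terms as an independent shift, so I rely on the crude deterministic bound described above. Getting the constants to match $p_1(\tau)$ and $\Gamma_{sb}=\frac{1-\tau^2}{8M}I$ is where I expect most of the fiddling. If the constants do not align, I will introduce an intermediate third case, with $|\lambda_x|$ moderate, that is handled jointly by the noise and the action randomness.
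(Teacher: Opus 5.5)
Your argument is sound, and the two spots where you expected ``fiddling'' close exactly as stated, with no third case.

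In Case~1, bound the action part by Cauchy--Schwarz on the stacked vector rather than coordinatewise. Since $u_t,d_t$ are $1$-sparse and binary, $\|[u_t^\top\ d_t^\top]\|_2\le\sqrt2$, so $|\lambda_u^\top u_t+\lambda_d^\top d_t|\le\sqrt{2(1-\tau^2)}$. Then $(\nu+\sqrt{2(1-\tau^2)})/\tau=\frac{\sqrt{1-\tau^2}}{\tau}\cdot\frac{1+4\sqrt M}{2\sqrt{2M}}$ is exactly the threshold in $p_1(\tau)$. With that fix your Case~1 is the paper's Case~1, including how you handle the dependence of $u_t,d_t$ on $x_t$: only $|\lambda_x x_t|$ must be large, and its non-noise part is a shift independent of $w_{t-1}$, so Corollary~\ref{cor:shift} applies.

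In Case~2 the pigeonhole runs over $M$ index pairs, not $2M$ coordinates. Hence $\max_i[(\lambda_u^i)^2+(\lambda_d^i)^2]\ge\frac1M(1-\lambda_x^2)>\frac{1-\tau^2}{M}=8\nu^2$. This directly contradicts the bound $(\lambda_u^i)^2+(\lambda_d^i)^2<8\nu^2$ you obtain when all three outcomes are small.

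Two bookkeeping points remain.
\begin{itemize}
\item The Case~1 per-step bound must be conditional on $\mac F^w_{t-1}$, and the block conditioning must be on $\mac F^w_j$. A $\sigma$-field that already contains $w_{t-1}$ (such as $\mac F_{t-1}$) makes $x_t$ measurable, and the Case~1 bound then fails. $\mac F^w$ is also the filtration that condition (a) of Theorem~\ref{thm:simch_sample_complexity} requires.
\item In Case~2, the per-exploratory-step bound you carry must be $\underline p/(M+1)$, not merely $p$, because averaging over only the exploratory steps costs a factor $r$. Formally, $\pr(|\lambda^\top z_t|\ge\nu\mid\mac F_{t-1})\ge\mathbf 1(E_t)\,\underline p/(M+1)$, where $E_t$ is the $\mac F_{t-1}$-measurable exploration event. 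The tower property and $\sum_{i=1}^k\mathbf 1(E_{j+i})\ge\lceil rk\rceil$ almost surely then give $r\underline p/(M+1)$.
\end{itemize}

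Where you genuinely depart from the paper is the regime $|\lambda_x|<\tau$. The paper splits it again according to whether $\|\lambda_d\|_2\ge\sqrt{(1-\tau^2)/2}$. It then takes the largest coordinate of $\lambda_d$ (respectively $\lambda_u$), which is at least $2\nu$, and runs a two-outcome argument. In the first subcase it compares adherence with non-adherence after recommending $e_{i^*}$; in the second it compares the null action with recommending $e_{i^*}$ without adherence.

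You instead take the single index $i$ maximizing $(\lambda_u^i)^2+(\lambda_d^i)^2$ and use all three outcomes at once. This absorbs the auxiliary split, because for that $i$ at least one of $|\lambda_u^i|$, $|\lambda_d^i|$ is at least $2\nu$.

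Both routes give the same per-exploratory-step probability $\underline p/(M+1)$. Each of the three outcomes has conditional probability at least $\underline p/(M+1)$ by Remark~\ref{rem:adh_prob_bounded} and $1-\overline p=\underline p$. Hence both yield the same $p$ and $\Gamma_{sb}$.

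Your version is shorter and shows that all that is needed is one coordinate pair carrying a $1/M$ share of the mass $1-\lambda_x^2$. The paper's version makes explicit which pair of outcomes supplies the excitation in each direction.
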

\proof{Proof of Proposition~\ref{prop:bmsb_formal}}
Note that $(z_t)_{t\geq 1}$ is $\mac F^w_t$-adapted. We consider a block of $k$ time steps following some $j\geq 0$. Let $\lambda=[\lambda_1,\lambda_2^\top,\lambda_3^\top]^\top$ with $\lambda_1 \in \R$, $\lambda_2\in \R^M$, $\lambda_3\in \R^M$. We must show that for any $\lambda$ with $\|\lambda\|_2=1$, $\frac{1}{k}\sum_{i=1}^k\pr (| \lambda^\top z_{j+i}| \geq \sqrt{\lambda^\top (\frac{1-\tau^2}{8M} I_{2M+1})\lambda} | \mac F^w_j)\geq p$ holds, or equivalently $\frac{1}{k}\sum_{i=1}^k\pr (| \lambda^\top z_{j+i}| \geq \nu | \mac F^w_j)\geq p$ for $\nu:=\frac{1}{2}\sqrt{\frac{1-\tau^2}{2M}}$. We prove this condition for three cases of $\lambda$: $\vert\lambda_1\vert\geq \tau$, $\vert\lambda_1\vert < \tau$ with $\Vert\lambda_3\Vert_2\geq \sqrt{\frac{1-\tau^2}{2}}$, and $\vert\lambda_1\vert < \tau$ with $\Vert\lambda_3\Vert_2<\sqrt{\frac{1-\tau^2}{2}}$.
Cases 2 and 3 represent when at least half of the remainder magnitude ($1-\tau^2$) comes from $\lambda_3$ or $\lambda_2$, respectively.

\textbf{Case 1:}
Let $| \lambda_1 | \geq \tau$. Consider arbitrary time step $t>j$ in the $k$-block. If $|\lambda_1|\geq \tau$, then $\|[\lambda_2^\top,\lambda_3^\top]\|_2\leq \sqrt{1-\tau^2}$. As $u_t$ and $d_t$ are $1$-sparse, the reverse triangle inequality and Cauchy-Schwarz give $\lvert \lambda^\top z_t\rvert =\lvert\lambda_1x_t+\lambda_2^\top u_t + \lambda_3^\top d_t\rvert \geq \left\lvert|\lambda_1x_t|-|\lambda_2^\top u_t+\lambda_3^\top d_t|\right\rvert\geq \lvert \lambda_1x_t\rvert-\sqrt{2(1-\tau^2)}$, so it suffices to lower bound $\pr(|\lambda_1x_t|\geq \nu +\sqrt{2(1-\tau^2)}\vert \mac F^w_j)$. Per the system dynamics: $\lambda_1x_t = \lambda_1w_{t-1}+[\lambda_1ax_{t-1}+\lambda_1b^\top u_{t-1}+\lambda_1c^\top d_{t-1}]$. Let $Q_{t-1}$ denote the bracketed term. Observe that $Q_{t-1}|\mac F^w_j$ is fixed for  $t=j+1$ and a random variable, independent of $w_{t-1}$, if $t > j+1$. Thus by Corollary~\ref{cor:shift}:
\begin{align}
    &\pr(|\lambda_1x_t|\geq \nu +\sqrt{2(1-\tau^2)}\vert \mac F^w_j) = 
    \pr(|\lambda_1w_{t-1}+Q_{t-1}|\geq \nu+\sqrt{2(1-\tau^2)}\vert \mac F^w_j)\\
    &\quad \geq \pr(|\lambda_1w_{t-1}|\geq \nu+\sqrt{2(1-\tau^2)})
    \geq\pr(|w_{t-1}|\geq \tfrac{1}{\tau}[\tfrac{1}{2}\sqrt{\nicefrac{(1-\tau^2)}{2M}}+\sqrt{2(1-\tau^2)}])=p_1(\tau)
\end{align}
For the $k$-block we have $ \frac{1}{k}\sum_{i=1}^k\pr (| \lambda^\top z_{j+i}| \geq \nu | \mac F^w_j)\geq \frac{1}{k}\sum_{i=1}^kp_1(\tau)=p_1(\tau)$.

\textbf{Case 2:} Let $|\lambda_1|<\tau$ and $\|\lambda_3\|_2\geq \sqrt{\frac{1-\tau^2}{2}}$. Let $E_t$ be the ($\mac F_{t-1}$-measurable) event that an exploratory action is taken in step $t$. We consider $t>j$ in the $k$-block with $\mathbf{1}(E_t)=1$ and focus on the action index $i^*\in\{1,\dots,M\}$ where $| \lambda_3^i|$ is largest. Observe that $| \lambda_3^{i^*}|$ is smallest when all components of $\lambda_3$ have equal magnitude. Since $\| \lambda_3 \|_2 \geq \sqrt{\frac{1-\tau^2}{2}}$, it follows that $| \lambda_3^{i^*}| \geq \sqrt{\frac{1-\tau^2}{2}}\frac{1}{\sqrt{M}}=2\nu$. By definition of the $(r,k)$-policy, $\pr(u_t=e_{i^*}|\mac F_{t-1},E_t)=\frac{1}{M+1}$. We show that for any state, $x_t\in \R$, the system is sufficiently excited by at least one outcome; we define two sets corresponding to the outcomes $d^{i^*}_t=1$ and $d^{i^*}_t=0$. Let $A=\{x_t \in \R:|\lambda_1x_t+\lambda^{i^*}_2+\lambda^{i^*}_3|\geq \nu\}$ and $B=\{x_t\in \R:|\lambda_1x_t+\lambda^{i^*}_2|\geq \nu\}$. We claim $A^C\cap B^C= \emptyset$, i.e., any $x_t\in \R$ belongs to at least one set. By contradiction, let $\exists x \in \R$ with $x\in (A^C \cap B^C)$. Above we established $|\lambda^{i^*}_3|\geq 2\nu$, so the triangle inequality and the set properties produce the desired contradiction: $|\lambda^{i^*}_3| = |(\lambda_1x+\lambda^{i^*}_2+\lambda^{i^*}_3)-(\lambda_1x+\lambda^{i^*}_2)| \leq |\lambda_1x+\lambda^{i^*}_2+\lambda^{i^*}_3|+|\lambda_1x+\lambda^{i^*}_2| < 2\nu$. This implies that for any $x_t\in \R$, $\mathbf{1}_A(x_t)+\mathbf{1}_B(x_t)\geq 1$. Recall that $x_t$ is $\mac F_{t-1}$-measurable. We lower bound with the $u_t=e_{i^*}$ case:
\begin{align}
    &\pr(|\lambda^\top z_t|\geq \nu|\mac F_{t-1},E_t)\geq \pr(u_t = e_{i^*}| \mac F_{t-1},E_t)\pr(|\lambda^\top z_t|\geq \nu|\mac F_{t-1},E_t,u_t=e_{i^*})\\
    &\quad\overset{(a)}{=}\tfrac{1}{M+1}[\mathbf{1}_{A}(x_t)\hat p_{i^*}(x_t,1)+\mathbf{1}_{B}(x_t)(1-\hat p_{i^*}(x_t,1))]
    \overset{(b)}{\geq} \min \{\tfrac{\underline{p}}{M+1} ,\tfrac{(1-\overline p)}{M+1}\}=:p_2
\end{align}
$(a)$ substitutes the action probability, expands adherence outcomes, and uses the set definitions. $(b)$ substitutes $\hat p$ bounds by Remark~\ref{rem:adh_prob_bounded}, holding $\forall x_t \in \mac X$, and the set conclusion above. Since $E_t$ is $\mac F_{t-1}$ measurable $\pr(E_t|\mac F_{t-1})=\mathbf{1}(E_t)$. We have $\pr(| \lambda^\top z_t| \geq \nu | \mac F_{t-1})\geq \pr(| \lambda^\top z_t| \geq \nu, E_t | \mac F_{t-1}) = \pr(E_t|\mac F_{t-1})\pr(| \lambda^\top z_t| \geq \nu | \mac F_{t-1},E_t)\geq \mathbf{1}(E_t)p_2$. Thus:
\begin{align}
    &\tfrac{1}{k}\textstyle\sum\nolimits_{i=1}^k\pr (| \lambda^\top z_{j+i}| \geq \nu | \mac F^w_j)
    \overset{(c)}{=} \tfrac{1}{k}\textstyle\sum\nolimits_{i=1}^k\ex[\pr (| \lambda^\top z_{j+i}| \geq \nu | \mac F_{j+i-1})\vert \mac F^w_j]\\
    &\qquad \overset{(d)}{\geq} \tfrac{1}{k}\textstyle\sum\nolimits_{i=1}^k\ex[\mathbf{1}(E_{j+i})p_2| \mac F^w_j]\overset{(e)}{=}\tfrac{p_2}{k}\ex[\textstyle\sum\nolimits_{i=1}^k\mathbf{1}(E_{j+i})|\mac F^w_j] \overset{(f)}{\geq} \tfrac{p_2\lceil rk\rceil}{k}\geq rp_2
\end{align}
$(c)$ uses the tower property as $\mac F_j^w \subseteq \mac F_{j+i-1}$ for $i\geq 1$. $(d)$ uses the per-step bound above. $(e)$ uses the linearity of expectation and $(f)$ uses the guaranteed exploratory steps per block of $(r,k)$-policies.

\textbf{Case 3:} Let $| \lambda_1 | <\tau$ and $\| \lambda_3 \|_2 <\sqrt{\frac{1-\tau^2}{2}}$. $\| \lambda \|_2 =1$ implies that $\| \lambda_2 \|_2^2 = 1- | \lambda_1|^2 - \| \lambda_3 \|_2^2 \geq \frac{1-\tau^2}{2} $. Similar to Case 2, this implies some maximizing index $i^*\in\{1,\dots,M\}$ with $\vert \lambda_2^{i^*}\vert \geq \sqrt{\frac{1-\tau^2}{2}}\frac{1}{\sqrt{M}}=2\nu$. Define $E_t$ as in Case 2 and consider $t>j$ with $\mathbf{1}(E_t)=1$. We make a similar argument to Case 2, but consider two actions: 
$e_{i^*}$ and $\ve{0}_M$.
As before, the $(r,k)$-policy definition gives $\pr(u_t=e_{i^*}|\mac F_{t-1},E_t)=\frac{1}{M+1}$. We define two relevant sets, $A=\{x_t\in \R:|\lambda_1x_t+\lambda_2^{i^*}|\geq \nu\}$ and $B=\{x_t\in \R:|\lambda_1x_t|\geq \nu\}$. These correspond to the cases $u^{i^*}_t=1,d^{i^*}_t=0$ and $u_t=\ve{0}_M,d_t=\ve{0}_M$, respectively. We claim $A^C\cap B^C= \emptyset$, i.e., any $x_t\in \R$ belongs to at least one set. For a contradiction, assume $\exists x \in \R$ with $x \in \left(A^C \cap B^C\right)$. Earlier we established that $\vert \lambda^{i^*}_2 \vert\geq 2\nu$, so the triangle inequality and the set properties produce the desired contradiction: $|\lambda_2^{i^*}| = |(\lambda_1x+\lambda_2^{i^*})-(\lambda_1x)|\leq |\lambda_1x+\lambda_2^{i^*}|+|\lambda_1x| < 2\nu$. This implies that for any $x_t\in \R$, $\mathbf{1}_A(x_t)+\mathbf{1}_B(x_t)\geq 1$. We lower bound with the $u_t=\ve{0}_M$ and $u_t=e_{i^*}$ cases:
\begin{align}
    \pr(|\lambda^\top z_t|\geq \nu|\mac F_{t-1},E_t)
    &\geq\pr(u_t=\ve{0}_M| \mac F_{t-1},E_t) \pr(|\lambda^\top z_t|\geq \nu|\mac F_{t-1},E_t,u_t=\ve{0}_M)  \nonumber\\
    &\qquad+ \pr(u_t=e_{i^*}\vert \mac F_{t-1},E_t) \pr(|\lambda^\top z_t|\geq \nu|\mac F_{t-1},E_t,u_t=e_{i^*})\\
    &\overset{(a)}{\geq} \tfrac{1}{M+1} [\mathbf{1}_{B}(x_t)+ (1-\hat p_{i^*}(x_t,1)) \mathbf{1}_{A}(x_t) ] 
    \overset{(b)}{\geq}  \min \{\tfrac{1}{M+1}, \tfrac{1-\overline p}{M+1}\}=\tfrac{1-\overline p}{M+1}=:p_3
\end{align}
$(a)$ substitutes the action probabilities, expands the Bernoulli adherence outcomes (adherent case is dropped), and uses the set definitions. $(b)$ uses the set conclusion above and substitutes $\hat p$ bounds by Remark~\ref{rem:adh_prob_bounded}, holding $\forall x_t \in \mac X$. Following an identical tower property argument to Case 2, we have for the $k$-block: $\frac{1}{k}\sum_{i=1}^k\pr (| \lambda^\top z_{j+i}| \geq \nu | \mac F^w_j)\geq rp_3$. Thus the BMSB condition holds for all cases with $p=\min\{p_1(\tau),rp_2,rp_3\}=\min\{p_1(\tau),\frac{r\underline p}{M+1}\}$ as $\underline p=1-\overline p$ by the symmetry of the sigmoid.\halmos
\endproof
\proof{Proof of Proposition~\ref{prop:ub-cond-formal}}
By Proposition~\ref{prop:absolute_bounded_x}, $\Vert z_t\Vert^2_2 \leq C_x^2+2$ for $t\geq 1$ almost surely. Let $u \in \R^{2M+1}$ and observe by Cauchy-Schwarz $u^\top z_tz_t^\top u=(z_t^\top u)^2 \leq \Vert z_t \Vert^2_2 \Vert u\Vert^2_2 \leq u^\top ((C_x^2+2)\ma{I}_{2M+1}) u$. Thus $z_tz_t^\top \preceq (C_x^2+2)\ma{I}_{2M+1}$ almost surely. The proposition follows by summing over time steps.
\halmos
\endproof
\end{APPENDICES}

\ECSwitch
\ECHead{Electronic Companion}
This EC includes proofs and supporting results for system identification (Section~\ref{sec:identification}) and regret analysis for \textsc{UCB-BOLD} (Section~\ref{sec:policy_optimization}).

\section{MLE-related system identification supporting results}\label{app:GLM_identification}
The following results support the MLE-portion of our identification result. Let $p_{i,t} = \sigma(x_t+\mu_i)$ and $p_{i,t}' = \frac{\partial p_{i,t}}{\partial \mu_i}=p_{i,t}(1-p_{i,t})$. Define the per-step log-likelihood $\ell_{i,t}(\mu_i) := \mathbf{1}_{\{u_t=e_i\}}[d^i_t \log(p_{i,t}) + (1-d^i_t)\log(1-p_{i,t})]$, the per-step score function $s_{i,t}(\mu_i) := \frac{\partial}{\partial \mu_i}\ell_{i,t}(\mu_i)=\mathbf{1}_{\{u_t=e_i\}}[d^i_t-p_{i,t}]$, and the per-step second derivative of the log-likelihood $h_{i,t}(\mu_i):=\frac{\partial^2}{\partial \mu_i^2}\ell_{i,t}(\mu_i)=-\mathbf{1}_{\{u_t=e_i\}}p_{i,t}(1-p_{i,t})$. $S_{i}(\mu_i) := \sum_{t=1}^Ts_{i,t}(\mu_i)$, $H_i(\mu_i)=\sum_{t=1}^Th_{i,t}(\mu_i)$, and $\ell_i(\mu_i)=\sum_{t=1}^T\ell_{i,t}(\mu_i)$ are the trajectory-summed quantities. Recall the MLE estimator $\hat \mu_i=\arg\max_{\mu_i \in [-\bar \mu, \bar \mu]}\ell_i(\mu_i)$. $N_i:=\sum_{t=1}^T\mathbf{1}_{\{u_t=e_i\}}$. Lemmas~\ref{lem:score_bound}-~\ref{lem:n_event} assume Assumptions~\ref{asm:observed_quantities}-\ref{asm:noise_properties} hold and actions come from an $(r,k)$-exploratory policy.
\begin{lemma}{\textbf{(Score function bound)}}\label{lem:score_bound}
Let $\delta\in(0,1)$, then $\pr(\vert S_i(\mu_i^*)\vert \leq \sqrt{2T\log\bigl(\nicefrac{2}{\delta}\bigr)}) \geq 1-\delta$. 
\end{lemma}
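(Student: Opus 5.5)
We will treat $S_i(\mu_i^*)=\sum_{t=1}^T s_{i,t}(\mu_i^*)$ as a martingale and apply Azuma--Hoeffding. Each increment is $s_{i,t}(\mu_i^*)=\mathbf{1}_{\{u_t=e_i\}}\bigl[d^i_t-\sigma(x_t+\mu_i^*)\bigr]$. We use the filtration $\mac F^d_t=\sigma(x_{0:t},u_{0:t},d_{0:t-1},w_{0:t-1})$ from Appendix~\ref{app:gen_model}. This $\sigma$-algebra contains the current state $x_t$ and the current action $u_t$, but not the adherence outcome $d_t$ or the noise $w_t$.

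\textbf{Step 1 (adaptedness and zero mean).} The increment $s_{i,t}(\mu_i^*)$ is $\mac F^d_{t+1}$-measurable, since it depends only on $x_t,u_t,d_t$. Moreover $\mathbf{1}_{\{u_t=e_i\}}$ and $x_t$ are $\mac F^d_t$-measurable. By the adherence model~\eqref{eq:dynamics_adh}, on the event $u_t=e_i$ the conditional law of $d^i_t$ given $\mac F^d_t$ is $\text{Ber}(\sigma(x_t+\mu_i^*))$. This holds for any $(r,k)$-exploratory policy, because the action, whether exploratory or adversarial, is fixed before $d_t$ is drawn, and $d_t$ depends on the past only through $(x_t,u_t)$. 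Hence
\[
\ex\bigl[s_{i,t}(\mu_i^*)\,\big|\,\mac F^d_t\bigr]=\mathbf{1}_{\{u_t=e_i\}}\bigl(\sigma(x_t+\mu_i^*)-\sigma(x_t+\mu_i^*)\bigr)=0,
\]
so $\{s_{i,t}(\mu_i^*)\}_{t=1}^T$ is a martingale difference sequence with respect to $\{\mac F^d_{t+1}\}$.

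\textbf{Step 2 (bounded increments).} Since $d^i_t\in\{0,1\}$ and $\sigma(\cdot)\in(0,1)$, we have $|s_{i,t}(\mu_i^*)|\le 1$. More sharply, conditionally on $\mac F^d_t$ the increment lies in an interval of length at most $1$: it is $[-p,1-p]$ with $p=\sigma(x_t+\mu_i^*)$ when $u_t=e_i$, and the single point $\{0\}$ otherwise.

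\textbf{Step 3 (concentration).} We apply two-sided Azuma--Hoeffding, or equivalently the conditional Hoeffding lemma, which makes each increment $\tfrac14$-subgaussian. This gives
\[
\pr\bigl(|S_i(\mu_i^*)|\ge \epsilon\bigr)\le 2\exp\bigl(-\epsilon^2/(2T)\bigr).
\]
Setting the right-hand side equal to $\delta$ gives $\epsilon=\sqrt{2T\log(2/\delta)}$, which is the stated bound. The sharper interval-length constant would give $\sqrt{T\log(2/\delta)/2}$, so the stated constant follows from the cruder $|s_{i,t}|\le1$ version with room to spare.

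The main point needing care is Step 1. We must choose a filtration fine enough that $u_t$ and $x_t$ are known, but coarse enough that $d_t$ remains conditionally Bernoulli. $\mac F^d_t$ does this, and the adaptive, possibly adversarial, choice of non-exploratory actions causes no difficulty because $u_t$ is measurable before $d_t$ is realized.
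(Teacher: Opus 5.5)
Your proof is correct and follows the paper's argument essentially line for line. It uses the same filtration $\mathcal{F}^d_t$, the same martingale-difference observation for $s_{i,t}(\mu_i^*)$, the same bound $|s_{i,t}(\mu_i^*)|\le 1$, and the same two-sided Azuma--Hoeffding inversion; your remark that the length-one conditional interval would give the sharper $\sqrt{T\log(2/\delta)/2}$ is a valid refinement the paper does not use.
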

\proof{Proof of Lemma~\ref{lem:score_bound}}
Consider $\mu_i \in [-\bar \mu, \bar \mu]$. Let $p_{i,t}^*= \sigma(x_t+\mu_i^*)$. Define $\xi_t=s_{i,t}(\mu_i^*)$ and observe that $\{\xi_t\}_{t=1}^T$ is an $\mac F^d_{t+1}$-adapted martingale difference sequence since $x_t,u_t,d^i_t \in \mac F^d_{t+1}$ and $\ex[\xi_t\vert \mac F^d_{t}]=0$ by \eqref{eq:dynamics_adh} as $\xi_t$ is defined using $\mu_i^*$. Note that $ \xi_t \in [-p^*_{i,t},1-p^*_{i,t}]$ (implying $\vert \xi_t\vert \leq 1$) and $\sum_{t=1}^T\xi_t=S_i(\mu_i^*)$. We apply Azuma-Hoeffding \citep{wainwrightHighdimensionalStatisticsNonasymptotic2019}: $\pr(\vert S_i(\mu_i^*)\vert \geq q) \leq 2\exp\left(-\nicefrac{q^2}{(2\sum_{t=1}^T1^2)}\right)$. We set the RHS to $\delta$, solve for $q$, and the result follows.
\halmos
\endproof
\begin{lemma}{\textbf{(Second derivative bound)}}\label{lem:hessian_bound}
$\vert H_i(\mu_i)\vert \geq N_i\underline p^2$ for all $\mu_i \in [-\bar \mu, \bar \mu]$.
\end{lemma}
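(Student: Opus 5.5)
The plan is a direct termwise bound, since $H_i$ is a sum of per-step second derivatives, each either zero or strictly negative.

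First, write $\vert H_i(\mu_i)\vert = \sum_{t=1}^T \mathbf{1}_{\{u_t=e_i\}} p_{i,t}(1-p_{i,t})$. This uses $h_{i,t}(\mu_i) = -\mathbf{1}_{\{u_t=e_i\}}p_{i,t}(1-p_{i,t}) \leq 0$. Because every term has the same sign, the absolute value of the sum equals the sum of absolute values, and no cancellation can occur.

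Next, lower bound each nonzero term uniformly over $\mu_i\in[-\bar\mu,\bar\mu]$. By Proposition~\ref{prop:absolute_bounded_x}, $x_t\in[-C_x,C_x]$, so $x_t+\mu_i\in[-C_x-\bar\mu,\,C_x+\bar\mu]$. Monotonicity of $\sigma$ then gives $p_{i,t}\in[\underline p,\overline p]$, exactly as in Remark~\ref{rem:adh_prob_bounded}. By the symmetry of the sigmoid, $1-\overline p=\underline p$, so $1-p_{i,t}\geq \underline p$ as well. Since $\underline p<1$, it follows that $p_{i,t}(1-p_{i,t})\geq \underline p\cdot\underline p=\underline p^2$.

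Finally, sum over $t$. Each of the $N_i=\sum_t \mathbf{1}_{\{u_t=e_i\}}$ active terms contributes at least $\underline p^2$, which gives $\vert H_i(\mu_i)\vert\geq N_i\underline p^2$.

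The bound holds deterministically, with no probabilistic step needed, and it is uniform in $\mu_i$ because the constant $\underline p$ already absorbs the worst case $\vert\mu_i\vert=\bar\mu$. There is no real obstacle. The only point needing care is confirming that the bound on $p_{i,t}$ holds for an arbitrary $\mu_i$ in the parameter box, not just at $\mu_i^*$. It does, because Remark~\ref{rem:adh_prob_bounded} uses only $\mu\in[-\bar\mu,\bar\mu]^M$. A slightly tighter constant $\sigma'(C_x+\bar\mu)=\underline p(1-\underline p)$ is also available, but the stated $\underline p^2$ suffices.
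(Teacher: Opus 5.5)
Your proposal is correct and follows the paper's proof: write $\vert H_i(\mu_i)\vert=\sum_{t}\mathbf{1}_{\{u_t=e_i\}}p_{i,t}(1-p_{i,t})$, bound $p_{i,t}$ and $1-p_{i,t}$ below uniformly over $\mu_i\in[-\bar\mu,\bar\mu]$ via Remark~\ref{rem:adh_prob_bounded}, and use $1-\overline p=\underline p$. One small point: the step $p_{i,t}(1-p_{i,t})\geq\underline p^2$ needs only that both factors are at least $\underline p\geq 0$, not that $\underline p<1$.
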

\proof{Proof of Lemma~\ref{lem:hessian_bound}}
$\vert H_i(\mu_i)\vert =\vert \sum_{t=1}^Th_{i,t}(\mu_i)|= \sum_{t=1}^T\mathbf{1}_{\{u_t=e_i\}}p_{i,t}(1-p_{i,t})$. Remark~\ref{rem:adh_prob_bounded} bounds $p_{i,t}$ and $1-p_{i,t}$ uniformly for $\mu_i \in [-\bar \mu, \bar \mu]$, giving $\vert H_i(\mu_i)\vert \geq N_i\underline p (1-\overline p)=N_i \underline p^2$.
\halmos
\endproof
\begin{lemma}{\textbf{($\mu_i$ bound)}}\label{lem:action_bound}
Let $\delta \in (0,1)$. Then $\pr\bigl(\vert \hat \mu_i - \mu_i^*\vert \leq \frac{\sqrt{2T\log(\frac{2}{\delta})}}{N_i\underline p^2}\bigr) \geq 1-\delta$.
\end{lemma}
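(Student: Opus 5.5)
The plan is a one-dimensional mean value (strong concavity) argument that combines Lemma~\ref{lem:score_bound} and Lemma~\ref{lem:hessian_bound}. Consider the trajectory log-likelihood $\ell_i$ on the interval $[-\bar\mu,\bar\mu]$. It is twice differentiable with $\ell_i''=H_i$. By Lemma~\ref{lem:hessian_bound}, $H_i(\mu_i)\leq -N_i\underline p^2$ for every $\mu_i\in[-\bar\mu,\bar\mu]$, so $\ell_i$ is $N_i\underline p^2$-strongly concave there. When $N_i=0$ the claimed bound is infinite (or vacuous), so we may assume $N_i\geq 1$.

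The first step uses the optimality of $\hat\mu_i$ over the constraint set. Since $\hat\mu_i$ maximizes a concave function over an interval and $\mu_i^*\in[-\bar\mu,\bar\mu]$, the first-order condition gives $S_i(\hat\mu_i)(\mu_i^*-\hat\mu_i)\leq 0$. This covers both the interior case, where $S_i(\hat\mu_i)=0$, and the boundary case. Next apply the mean value theorem to $S_i$ between $\hat\mu_i$ and $\mu_i^*$. This gives $S_i(\hat\mu_i)-S_i(\mu_i^*)=H_i(\tilde\mu)(\hat\mu_i-\mu_i^*)$ for some $\tilde\mu$ in the segment, which lies inside $[-\bar\mu,\bar\mu]$ by convexity. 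Multiply by $(\hat\mu_i-\mu_i^*)$ and use the first-order condition together with $-H_i(\tilde\mu)\geq N_i\underline p^2$. This yields
\[
N_i\underline p^2(\hat\mu_i-\mu_i^*)^2\leq -H_i(\tilde\mu)(\hat\mu_i-\mu_i^*)^2 = -\bigl(S_i(\hat\mu_i)-S_i(\mu_i^*)\bigr)(\hat\mu_i-\mu_i^*)\leq S_i(\mu_i^*)(\hat\mu_i-\mu_i^*),
\]
and therefore $|\hat\mu_i-\mu_i^*|\leq |S_i(\mu_i^*)|/(N_i\underline p^2)$ deterministically.

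Finally, intersect with the event of Lemma~\ref{lem:score_bound}, which has probability at least $1-\delta$ and gives $|S_i(\mu_i^*)|\leq\sqrt{2T\log(2/\delta)}$. This gives the stated bound.

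The main subtlety is the random denominator $N_i$ and the fact that Lemma~\ref{lem:hessian_bound} depends on the bounded-state property (Remark~\ref{rem:adh_prob_bounded}). Because the deterministic inequality holds pathwise for whatever $N_i$ is realized, no extra union bound over $N_i$ is needed. The care point is to check that the curvature bound applies at the intermediate point $\tilde\mu$, which requires $\tilde\mu\in[-\bar\mu,\bar\mu]$; this holds because both endpoints lie in the interval.
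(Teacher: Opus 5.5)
Your proposal is correct and follows essentially the same argument as the paper: a mean value theorem expansion of the score between $\hat\mu_i$ and $\mu_i^*$, combined with the constrained first-order condition $S_i(\hat\mu_i)(\hat\mu_i-\mu_i^*)\geq 0$, the curvature bound of Lemma~\ref{lem:hessian_bound} at the intermediate point, and the score concentration of Lemma~\ref{lem:score_bound}. You also make explicit two points the paper leaves implicit: the degenerate case $N_i=0$, and the fact that $\tilde\mu\in[-\bar\mu,\bar\mu]$, which is needed for the curvature bound to apply.
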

\proof{Proof of Lemma~\ref{lem:action_bound}}
$S_i(\mu_i)$ is a smooth function of $\mu_i \in [-\bar \mu, \bar \mu]$, so by the mean value theorem, there exists some $\tilde \mu$ between $\hat \mu_i$ and $\mu^*_i$ such that $H_i(\tilde \mu)(\hat \mu_i - \mu_i^*)=S_i(\hat \mu_i)-S_i(\mu^*_i)$. The first order optimality condition for $\hat \mu_i \in [-\bar \mu,\bar \mu]$ gives $S_i(\hat \mu_i)(\hat \mu_i - \mu_i^*)\geq 0$ (i.e., the inequality is tight for $\hat \mu_i\in(-\bar \mu,\bar \mu)$ and the terms have the same signs at the boundaries). Multiplying the MVT expression by $(\hat \mu_i - \mu_i^*)$ and dropping $S_i(\hat \mu_i)(\hat \mu_i - \mu_i^*)\geq 0$ gives $H_i(\tilde \mu)(\hat \mu_i - \mu_i^*)^2 \geq -S_i(\mu_i^*)(\hat \mu_i - \mu_i^*)$. $H_i \leq 0$ by definition so $-H_i(\tilde\mu) = |H_i(\tilde\mu)|$, implying $|H_i(\tilde \mu)||\hat \mu_i-\mu_i^*|^2 \leq |S_i(\mu_i^*)||\hat \mu_i - \mu_i^*|$. If $\hat \mu_i \neq \mu_i^*$, then simplifying gives $\vert\hat \mu_i - \mu^*_i\vert \leq \frac{\vert S_i(\mu_i^*)\vert }{\vert H_i(\tilde \mu)\vert}$ (the same holds trivially for $\hat \mu_i=\mu_i^*$ as the RHS is non-negative). The numerator is upper bounded by Lemma~\ref{lem:score_bound} with probability at least $1-\delta$ and the denominator is lower bounded by Lemma~\ref{lem:hessian_bound}, so the lemma follows by substitution.
\halmos
\endproof
\begin{lemma}{\textbf{($\mac E_N$ bound)}}\label{lem:n_event} Let $\mac E_N(r,T,k)$ be the event that $\{N_i > \frac{r(T-k)}{2(M+1)}\text{ for all } i=1,\dots,M\}$. Then for $\delta \in (0,1)$: $\pr(\mac E_N(r,T(\delta),k) ) \geq 1-\delta$ when $T(\delta) \geq k + \nicefrac{8}{r}(M+1)\log(\nicefrac{M}{\delta})$.
\end{lemma}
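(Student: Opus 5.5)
The plan is to fix a single treatment $i$, lower bound $N_i$ by a sum of Bernoulli indicators that stochastically dominate an independent sequence with mean $\frac{1}{M+1}$, and then apply a union bound over $i=1,\dots,M$.

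\textbf{Counting exploratory steps.} Partition $\{1,\dots,T\}$ into $\lfloor T/k\rfloor$ disjoint $k$-blocks starting at $j=0,k,2k,\dots$. By the $(r,k)$-exploratory definition, each full block contains at least $\lceil rk\rceil\geq rk$ exploratory steps almost surely. Hence the total number of exploratory steps is $n_E\geq rk\lfloor T/k\rfloor\geq r(T-k)$ deterministically. This is the source of the $T-k$ in the statement.

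\textbf{A martingale lower bound for $N_i$.} Let $E_t$ be the $\mac F_{t-1}$-measurable event that step $t$ is exploratory, and set $Y_t:=\mathbf 1(E_t)\mathbf 1_{\{u_t=e_i\}}$, so $N_i\geq \sum_t Y_t$. By the policy definition, $\ex[Y_t\mid\mac F_{t-1}]=\mathbf 1(E_t)\frac{1}{M+1}$. Therefore
\[
D_t:=Y_t-\mathbf 1(E_t)\tfrac{1}{M+1}
\]
is a martingale difference sequence, and
\[
\sum_t Y_t=\tfrac{n_E}{M+1}+\sum_t D_t\geq \tfrac{r(T-k)}{M+1}+\sum_t D_t .
\]
It then suffices to show $\sum_t D_t>-\frac{r(T-k)}{2(M+1)}$ with probability at least $1-\delta/M$.

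\textbf{Choice of concentration inequality (the main obstacle).} Azuma--Hoeffding with increments bounded by $1$ gives deviation of order $\sqrt{T\log(M/\delta)}$. Matching this against $\frac{r(T-k)}{2(M+1)}$ forces $T\gtrsim \frac{(M+1)^2}{r^2}\log$, which is worse than the claimed $\frac{8}{r}(M+1)\log(M/\delta)$. So I would instead use a multiplicative, Freedman/Bernstein-type lower-tail bound. The conditional variance of $D_t$ is at most $\mathbf 1(E_t)\frac{1}{M+1}$, so the predictable variance is at most $V:=n_E/(M+1)$.

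A clean route is the supermartingale
\[
\exp\Bigl(-\lambda\textstyle\sum_t Y_t+(1-e^{-\lambda})\textstyle\sum_t \mathbf 1(E_t)\tfrac{1}{M+1}\Bigr),
\]
which yields a Chernoff lower tail for the Bernoulli sum with mean bounded below by $\frac{n_E}{M+1}$:
\[
\pr\Bigl(\textstyle\sum_t Y_t\leq \tfrac{1}{2}\cdot\tfrac{n_E}{M+1}\Bigr)\leq \exp\Bigl(-\tfrac{n_E}{8(M+1)}\Bigr).
\]
One subtlety is that $n_E$ is random, although bounded below deterministically by $r(T-k)$. I would handle this by applying the bound to the first $\lceil r(T-k)\rceil$ exploratory steps only, via optional stopping, or equivalently by noting that the exponent is monotone in $n_E$.

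\textbf{Union bound.} Using $n_E\geq r(T-k)$, the failure probability for treatment $i$ is at most $\exp(-\frac{r(T-k)}{8(M+1)})$. This is at most $\delta/M$ exactly when $T\geq k+\frac{8}{r}(M+1)\log(M/\delta)$. A union bound over $i=1,\dots,M$ then gives $\pr(\mac E_N)\geq 1-\delta$.

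Strictness of the inequality $N_i>\frac{r(T-k)}{2(M+1)}$ versus $\geq$ is a boundary technicality. I would absorb it either by using the strict form of Chernoff or by noting it is immaterial for the downstream use.
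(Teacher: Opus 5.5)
Your proposal is correct and follows essentially the same route as the paper. Both lower-bound the number of exploratory steps by $\lfloor T/k\rfloor\lceil rk\rceil\geq r(T-k)$, restrict to the first fixed number of them, apply the multiplicative Chernoff lower tail with factor $\tfrac12$ to get $\exp\{-\tfrac{r(T-k)}{8(M+1)}\}$, and union bound over the $M$ treatments. The one difference is that the paper simply asserts the count of $e_i$ among those exploratory steps is $\text{Bin}(N,\tfrac{1}{M+1})$, while your exponential supermartingale derives the same bound directly under adaptively chosen exploration times.
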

\proof{Proof of Lemma~\ref{lem:n_event}}
Consider $i \in \{1,\dots,M\}$. The $(r,k)$ policy a.s. takes at least $N:=\lfloor \frac{T}{k}\rfloor\lceil r k\rceil\geq r(T-k)$ exploratory steps in the trajectory. Let $\tilde N_i$ be the count of steps where $u_t=e_i$ in the first $N$ exploratory steps. By the $(r,k)$ policy's uniformly random exploration, $\tilde N_i\sim \text{Bin}(N,\frac{1}{M+1})$, so by the multiplicative form of the Chernoff bound for sums of Bernoulli random variables \citep{gerbessiotisSurveyChernoffHoeffding2025}, 
we have $\pr(\tilde N_i \leq \frac{N}{2(M+1)})\leq \exp\{-\frac{N}{8(M+1)}\} \leq \exp\{-\frac{r(T-k)}{8(M+1)}\}$. As $N_i\geq \tilde N_i$ by construction and $N\geq r(T-k)$ a.s., this implies $\{N_i \leq \frac{r(T-k)}{2(M+1)} \}\subseteq \{\tilde N_i \leq \frac{N}{2(M+1)}\}$ and thus $\pr(N_i \leq \frac{r(T-k)}{2(M+1)}) \leq \exp\{-\frac{r(T-k)}{8(M+1)}\}$. Applying the union bound yields $\pr(\mac E_N(r,T,k) ) \geq 1- M\exp\{-\frac{r(T-k)}{8(M+1)}\}$. Setting the RHS equal to $\delta$ and solving for $T(\delta)$ completes the proof. 
\halmos
\endproof

\section{System identification result proof}\label{app:ec_sys_id_result}
\proof{Proof of Theorem~\ref{thm:system_sample_complexity}}
We analyze the OLS and MLE results separately (allocating $\nicefrac{\delta}{2}$ to each) and then apply a union bound. Let $\delta_1= \nicefrac{\delta}{6}$ (used for the OLS portion), $\delta_2 = \nicefrac{\delta}{4}$ (used for $\mac E_N$), and $\delta_3=\nicefrac{\delta}{(4M)}$ (used for the MLE portion).
We evaluate the preconditions in Theorem~\ref{thm:simch_sample_complexity} using $(Z_t,Y_t)_{t\geq 1}=(z_t,x_{t+1})_{t\geq 1}$, filtration $\{\mac F^w_t\}_{t\geq 0}$, and $\eta_t = w_t$. Precondition (a) is satisfied trivially under Assumption~\ref{asm:noise_properties} for $x_{t+1} = \theta^{*\top} z_t +w_t$. By Proposition~\ref{prop:bmsb_formal}, for the given $(r,k,\tau)$, the system satisfies the $(k,\frac{1-\tau^2}{8M}I_{2M+1},\min\{p_1(\tau),\frac{r\underline p}{M+1}\})$-BMSB condition, establishing precondition (b). The system satisfies precondition (c) by Proposition~\ref{prop:ub-cond-formal} with $\bar \Gamma = (C_x^2+2)I_{2M+1}$. The OLS result then follows by substituting the system-specific quantities into Theorem~\ref{thm:simch_sample_complexity} (using $\delta_1$ gives an OLS-related failure probability of no more than $\nicefrac{\delta}{2}$). Theorem~\ref{thm:simch_sample_complexity} uses a factor of $3\delta$, but as our precondition (c) is deterministic this could be refined to $2\delta$ (omitted for brevity). For the MLE result, we first have that $\pr(\mac E_N(r,T(\delta_2),k)) \geq 1-\delta_2$ when $T\geq k + \nicefrac{8(M+1)}{r}\log(\nicefrac{4M}{\delta})$ by Lemma~\ref{lem:n_event}. By Lemma~\ref{lem:action_bound}, for index $i\in\{1,\dots,M\}$ and $\delta_3$, we have $\pr(\vert \hat \mu_i - \mu_i^*\vert \leq \frac{\sqrt{2T\log(\nicefrac{8M}{\delta})}}{N_i\underline p^2}) \geq 1-\delta_3$. On $\mac E_N(r,T(\delta_2),k)$, $N_i>\frac{r(T-k)}{2(M+1)}$ for all $i=1,\dots,M$, meaning $\vert \hat \mu_i - \mu_i^*\vert \leq \frac{2(M+1)}{\underline p^2}\frac{\sqrt{2T\log(\nicefrac{8M}{\delta})}}{r(T-k)}$. Applying the union bound ensures this holds for all $i=1,\dots,M$ with probability at least $1-\nicefrac{\delta}{4}$. For bound clarity, the assumed $T\geq 2k$ allows lower bounding $T-k$ by $\nicefrac{T}{2}$, giving the upper bound $\frac{4(M+1)}{\underline p^2}\frac{\sqrt{2\log(\nicefrac{8M}{\delta})}}{r\sqrt{T}}$ used in the result. A union bound ensures the MLE result holds with probability at least $1-\nicefrac{\delta}{2}$. The complete result then holds by an additional union bound and taking the maximum of the burn-in times. 
\halmos
\endproof

\section{Regret analysis supporting results and proofs}\label{app:control}
\subsection{Confidence set results}\label{app:ec_confidence_sets}
\begin{theorem}\label{thm:gen_confidence_set}
    (\textcite{abbasi-yadkoriImprovedAlgorithmsLinear2011} Theorem 2) Let $\{\mac F_t\}_{t=0}^{\infty}$ be a filtration. Let $\{\eta_t\}_{t=1}^\infty$ be an $\mac F_t$-adapted random process taking value in $\R$, with $\eta_t|\mac F_{t-1}$ $R$-subgaussian. Let $\{z_t\}_{t=1}^\infty$ be an $\R^d$-valued stochastic process such that $z_t$ is $\mac F_{t-1}$ measurable. Let $\lambda>0$ and define $y_t = \langle z_t,\theta_*\rangle+\eta_t$. If $\Vert\theta_*\Vert_2 \leq S$ and, for all $t\geq 1$, $\Vert z_t\Vert_2 \leq \bar Z $, then with probability at least $1-\delta$ for all $t\geq 0$, $\theta^*$ lies in the following set $\mac C_t(\delta) = \{ \theta \in \R^d : \|\hat{\theta}^{RLS}_t - \theta\|_{\overline{V}_t} \leq R \sqrt{d\log(\frac{1 +t\bar Z^2/\lambda}{\delta})} +\sqrt{\lambda}S  \}$.
\end{theorem}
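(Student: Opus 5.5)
The plan is the standard self-normalized martingale argument of \textcite{abbasi-yadkoriImprovedAlgorithmsLinear2011}, in three steps: an algebraic decomposition of the RLS error, a uniform-in-time tail bound on a self-normalized martingale, and a determinant bound.

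\textbf{Step 1 (decomposition).} Write $\bar V_t=\lambda I_d+\sum_{s<t} z_sz_s^\top$ and $S_t:=\sum_{s<t} z_s\eta_s$. The normal equations give $\hat\theta^{RLS}_t=\bar V_t^{-1}\sum_{s<t} z_s y_s$. Substituting $y_s=z_s^\top\theta_*+\eta_s$ yields
\[
\hat\theta^{RLS}_t-\theta_*=\bar V_t^{-1}S_t-\lambda\bar V_t^{-1}\theta_*.
\]
For any $x$ we have $\Vert \bar V_t^{-1}x\Vert_{\bar V_t}=\Vert x\Vert_{\bar V_t^{-1}}$. Combined with $\bar V_t\succeq\lambda I_d$, this gives
\[
\Vert\hat\theta^{RLS}_t-\theta_*\Vert_{\bar V_t}\le \Vert S_t\Vert_{\bar V_t^{-1}}+\sqrt\lambda\Vert\theta_*\Vert_2\le \Vert S_t\Vert_{\bar V_t^{-1}}+\sqrt\lambda S .
\]
This accounts for the $\sqrt\lambda S$ term. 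What remains is a bound on $\Vert S_t\Vert_{\bar V_t^{-1}}$ that holds uniformly over $t$.

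\textbf{Step 2 (self-normalized bound, the main obstacle).} For fixed $x\in\R^d$ define
\[
M^x_t:=\exp\!\Big(x^\top S_t-\tfrac{R^2}{2}\,x^\top(\bar V_t-\lambda I_d)x\Big).
\]
The vector $z_s$ is $\mac F_{s-1}$-measurable and $\eta_s\mid\mac F_{s-1}$ is $R$-subgaussian. Conditioning step by step therefore shows $M^x_t$ is a nonnegative supermartingale with $\ex M^x_t\le 1$.

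\begin{itemize}
\item \emph{Method of mixtures.} Integrate $x$ against a $\mac N(0,(R^2\lambda)^{-1}I_d)$ density. This Gaussian choice is what makes the integral explicit. Completing the square gives the mixture
\[
\bar M_t=\Big(\tfrac{\det(\lambda I_d)}{\det\bar V_t}\Big)^{1/2}\exp\!\Big(\tfrac{1}{2R^2}\Vert S_t\Vert^2_{\bar V_t^{-1}}\Big),
\]
which by Fubini is still a supermartingale with $\ex\bar M_t\le1$.
\item \emph{Uniformity in time.} Apply Markov's inequality at the stopping time $\tau=\inf\{t:\bar M_t>1/\delta\}$. Doob's optional stopping argument (or Ville's maximal inequality, via Fatou for $\tau=\infty$) then gives, with probability at least $1-\delta$, simultaneously for all $t\ge 0$,
\[
\Vert S_t\Vert^2_{\bar V_t^{-1}}\le 2R^2\log\!\Big(\tfrac{\det(\bar V_t)^{1/2}\det(\lambda I_d)^{-1/2}}{\delta}\Big).
\]
\end{itemize}
Getting the measurability and uniform-in-time details right is the delicate part. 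In the mixture computation I would be careful that the quadratic term uses $\bar V_t-\lambda I_d$ and not $\bar V_t$.

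\textbf{Step 3 (determinant bound and simplification).} By AM--GM on the eigenvalues and $\trace(\bar V_t)\le d\lambda+t\bar Z^2$,
\[
\det\bar V_t\le(\lambda+t\bar Z^2/d)^d,
\]
so $\log\frac{\det(\bar V_t)^{1/2}}{\det(\lambda I_d)^{1/2}}\le \frac d2\log(1+t\bar Z^2/(d\lambda))$. Hence
\[
\Vert S_t\Vert_{\bar V_t^{-1}}\le R\sqrt{2\log(1/\delta)+d\log(1+t\bar Z^2/\lambda)}.
\]
For $d\ge2$ this is at most $R\sqrt{d\log\big((1+t\bar Z^2/\lambda)/\delta\big)}$, which is the stated radius.

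The case $d=1$ needs separate care. There the right-hand side carries $\log(1/\delta)$ rather than $2\log(1/\delta)$, so the $d\ge2$ simplification does not apply directly. In our application $d=2M+1\ge3$, so the stated form suffices for the paper's use. Combining Steps 1--3 gives $\theta_*\in\mac C_t(\delta)$ for all $t$ with probability at least $1-\delta$.
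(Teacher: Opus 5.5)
Your proof is correct and is the argument behind the cited result. The paper gives no proof of its own, only the citation to Abbasi-Yadkori et al.\ (2011), whose proof is your bias/noise decomposition, the Gaussian method-of-mixtures supermartingale with a stopping-time (Ville) argument, and the AM--GM determinant bound.

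Your $d=1$ caveat is well placed, and in fact no extra care can fix it. Take $d=1$, $z_t\equiv\bar Z$, Gaussian noise, $\theta_*=0$, $S=0$ and $\bar Z^2/\lambda\to 0$. Then the stated radius is violated whenever the mixture martingale, now nearly continuous, exceeds about $\delta^{-1/2}$; Ville's bound is essentially tight here, so the failure probability is close to $\sqrt{\delta}>\delta$. The statement therefore genuinely needs $d\ge 2$, which holds in the paper's use ($d=2M+1$).
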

\proof{Proof of Proposition~\ref{prop:theta_confidence_set}}
We construct a confidence set centered on $\hat \theta_t^{RLS}$ then re-center to $\hat \theta_t$. Let $y_t=x_{t}$, $z_t=[x_{t-1} \ u_{t-1}^\top \ d_{t-1}^\top]^\top$, and $\eta_t=w_{t-1}$, matching $y_t = \langle z_t, \theta^* \rangle + \eta_t$ in Theorem~\ref{thm:gen_confidence_set}. Recall the filtration $\mac F^w_t$, containing $x_t,u_t,d_t$ but not $w_t$. Thus $\eta_t$ is $\mac F^w_{t}$-measurable and is $\mac F^w_{t-1}$ conditionally $\sigma_s^2$-subgaussian by Assumption~\ref{asm:noise_properties}. $\{z_t\}^\infty_{t=1}$ is a $\mac F^w_{t-1}$-adapted $\R^{2M+1}$ valued stochastic process. $\Vert z_t \Vert_2 \leq \sqrt{C_x^2 +2}$ for $t\in \mathbb N$ (Prop.~\ref{prop:absolute_bounded_x}) and $\max_{\theta \in \Theta}\Vert \theta \Vert_2 \leq \sqrt{\bar a^2 + M \bar b^2 + M\bar c^2}$ (Remark~\ref{rem:parameter_space_bounded}). Thus our system satisfies the conditions of Theorem~\ref{thm:gen_confidence_set}. Let $\delta\in(0,1)$. Substituting system-specific quantities implies $\pr(\mac E_\theta^{RLS}(\delta)) \geq 1-\delta$ for $\mac E_{\theta}^{RLS}(\delta):=\{\theta^* \in \mac C_t^{\theta,RLS}(\delta),\; t\geq 0\}$ and $\mac C_t^{\theta,RLS}(\delta) := \big\{ \theta \in \R^{2M+1} : \|\hat{\theta}^{RLS}_t - \theta\|_{\overline{V}_t} \leq \alpha^\theta_t(\delta) \big\}$. By Lemma~\ref{lem:est_bound}, if $\mac E_\theta^{RLS}(\delta)$ holds then $ \Vert \hat \theta_t - \theta^*  \Vert_{\bar V_t} \leq \alpha_t^\theta(\delta)$ for $t\geq 1$. Thus the proposition follows for the confidence set centered at $\hat \theta_t$.
\halmos
\endproof

\begin{lemma}{}\label{lem:est_bound}
Let event $\mac E^{RLS}_\theta(\delta)$ hold. Then $ \Vert \hat \theta_t - \theta^*  \Vert_{\bar V_t} \leq \Vert \hat \theta_t^{RLS} - \theta^* \Vert_{\bar V_t} \leq \alpha_t^\theta(\delta)$ for $t\geq 1$.
\end{lemma}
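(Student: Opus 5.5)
The inequality splits into two parts. The second inequality, $\Vert \hat \theta_t^{RLS} - \theta^* \Vert_{\bar V_t} \leq \alpha_t^\theta(\delta)$, is simply the definition of the event $\mac E^{RLS}_\theta(\delta)$, namely $\theta^*\in\mac C^{\theta,RLS}_t(\delta)$ for all $t$. So all the work is in the first inequality, $\Vert \hat \theta_t - \theta^* \Vert_{\bar V_t} \leq \Vert \hat \theta_t^{RLS} - \theta^* \Vert_{\bar V_t}$. This is a nonexpansiveness property of projection onto a convex set, taken in the norm that defines the projection.

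The setup is as follows. The matrix $\bar V_t = \sum_{s<t} z_sz_s^\top + \lambda_1 I \succ 0$, so $\langle u,v\rangle_{\bar V_t}:=u^\top \bar V_t v$ is an inner product on $\R^{2M+1}$ and $\Vert\cdot\Vert_{\bar V_t}$ is its induced norm. The set $\Theta=\Theta_a\times\Theta_b\times\Theta_c$ is a product of an interval and two $\ell_\infty$-boxes, so it is nonempty, closed and convex. The objective $\Vert \theta-\hat\theta_t^{RLS}\Vert^2_{\bar V_t}$ is strictly convex and coercive. Hence $\hat\theta_t$ exists, is unique, and is the Hilbert-space projection of $\hat\theta^{RLS}_t$ onto $\Theta$ in this inner product.

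The key step is the variational inequality for that projection. From the first-order optimality condition of the quadratic program $\min_{\theta\in\Theta}\tfrac12\Vert\theta-\hat\theta^{RLS}_t\Vert^2_{\bar V_t}$ we get, for every $\theta\in\Theta$,
\begin{align}
\langle \hat\theta^{RLS}_t-\hat\theta_t,\ \theta-\hat\theta_t\rangle_{\bar V_t}\leq 0 .
\end{align}
We apply this with $\theta=\theta^*$, which lies in $\Theta$ by the model assumptions. Writing $\hat\theta^{RLS}_t-\theta^* = (\hat\theta^{RLS}_t-\hat\theta_t)+(\hat\theta_t-\theta^*)$ and expanding the squared norm gives
\begin{align}
\Vert \hat\theta^{RLS}_t-\theta^*\Vert^2_{\bar V_t} = \Vert\hat\theta^{RLS}_t-\hat\theta_t\Vert^2_{\bar V_t}+\Vert\hat\theta_t-\theta^*\Vert^2_{\bar V_t}+2\langle \hat\theta^{RLS}_t-\hat\theta_t,\hat\theta_t-\theta^*\rangle_{\bar V_t}.
\end{align}
The cross term is nonnegative by the variational inequality, since $\hat\theta_t-\theta^* = -(\theta^*-\hat\theta_t)$. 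The first term is also nonnegative. Dropping both yields $\Vert\hat\theta_t-\theta^*\Vert_{\bar V_t}\le\Vert\hat\theta^{RLS}_t-\theta^*\Vert_{\bar V_t}$. This holds for every $t\ge1$, and chaining it with the event bound completes the proof.

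There is no real obstacle here. The main points needing care are stating that $\theta^*\in\Theta$ and that the projection is taken in the same $\bar V_t$-norm in which the confidence set is measured. If the projection were Euclidean instead, the argument would fail.
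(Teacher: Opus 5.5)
Your proof is correct and matches the paper's argument. The paper also derives the inequality $\langle \hat\theta^{RLS}_t-\hat\theta_t,\hat\theta_t-\theta^*\rangle_{\bar V_t}\geq 0$, though it does so by comparing the objective at the convex combination $\hat\theta_t+q(\theta^*-\hat\theta_t)$ and letting $q\to 0^+$ rather than quoting first-order optimality; it then expands $\Vert\hat\theta^{RLS}_t-\theta^*\Vert^2_{\bar V_t}$, drops the two nonnegative terms, and notes that the second inequality is immediate from $\mac E^{RLS}_\theta(\delta)$, exactly as you do.
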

\proof{Proof of Lemma~\ref{lem:est_bound}}
Let $t\geq 1$, define $\langle a,b \rangle_{\bar V_t} := a^\top \bar V_t b$ and note $\Vert a \Vert_{\bar V_t}^2=\langle a,a\rangle_{\bar V_t}$. $\hat \theta_t$ and $\theta^*\in \Theta$ by construction. We consider their convex combination $\tilde \theta := \hat \theta_t + q(\theta^* - \hat \theta_t)$ for $q\in(0,1]$. $\Theta$ is a convex set so $\tilde \theta\in \Theta$. Since $\hat \theta_t=\arg\min_{\theta \in \Theta} \Vert \theta - \hat \theta^{RLS}_t\Vert_{\bar V_t}^2$, we have that $\Vert\tilde \theta -\hat \theta_t^{RLS} \Vert_{\bar V_t}^2 \geq \Vert \hat \theta_t-\hat \theta_t^{RLS} \Vert_{\bar V_t}^2$. By expanding the LHS norm we can simplify this inequality to $q^2\Vert\theta^*- \hat \theta_t \Vert^2_{\bar V_t} + 2q\langle\hat \theta_t-\hat \theta_t^{RLS} , \theta^*- \hat \theta_t\rangle_{\bar V_t} \geq 0$. Dividing by $q$ and taking $\lim_{q\rightarrow0^+}$ we obtain the inequality $\langle\hat \theta_t^{RLS} -\hat \theta_t,\hat \theta_t - \theta^*  \rangle_{\bar V_t} \geq 0$. Next we expand the quantity $\Vert \hat\theta_t^{RLS} - \theta^* \Vert_{\bar V_t}^2 = \Vert ( \hat\theta_t^{RLS} - \hat\theta_t ) + (\hat \theta_t-\theta^* )\Vert_{\bar V_t}^2= \Vert \hat \theta_t^{RLS} - \hat \theta_t \Vert_{\bar V_t}^2 + \Vert \hat \theta_t-\theta^* \Vert_{\bar V_t}^2+2\langle \hat \theta_t^{RLS} - \hat \theta_t, \hat \theta_t-\theta^*\rangle_{\bar V_t}$. The first and third terms are nonnegative, implying $\Vert \hat \theta_t - \theta^* \Vert^2_{\bar V_t} \leq \Vert \hat \theta_t^{RLS} - \theta^* \Vert^2_{\bar V_t}$. We assume $\mac E_{\theta}^{RLS}(\delta)$ holds, so $\Vert \hat \theta_t - \theta^* \Vert_{\bar V_t} \leq \Vert \hat \theta_t^{RLS} - \theta^* \Vert_{\bar V_t}\leq \alpha^{\theta}_t(\delta)$.
\halmos
\endproof
\begin{theorem}{\textbf{(Scalar form of \textcite{fauryImprovedOptimisticAlgorithms2020} Theorem 1)}}\label{thm:faury_tail}
Let $\{\mac F_t\}_{t=1}^\infty$ be a filtration. Let $\{x_t\}_{t=1}^\infty$ be a stochastic process in $[-1,1]$ such that $x_t$ is $\mac F_t$ measurable. Let $\{\epsilon_t\}_{t=1}^\infty$ be a martingale difference sequence such that $\epsilon_{t}$ is $\mac F_{t+1}$ measurable. Assume that conditionally on $\mac F_t$ we have $\vert \epsilon_{t}\vert \leq 1$ almost surely, and note $\sigma^2_t:=\ex[\epsilon_{t}^2\vert \mac F_t ]$. Let $\lambda >0$ and for any $t\geq 1$ define $H_t:= \sum_{s=1}^{t-1}\sigma_s^2x_s^2 + \lambda$ and $S_t:=\sum_{s=1}^{t-1}\epsilon_{s}x_s$. Then for any $\delta\in(0,1]$: $\pr(\exists t \geq 1,\; \frac{\vert S_t\vert}{\sqrt{H_t}}\geq \frac{\sqrt{\lambda}}{2}+\frac{2}{\sqrt{\lambda}}\log (\frac{2\sqrt{H_t}}{\delta \sqrt{\lambda}} )) \leq \delta$.
\end{theorem}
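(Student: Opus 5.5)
This is the one-dimensional instance of \textcite{fauryImprovedOptimisticAlgorithms2020} Theorem~1, so the plan is to specialize their vector statement to $d=1$ and check that the constants collapse to the displayed form. Their theorem takes $\{x_t\}\subset\R^d$ with $\Vert x_t\Vert_2\leq 1$ and $x_t$ $\mac F_t$-measurable. It takes $\{\epsilon_t\}$ to be a martingale difference sequence with $\epsilon_t$ $\mac F_{t+1}$-measurable, $\vert\epsilon_t\vert\leq 1$ and $\sigma_t^2=\ex[\epsilon_t^2\vert\mac F_t]$. With $H_t=\sum_{s<t}\sigma_s^2x_sx_s^\top+\lambda I_d$ and $S_t=\sum_{s<t}\epsilon_s x_s$, it asserts that with probability at least $1-\delta$, for all $t\geq 1$,
\begin{align}
\Vert S_t\Vert_{H_t^{-1}}\leq \tfrac{\sqrt{\lambda}}{2}+\tfrac{2}{\sqrt\lambda}\log\Bigl(\tfrac{\det(H_t)^{1/2}\lambda^{-d/2}}{\delta}\Bigr)+\tfrac{2}{\sqrt\lambda}d\log 2 .
\end{align}
Our hypotheses, $x_t\in[-1,1]$, the stated measurability, and $\vert\epsilon_t\vert\leq 1$, are exactly these with $d=1$.

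For the specialization, set $d=1$. Then $H_t$ is the scalar $\sum_{s<t}\sigma_s^2x_s^2+\lambda$, and $\Vert S_t\Vert_{H_t^{-1}}=\vert S_t\vert/\sqrt{H_t}$. The determinant term becomes $\sqrt{H_t}/\sqrt{\lambda}$. Merging $\log(\sqrt{H_t}/(\delta\sqrt\lambda))+\log 2$ into a single logarithm gives $\log\bigl(2\sqrt{H_t}/(\delta\sqrt\lambda)\bigr)$, which is precisely the threshold in the statement. Taking complements yields the stated probability bound.

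To make the argument self-contained rather than purely cited, I would reproduce the core of their proof in one dimension.
\begin{enumerate}
\item For $\vert\eta\vert\leq 1$, the inequality $e^{y}\leq 1+y+y^2$ for $\vert y\vert\leq 1$ gives $\ex[\exp(\eta\epsilon_t x_t-\eta^2\sigma_t^2x_t^2)\vert\mac F_t]\leq 1$. Hence $M_t(\eta)=\exp(\eta S_t-\eta^2(H_t-\lambda))$ is a nonnegative supermartingale.
\item Integrate $\eta$ against a Gaussian prior $\mac N(0,\nicefrac{2}{\lambda})$ truncated to $[-\nicefrac12,\nicefrac12]$. The truncation is what restricts $\eta$ to $\vert\eta\vert\leq 1$. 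The mixture $\bar M_t$ is again a supermartingale with $\bar M_1\leq 1$.
\item Apply Ville's maximal inequality, $\pr(\exists t:\bar M_t\geq 1/\delta)\leq\delta$, which handles all $t$ simultaneously.
\item Lower bound $\bar M_t$ by completing the square in the Gaussian integral, which yields the $\sqrt{H_t/\lambda}$ factor.
\end{enumerate}

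The main obstacle is step~4. Because the prior is truncated, the completed-square maximizer $\eta=S_t/(2H_t)$ may fall outside $[-\nicefrac12,\nicefrac12]$. One must therefore split into the cases $\vert S_t\vert/H_t$ small or large. The large case is where the additive $\sqrt\lambda/2$ term comes from, and the normalizing constant of the truncation is where the $\log 2$ comes from. Since this bookkeeping is done in Faury et al.\ and nothing in it depends on dimension beyond the $d\log 2$ and determinant terms, I would cite their computation for this step rather than redo it.
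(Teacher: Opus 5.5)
Your proposal is correct and matches the paper: the paper gives no separate proof and simply restates Faury et al.'s Theorem~1 with $d=1$, which is the specialization you carry out, including folding $\frac{2}{\sqrt\lambda}\log 2$ into $\frac{2}{\sqrt\lambda}\log\bigl(\frac{2\sqrt{H_t}}{\delta\sqrt\lambda}\bigr)$. One small slip in your optional self-contained sketch: to recover $H_t$ exactly, the mixing density should be proportional to $e^{-\lambda\eta^2}$, i.e.\ $\mathcal N(0,\nicefrac{1}{2\lambda})$ truncated to $[-\nicefrac12,\nicefrac12]$, so that the exponent becomes $\eta S_t-\eta^2 H_t$; with $\mathcal N(0,\nicefrac{2}{\lambda})$ the stated constants do not come out.
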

\proof{Proof of Proposition~\ref{prop:mu_confidence_set}}
We focus on arbitrary action $i$ and then apply the union bound. First, we adapt Theorem~\ref{thm:faury_tail} into a bound in terms of the regularized score function $g_{i,t}(\mu_i):= \frac{\partial \ell_{i,t}(\mu_i)}{\partial \mu_i}=-\lambda_2 \mu_i +\sum_{s=1}^{t-1}\mathbf{1}_{\{u_s=e_i\}}[d^i_s - \sigma(x_s+\mu_i)]$, where $\ell_{i,t}(\mu_i)$ is the regularized log likelihood for $\mu_i$ on the trajectory $1,\dots,t-1$, as given in \eqref{eq:regloglik}. Then we convert this result into a confidence set for $\mu_i$. We show how our estimation problem maps to Theorem~\ref{thm:faury_tail}. Recall the filtration $\mac F^d_t$ (containing $x_t,u_t$ but not $d_t$). Define a stochastic process $\{q_t\}_{t=1}^\infty$ with $q_t=1\;\forall t$. $q_t\in[-1,1]$ and is $\mac F^d_t$-measurable as it is deterministic. $q_t$ is $x_t$ in Theorem~\ref{thm:faury_tail}'s notation, which we set to $1$ as we are estimating a shift parameter, not a scale parameter. Define $\{\epsilon_t\}_{t=1}^\infty$ where $\epsilon_t:=\mathbf{1}_{\{u_t=e_i\}}[d^i_t-\sigma(x_t+\mu^*_i\cdot q_t)]$. Note that this is a martingale difference sequence that is $\mac F^d_{t+1}$-measurable. $\vert \epsilon_t \vert \leq 1$ by the model assumptions. Note that $\ex[\epsilon_t^2\vert \mac F^d_t]=\mathbf{1}_{\{u_t=e_i\}}[\sigma'(x_t+\mu_i^*)]$. Following \textcite{fauryImprovedOptimisticAlgorithms2020}, let $S_t:= \sum_{s=1}^{t-1}\epsilon_s q_s = \sum_{s=1}^{t-1}\mathbf{1}_{\{u_s=e_i\}}[d^i_s - \sigma(x_s+\mu_i^*)]$ and $H_t :=  \lambda_2 +\sum_{s=1}^{t-1}\ex[\epsilon^2_s\vert \mac F^d_s] = \lambda_2 +\sum_{s=1}^{t-1}\mathbf{1}_{\{u_s=e_i\}}\sigma'(x_s+\mu_i^*)$. These are constructed to connect to our system as $S_t=g_{i,t}(\mu_i^*)+\lambda_2 \mu_i^*$ and $H_t = -\frac{\partial g_{i,t}(\mu_i)}{\partial \mu_i}\vert_{\mu_i=\mu_i^*}$. We first consider the unconstrained MLE $\hat \mu_{i,t}^{MLE}$, for which $g_{i,t}(\hat \mu_{i,t}^{MLE})=0$. We want to form a quantity to bound by Theorem~\ref{thm:faury_tail}. Consider $g_{i,t}(\hat \mu_{i,t}^{MLE}) - g_{i,t}(\mu_i^*) = -S_t + \lambda_2 \mu_i^*$. Using the triangle inequality, dividing by $\sqrt{H_t}$, and applying Theorem~\ref{thm:faury_tail} to $\frac{\vert S_t \vert}{\sqrt{H_t}}$, we have with probability at least $1-\nicefrac{\delta}{M}$:
\begin{align}
    \tfrac{\vert g_{i,t}(\hat \mu_{i,t}^{MLE}) - g_{i,t}(\mu_i^*) \vert }{\sqrt{H_t}} &\leq \tfrac{\vert S_t \vert}{\sqrt{H_t}} +\tfrac{\vert \lambda_2 \mu_i^*\vert}{\sqrt{H_t}}
    \leq \tfrac{\vert S_t \vert}{\sqrt{H_t}} +\tfrac{\lambda_2 \bar \mu}{\sqrt{H_t}}
    \leq \tfrac{\sqrt{\lambda_2}}{2}+\tfrac{2}{\sqrt{\lambda_2}}\log (\tfrac{2M\sqrt{H_t}}{\delta \sqrt{\lambda_2}} ) + \tfrac{\lambda_2 \bar \mu}{\sqrt{H_t}}\label{eq:mle_bound}
\end{align}
Next, $g_{i,t}(\mu_i)$ is a smooth function of $\mu_i$, so the mean value theorem implies there exists $\tilde \mu$ between $\mu_i^*$ and the projected estimator $\hat \mu_{i,t}\in[-\bar \mu, \bar \mu]$ such that: $\frac{\partial g_{i,t}(\mu_i)}{\partial \mu_i}\vert_{\mu_i=\tilde \mu}(\hat \mu_{i,t}-\mu_i^*)= g_{i,t}(\hat \mu_{i,t})-g_{i,t}(\mu_i^*)$. For brevity, let $H_t(\mu) :=\frac{-\partial g_{i,t}(\mu_i)}{\partial \mu_i}= \lambda_2 + \sum_{s=1}^{t-1}\mathbf{1}_{\{u_s=e_i\}}\sigma'(x_s+\mu)$, so taking absolute values and rearranging (note $H_t(\mu)>0$) gives $\vert \hat \mu_{i,t} - \mu_i^*\vert = \frac{\vert g_{i,t}(\hat \mu_{i,t})-g_{i,t}(\mu_i^*)\vert}{\left\vert H_t(\tilde \mu)\right\vert}$. We upper bound the numerator to allow use of the bound in \eqref{eq:mle_bound} and lower bound the denominator in terms of known quantities. For the numerator, observe that $g'_{i,t}(\mu_i) = -\lambda_2 -\sum_{s=1}^{t-1}\mathbf{1}_{\{u_s=e_i\}}\sigma'(x_s+\mu_i)<0$, implying that $g_{i,t}(\mu_i)$ is decreasing in $\mu_i$. If $\hat \mu_{i,t}^{MLE}\in [-\bar \mu,\bar \mu]$ then the projected estimator is trivially $\hat \mu_{i,t}=\hat\mu_{i,t}^{MLE}$. Next if $\hat \mu_{i,t}^{MLE} < -\bar \mu$, then $\hat \mu_{i,t}=-\bar \mu$. Since $g_{i,t}(\mu_i)$ is decreasing, $g_{i,t}(\hat \mu_{i,t}^{MLE})> g_{i,t}(\hat \mu_{i,t}) \geq g_{i,t}(\mu_i^*)$, implying $\vert g_{i,t}(\hat \mu_{i,t})-g_{i,t}(\mu_i^*)\vert \leq \vert g_{i,t}(\hat \mu_{i,t}^{MLE})-g_{i,t}(\mu_i^*)\vert$. By a symmetrical argument the same holds when $\hat \mu_{i,t}^{MLE} > \bar \mu$, meaning the numerator can be upper bounded using \eqref{eq:mle_bound}. While $\tilde \mu$ in the denominator is unknown, since $\hat \mu_{i,t}, \mu_i^* \in [-\bar \mu, \bar \mu]$ it follows that $\vert \tilde \mu - \hat \mu_{i,t}\vert \leq 2\bar \mu$. By the sigmoid's self-concordance property (\textcite{fauryImprovedOptimisticAlgorithms2020} Lemma 9), $\sigma'(x+\hat \mu_{i,t})e^{-2\bar \mu}\leq \sigma'(x+\tilde \mu) \leq \sigma'(x+\hat \mu_{i,t})e^{2\bar \mu} $. The first inequality implies $\vert H_t(\tilde \mu) \vert = H_t(\tilde \mu) \geq \lambda_2 +e^{-2\bar \mu}\sum_{s=1}^{t-1}\mathbf{1}_{\{u_s=e_i\}}\sigma'(x_s+\hat \mu_{i,t})=:\underline {H_t}(\hat \mu_{i,t})$. The second gives $H_t \leq \lambda_2 + e^{2\bar \mu}\sum_{s=1}^{t-1}\mathbf{1}_{\{u_s=e_i\}}\sigma'(x_s+\hat \mu_{i,t})=:\overline {H_t}(\hat \mu_{i,t})$. Thus with probability at least $1-\nicefrac{\delta}{M}$:
\begin{align}
    \vert \hat \mu_{i,t} - \mu_i^*\vert 
    &
    \overset{(a)}{\leq} \tfrac{\sqrt{H_t}}{\vert H_t(\tilde\mu)\vert}\tfrac{\vert g_{i,t}(\hat \mu_{i,t}^{MLE}) - g_{i,t}(\mu_i^*) \vert}{\sqrt{H_t}}
    \overset{(b)}{\leq} \tfrac{\sqrt{\overline {H_t}(\hat \mu_{i,t})}}{\underline {H_t}(\hat \mu_{i,t})}[\tfrac{\sqrt{\lambda_2}}{2}+\tfrac{2}{\sqrt{\lambda_2}}\log (\tfrac{2M\sqrt{\overline {H_t}(\hat \mu_{i,t})}}{\delta \sqrt{\lambda_2}} )]+\tfrac{\lambda_2 \bar \mu}{\underline {H_t}(\hat \mu_{i,t})} \\
    &\overset{(c)}{\leq} \tfrac{e^{3\bar \mu}}{\sqrt{H_t(\hat \mu_{i,t})}}[\tfrac{\sqrt{\lambda_2}}{2}+\tfrac{2\bar\mu}{\sqrt{\lambda_2}}+\tfrac{2}{\sqrt{\lambda_2}}\log (\tfrac{2M\sqrt{H_t(\hat \mu_{i,t})}}{\delta \sqrt{\lambda_2}} )]+\tfrac{e^{2\bar \mu}\lambda_2 \bar \mu}{H_t(\hat \mu_{i,t})}
\end{align}
$(a)$ multiplies by $\nicefrac{\sqrt{H_t}}{\sqrt{H_t}}$ so that $(b)$ can substitute $H_t$ bounds and \eqref{eq:mle_bound}. $(c)$ loosens the bound for clarity, specifically using $\underline {H_t}\geq e^{-2\bar \mu}[\lambda_2 + \sum_{s=1}^{t-1}\mathbf{1}_{\{u_s=e_i\}}\sigma'(x_s+\hat\mu_{i,t})]=e^{-2\bar \mu}H_t(\hat \mu_{i,t})$, $\overline {H_t}\leq e^{2\bar \mu}[\lambda_2 + \sum_{s=1}^{t-1}\mathbf{1}_{\{u_s=e_i\}}\sigma'(x_s+\hat\mu_{i,t})]=e^{2\bar \mu}H_t(\hat \mu_{i,t})$, and pulling $e^{\bar \mu}$ (from substituting the $\overline {H_t}$) out of the log. The result follows by a union bound over the actions.
\halmos
\endproof

\subsection{Value function supporting analysis}\label{app:ec_value_function}
Let Assumptions~\ref{asm:observed_quantities}-\ref{asm:rewards} hold for the following supporting results. Here we provide proofs for properties of the relevant operators (Lemma~\ref{lem:unique_opt}), boundedness of the bonus (Lemma~\ref{lem:bounded_bonus}), $J^*(x)$ and $\tilde J_{t,\delta}(x)$ Lipschitz continuity (Lemmas~\ref{lem:j*_xlip}-\ref{lem:tj_xlip}), and valid optimism (Lemma~\ref{lem:valid_optimism}). 

\proof{Proof of Lemma~\ref{lem:unique_opt}}
$r(x,u,d)$ is bounded by construction, $b_{t,\delta}(x,u)$ is bounded by $C_b(T,\delta)$ (Lemma~\ref{lem:bounded_bonus}) for $t\leq T$, and rewards are discounted (Assumption~\ref{asm:rewards}), so contraction Assumption~1.6.1 in \textcite{bertsekasDynamicProgrammingOptimal2012} is satisfied and repeated applications of the operator converge uniformly to unique fixed points $J^*$ and $\tilde J_{t,\delta}$ of these operators, respectively, by Proposition~1.6.1 \citep{bertsekasDynamicProgrammingOptimal2012}. These operators also satisfy monotonicity Assumption~1.6.2 (by Lemma 1.1.1), implying that they are the optimal value functions by Proposition~1.6.2 \citep{bertsekasDynamicProgrammingOptimal2012}. $J^*(x)$ is clearly bounded by $\nicefrac{(\bar \rho +\beta)}{(1-\gamma})$ by the bounded construction of $r(x,u,d)$ (meaning $J^\pi$ is identically bounded). $\tilde J_{t,\delta}(x)$ is bounded by an identical argument using the boundedness of the bonus.\halmos
\endproof

\begin{lemma}{}\label{lem:bounded_bonus}
$\Vert b_{t,\delta}(x,u) \Vert_\infty \leq \frac{(\bar \rho + \gamma L_1\bar c)\overline \alpha^\mu(\nicefrac{\delta}{2})}{4} + \frac{\gamma L_1\alpha^\theta_T(\nicefrac{\delta}{2})\sqrt{C_x^2+2}}{\sqrt{\lambda_1}} $ for $t\in\{1,\dots,T\}$, $\delta\in(0,1)$.
\end{lemma}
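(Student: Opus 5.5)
The plan is to bound the two summands of the bonus in \eqref{eq:bonus} separately, using the $1$-sparsity of $u$ and crude worst-case bounds on each factor. Both summands are nonnegative, so bounding each from above and adding gives the claimed sup-norm bound.

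\emph{The $\mu$-term.} Since $u\in\mac U$ has at most one nonzero entry, $\sum_{i=1}^M u^i\kappa(x,\hat\mu_{i,t})\alpha^\mu_{i,t}(\nicefrac{\delta}{2})\le \max_i \kappa(x,\hat\mu_{i,t})\alpha^\mu_{i,t}(\nicefrac{\delta}{2})$. By definition $\kappa\le \nicefrac14$, so it suffices to show $\alpha^\mu_{i,t}(\nicefrac{\delta}{2})\le\overline\alpha^\mu(\nicefrac{\delta}{2})$ uniformly in $t\le T$ and $i$. Write $\alpha^\mu_{i,t}$ from Proposition~\ref{prop:mu_confidence_set} as a function of $h:=H_t(\hat\mu_{i,t})$, which satisfies $h\ge\lambda_2$. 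The terms $\nicefrac{e^{3\bar\mu}}{\sqrt h}\bigl[\nicefrac{\sqrt{\lambda_2}}{2}+\nicefrac{2\bar\mu}{\sqrt{\lambda_2}}\bigr]$ and $\nicefrac{e^{2\bar\mu}\lambda_2\bar\mu}{h}$ are maximized at $h=\lambda_2$, giving the constant pieces of $\overline\alpha^\mu$ together with $e^{2\bar\mu}\bar\mu$. The logarithmic piece is
\[
\frac{2e^{3\bar\mu}}{\sqrt{\lambda_2}}\cdot\frac{1}{\sqrt h}\log\Bigl(\frac{4M\sqrt h}{\delta\sqrt{\lambda_2}}\Bigr).
\]
Split the log as $\log(\nicefrac{4M}{\delta\sqrt{\lambda_2}})+\log\sqrt h$. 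The first part is again largest at $h=\lambda_2$. For the second, use $\sup_{h>0}\nicefrac{\log\sqrt h}{\sqrt h}=e^{-1}$, which produces the $\nicefrac{2e^{3\bar\mu-1}}{\sqrt{\lambda_2}}$ term of $\overline\alpha^\mu$.

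This step is the main bookkeeping obstacle. The difficulty is matching exactly how $\overline\alpha^\mu$ groups these terms. If $\lambda_2<1$ makes $h=\lambda_2$ not the maximizer of the first logarithmic piece, I would instead bound $\nicefrac{1}{\sqrt h}\le\nicefrac{1}{\sqrt{\lambda_2}}$ termwise, provided the log argument is at least $1$, which holds since $M\ge1$ and $\delta<1$. Multiplying the resulting bound by $(\bar\rho+\gamma L_1\bar c)$ gives the first summand of the lemma.

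\emph{The $\theta$-term.} $\alpha^\theta_t(\nicefrac{\delta}{2})$ is increasing in $t$, so it is at most $\alpha^\theta_T(\nicefrac{\delta}{2})$ for $t\le T$. For the expectation, note that $\bar V_t\succeq\lambda_1 I_{2M+1}$, so $\bar V_t^{-1}\preceq\lambda_1^{-1}I$ and $\Vert z\Vert_{\bar V_t^{-1}}\le\Vert z\Vert_2/\sqrt{\lambda_1}$. By Proposition~\ref{prop:absolute_bounded_x}, $\Vert z\Vert_2\le\sqrt{C_x^2+2}$ for every $x\in\mac X$ and every $u,d\in\mac U$. This pointwise bound survives the expectation over $d$ under $\hat\mu_t$. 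Combining gives the second summand $\gamma L_1\alpha^\theta_T(\nicefrac{\delta}{2})\sqrt{C_x^2+2}/\sqrt{\lambda_1}$, and adding the two summands completes the bound.
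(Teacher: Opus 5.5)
Your route is the paper's. You reduce the $\mu$-term using $1$-sparsity of $u$ and $\kappa\le\nicefrac14$, bound $\alpha^\mu_{i,t}(\nicefrac{\delta}{2})$ by $\overline\alpha^\mu(\nicefrac{\delta}{2})$ via $H_t(\hat\mu_{i,t})\ge\lambda_2$ and $\nicefrac{\log y}{y}\le e^{-1}$, and bound the $\theta$-term with $\alpha^\theta_t\le\alpha^\theta_T$, $\bar V_t^{-1}\preceq\lambda_1^{-1}I_{2M+1}$ and Proposition~\ref{prop:absolute_bounded_x}.

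One step fails as written: your termwise treatment of the split piece $\frac{2e^{3\bar\mu}}{\sqrt{\lambda_2}\sqrt h}\log\bigl(\frac{4M}{\delta\sqrt{\lambda_2}}\bigr)$. It is maximized at $h=\lambda_2$ only when $\log\bigl(\frac{4M}{\delta\sqrt{\lambda_2}}\bigr)\ge 0$, i.e.\ when $\lambda_2\le 16M^2/\delta^2$. So the dangerous regime is large $\lambda_2$, not $\lambda_2<1$; for $\lambda_2<1$ this log is positive and your argument works. Your fallback's justification also fails in that regime. $M\ge1$ and $\delta<1$ make the unsplit argument $\frac{4M\sqrt h}{\delta\sqrt{\lambda_2}}$ at least $1$, but not the split argument $\frac{4M}{\delta\sqrt{\lambda_2}}$. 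When that log is negative, the piece increases in $h$ toward $0$, so its value at $h=\lambda_2$ is a lower bound, not an upper bound. You also cannot simply drop it, because $\overline\alpha^\mu$ contains the same negative term.

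The fix is to do what the paper does. Keep $\log\bigl(\frac{4M}{\delta\sqrt{\lambda_2}}\bigr)$ inside the bracket with $\frac{\sqrt{\lambda_2}}{2}+\frac{2\bar\mu}{\sqrt{\lambda_2}}$, and apply $\nicefrac{1}{\sqrt h}\le\nicefrac{1}{\sqrt{\lambda_2}}$ to the whole bracket. This needs the bracket to be nonnegative, which the paper uses without comment but which is easy to check. Multiplying the bracket by $\nicefrac{\sqrt{\lambda_2}}{2}$ gives $\frac{\lambda_2}{4}-\frac12\log\lambda_2+\bar\mu+\log\frac{4M}{\delta}$, and $\frac{\lambda_2}{4}-\frac12\log\lambda_2\ge\frac12(1-\log 2)>0$ for every $\lambda_2>0$. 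The $\log\sqrt h$ piece and the $\nicefrac{e^{2\bar\mu}\lambda_2\bar\mu}{h}$ piece are then handled exactly as you propose.
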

\proof{Proof of Lemma~\ref{lem:bounded_bonus}}
The bonus is non-negative, so we show an upper bound. Let $x \in \mac X$, $u\in \mac U$, $\delta \in (0,1)$, $t\in \{1,\dots,T\}$. As $\alpha^\theta_t$ is increasing in $t$, $\alpha^\theta_t(\nicefrac{\delta}{2}) \leq \alpha^\theta_T(\nicefrac{\delta}{2})$. $\bar V_0^{-1} \succeq \bar V_t^{-1}$ by construction. Recall $\alpha^\mu_{i,t}(\nicefrac{\delta}{2}):=\frac{e^{3\bar \mu}}{\sqrt{H_t(\hat \mu_{i,t})}}[\frac{\sqrt{\lambda_2}}{2}+\frac{2}{\sqrt{\lambda_2}}( \bar \mu +\log (\frac{4M\sqrt{H_t(\hat \mu_{i,t})}}{\delta \sqrt{\lambda_2}} ))]+\frac{e^{2\bar \mu}\lambda_2 \bar \mu}{H_t(\hat \mu_{i,t})}$ for $H_t(\hat\mu_{i,t})= \lambda_2 + \sum_{s=1}^{t-1}\mathbf{1}_{\{u_s=e_i\}}\sigma'(x_s+\hat\mu_{i,t})$. Separating the log term we equivalently have $\alpha_{i,t}^\mu(\nicefrac{\delta}{2})=\frac{e^{3\bar \mu}}{\sqrt{H_t(\hat \mu_{i,t})}}[\frac{\sqrt{\lambda_2}}{2}+\frac{2}{\sqrt{\lambda_2}}( \bar \mu +\log (\frac{4M}{\delta \sqrt{\lambda_2}} ))] + \frac{2e^{3\bar \mu}\log(\sqrt{H_t(\hat \mu_{i,t})})}{\sqrt{\lambda_2}\sqrt{H_t(\hat \mu_{i,t})}}+\frac{e^{2\bar \mu}\lambda_2 \bar \mu}{H_t(\hat \mu_{i,t})}$. Using the bounds $H_t(\hat \mu_{i,t}) \geq \lambda_2$ and $\frac{\log x}{x}\leq e^{-1}$ we have $\alpha_{i,t}^\mu(\nicefrac{\delta}{2}) \leq \frac{e^{3\bar \mu}}{\sqrt{\lambda_2}}[\frac{\sqrt{\lambda_2}}{2}+\frac{2}{\sqrt{\lambda_2}}(\bar \mu+ \log (\frac{4M}{\delta \sqrt{\lambda_2}} ))]+\frac{2e^{3\bar \mu -1}}{\sqrt{\lambda_2}} +e^{2\bar \mu}\bar \mu=:\overline \alpha^\mu(\nicefrac{\delta}{2})$.
Thus:
\begin{align}
    b_{t,\delta}(x,u) &:= (\bar \rho + \gamma L_1\bar c)\textstyle\sum\nolimits_{i=1}^Mu^i\kappa(x,\hat \mu_{i,t})\alpha_{i,t}^\mu(\nicefrac{\delta}{2}) + \gamma L_1 \alpha^\theta_t(\nicefrac{\delta}{2})\ex_{\hat \mu_t}[ \Vert z  \Vert_{\bar V_t^{-1}}\vert x,u]\\
    &\leq (\bar \rho + \gamma L_1\bar c)(\nicefrac{1}{4})\overline \alpha^\mu(\nicefrac{\delta}{2}) + \gamma L_1\alpha^\theta_T(\nicefrac{\delta}{2})(\nicefrac{\sqrt{C_x^2+2}}{\sqrt{\lambda_1}})=:C_b(T,\delta)\Halmos
\end{align}
\endproof

\begin{lemma}{}\label{lem:j*_xlip}
$J^*(x)$ is $L_1$-Lipschitz continuous in $x$ on $\mac X$ with $L_1=\frac{\beta+\bar \rho}{4(1-\gamma \bar a)}(1+\frac{2\gamma}{1-\gamma})$.
\end{lemma}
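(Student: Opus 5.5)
The plan is to show that the Bellman operator $\mac T$ maps a suitable class of bounded Lipschitz functions into itself, and then pass to the limit via Lemma~\ref{lem:unique_opt}. Start value iteration from $J_0\equiv 0$ on $\mac X$ and set $J_{n+1}=\mac T J_n$. By Proposition~\ref{prop:absolute_bounded_x}, $f(x,u,d,w)\in\mac X$ whenever $x\in\mac X$, so every iterate is well defined on $\mac X$. By the bounded-reward argument of Lemma~\ref{lem:unique_opt}, every iterate satisfies $\Vert J_n\Vert_\infty\le C_J$. The inductive claim is: if $J_n$ is $L$-Lipschitz on $\mac X$, then $J_{n+1}$ is $\Phi(L)$-Lipschitz, where
\[
\Phi(L):=\tfrac{\beta+\bar\rho}{4}+\gamma\bar a L+\tfrac{\gamma}{2}C_J .
\]
Since $\gamma\bar a<1$, the iteration $L_{n+1}=\Phi(L_n)$ with $L_0=0$ increases to the fixed point $L^*=\frac{(\beta+\bar\rho)/4+\gamma C_J/2}{1-\gamma\bar a}$. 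Substituting $C_J=\frac{\beta+\bar\rho}{1-\gamma}$ gives exactly $L_1=\frac{\beta+\bar\rho}{4(1-\gamma\bar a)}\bigl(1+\frac{2\gamma}{1-\gamma}\bigr)$. Uniform convergence $J_n\to J^*$ (Lemma~\ref{lem:unique_opt}) preserves the Lipschitz bound $L^*$, which finishes the proof.

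For the inductive step, fix $u\in\mac U$. A pointwise maximum over the finite set $\mac U$ of $\Phi(L)$-Lipschitz functions is again $\Phi(L)$-Lipschitz, so it suffices to bound the Lipschitz constant of $Q(x,u):=\ex[r(x,u,d)+\gamma J(f(x,u,d,w))]$ for each fixed $u$.

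\textbf{Reward part.} The expected reward is $-\beta\sigma(\beta_0-x)+\sum_i u^i\rho_i\sigma(x+\mu_i^*)$. Because $\sigma'\le 1/4$ and at most one $u^i$ equals $1$, this term is $\frac{\beta+\bar\rho}{4}$-Lipschitz.

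\textbf{Continuation part, null action.} For $u=\ve 0_M$ the continuation is $\gamma\ex_w J(ax+w)$. Coupling the same $w$ at $x$ and $x'$ gives a constant $\gamma a L\le \gamma\bar a L$.

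\textbf{Continuation part, $u=e_i$.} Write $p(x)=\sigma(x+\mu_i^*)$, $G_1(x)=\ex_w J(ax+b_i+c_i+w)$ and $G_0(x)=\ex_w J(ax+b_i+w)$, so the continuation is $\gamma[p(x)G_1(x)+(1-p(x))G_0(x)]$. Adding and subtracting cross terms gives
\[
p(x)G_1(x)+(1-p(x))G_0(x)-\bigl[p(x')G_1(x')+(1-p(x'))G_0(x')\bigr]
= p(x)\bigl(G_1(x)-G_1(x')\bigr)+(1-p(x))\bigl(G_0(x)-G_0(x')\bigr)+\bigl(p(x)-p(x')\bigr)\bigl(G_1(x')-G_0(x')\bigr).
\]
By the same coupling, the first two terms together contribute at most $\bar a L|x-x'|$, since $p(x)+(1-p(x))=1$. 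The last term is at most $\frac14|x-x'|\cdot 2C_J$.

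The main obstacle is this last adherence-switching term. Bounding it through the Lipschitz constant of $J$ would be wrong, because $c_i$ is a fixed jump rather than something proportional to $|x-x'|$, and doing so would introduce a spurious $\bar c$ dependence. The plan is instead to use only the sup-norm bound $|G_1-G_0|\le 2C_J$. This is exactly what yields the additive $\gamma C_J/2$ in $\Phi$, and hence the factor $\frac{2\gamma}{1-\gamma}$ in $L_1$.
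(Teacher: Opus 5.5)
Your proposal is correct and follows essentially the same route as the paper: value iteration from $J_0\equiv 0$, the one-step recursion $L\mapsto \frac{\beta+\bar\rho}{4}+\gamma\bar a L+\frac{\gamma}{2}C_J$ obtained from the reward bound and an add-and-subtract adherence decomposition whose switching term is controlled by the sup-norm rather than the Lipschitz constant, its fixed point $L_1$, and passage to the limit via uniform convergence (Lemma~\ref{lem:unique_opt}). Your explicit check that every iterate satisfies $\Vert J_n\Vert_\infty\le C_J$, which the sup-norm step requires, is a small detail the paper's proof uses only implicitly.
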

\proof{Proof of Lemma~\ref{lem:j*_xlip}}
By definition, $\mac T J(x) = \max_{u \in \mac U}  \{\ex_{\theta^*,\mu^*}[r(x,u,d) +\gamma J(f(x,u,d,w))] \}$. By Lemma~\ref{lem:reward_lip}, $\ex_{\theta^*,\mu^*}[r(x,u,d)]$ is uniformly Lipschitz continuous in $x$ with constant $L_r =\nicefrac{(\beta+\bar \rho)}{4}$. By Lemma~\ref{lem:valuetogo_lip_loose}, if a function $J(x)$ is $L$-Lipschitz continuous in $x$ and bounded by $\bar J$ then $\gamma \ex_{\theta^*,\mu^*}[J(f(x,u,d,w)]$ is uniformly Lipschitz continuous in $x$ with constant $L_v=\gamma (L\bar a + \nicefrac{\bar J}{2})$. By Lemma~\ref{lem:unique_opt}, $\bar J = \frac{\beta +\bar \rho}{1-\gamma}$. This implies that $\mac T J(x)$ is $L'$-Lipschitz with $L' =  \nicefrac{(\beta+\bar \rho)}{4}+\gamma L\bar a +\gamma\frac{\beta +\bar \rho}{2(1-\gamma)}$. Solving for the smallest $L'$ such that $L'\geq L$ gives $L_1=\frac{\frac{\beta+\bar \rho}{4}(1+\frac{2\gamma}{1-\gamma})}{1-\gamma\bar a}$. Note that this denominator is strictly positive as $\gamma \in (0,1)$ and $\bar a <1$ by assumption. Let $J_0(x)$ be the zero function, which is trivially $L_1$-Lipschitz. By induction on $n$, we observe that $J_n = \mac T^n J_0$ is also $L_1$-Lipschitz for all $n\in \mathbb N$. By Lemma~\ref{lem:unique_opt}, $J_n$ converges uniformly to $J^*$. Thus for every $\epsilon>0$ there exists $N \in \mathbb N$ such that $|J^*(x)-J_n(x)| < \nicefrac{\epsilon}{2}$ for all $x\in \mac X$ and $n\geq N$. For $x_1,x_2\in \mac X$: $\left \vert J^*(x_1)-J^*(x_2)\right \vert \leq \left \vert J^*(x_1)-J_N(x_1) \right \vert +\left \vert J_N(x_1)-J_N(x_2) \right \vert+\left \vert J_N(x_2)-J^*(x_2) \right \vert\leq \epsilon +L_1\left \vert x_1-x_2\right \vert$. This holds for any $\epsilon >0$, implying $\left \vert J^*(x_1)-J^*(x_2)\right \vert \leq  L_1\left \vert x_1-x_2\right \vert$ $\Leftrightarrow$ $J^*$ is $L_1$-Lipschitz.\halmos
\endproof

\begin{lemma}{}\label{lem:tj_xlip}
$\tilde J_{t,\delta}$ is $L_2$-Lipschitz continuous in $x$ on $\mac X$ for $t\in\{1,\dots,T\}$ with $L_2 = \frac{1}{1-\gamma\bar a}\big[\nicefrac{(\beta+\bar\rho)}{4}+\nicefrac{e^{2\bar \mu}}{4}(\bar \rho + \gamma L_1\bar c)\overline \alpha^\mu(\nicefrac{\delta}{2})+\frac{5\gamma L_1\alpha^\theta_T(\nicefrac{\delta}{2})}{4\sqrt{\lambda_1}}+\nicefrac{\gamma}{2}C_{\tilde J,T,\delta}\big]$.
\end{lemma}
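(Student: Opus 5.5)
I would mirror the proof of Lemma~\ref{lem:j*_xlip}: show that $\tilde{\mac T}_{t,\delta}$ maps bounded, $L$-Lipschitz functions to Lipschitz functions with constant $L' = L_{\text{stage}} + \gamma \bar a L + \tfrac{\gamma}{2} C_{\tilde J,T,\delta}$. Then I would solve the fixed-point inequality $L'\leq L$, iterate from the zero function, and pass to the uniform limit using Lemma~\ref{lem:unique_opt}. The only new ingredients relative to Lemma~\ref{lem:j*_xlip} are a Lipschitz bound on the bonus $b_{t,\delta}(x,u)$ and the fact that the surrogate uses $\hat\theta_t,\hat\mu_t$. Since $\hat\theta_t\in\Theta$ and $\hat\mu_t\in[-\bar\mu,\bar\mu]^M$ by projection, the constants $\bar a$, $\bar c$ and the bound on $\sigma'$ still apply, and the analogues of Lemmas~\ref{lem:reward_lip} and~\ref{lem:valuetogo_lip_loose} hold verbatim under $(\hat\theta_t,\hat\mu_t)$. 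Here $\bar J=C_{\tilde J,T,\delta}$ is the bound from Lemma~\ref{lem:unique_opt}, valid since $t\leq T$.

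\textbf{Stage-term constant.} The expected reward contributes $(\beta+\bar\rho)/4$ as in Lemma~\ref{lem:reward_lip}. For the first bonus term, I would use that $\kappa(x,\hat\mu_{i,t})=\min(1/4,e^{2\bar\mu}\sigma'(x+\hat\mu_{i,t}))$ is a minimum of a constant and a function whose $x$-derivative is bounded by $e^{2\bar\mu}\sup|\sigma''|\leq e^{2\bar\mu}/4$ (a generous bound, since $|\sigma''|\leq 1/(6\sqrt3)$). Hence $\kappa$ is $e^{2\bar\mu}/4$-Lipschitz. Because $u$ is $1$-sparse, $\alpha^\mu_{i,t}\leq\overline\alpha^\mu(\nicefrac{\delta}{2})$ (Lemma~\ref{lem:bounded_bonus}), and the $\alpha$'s do not depend on $x$, the first bonus term is $\tfrac{e^{2\bar\mu}}{4}(\bar\rho+\gamma L_1\bar c)\overline\alpha^\mu$-Lipschitz.

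\textbf{The $\theta$ bonus term.} This is the step I expect to be the main obstacle. Write
\[
\ex_{\hat\mu_t}\bigl[\Vert z\Vert_{\bar V_t^{-1}}\mid x,u\bigr]=\hat p_i(x)\,\Vert z^{(1)}(x)\Vert_{\bar V_t^{-1}}+(1-\hat p_i(x))\,\Vert z^{(0)}(x)\Vert_{\bar V_t^{-1}},
\]
where $z^{(d)}(x)=[x\ u^\top\ d^\top]^\top$ for the two adherence outcomes. Each norm is Lipschitz in $x$ with constant $\Vert e_1\Vert_{\bar V_t^{-1}}\leq 1/\sqrt{\lambda_1}$, by the triangle inequality and $\bar V_t\succeq\lambda_1 I$. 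The mixing weight $\hat p_i(x)$ is $1/4$-Lipschitz. The difference of the two norms is at most $\Vert e_{M+1+i}\Vert_{\bar V_t^{-1}}\leq 1/\sqrt{\lambda_1}$. By the product rule, the total constant is $\tfrac{1}{\sqrt{\lambda_1}}+\tfrac14\cdot\tfrac{1}{\sqrt{\lambda_1}}=\tfrac{5}{4\sqrt{\lambda_1}}$, which matches the stated constant. Multiplying by $\gamma L_1\alpha^\theta_t(\nicefrac{\delta}{2})\leq\gamma L_1\alpha^\theta_T(\nicefrac{\delta}{2})$ gives the third term of $L_2$. The null action is handled the same way, with no mixing.

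\textbf{Value-to-go and fixed point.} For the value-to-go, the argument of Lemma~\ref{lem:valuetogo_lip_loose} (shift in $x$ moves $f$ by $\hat a_t\leq\bar a$, plus a change in the adherence probability times the range of $J$) gives $\gamma(\bar a L+\tfrac12 C_{\tilde J,T,\delta})$. A maximum over finitely many $u$ preserves the Lipschitz constant. Solving $L=L_{\text{stage}}+\gamma\bar aL+\tfrac{\gamma}{2}C_{\tilde J,T,\delta}$ gives $L_2$ as stated. Finally, I would take iterates $\tilde{\mac T}_{t,\delta}^nJ_0$ from $J_0\equiv0$; these are bounded and $L_2$-Lipschitz, and the $\epsilon$-argument of Lemma~\ref{lem:j*_xlip} transfers the Lipschitz property to $\tilde J_{t,\delta}$. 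One subtlety needs care: the iterates must stay bounded by $C_{\tilde J,T,\delta}$. This holds by induction from $J_0=0$ using the bonus bound $C_b(T,\delta)$.
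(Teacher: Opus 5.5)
Your proposal is correct and follows essentially the same route as the paper. Like the paper, you bound the one-step Lipschitz constant of $\tilde{\mac T}_{t,\delta}$ using three pieces: the reward constant $(\beta+\bar\rho)/4$; the bonus constant, where your inline derivation of the $e^{2\bar\mu}/4$ and $5/(4\sqrt{\lambda_1})$ factors is exactly the paper's Lemma~\ref{lem:bonus_lip}; and the value-to-go constant $\gamma(\bar a L+\tfrac12 C_{\tilde J,T,\delta})$. You then solve the fixed-point inequality and pass to the uniform limit of the iterates from $J_0\equiv 0$, also as the paper does. Your explicit induction showing that the iterates stay bounded by $C_{\tilde J,T,\delta}$ is slightly more careful than the paper, which only cites the bound on $\tilde J_{t,\delta}$ itself.
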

\proof{Proof of Lemma~\ref{lem:tj_xlip}}
We follow a similar approach to Lemma~\ref{lem:j*_xlip}. Let $t\in\{1,\dots,T\}$, $\hat \theta_t\in \Theta$ and $\hat \mu_t \in \Phi$. Recall $\tilde {\mac T}_{t,\delta} J(x):=\max_{u\in \mac U}\{ \ex_{\hat \theta_t,\hat \mu_t}[r(x,u,d)+b_{t,\delta}(x,u) + \gamma J(f(x,u,d,w))]\}$. By Lemmas~\ref{lem:reward_lip} and \ref{lem:bonus_lip}, $\ex_{\hat\theta_t,\hat \mu_t}[r(x,u,d)]$ and $\ex_{\hat\theta_t,\hat \mu_t}[b_{t,\delta}(x,u)]=b_{t,\delta}(x,u)$ are uniformly $L_r$- and $L_b$-Lipschitz continuous in $x$. By Lemma~\ref{lem:valuetogo_lip_loose}, if $J$ is $L$-Lipschitz continuous in $x$ and bounded by $\bar J$, then $\gamma\ex_{\hat \theta_t,\hat \mu_t}[J(f(x,u,d,w)]$ is uniformly $L_v$-Lipschitz with $L_v = \gamma L\bar a + \nicefrac{\gamma\bar J}{2}$ (here $\| \tilde J_{t,\delta}\|_\infty \leq C_{\tilde J,T,\delta}$ by Lemma~\ref{lem:unique_opt}). The maximum over actions is ($L_r+L_b+L_v)$-Lipschitz. Thus if $J$ is $L$-Lipschitz, $\tilde {\mac T}_{t,\delta}J$ is $L'$-Lipschitz with $L' = L_r+L_b+\gamma L\bar a + \tfrac{\gamma}{2} C_{\tilde J, T,\delta}$. Finding the smallest $L'\geq L$ gives $L_2 = \frac{1}{1-\gamma\bar a}[L_r+L_b+\frac{\gamma}{2}C_{\tilde J,T,\delta}]$. Let $J_0(x)$ be the zero function, which is trivially $L_2$-Lipschitz and bounded by $C_{\tilde J,T,\delta}$. Thus, by an identical inductive argument to that presented in Lemma~\ref{lem:j*_xlip} it follows that $\tilde J_{t,\delta}$ is $L_2$-Lipschitz. The result follows by substituting $L_r$ and $L_b$.\halmos
\endproof

\proof{Proof of Lemma~\ref{lem:valid_optimism}}
Let $t\in \{1,\dots,T+1\}$, $\delta \in (0,1)$, $x \in \mac X$, $\mac E_\theta(\nicefrac{\delta}{2})$, and $\mac E_\mu(\nicefrac{\delta}{2})$ hold. By Lemma~\ref{lem:unique_opt}, $\tilde {\mac T}_{t,\delta}$ is monotone and converges uniformly to its fixed point, so showing that $\tilde{\mac T}_{t,\delta}J^*(x) \geq J^*(x)$ implies $\tilde J_{t,\delta}(x)\geq J^*(x)$. Recall $J^*(x) = \max_{u \in \mac U}\{\ex_{\theta^*,\mu^*}[r(x,u,d)+\gamma J^*(f(x,u,d,w))]\}$. Let $u^*_x$ be the optimal action to the true system at $x$. $u^*_x$ is feasible for the surrogate system, so $\tilde{\mac T}_{t,\delta}J^*(x) \geq \ex_{\hat \theta_t, \hat \mu_t}[r(x,u^*_x,d)+b_{t,\delta}(x,u^*_x)+\gamma J^*(f(x,u^*_x,d,w))]$. Let $\Delta_J(x) := \tilde{\mac T}_{t,\delta}J^*(x)-J^*(x)$. We have:
\begin{multline}
    \Delta_J(x) \geq \ex_{\hat \mu_t}[b_{t,\delta}(x,u^*_x) ]+\ex_{\hat \mu_t}[r(x,u^*_x,d) ] - \ex_{\mu^*}[r(x, u^*_x,d) ]\\
    + \gamma  [\ex_{\hat \mu_t} [J^*(z^\top \hat \theta_t+w)\vert x,u^*_x] -\ex_{\mu^*} [J^*(z^\top \theta^*+w)\vert x,u^*_x]]
\end{multline}
We first assume that $u^*_x=e_i$ for some $i=1,\dots,M$ and address the $u^*_x=\ve{0}_M$ case at the end. In the reward difference, the $\beta$-sigmoid terms cancel leaving the adherence terms: $\ex_{\hat \mu_t}[r(x,u^*_x,d) ] - \ex_{\mu^*}[r(x, u^*_x,d) ] = \sum_{i=1}^M\mathbf{1}_{\{u^*_x=e_i\}}\rho_i(\sigma(x+\hat \mu_{i,t})-\sigma(x+\mu^*_i))\geq -\bar \rho |\sigma(x+\hat \mu_{i,t})-\sigma(x+\mu^*_i)|$. We bound the RHS with the sigmoid's $\nicefrac{1}{4}$-Lipschitz continuity and a MVT/self-concordance argument. By the MVT there exists $\tilde \mu$ between $\hat \mu_{i,t}$ and $\mu_i^*$ such that $\sigma'(x+\tilde \mu)(\hat \mu_{i,t} - \mu_i^*)= \sigma(x+\hat\mu_{i,t})-\sigma(x+\mu_i^*)$ and thus $|\sigma'(x+\tilde \mu)||(\hat \mu_{i,t} - \mu_i^*)|= |\sigma(x+\hat\mu_{i,t})-\sigma(x+\mu_i^*)|$. As $\hat \mu_{i,t}, \mu_i^* \in [-\bar \mu, \bar \mu]$, we have $\vert \tilde \mu - \hat \mu_{i,t}\vert \leq 2\bar \mu$ and by the sigmoid's self-concordance (\textcite{fauryImprovedOptimisticAlgorithms2020} Lemma 9):  $\sigma'(x+\hat \mu_{i,t})e^{-2\bar \mu} \leq \sigma'(x+\tilde \mu) \leq \sigma'(x+\hat \mu_{i,t})e^{2\bar \mu}$. Thus, $-\bar \rho |\sigma(x+\hat \mu_{i,t})-\sigma(x+\mu^*_i)| \geq -\bar \rho \vert \hat \mu_{i,t} - \mu_i^*\vert \min(\nicefrac{1}{4},e^{2\bar \mu}\sigma'(x+\hat \mu_{i,t}))$, where the arguments of the minimum come from the Lipschitz continuity and self-concordance approaches. For brevity, let $\kappa(x,\hat \mu_{i,t}) = \min(\nicefrac{1}{4}, e^{2\bar \mu} \sigma'(x+\hat \mu_{i,t}))$. Adding and subtracting the cross term $\gamma \ex_{\hat \mu_t}\left[J^*(z^\top\theta^*+w) \right]$, we have:
\begin{align}
   \Delta_J(x) &\geq \ex_{\hat \mu_t}[b_{t,\delta}(x,u^*_x) ] -\bar \rho\kappa(x,\hat\mu_{i,t})\vert \hat \mu_{i,t} - \mu^*_i\vert + \gamma \ex_{\hat \mu_t}[J^*(z^\top\hat \theta_t+w)-J^*(z^\top \theta^* + w)\vert x,u^*_x ]\nonumber\\
    &\qquad + \gamma \ex_{\hat \mu_t}[J^*(z^\top\theta^*+w)\vert x,u^*_x]-\gamma\ex_{\mu^*}[J^*(z^\top\theta^*+w)\vert x,u^*_x]\\
    &\geq \ex_{\hat \mu_t}[b_{t,\delta}(x,u^*_x) ] -\bar \rho\kappa(x,\hat\mu_{i,t})\vert \hat \mu_{i,t} - \mu^*_i\vert - \gamma \ex_{\hat \mu_t}[\vert J^*(z^\top\hat \theta_t+w)-J^*(z^\top \theta^* + w)\vert \big\vert x,u^*_x ]\nonumber\\
    &\qquad + \gamma (\sigma(x+\hat\mu_{i,t})-\sigma(x+\mu_i^*))\ex[
    J^*(z_1^\top\theta^* +w)-J^*(z_0^\top \theta^*+w)]
\end{align}
where $z_1 = [x \ e_i^\top \ e_i^\top]^\top$ and $z_0 = [x \ e_i^\top \ \ve{0}_M^\top]^\top$ denote state vectors for the adherence outcomes. We further simplify using the $L_1$-Lipschitz continuity of $J^*$ (Lemma~\ref{lem:j*_xlip}), expanding the Bernoulli outcomes, the confidence set bounds for $\hat \theta$ and $\hat \mu$ (Propositions~\ref{prop:theta_confidence_set} and~\ref{prop:mu_confidence_set}) under events $\mac E_\theta(\nicefrac{\delta}{2})$ and $\mac E_\mu(\nicefrac{\delta}{2})$, and Jensen's inequality:
\begin{align}
    \Delta_J(x) &\geq \ex_{\hat \mu_t}[b_{t,\delta}(x,u^*_x) ] -\bar \rho\kappa(x,\hat\mu_{i,t})\vert\hat \mu_{i,t}-\mu^*_i \vert- \gamma L_1\| \hat \theta_t - \theta^*\|_{\bar V_t}\ex_{\hat \mu_t}[\| z \|_{\bar V_t^{-1}}\vert x,u^*_x]\nonumber\\
    &\qquad - \gamma \vert \sigma(x+\hat\mu_{i,t})-\sigma(x+\mu_i^*) \vert \ex [L_1\vert(z_1 - z_0)^\top \theta^*\vert \vert x,u^*_x ]\\
    &\geq b_{t,\delta}(x,u^*_x) -\bar\rho\kappa(x,\hat\mu_{i,t})\alpha_{i,t}^\mu(\nicefrac{\delta}{2})- \gamma L_1 (\alpha_t^\theta(\nicefrac{\delta}{2})\ex_{\hat \mu_t}[\Vert z \|_{\bar V_t^{-1}}\vert x,u^*_x] + \bar c\kappa(x,\hat\mu_{i,t})\alpha_{i,t}^\mu(\nicefrac{\delta}{2}))
\end{align}
The bonus is constructed such that the RHS exactly cancels to $0$. If $u^*_x=\ve{0}_M$, the expression simplifies significantly (rewards and terms from the different $\mu$ parameterizations cancel) to $\Delta_J(x)\geq b_{t,\delta}(x,u^*_x)-\gamma L_1\alpha_t^\theta(\nicefrac{\delta}{2})\ex_{\hat \mu_t}[\Vert z \|_{\bar V_t^{-1}}\vert x,u^*_x]\geq 0$. Thus we have that $\tilde{\mac T}_{t,\delta}J^*(x) \geq J^*(x)$ for arbitrary $x\in \mac X$ and time step $t$. $\tilde{\mac T}_{t,\delta}$ is monotone (Lemma~\ref{lem:unique_opt}), so we have by induction on $n$ that $\tilde{\mac T}_{t,\delta}^nJ^*(x) \geq J^*(x)$ for all $n \in \mathbb N$. Suppose for a contradiction that $\exists x'\in \mac X$ with $\tilde J_{t,\delta}(x')<J^*(x')$. Let $\epsilon := J^*(x')-\tilde J_{t,\delta}(x')>0$. Per Lemma~\ref{lem:unique_opt}, $\tilde{\mac T}_{t,\delta}J$ converges uniformly to $\tilde J_{t,\delta}$. Thus $\exists N\in\mathbb N$ such that $ \vert \tilde{\mac T}_{t,\delta}^nJ^*(x')-\tilde J_{t,\delta}(x')  \vert < \epsilon$ for $n\geq N$. Thus we have $\tilde{\mac T}^N_{t,\delta}J^*(x') < \tilde J_{t,\delta}(x')+ \epsilon= \tilde J_{t,\delta}(x')+ J^*(x')-\tilde J_{t,\delta}(x')= J^*(x')$, producing a contradiction and the lemma follows. \halmos
\endproof

\subsection{Lipschitz constants}\label{sec:J*_lip_supporting}
For the supporting lemmas described here, we assume that Assumptions~\ref{asm:observed_quantities}-\ref{asm:noise_properties} hold.
\begin{lemma}{\textbf{(Reward Lipschitz)}}\label{lem:reward_lip}
For $\theta \in  \Theta$, $\mu \in \Phi$, $\ex_{\theta,\mu}[r(x,u,d)]$ is uniformly $L_r$-Lipschitz continuous in $x$ over $\mac X$, with $L_r = \nicefrac{(\beta+\bar\rho)}{4}$.
\end{lemma}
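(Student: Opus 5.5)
The plan is to write the expected reward in closed form and bound the derivative of each piece. Under the model in \eqref{eq:dynamics_adh}, with parameters $\theta\in\Theta$ and $\mu\in\Phi$, the adherence $d^i$ is Bernoulli with mean $u^i\sigma(x+\mu_i)$. The stage reward is linear in $d$, so for fixed $u\in\mac U$ the expectation is
\begin{align}
\ex_{\theta,\mu}[r(x,u,d)] = -\beta\sigma(\beta_0-x) + \textstyle\sum\nolimits_{i=1}^M \rho_i u^i \sigma(x+\mu_i).
\end{align}
This quantity does not depend on $\theta$, because the reward is evaluated at the current $x$ and does not involve $x_{t+1}$. That explains why the bound is uniform over $\Theta$.

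Next I will show the Lipschitz property for each fixed $u$ and then take the supremum over the finite set $\mac U$.
\begin{itemize}
\item \emph{Penalty term.} We have $\frac{d}{dx}[-\beta\sigma(\beta_0-x)] = \beta\sigma'(\beta_0-x)$. Since $\sigma'\le \nicefrac{1}{4}$, this term is $\nicefrac{\beta}{4}$-Lipschitz.
\item \emph{Adherence term.} The vector $u$ is $1$-sparse, so at most one summand is nonzero. That summand is $\rho_i\sigma(x+\mu_i)$, whose derivative is at most $\bar\rho/4$ in absolute value, since $\rho_i\in[0,\bar\rho]$. This holds regardless of $\mu_i\in[-\bar\mu,\bar\mu]$. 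If $u=\ve{0}_M$, the term vanishes.
\end{itemize}

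Adding the two bounds, the mean value theorem on the interval $\mac X$ gives $|\ex_{\theta,\mu}[r(x_1,u,d)]-\ex_{\theta,\mu}[r(x_2,u,d)]|\le \frac{\beta+\bar\rho}{4}|x_1-x_2|$ for every $u$, $\theta$ and $\mu$. This is exactly the uniform $L_r$ claimed.

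There is no real obstacle here. The only point needing care is to use the $1$-sparsity of $u$ so that the constant is $\bar\rho$ rather than $M\bar\rho$. One should also state explicitly that the expectation is taken only over $d$ given $(x,u)$, with $u$ held fixed, so that there is no dependence on $\theta$.
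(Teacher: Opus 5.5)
Your proposal is correct and follows essentially the same argument as the paper. The paper writes the difference $-\beta\bigl(\sigma(\beta_0-x_1)-\sigma(\beta_0-x_2)\bigr)+\sum_{i=1}^M u^i\rho_i\bigl(\sigma(x_1+\mu_i)-\sigma(x_2+\mu_i)\bigr)$ directly and bounds it using the triangle inequality and the sigmoid's $\nicefrac{1}{4}$-Lipschitz continuity; your derivative bound combined with the mean value theorem, together with the explicit use of $1$-sparsity, amounts to the same thing.
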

\proof{Proof of Lemma~\ref{lem:reward_lip}}
Let $x_1,x_2 \in \mac X$, $u\in \mac U$, $\theta \in \Theta$, $\mu \in \Phi$, $\Delta(u,x_1,x_2):=\ex_{\theta,\mu}[r(x_1,u,d)-r(x_2,u,d)]$. Then $\Delta(u,x_1,x_2)= -\beta\big(\sigma(\beta_0-x_1)-\sigma(\beta_0-x_2)\big) + \sum_{i=1}^Mu^i\rho_i\big(\sigma(x_1+\mu_i)-\sigma(x_2+\mu_i)\big)$. Applying the triangle inequality and the sigmoid's $\nicefrac{1}{4}$-Lipschitz continuity gives $\Delta(u,x_1,x_2) \leq \frac{\beta+\bar\rho}{4}|x_1-x_2|$. The Lipschitz result follows by a symmetrical argument (swap $x_1$ and $x_2$).
\halmos
\endproof

\begin{lemma}{\textbf{(Value-to-go Lipschitz)}}\label{lem:valuetogo_lip_loose}
For any $\theta \in \Theta$ and $\mu \in \Phi$, if a function $J$ is $L$-Lipschitz and $\| J \|_{\infty} \leq \bar J$, then $\ex_{\theta,\mu}\left[J(f(x,u,d,w)) \right]$ is $L_v$-Lipschitz with $L_v = L\bar a+\nicefrac{\bar J}{2}$.
\end{lemma}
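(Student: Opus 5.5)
Fix $x_1,x_2\in\mac X$, $u\in\mac U$, $\theta\in\Theta$ and $\mu\in\Phi$, and write $f(x,u,d,w)=ax+b^\top u+c^\top d+w$. The expectation $\ex_{\theta,\mu}[J(f(x,u,d,w))]$ averages over two sources of randomness: the adherence outcome $d$, whose law depends on $x$, and the noise $w$, which is independent of $x$. Moving $x$ affects the expectation in two ways. It shifts the argument of $J$, and it changes the Bernoulli mixing weights. The plan is to separate these two effects by adding and subtracting a cross term, and to bound each one.

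\textbf{Null action.} If $u=\ve{0}_M$, then $d=\ve{0}_M$ almost surely, so the expectation equals $\ex_w[J(ax+w)]$. The Lipschitz property of $J$ and $a\le\bar a$ give
\[
\bigl|\ex_w[J(ax_1+w)-J(ax_2+w)]\bigr|\le L\bar a|x_1-x_2|,
\]
which is within $L_v$.

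\textbf{Treatment action.} If $u=e_i$, let $p(x):=\sigma(x+\mu_i)$ and define
\[
g_1(x):=\ex_w[J(ax+b_i+c_i+w)],\qquad g_0(x):=\ex_w[J(ax+b_i+w)].
\]
Each $g_k$ is $L\bar a$-Lipschitz, by the same argument as in the null case, and each satisfies $|g_k|\le\bar J$. The quantity of interest is $h(x):=p(x)g_1(x)+(1-p(x))g_0(x)$. I would write
\begin{multline*}
h(x_1)-h(x_2)=p(x_1)\bigl[g_1(x_1)-g_1(x_2)\bigr]+(1-p(x_1))\bigl[g_0(x_1)-g_0(x_2)\bigr]\\
+\bigl(p(x_1)-p(x_2)\bigr)\bigl[g_1(x_2)-g_0(x_2)\bigr].
\end{multline*}
The first two terms are a convex combination of differences, each of size at most $L\bar a|x_1-x_2|$, so together they are at most $L\bar a|x_1-x_2|$. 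In the third term, the sigmoid is $\nicefrac14$-Lipschitz, so $|p(x_1)-p(x_2)|\le\tfrac14|x_1-x_2|$. Also $|g_1-g_0|\le 2\bar J$ because $\|J\|_\infty\le\bar J$. The third term is therefore at most $\tfrac{\bar J}{2}|x_1-x_2|$.

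Adding the two bounds gives $L_v=L\bar a+\nicefrac{\bar J}{2}$, uniformly in $u$, $\theta$ and $\mu$.

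\textbf{Main obstacle.} The only delicate step is the mixing-weight term: its constant has to come out as $\bar J/2$ rather than $\bar J$. This requires bounding $|g_1-g_0|$ by $2\bar J$ and using the $\nicefrac14$ Lipschitz constant of the sigmoid. One should not use the tighter $L|c_i|$ bound here, since that would introduce a dependence on $\bar c$ that the stated constant does not contain.
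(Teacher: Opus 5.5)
Your proof is correct and takes essentially the same approach as the paper. The paper also expands over the adherence outcome and adds and subtracts a cross term. This splits the difference into a convex combination of $J$-differences, bounded by $L\bar a$, plus a mixing-weight term bounded via the $\nicefrac{1}{4}$-Lipschitz sigmoid and $2\bar J$, giving $\nicefrac{\bar J}{2}$. The differences are cosmetic: the paper anchors the cross term at $\hat p_i(x_2,1)[J_1(x_1)-J_0(x_1)]$ where you use weights at $x_1$, and it only asserts the null-action case, which you check explicitly.
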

\proof{Proof of Lemma~\ref{lem:valuetogo_lip_loose}}
Let $x_1,x_2\in \mac X$, $u=e_i\in \mac U\setminus \ve{0}_M$, $\theta \in \Theta$, and $\mu \in \Phi$. Let $\Delta(u,x_1,x_2):=\ex_{\theta,\mu}\left[ J(f(x_1,e_i,d,w))\right]-\ex_{\theta,\mu}\left[ J(f(x_2,e_i,d,w))\right]$. Let $J_1(x) := J(ax+b_i+c_i+w)$, and $J_0(x) := J(ax+b_i+w)$. Expanding the expectation with respect to adherence and simplifying, we have:
\begin{equation}
\label{eq:delta_dff_ec}
    \Delta(u,x_1,x_2) = \ex[J_0(x_1) + \hat p_i(x_1,1)( J_1(x_1)-J_0(x_1)) - J_0(x_2) - \hat p_i(x_2,1)( J_1(x_2)-J_0(x_2)) ]
\end{equation}
Add and subtract $\ex[\hat p_i(x_2,1)[J_1(x_1)-J_0(x_1)]]$, rearrange, and apply the triangle inequality:
\begin{multline}
    \eqref{eq:delta_dff_ec} \leq \ex [\underbrace{|(1-\hat p_i(x_2,1))[J_0(x_1) - J_0(x_2)]|}_{(i)} + \underbrace{|(\hat p_i(x_1,1)-\hat p_i(x_2,1))[ J_1(x_1)-J_0(x_1)]|}_{(ii)} \\+ \underbrace{|\hat p_i(x_2,1) [J_1(x_1)-J_1(x_2) ] |}_{(iii)}]
\end{multline}
$J$ is $L$-Lipschitz by assumption, so $(i)\leq (1-\hat p_i(x_2,1))L|a||x_1-x_2|$ and $(iii)\leq \hat p_i(x_2,1)L|a||x_1-x_2|$. The sigmoid is $\nicefrac{1}{4}$-Lipschitz and $\| J\|_{\infty} \leq \bar J$, so $(ii)\leq \nicefrac{2\bar J}{4}|x_1-x_2|$. Recall that $|a|\leq\bar a$ as $\theta \in \Theta$. The expression no longer depends on $w$ so the expectation drops and we have $\Delta(u,x_1,x_2)\leq [L\bar a+\nicefrac{\bar J}{2} ]|x_1-x_2|$. This constant also holds for the $u=\ve{0}_M$ case. The Lipschitz result follows by a symmetrical argument with $x_1$ and $x_2$ swapped.\halmos
\endproof
\begin{lemma}{\textbf{(Bonus Lipschitz)}}\label{lem:bonus_lip}
$b_{t,\delta}(x,u)$ is uniformly $L_b$-Lipschitz continuous in $x$ on $\mac X$ with $L_b =\nicefrac{e^{2\bar \mu}}{4}(\bar \rho + \gamma L_1\bar c)\overline \alpha^\mu(\nicefrac{\delta}{2})+\frac{5\gamma L_1\alpha^\theta_T(\nicefrac{\delta}{2})}{4\sqrt{\lambda_1}}$ for any $t\in\{1,\dots,T\}$, and $\delta \in (0,1)$.
\end{lemma}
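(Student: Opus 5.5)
The bonus has two parts, and I will bound the Lipschitz constant of each separately and then add them. Fix $t\in\{1,\dots,T\}$, $\delta\in(0,1)$ and $u\in\mac U$. For $u=\ve{0}_M$ only the $\theta$-part remains, so the argument below covers that case too. Write $b_{t,\delta}(x,u)=B_\mu(x,u)+B_\theta(x,u)$, where $B_\mu$ is the $\mu$-part, $(\bar\rho+\gamma L_1\bar c)\sum_i u^i\kappa(x,\hat\mu_{i,t})\alpha^\mu_{i,t}(\nicefrac{\delta}{2})$, and $B_\theta$ is the $\theta$-part, $\gamma L_1\alpha^\theta_t(\nicefrac{\delta}{2})\ex_{\hat\mu_t}[\Vert z\Vert_{\bar V_t^{-1}}\vert x,u]$. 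Neither $\alpha^\mu_{i,t}$ nor $\alpha^\theta_t$ depends on $x$: they are functions of past data only. So in each part only the $x$-dependent factor needs a Lipschitz bound.

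\textbf{$\mu$-part.} With $u=e_i$, the only $x$-dependence is through $\kappa(x,\hat\mu_{i,t})=\min(\nicefrac14,e^{2\bar\mu}\sigma'(x+\hat\mu_{i,t}))$. A minimum of a constant and a Lipschitz function is Lipschitz with the same constant. Since $\sigma'=\sigma(1-\sigma)$ has derivative $\sigma'(1-2\sigma)$, we get $|\sigma''|\le\nicefrac14\cdot1=\nicefrac14$; the tighter $1/(6\sqrt3)$ is not needed. Hence $\kappa$ is $\nicefrac{e^{2\bar\mu}}{4}$-Lipschitz. Multiplying by $(\bar\rho+\gamma L_1\bar c)\alpha^\mu_{i,t}(\nicefrac{\delta}{2})$ and using the uniform bound $\alpha^\mu_{i,t}(\nicefrac{\delta}{2})\le\overline\alpha^\mu(\nicefrac{\delta}{2})$ from the proof of Lemma~\ref{lem:bounded_bonus} gives the first term of $L_b$.

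\textbf{$\theta$-part.} Again with $u=e_i$, expand the expectation over adherence:
\[
g(x):=\ex_{\hat\mu_t}[\Vert z\Vert_{\bar V_t^{-1}}\vert x,e_i]=(1-p(x))\Vert z_0(x)\Vert_{\bar V_t^{-1}}+p(x)\Vert z_1(x)\Vert_{\bar V_t^{-1}},
\]
where $p(x)=\sigma(x+\hat\mu_{i,t})$, $z_1(x)=[x\ e_i^\top\ e_i^\top]^\top$ and $z_0(x)=[x\ e_i^\top\ \ve 0^\top]^\top$. For $x_1,x_2\in\mac X$, I add and subtract as in Lemma~\ref{lem:valuetogo_lip_loose}. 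This produces a convex combination of the differences $\Vert z_d(x_1)\Vert_{\bar V_t^{-1}}-\Vert z_d(x_2)\Vert_{\bar V_t^{-1}}$, plus the term $|p(x_1)-p(x_2)|\cdot\bigl|\Vert z_1(x_1)\Vert_{\bar V_t^{-1}}-\Vert z_0(x_1)\Vert_{\bar V_t^{-1}}\bigr|$.

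For the convex-combination part, the reverse triangle inequality and $\bar V_t\succeq\lambda_1 I$ give
\[
\bigl|\Vert z_d(x_1)\Vert_{\bar V_t^{-1}}-\Vert z_d(x_2)\Vert_{\bar V_t^{-1}}\bigr|\le\Vert(x_1-x_2)e_1\Vert_{\bar V_t^{-1}}\le|x_1-x_2|/\sqrt{\lambda_1},
\]
which contributes $1/\sqrt{\lambda_1}$. For the cross term, $\sigma$ is $\nicefrac14$-Lipschitz, and
\[
\bigl|\Vert z_1\Vert_{\bar V_t^{-1}}-\Vert z_0\Vert_{\bar V_t^{-1}}\bigr|\le\Vert z_1-z_0\Vert_{\bar V_t^{-1}}=\Vert[0\ \ve 0^\top\ e_i^\top]^\top\Vert_{\bar V_t^{-1}}\le1/\sqrt{\lambda_1},
\]
which contributes $\nicefrac{1}{4\sqrt{\lambda_1}}$. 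Together $g$ is $\nicefrac{5}{4\sqrt{\lambda_1}}$-Lipschitz. For $u=\ve 0_M$, $g(x)=\Vert[x\ \ve0^\top\ \ve0^\top]^\top\Vert_{\bar V_t^{-1}}$ is $1/\sqrt{\lambda_1}$-Lipschitz, so the same constant works. Multiplying by $\gamma L_1\alpha^\theta_t(\nicefrac{\delta}{2})\le\gamma L_1\alpha^\theta_T(\nicefrac{\delta}{2})$, using monotonicity of $\alpha^\theta_t$ in $t$, gives the second term of $L_b$.

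Summing the two constants proves the claim uniformly over $u$ and $t\le T$. The main point needing care is that the bonus really is Lipschitz despite the estimator-dependent quantities. This works because $\hat\mu_{i,t}$, $\bar V_t$, $\alpha^\mu_{i,t}$ and $\alpha^\theta_t$ are all fixed functions of history and do not depend on the evaluation point $x$; the data-dependent widths are then removed through the uniform bounds $\overline\alpha^\mu$ and $\alpha^\theta_T$. The Lipschitz constant of the mixture weights, which handles the cross term in the $\theta$-part, is what yields the factor $5/4$ rather than $1$.
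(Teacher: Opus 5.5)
Your proof is correct and follows essentially the same route as the paper: you split the bonus into its $\mu$- and $\theta$-parts and handle $\kappa$ via $|\sigma''|\le \nicefrac{1}{4}$ plus non-expansiveness of the minimum. For the $\theta$-part you expand over adherence outcomes, add and subtract a cross term, and bound the norm differences by $\nicefrac{1}{\sqrt{\lambda_1}}$ using $\bar V_t\succeq\lambda_1 I$, which gives the $\nicefrac{5}{4}$ factor; the only cosmetic difference is that you anchor the mixture weights at $x_2$ (as in Lemma~\ref{lem:valuetogo_lip_loose}) where the paper anchors them at $x_1$, and this changes nothing.
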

\proof{Proof of Lemma~\ref{lem:bonus_lip}}
Consider $x_1,x_2\in \mac X$, $u=e_i\in\mac U\setminus\ve{0}_M$, $\delta \in (0,1)$, and $t\in\{1,\dots,T\}$. Define $\Delta_t(u,x_1,x_2) := b_{t,\delta}(x_1,u)-b_{t,\delta}(x_2,u)$. Consider the difference of the $\mu$-derived bonus term: $\Delta_t^\mu(u,x_1,x_2):= (\bar \rho + \gamma L_1\bar c)\alpha_{i,t}^\mu(\nicefrac{\delta}{2})[\kappa(x_1,\hat \mu_{i,t})-\kappa(x_2,\hat \mu_{i,t})]$. Recall that $\kappa(x,\mu) := \min(\nicefrac{1}{4},e^{2\bar \mu}\sigma'(x+ \mu))$. Note the sigmoid's $\nicefrac{1}{4}$-Lipschitz continuity and $\vert \sigma''(x+\mu)\vert=\vert \sigma'(x+\mu)(1-2\sigma(x+\mu))\vert\leq \nicefrac{1}{4}\cdot 1$, implying $e^{2\bar \mu}\sigma'(x+\mu)$ is $\nicefrac{e^{2\bar \mu}}{4}$-Lipschitz. Since the minimum with a constant is non-expansive, the composition (i.e., $\kappa(x,\mu)$) is $\nicefrac{e^{2\bar \mu}}{4}$-Lipschitz. Thus $\Delta^\mu_t(u,x_1,x_2)\leq \nicefrac{e^{2\bar \mu}}{4}(\bar \rho + \gamma L_1\bar c)\alpha_{i,t}^\mu(\nicefrac{\delta}{2})\vert x_1-x_2\vert$. Per Lemma~\ref{lem:bounded_bonus}, we bound $\alpha_{i,t}^\mu(\nicefrac{\delta}{2}) \leq \overline \alpha^\mu(\nicefrac{\delta}{2})$. Next consider the $\theta$-derived bonus term (i.e., $\Delta_t(u,x_1,x_2) = \Delta_t^\mu(u,x_1,x_2) + \Delta_t^\theta(u,x_1,x_2)$), letting $\hat p(x):=\sigma(x+\hat\mu_i)$:
\begin{align}
    \Delta^\theta_t(u,x_1,x_2) &= \gamma L_1 \alpha^\theta_t(\nicefrac{\delta}{2})[\ex_{\hat \mu_t}[ \Vert z  \Vert_{\bar V_t^{-1}}\vert x_1,u=e_i] -\ex_{\hat \mu_t}[ \Vert z  \Vert_{\bar V_t^{-1}}\vert x_2,u=e_i]]\\
    &=\gamma L_1 \alpha^\theta_t(\nicefrac{\delta}{2}) [\hat p(x_1)  (\Vert z_{1,1} \Vert_{\bar V_t^{-1}}-\Vert z_{1,0}\Vert_{\bar V_t^{-1}} )- \hat p(x_2) (\Vert z_{2,1} \Vert_{\bar V_t^{-1}}-\Vert z_{2,0}\Vert_{\bar V_t^{-1}} )\nonumber\\
    &\qquad + (\Vert z_{1,0}\Vert_{\bar V_t^{-1}}-\Vert z_{2,0}\Vert_{\bar V_t^{-1}})\pm \hat p(x_1)(\Vert z_{2,1} \Vert_{\bar V_t^{-1}}-\Vert z_{2,0}\Vert_{\bar V_t^{-1}} ) ]
\end{align}
where $z_{j,k}=[x_j \ u^\top \ {d^i_k}^\top]^\top$ and $d^i_k =k e_i$. Thus we have:
\begin{align}
    \Delta^\theta_t(u,x_1,x_2) &= \gamma L_1 \alpha^\theta_t(\nicefrac{\delta}{2})  [\hat p(x_1)(\Vert z_{1,1} \Vert_{\bar V_t^{-1}} -\Vert z_{2,1} \Vert_{\bar V_t^{-1}})+ (1-\hat p(x_1))(\Vert z_{1,0}\Vert_{\bar V_t^{-1}}-\Vert z_{2,0}\Vert_{\bar V_t^{-1}}) \nonumber\\
    &\qquad  + (\hat p(x_1)-\hat p(x_2))(\Vert z_{2,1} \Vert_{\bar V_t^{-1}}-\Vert z_{2,0}\Vert_{\bar V_t^{-1}} ) ]\\
    &\overset{(a)}{\leq} \gamma L_1 \alpha^\theta_t(\nicefrac{\delta}{2})  [\Vert z_{1,1}- z_{2,1} \Vert_{\bar V_t^{-1}}+ \nicefrac{1}{4}\vert x_1-x_2\vert \Vert z_{2,1}- z_{2,0}\Vert_{\bar V_t^{-1}} ]\\
    &\overset{(b)}{\leq} \gamma L_1 \alpha^\theta_t(\nicefrac{\delta}{2}) \vert x_1-x_2 \vert \nicefrac{1}{\sqrt{\lambda_1}}[ 1+\nicefrac{1}{4}]
\end{align}
$(a)$ comes from applying the reverse triangle inequality to the norm differences, the fact that $z_{1,1}-z_{2,1}=z_{1,0}-z_{2,0}$ to combine $\hat p(x_1)$ terms, and the sigmoid's $\nicefrac{1}{4}$-Lipschitz continuity. $(b)$ follows from evaluating the norms and because the diagonal elements of $\bar V_t^{-1}$ are upper bounded by $\nicefrac{1}{\lambda_1}$. $\alpha^\theta_t(\nicefrac{\delta}{2})$ is increasing in $t$, so $\alpha^\theta_t(\nicefrac{\delta}{2})\leq \alpha^\theta_T(\nicefrac{\delta}{2})$ for a $T$-length trajectory. Substituting the $\Delta^\mu_t$ and $\Delta^\theta_t$ results we obtain $\Delta_t(u,x_1,x_2)\leq [\nicefrac{e^{2\bar \mu}}{4}(\bar \rho + \gamma L_1\bar c)\overline \alpha^\mu(\nicefrac{\delta}{2})+\frac{5\gamma L_1\alpha^\theta_T(\nicefrac{\delta}{2})}{4\sqrt{\lambda_1}}] \vert x_1 - x_2 \vert$. The same holds if $u=\ve{0}_M$. 
The Lipschitz result follows by a symmetrical argument with $x_1$ and $x_2$ swapped.\halmos
\endproof
\subsection{Regret proof and supporting lemmas}\label{app:ec_regret}
\proof{Proof of Theorem~\ref{thm:regret}}
We divide $\delta_{all}$ into thirds such that the proposition holds after a union bound. Let $\delta = \nicefrac{\delta_{all}}{3}$ be used to establish the optimistic surrogate system, i.e., with probability at least $1-\nicefrac{\delta_{all}}{3}$, $\mac E_{\theta}(\nicefrac{\delta}{2})$ (Proposition~\ref{prop:theta_confidence_set}) and $\mac E_{\mu}(\nicefrac{\delta}{2})$ (Proposition~\ref{prop:mu_confidence_set}) hold. The remaining $\nicefrac{2\delta_{all}}{3}$ is used for two martingale concentration arguments in the regret bound, each using $\nicefrac{\delta_{all}}{3}$. We upper bound the regret using optimistic value functions $\tilde J_{t,\delta}$. This proof relies on a telescoping-style simplification, so it is helpful to index on time steps $t$ rather than epochs. We add the ${}^\circ$ superscript to denote the quantity the algorithm is using at time $t$. We denote the sequence of optimistic Bellman operators $\{\tilde {\mac T}^{\circ}_{t,\delta}\}_{t=1}^{T+1}$ to be defined using the sequence of bonus functions $\{b^{\circ}_{t,\delta}(x,u) \}_{t=1}^{T+1}$, estimators $\{(\hat \theta^{\circ}_t,\hat \mu^{\circ}_t)\}_{t=1}^{T+1}$, confidence set radii $\{(\alpha^{\theta,\circ}_t,\alpha^{\mu,\circ}_t)\}_{t=1}^{T+1}$, and regularized empirical covariances matrices $\{\bar V^{\circ}_t\}_{t=1}^{T+1}$ specifically used by Algorithm~\ref{alg:ucb_epoch}, i.e. $\tilde {\mac T}^{\circ}_{t,\delta}J := \max_{u\in \mac U} \{\ex_{\hat \theta_t^\circ,\hat \mu_t^\circ} [r(x,u,d)+b^{\circ}_{t,\delta}(x,u)+\gamma J(f(x,u,d,w))]\}$. $\{\tilde J^{\circ}_{t,\delta}\}_{t=1}^{T+1}$ is the sequence of fixed points associated with this sequence of operators (Lemma~\ref{lem:unique_opt}). As an example, for some $t\in\{1,\dots,T+1\}$ we have that $\tilde {\mac T}^{\circ}_{t,\delta}$ is equivalent to $\tilde {\mac T}_{\tau_k,\delta}$, where $\tau_k$ is the time index of the start of epoch $k$ active at step $t$. By Lemma~\ref{lem:valid_optimism}, when $\mac E_\theta(\nicefrac{\delta}{2})$ and $\mac E_{\mu}(\nicefrac{\delta}{2})$ hold, the sequence of value functions $\{\tilde J^{\circ}_{t,\delta}\}_{t=1}^{T+1}$ overestimate the true value function $J^*$ for $t=1,\dots,T+1$. Substituting into the regret definition: $R_T = \sum_{t=1}^T J^* (x_t) - J^{\pi_t}(x_t)\leq \sum_{t=1}^T\tilde J^{\circ}_{t,\delta}(x_t) - J^{\pi_t}(x_t)$. For brevity, define $\tilde R_t:=\tilde J^\circ_{t,\delta}(x_t)- J^{\pi_t}(x_t)$, i.e., $R_T \leq \sum_{t=1}^T \tilde R_t$. We expand $\tilde R_t$ (a single time step) then substitute back into the summation for a telescoping simplification:
\begin{align}
    \tilde R_t &= \max_{u\in \mac U}\{\ex_{\hat \mu^\circ_t,\hat \theta^\circ_t}[r(x_t,u,d)+b^{\circ}_{t,\delta}(x_t,u)+ \gamma\tilde J^{\circ}_{t,\delta}(f(x_t,u,d,w))]\}\nonumber\\
    &\qquad- \ex_{\mu^*,\theta^*}[r(x_t,\pi_t(x_t),d) + \gamma J^{\pi_t}(f(x_t,\pi_t(x_t),d,w))]
\end{align}
Per Algorithm~\ref{alg:ucb_epoch}, $\pi_t$ is derived from $\tilde J^{\circ}_{t,\delta}$, so we can express both using $u_t:=\pi_t(x_t)$. We add and subtract the following: $\gamma\ex_{\theta^*,\hat \mu^\circ_t}[\tilde J^{\circ}_{t,\delta}(f(x_t,u_t,d,w)) ]$, $\gamma\ex_{\theta^*,\mu^*}[\tilde J^{\circ}_{t,\delta}(f(x_t,u_t,d,w)) ]$, $\gamma [\tilde J^{\circ}_{t,\delta}(x_{t+1}) - J^{\pi_t}(x_{t+1}) ]$, $\gamma \tilde J^{\circ}_{t+1,\delta}(x_{t+1})$, and $\gamma J^{\pi_{t+1}}(x_{t+1})$, where $x_{t+1}$ is the observed state at $t+1$. Rearranging and using $f(\cdot) = f(x_t,u_t,d,w)$ for brevity we have:
\begin{align}
    \tilde R_t &=\ex_{\hat \mu^\circ_t}[r(x_t,u_t,d)] - \ex_{\mu^*}[r(x_t,u_t,d) ] +\ex_{\hat\mu_t^\circ}[b^{\circ}_{t,\delta}(x_t,u_t)] \label{eq:one_step_regret}\\
    &\quad+ \gamma [\ex_{\theta^*,\hat\mu_t^\circ}[\tilde J^{\circ}_{t,\delta}(f(\cdot)) ] - \ex_{\theta^*,\mu^*}[\tilde J^{\circ}_{t,\delta}(f(\cdot)) ] ] + \gamma [\ex_{\hat \theta_t^\circ,\hat\mu_t^\circ}[\tilde J^{\circ}_{t,\delta}(f(\cdot)) ] - \ex_{\theta^*,\hat\mu_t^\circ}[\tilde J^{\circ}_{t,\delta}(f(\cdot)) ]]\nonumber\\
    &\quad  +\gamma[\ex_{\theta^*,\mu^*}[\tilde J^{\circ}_{t,\delta}(f(\cdot)) - J^{\pi_t}(f(\cdot))] -(\tilde J^{\circ}_{t,\delta}(x_{t+1}) - J^{\pi_t}(x_{t+1}) )]\nonumber\\
    &\quad + \underbrace{\gamma [ \tilde J^\circ_{t+1,\delta}(x_{t+1})-J^{\pi_{t+1}}(x_{t+1})]}_{= \gamma \tilde R_{t+1}}+ \gamma[\tilde J^\circ_{t,\delta}(x_{t+1})-\tilde J^{\circ}_{t+1,\delta}(x_{t+1})+J^{\pi_{t+1}}(x_{t+1})-J^{\pi_t}(x_{t+1}) ]\nonumber
\end{align}

Thus we recover the $\gamma\tilde R_{t+1}$ term needed for a telescoping-style simplification. Let $(*)_t$ denote the non-braced terms in Eq.~\ref{eq:one_step_regret}. Substituting (\ref{eq:one_step_regret}) into the regret summation gives $R_T \leq \sum_{t=1}^T \tilde R_t = \sum_{t=1}^T[(*)_t + \gamma \tilde R_{t+1} ]= \gamma\tilde R_{T+1} + \sum_{t=1}^T(*)_t +\gamma\sum_{t=2}^T\tilde R_t$. Observe that $\sum_{t=1}^T\tilde R_t = (1-\gamma) \tilde R_1 + \gamma \tilde R_1 + \sum_{t=2}^T\tilde R_t$. Rearranging and dividing by $1-\gamma$ gives $R_T\leq\sum_{t=1}^T \tilde R_t \leq \frac{1}{1-\gamma}[\gamma(\tilde R_{T+1}-\tilde R_1) +\sum_{t=1}^T(*)_t ]$. Expanding the $(*)_t$ terms (including splitting the bonus) and using $\kappa(x,\mu) \leq \nicefrac{1}{4}$ we have:
\begin{align}
    R_T &\leq \tfrac{1}{1-\gamma}\underbrace{\textstyle\sum\nolimits_{t=1}^T(\ex_{\hat \mu^\circ_t}[r(x_t,u_t,d)] - \ex_{\mu^*}[r(x_t,u_t,d) ] + \tfrac{(\bar \rho + \gamma L_1 \bar c)}{4}\textstyle\sum\nolimits_{i=1}^Mu_t^i \alpha_{i,t}^{\mu,\circ}(\nicefrac{\delta}{2}))}_{(i)}\nonumber\\
    &\qquad +\tfrac{\gamma}{1-\gamma} \underbrace{[\tilde R_{T+1}-\tilde R_1 ]}_{(ii)}+ \tfrac{\gamma}{1-\gamma}\underbrace{\textstyle\sum\nolimits_{t=1}^T(\ex_{\theta^*,\hat\mu_t^\circ}[\tilde J^{\circ}_{t,\delta}(f(\cdot)) ] - \ex_{\theta^*,\mu^*}[\tilde J^{\circ}_{t,\delta}(f(\cdot)) ])}_{(iii)}\nonumber\\
    &\qquad + \tfrac{\gamma}{1-\gamma}\underbrace{\textstyle\sum\nolimits_{t=1}^T(\ex_{\hat \theta_t^\circ,\hat\mu_t^\circ}[\tilde J^{\circ}_{t,\delta}(f(\cdot)) ] - \ex_{\theta^*,\hat\mu_t^\circ}[\tilde J^{\circ}_{t,\delta}(f(\cdot)) ] +L_1\alpha_t^{\theta,\circ}(\nicefrac{\delta}{2})\ex_{\hat \mu_t^\circ}[ \Vert z\Vert_{(\bar V_t^\circ)^{-1}}\vert x_t,u_t])}_{(iv)}\nonumber\\
    &\qquad + \tfrac{\gamma}{1-\gamma}\underbrace{\textstyle\sum\nolimits_{t=1}^T (\ex_{\theta^*,\mu^*}[\tilde J^{\circ}_{t,\delta}(f(\cdot)) - J^{\pi_t}(f(\cdot))] -(\tilde J^{\circ}_{t,\delta}(x_{t+1}) - J^{\pi_t}(x_{t+1}) ))}_{(v)}\nonumber\\
    &\qquad + \tfrac{\gamma}{1-\gamma}\underbrace{\textstyle\sum\nolimits_{t=1}^T [\tilde J^\circ_{t,\delta}(x_{t+1})-\tilde J^{\circ}_{t+1,\delta}(x_{t+1})+J^{\pi_{t+1}}(x_{t+1})-J^{\pi_t}(x_{t+1}) ]}_{(vi)}
\end{align}
We apply Lemma~\ref{lem:bound_i} for the given $C_N$ to bound $(i)$ as $\mac E_\mu(\nicefrac{\delta}{2})$ holds by assumption. We apply Lemma~\ref{lem:bound_ii} to bound $(ii)$ as $\mac E_\theta(\nicefrac{\delta}{2})$ and $\mac E_\mu(\nicefrac{\delta}{2})$ hold by assumption. We apply Lemma~\ref{lem:bound_iii} to bound $(iii)$ as $\mac E_\mu(\nicefrac{\delta}{2})$ holds by assumption. We apply Lemma~\ref{lem:bound_iv} to bound $(iv)$ as $\mac E_\theta(\nicefrac{\delta}{2})$ and $\mac E_\mu(\nicefrac{\delta}{2})$ hold by assumption with $\delta'=\nicefrac{\delta_{all}}{3}$. We apply Lemma~\ref{lem:bound_v} to bound $(v)$ with $\delta'=\nicefrac{\delta_{all}}{3}$. We apply Lemma~\ref{lem:bound_vi} to bound $(vi)$. The result follows by the union bound.
\halmos
\endproof
\proof{Proof of Corollary~\ref{cor:expected_regret}}
Per Theorem~\ref{thm:regret}, with probability at least $1-\delta_{all}$, the given regret bound holds, denoted $B(T,\delta_{all})$ for brevity. Note that $B(T,\delta_{all})=\tilde{\mac O}\left(\frac{M^{\nicefrac{3}{2}}\sqrt{T}}{(1-\gamma)^3} \right)$, with $(L_1+L_2)\alpha_T^\theta(\nicefrac{\delta_{all}}{6}) \left[C_1+C_2+C_3\right]$ being the leading regret term. With probability at most $\delta_{all}$, the necessary events for Theorem~\ref{thm:regret} do not hold. In this case, the per-step regret is bounded by $J^*(x_t) - J^{\pi_t}(x_t) \leq 2 C_J$ (see Lemma~\ref{lem:unique_opt}). Thus, $\ex[R_T] \leq (1-\delta_{all}) B(T,\delta_{all}) + \delta_{all} T(2C_J) \leq B(T,\delta_{all}) + 2T\delta_{all}\left(\frac{\beta + \bar \rho}{1-\gamma}\right)$. Choosing $\delta_{all}=\nicefrac{1}{T}$ means the regret rate is dominated by the $B(T,\delta_{all})$ term and this choice of $\delta_{all}$ adds only logarithmic factors of $T$ to $B(T,\delta_{all})$, so the corollary follows.
\halmos
\endproof

\subsubsection{Supporting lemmas}
The following lemmas assume that Assumptions~\ref{asm:observed_quantities}-\ref{asm:rewards} hold and $C_N,C_d>0$ are given by Algorithm~\ref{alg:ucb_epoch}. We define a set of common terms for brevity: $C_J = \frac{\beta + \bar \rho}{1-\gamma}$, $C_{\tilde J,t,\delta} = \frac{\beta +\bar \rho +C_b(t,\delta)}{1-\gamma}$, $\nu_T(\delta) := e^{3\bar \mu}[\frac{\sqrt{\lambda_2}}{2}+\frac{2\bar\mu}{\sqrt{\lambda_2}}+\frac{2}{\sqrt{\lambda_2}}\log (\frac{2M\sqrt{\lambda_2 + \nicefrac{T}{4}}}{\delta \sqrt{\lambda_2}} )+e^{-\bar \mu}\sqrt{\lambda_2} \bar \mu]$, $C_1=\frac{\nu_T(\nicefrac{\delta}{2})\sqrt{1+C_N}}{4\sqrt{\lambda_1}}[\nicefrac{M}{\sqrt{\lambda_2}}+2\sqrt{\nicefrac{MT}{\underline q}} ]$, $C_2=\sqrt{2\log(\nicefrac{1}{\delta'})\nicefrac{T}{\lambda_1}}$, $C_3=\sqrt{T(1+C_d)(2M+1)\log (1+\frac{T(C_x^2+2)}{(2M+1)\lambda_1} ) [2 +\frac{C_x^2+2}{\log(2)\lambda_1}]}$, $K_d = \frac{(2M+1)}{\log(1+C_d)}\log (1+\frac{T(C_x^2+2)}{(2M+1)\lambda_1} )$, $ K_N = M(1+\frac{\log(T)}{\log(1+C_N)})$.
\begin{lemma}{}\label{lem:bound_i}
Let $\mac E_\mu(\nicefrac{\delta}{2})$ hold. For any trajectory $t=1,\dots,T$, $\sum_{t=1}^T(\ex_{\hat \mu^\circ_t}[r(x_t,u_t,d)] - \ex_{\mu^*}[r(x_t,u_t,d)] + \frac{(\bar \rho + \gamma L_1 \bar c)}{4}\sum_{i=1}^Mu_t^i \alpha_{i,t}^{\mu,\circ}(\nicefrac{\delta}{2})) \leq \frac{(2\bar \rho + \gamma L_1\bar c)\nu_T(\nicefrac{\delta}{2})\sqrt{1+C_N}}{4}[\nicefrac{M}{\sqrt{\lambda_2}}+2\sqrt{\nicefrac{MT}{\underline q}} ]$.
\end{lemma}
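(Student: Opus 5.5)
The plan is to bound the reward-difference part pointwise by a multiple of the confidence width, merge it with the explicit width term, and then sum the widths along the trajectory using an elliptical-potential argument for each action.

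\textbf{Step 1: pointwise reduction.} When $u_t=\ve{0}_M$ the reward difference vanishes (the $\beta$-sigmoid terms cancel) and so does the width term. When $u_t=e_i$, the difference is $\rho_i(\sigma(x_t+\hat\mu^\circ_{i,t})-\sigma(x_t+\mu_i^*))$. Under $\mac E_\mu(\nicefrac{\delta}{2})$ this is at most $\bar\rho\,\kappa(x_t,\hat\mu^\circ_{i,t})\,\alpha^{\mu,\circ}_{i,t}(\nicefrac{\delta}{2})$, by the same MVT and self-concordance argument as in the proof of Lemma~\ref{lem:valid_optimism}. Here $\mac E_\mu$ holds at every $t$, so in particular at each epoch start $\tau_k$. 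Using $\kappa\le\nicefrac14$, each summand is at most $\frac{2\bar\rho+\gamma L_1\bar c}{4}\alpha^{\mu,\circ}_{i,t}(\nicefrac{\delta}{2})$ (the factor $2\bar\rho$ absorbs $\bar\rho\le\bar\rho/4$ loosely), which matches the prefactor in the claim. It therefore suffices to show
\begin{equation}
\textstyle\sum_{t=1}^T\sum_{i=1}^M u_t^i\,\alpha^{\mu,\circ}_{i,t}(\nicefrac{\delta}{2})\le \nu_T(\nicefrac{\delta}{2})\sqrt{1+C_N}\bigl[\nicefrac{M}{\sqrt{\lambda_2}}+2\sqrt{\nicefrac{MT}{\underline q}}\bigr].
\end{equation}

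\textbf{Step 2: uniform bound on the width.} Write $H=H_{\tau_k}(\hat\mu^\circ_{i,t})\in[\lambda_2,\lambda_2+\nicefrac{T}{4}]$, using $\sigma'\le\nicefrac14$. The log term is bounded by substituting this upper bound inside the logarithm. The last term $\frac{e^{2\bar\mu}\lambda_2\bar\mu}{H}$ is at most $\frac{e^{2\bar\mu}\sqrt{\lambda_2}\bar\mu}{\sqrt H}$ since $H\ge\lambda_2$. Together these give $\alpha^{\mu,\circ}_{i,t}\le \nu_T(\nicefrac{\delta}{2})/\sqrt{H_{\tau_k}(\hat\mu^{(k)}_i)}$, and the $e^{-\bar\mu}$ factor inside $\nu_T$ accounts for the last term exactly.

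\textbf{Step 3: from the epoch-start information to a count.} Each $x_s\in\mac X$ and $\hat\mu\in[-\bar\mu,\bar\mu]$, so $\sigma'(x_s+\hat\mu)\ge\underline q$. Hence $H_{\tau_k}\ge\lambda_2+\underline q N_i^{(k)}$, where $N_i^{(k)}$ is the count of action $i$ at the epoch start. The epoch trigger guarantees that within an epoch the running count satisfies $N_{i,t}\le(1+C_N)N_i^{(k)}+1$. This yields
\[
\frac{1}{\sqrt{H}}\le\frac{\sqrt{1+C_N}}{\sqrt{\lambda_2+\underline q N_{i,t}}}\quad\text{up to the boundary step.}
\]
Handling the $+1$ slack and the case $N_i^{(k)}=0$ is the main obstacle. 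I would treat the first pull of each action in each stretch through the $\nicefrac{1}{\sqrt{\lambda_2}}$ bound. Summed over the $M$ actions, this produces the additive $\nicefrac{M}{\sqrt{\lambda_2}}$ term.

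\textbf{Step 4: summing over pulls.} For each action $i$, the $n$-th pull contributes at most $\sqrt{1+C_N}/\sqrt{\underline q\,(n-1)}$ for $n\ge2$. The comparison $\sum_{n=1}^{N_i}n^{-1/2}\le2\sqrt{N_i}$ bounds the total for action $i$ by $2\sqrt{1+C_N}\sqrt{N_i/\underline q}$. Cauchy--Schwarz with $\sum_i N_i\le T$ then gives $\sum_i\sqrt{N_i}\le\sqrt{MT}$. Multiplying by $\nu_T(\nicefrac{\delta}{2})$ and the prefactor from Step 1 yields the stated bound.
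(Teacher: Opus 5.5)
Your argument follows the paper's route. Step~1 is the $\nicefrac{1}{4}$-Lipschitz bound on the reward difference merged with the bonus. The sum $\frac{\bar\rho}{4}+\frac{\bar\rho+\gamma L_1\bar c}{4}$ is exact, not loose, and your parenthetical should read $\bar\rho\,\kappa\le\nicefrac{\bar\rho}{4}$; the MVT/self-concordance detour is unnecessary, since you immediately use $\kappa\le\nicefrac14$. Steps~2--4 reproduce Lemma~\ref{lem:mu_sums}, with your first-pull split plus $\sum_{m\ge 1}m^{-1/2}\le 2\sqrt{N_i}$ standing in for the paper's integral comparison. Where the write-up falls short is Step~3. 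You only claim $N_{i,t}\le(1+C_N)N_i^{(k)}+1$ and leave that slack as an open obstacle. From that inequality, even with your patch, the per-pull bound used in Step~4 does not follow.

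There is in fact no slack. In Algorithm~\ref{alg:ucb_epoch} the count trigger is tested at the top of iteration $t$, before $u_t$ is chosen, on the pre-action count $N_{i,t}=\sum_{s<t}\mathbf 1_{\{u_s=e_i\}}$. If it fires, the new epoch records $N_i^{(k)}=N_{i,t}$. Hence every step $t$ served by epoch $k$ satisfies $N_{i,t}\le(1+C_N)N_i^{(k)}$ exactly. This gives $\lambda_2+\underline q N_{i,t}\le(1+C_N)(\lambda_2+\underline q N_i^{(k)})\le(1+C_N)H_{\tau_k}(\hat\mu_i^{(k)})$, and so $\alpha^{\mu,\circ}_{i,t}(\nicefrac{\delta}{2})\le\nu_T(\nicefrac{\delta}{2})\sqrt{1+C_N}/\sqrt{\lambda_2+\underline q N_{i,t}}$ at every step, which is exactly what your Step~4 uses.

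In particular, $N_i^{(k)}=0$ forces $N_{i,t}=0$ throughout epoch $k$. So the only pull of action $i$ charged through $\nicefrac{1}{\sqrt{\lambda_2}}$ is its first pull in the whole trajectory. That is why the additive term is $\nicefrac{M}{\sqrt{\lambda_2}}$, not $M$ times the number of epochs, which is what ``first pull of each action in each stretch'' would give.

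Your patch would also not have absorbed a genuine $+1$. Consider a boundary pull in a later epoch with $N_i^{(k)}=m\ge 1$ and $N_{i,t}=(1+C_N)m+1$. Your Step~4 bound $\sqrt{1+C_N}/\sqrt{\underline q N_{i,t}}$ would then require an extra condition, namely $(1+C_N)\lambda_2\ge\underline q$. So the fix is to read the exact trigger inequality off the check-then-act order of the algorithm, not to special-case boundary steps.
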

\proof{Proof of Lemma~\ref{lem:bound_i}}
Let $\Delta_r:=\sum_{t=1}^T\big(\ex_{\hat \mu^\circ_t}\big[r(x_t,u_t,d)\big] - \ex_{\mu^*}\big[r(x_t,u_t,d) \big]\big)$. Note that the $\beta$-sigmoid portion of the reward cancels between each pair of terms:
$\Delta_r \leq \sum_{t=1}^T  \sum_{i=1}^M  \vert u_t^i\rho_i(\sigma(x_t+\hat\mu_{i,t}^\circ)-\sigma(x_t+\mu^*_i)) \vert \leq \sum_{t=1}^T\sum_{i=1}^M u_t^i \frac{\bar \rho}{4} \vert \hat\mu_{i,t}^\circ - \mu^*_i\vert \overset{(a)}{\leq} \frac{\bar \rho}{4}\sum_{t=1}^T\sum_{i=1}^M u^i_t \alpha_{i,t}^{\mu,\circ}(\nicefrac{\delta}{2})$.
$(a)$ holds by Proposition~\ref{prop:mu_confidence_set} as $\mac E_\mu(\nicefrac{\delta}{2})$ holds by assumption. The lemma follows by applying Lemma~\ref{lem:mu_sums} for the given $C_N$ and $\nicefrac{\delta}{2}$ to the common $\alpha^{\mu,\circ}_t$ summation in the reward difference bound and the bonus terms.
\halmos
\endproof
\begin{lemma}{}\label{lem:mu_sums}
For any realized trajectory $t=1,\dots,T$ following Algorithm~\ref{alg:ucb_epoch}, $\sum_{t=1}^T\sum_{i=1}^Mu^i_t \alpha_{i,t}^{\mu,\circ}(\delta)\leq \nu_T(\delta)\sqrt{1+C_N}[\nicefrac{M}{\sqrt{\lambda_2}}+2\sqrt{\nicefrac{MT}{\underline q}}]$.
\end{lemma}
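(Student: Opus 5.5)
Fix an action index $i$ and sum over $i$ at the end. The approach is an elliptical-potential-style argument for a scalar shift parameter, with the epoch structure absorbed by the $C_N$ trigger.

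The first step is to produce a clean upper bound on $\alpha^{\mu,\circ}_{i,t}(\delta)$. At time $t$ the algorithm uses the quantities from the start of the current epoch $\tau_k$, so $\alpha^{\mu,\circ}_{i,t}=\alpha^\mu_{i,\tau_k}$ with $H^\circ:=H_{\tau_k}(\hat\mu_{i,\tau_k})\ge\lambda_2$. Since $\sigma'\le 1/4$, the count up to $\tau_k$ gives $H^\circ\le\lambda_2+T/4$. The log term is therefore at most $\log\bigl(2M\sqrt{\lambda_2+T/4}/(\delta\sqrt{\lambda_2})\bigr)$. The last term satisfies $e^{2\bar\mu}\lambda_2\bar\mu/H^\circ\le e^{2\bar\mu}\sqrt{\lambda_2}\bar\mu/\sqrt{H^\circ}$ because $H^\circ\ge\lambda_2$. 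Together these give
\begin{equation*}
\alpha^{\mu,\circ}_{i,t}(\delta)\le \nu_T(\delta)/\sqrt{H^\circ},
\end{equation*}
which matches exactly the $e^{-\bar\mu}\sqrt{\lambda_2}\bar\mu$ term inside $\nu_T$.

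The second step lower-bounds $H^\circ$ by counts. By Proposition~\ref{prop:absolute_bounded_x} and $\hat\mu\in[-\bar\mu,\bar\mu]$, each summand satisfies $\sigma'(x_s+\hat\mu_{i,\tau_k})\ge \underline q=\sigma'(C_x+\bar\mu)$. Hence $H^\circ\ge \lambda_2+\underline q N_i^{(k)}$, where $N_i^{(k)}$ is the count of action $i$ at the epoch start. Within an epoch the trigger guarantees $N_i(t)\le(1+C_N)N_i^{(k)}$, where $N_i(t)$ is the count before step $t$; the trigger check precedes the action, so the count before the current step satisfies this. This yields
\begin{equation*}
H^\circ\ge \lambda_2+\frac{\underline q\,N_i(t)}{1+C_N}\ge \frac{\lambda_2+\underline q N_i(t)}{1+C_N}.
\end{equation*}

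The third step sums over the times where $u_t=e_i$. The $n$-th such time has $N_i(t)=n-1$, so
\begin{equation*}
\sum_t u_t^i\alpha^{\mu,\circ}_{i,t}\le \nu_T\sqrt{1+C_N}\sum_{n=0}^{N_i(T+1)-1}\frac{1}{\sqrt{\lambda_2+\underline q n}}.
\end{equation*}
The $n=0$ term is bounded by $1/\sqrt{\lambda_2}$. For the remaining terms, $\sum_{n\ge1}^{N}(\underline q n)^{-1/2}\le 2\sqrt{N/\underline q}$ by integral comparison. Summing over $i$ gives $M/\sqrt{\lambda_2}+\frac{2}{\sqrt{\underline q}}\sum_i\sqrt{N_i}$, and Cauchy--Schwarz with $\sum_i N_i\le T$ gives $\sum_i\sqrt{N_i}\le\sqrt{MT}$. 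This is the claimed bound.

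The main obstacle is the epoch-lag step: one must verify that the trigger condition controls the ratio of current to epoch-start counts exactly when the action is taken. This is delicate when $N_i^{(k)}=0$, in which case one epoch may take action $i$ once before triggering. That case is handled by the first term, covered by $M/\sqrt{\lambda_2}$, and the $(1+C_N)$ inflation must be checked to still cover it. The remaining steps are routine.
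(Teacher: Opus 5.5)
Your proposal is correct and follows essentially the same route as the paper: bound $\alpha^{\mu,\circ}_{i,t}(\delta)$ by $\nu_T(\delta)/\sqrt{H^\circ}$, lower-bound $H^\circ$ via $\underline q$ and the action count at the epoch start, pass to the current count with the $C_N$ trigger (absorbing $\lambda_2$ into the $(1+C_N)$ factor), re-index by counts, and finish with an integral comparison and Cauchy--Schwarz. The $N_i^{(k)}=0$ case you flag is not actually delicate: the trigger inequality $N_i(t)\le(1+C_N)N_i^{(k)}$ then forces $N_i(t)=0$, so your general bound already covers it with no extra check.
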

\proof{Proof of Lemma~\ref{lem:mu_sums}}
First consider terms not indexed by $({\cdot})^\circ$. By Proposition~\ref{prop:mu_confidence_set}, $\alpha^{\mu}_{i,t}(\delta) := \frac{e^{3\bar \mu}}{\sqrt{H_t(\hat \mu_{i,t})}}[\frac{\sqrt{\lambda_2}}{2}+\frac{2\bar\mu}{\sqrt{\lambda_2}}+\frac{2}{\sqrt{\lambda_2}}\log (\frac{2M\sqrt{H_t(\hat \mu_{i,t})}}{\delta \sqrt{\lambda_2}} )]+\frac{e^{2\bar \mu}\lambda_2 \bar \mu}{H_t(\hat \mu_{i,t})}$ and $H_t(\hat \mu_{i,t}) := \lambda_2 + \sum_{s=1}^{t-1}\mathbf{1}_{\{u_s=e_i\}}\sigma'(x_s+\hat \mu_{i,t})$. Note the loose bounds $H_t(\mu)\geq \lambda_2$ (as $\sigma'(z)\geq 0$) and $H_t(\mu) \leq \lambda_2 + \nicefrac{T}{4}$ (as $\sigma'(z)\leq \nicefrac{1}{4}$), giving the bound $\alpha_{i,t}^{\mu}(\delta) \leq \frac{e^{3\bar \mu}}{\sqrt{H_t(\hat \mu_{i,t})}}[\frac{\sqrt{\lambda_2}}{2}+\frac{2\bar\mu}{\sqrt{\lambda_2}}+\frac{2}{\sqrt{\lambda_2}}\log (\frac{2M\sqrt{\lambda_2 + \nicefrac{t}{4}}}{\delta \sqrt{\lambda_2}} )+e^{-\bar \mu}\sqrt{\lambda_2} \bar \mu]$. Define $\nu_t(\delta):=e^{3\bar \mu}[\frac{\sqrt{\lambda_2}}{2}+\frac{2\bar\mu}{\sqrt{\lambda_2}}+\frac{2}{\sqrt{\lambda_2}}\log (\frac{2M\sqrt{\lambda_2 + \nicefrac{t}{4}}}{\delta \sqrt{\lambda_2}} )+e^{-\bar \mu}\sqrt{\lambda_2} \bar \mu]$. Thus, $\alpha_{i,t}^{\mu}(\delta) \leq \frac{\nu_t(\delta)}{\sqrt{H_t(\hat \mu_{i,t})}}\leq \frac{\nu_T(\delta)}{\sqrt{H_t(\hat \mu_{i,t})}}$ ($\nu_t(\delta)$ increasing in $t$). Next observe that for our system $\min_{x\in \mac X, \mu_i \in [-\bar \mu,\bar \mu]}\sigma'(x+\mu_i)=\sigma'(C_x+\bar \mu)=:\underline q$. This $\underline q$ is a uniform worst case information gain for a non-null action. Let $N_{i,t}:=\sum_{s=1}^{t-1}\mathbf{1}_{\{u_s=e_i\}}$ be the number of times action $e_i$ has been taken by the beginning of step $t$. This implies the uniform lower bound $H_t(\mu_i) \geq \lambda_2 + \underline q N_{i,t}$ and further that $\alpha_{i,t}^{\mu}(\delta) \leq \frac{\nu_T(\delta)}{\sqrt{\lambda_2 + \underline q N_{i,t}}}$. Returning to $({\cdot})^\circ$ indexed terms, we have $\alpha^{\mu,\circ}_{i,t}(\delta)\leq \frac{\nu_T(\delta)}{\sqrt{H_t^\circ(\hat \mu^\circ_{i,t})}}$. For time steps in the $k^{th}$ epoch (beginning at step $\tau_k$), $H_t^\circ(\hat \mu^\circ_{i,t})\geq \lambda_2 +\underline q N_{i,\tau_k}$ lower bounds the information available for treatment $i$ (i.e., per the action count at the start of the epoch). We connect $({\cdot})^\circ_t$ values to $({\cdot})_t$ values by noting that at the end of step $t$, Algorithm~\ref{alg:ucb_epoch} triggers a new epoch beginning at $t+1$ (i.e., $\tau_{k+1}=t+1$) if $N_{i,t+1} > (1+C_N)N_{i,\tau_k}$ for any $i=1,\dots,M$. Thus for time steps $\tau_k \leq t <\tau_{k+1}$ within epoch $k$ the trigger has not fired, so $N_{i,t} \leq (1+C_N)N_{i,\tau_k}$ holds. This implies (after using loosening for bound clarity with $\lambda_2(1+C_N)\geq \lambda_2$): $\alpha_{i,t}^{\mu,\circ}(\delta) \leq \frac{\nu_T(\delta)\sqrt{1+C_N}}{\sqrt{\lambda_2 + \underline qN_{i,t}}}$. Now we introduce a count based indexing. Let $t_i^j$ be the time step where action $i$ is taken for the $j^{th}$ time. Observe that $N_{i,t_i^j}=j-1$ (i.e., the action has previously been taken $j-1$ times if it is taken the $j^{th}$ time in the current time step). Thus we can re-index with $j$ rather than $t$ as follows:
\begin{align}
    \textstyle\sum\nolimits_{t=1}^T\sum\nolimits_{i=1}^Mu^i_t \alpha_{i,t}^{\mu,\circ}(\delta) &\leq \textstyle\sum\nolimits_{i=1}^M\sum\nolimits_{t=1}^Tu_t^i\tfrac{\nu_T(\delta)\sqrt{1+C_N}}{\sqrt{\lambda_2+\underline q N_{i,t}}}
    \overset{(a)}{=} \textstyle\sum\nolimits_{i=1}^M\sum\nolimits_{j=1}^{N_{i,T+1}}\tfrac{\nu_T(\delta)\sqrt{1+C_N}}{\sqrt{\lambda_2+\underline q (j-1)}}
\end{align}
where $(a)$ uses the fact that $u^i_t\in\{0,1\}$ ensures only the $N_{i,T+1}$ terms given by $t_i^1,\dots,t^{N_{i,T+1}}_i$ are included in the summation and $N_{i,t_i^j}=j-1$. Since $\nu_T(\delta)(\sqrt{1+C_N})$ factors out of the summation and summands $(\lambda_2 +\underline q(j-1))^{-\nicefrac{1}{2}}$ are decreasing in $j$, we have the following integral upper bound:
\begin{align}
    \textstyle\sum\nolimits_{j=1}^{N_{i,T+1}}\tfrac{1}{\sqrt{\lambda_2+\underline q (j-1)}} \leq \tfrac{1}{\sqrt{\lambda_2}}+\int\nolimits_{1}^{N_{i,T+1}}\tfrac{1}{\sqrt{\lambda_2 +\underline q(j-1)}}dj
    =\tfrac{1}{\sqrt{\lambda_2}}+\tfrac{2}{\underline q}[\sqrt{\lambda_2 +\underline q (N_{i,T+1}-1)} - \sqrt{\lambda_2} ]
\end{align}
Using the subadditivity of square roots and $N_{i,T+1}-1 \leq N_{i,T+1}$ we simplify to the more compact upper bound $\frac{1}{\sqrt{\lambda_2}}+\frac{2\sqrt{N_{i,T+1}}}{\sqrt{\underline q}}$. Substituting back and applying Cauchy-Schwarz we have:
\begin{align}
    \textstyle\sum\nolimits_{i=1}^M [\tfrac{1}{\sqrt{\lambda_2}}+\tfrac{2\sqrt{N_{i,T+1}}}{\sqrt{\underline q}} ] \leq \tfrac{M}{\sqrt{\lambda_2}} + \sqrt{\sum\nolimits_{i=1}^M (1)^2}\sqrt{\sum\nolimits_{i=1}^M (\tfrac{2\sqrt{N_{i,T+1}}}{\sqrt{\underline q}} )^2} \leq \tfrac{M}{\sqrt{\lambda_2}}+2\sqrt{\tfrac{MT}{\underline q}}
\end{align}
The lemma follows by substitution. As $\underline q$ is a uniform lower bound and only $\sum\nolimits_{i=1}^MN_{i,T+1}\leq T$ was used, the bound holds for any realized trajectory.
\halmos
\endproof

\begin{lemma}{}\label{lem:bound_ii}
Let $\mac E_{\theta}(\nicefrac{\delta}{2})$ and $\mac E_{\mu}(\nicefrac{\delta}{2})$ hold. Then $\tilde R_{T+1} - \tilde R_1\leq C_J+C_{\tilde J,T+1,\delta}$.
\end{lemma}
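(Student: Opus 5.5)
We bound the two terms separately, using only the boundedness facts in Lemma~\ref{lem:unique_opt} and the optimism of Lemma~\ref{lem:valid_optimism}. Recall $\tilde R_t = \tilde J^\circ_{t,\delta}(x_t) - J^{\pi_t}(x_t)$, so
\[
\tilde R_{T+1} - \tilde R_1 = \tilde J^\circ_{T+1,\delta}(x_{T+1}) - J^{\pi_{T+1}}(x_{T+1}) - \tilde J^\circ_{1,\delta}(x_1) + J^{\pi_1}(x_1).
\]
The plan is to upper bound the positive pieces by their sup-norm bounds and to lower bound $\tilde R_1$ by zero.

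\textbf{The positive pieces.} For $\tilde R_{T+1}$, the operator $\tilde{\mac T}^\circ_{T+1,\delta}$ coincides with $\tilde{\mac T}_{\tau_k,\delta}$ for some epoch start $\tau_k\le T+1$. Its bonus is bounded by $C_b(T+1,\delta)$, since Lemma~\ref{lem:bounded_bonus} applies verbatim with $T$ replaced by $T+1$ because $\alpha^\theta_t$ is monotone in $t$. Hence $\tilde J^\circ_{T+1,\delta}(x_{T+1}) \le C_{\tilde J,T+1,\delta}$ by Lemma~\ref{lem:unique_opt}; we use $x_{T+1}\in\mac X$ by Proposition~\ref{prop:absolute_bounded_x}. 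For the policy value, the rewards satisfy $r\ge -\beta$, so $J^{\pi_{T+1}}(x_{T+1}) \ge -\beta/(1-\gamma) \ge -C_J$, giving $-J^{\pi_{T+1}}(x_{T+1})\le C_J$. Together these yield $\tilde R_{T+1} \le C_{\tilde J,T+1,\delta} + C_J$.

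\textbf{The negative piece.} It remains to show $\tilde R_1 \ge 0$, so that $-\tilde R_1\le 0$. On $\mac E_\theta(\nicefrac{\delta}{2})\cap\mac E_\mu(\nicefrac{\delta}{2})$, Lemma~\ref{lem:valid_optimism} gives $\tilde J^\circ_{1,\delta}(x_1)\ge J^*(x_1)$. Optimality of $J^*$ (Lemma~\ref{lem:unique_opt}) gives $J^*(x_1)\ge J^{\pi_1}(x_1)$ for the stationary policy $\pi_1$. Combining, $\tilde R_1\ge 0$, and the claim follows.

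\textbf{Main obstacle.} The only delicate point is bookkeeping: we must confirm that the boundedness constant for the $T+1$ index is legitimate, i.e.\ that Lemma~\ref{lem:unique_opt} and Lemma~\ref{lem:bounded_bonus} extend to $t=T+1$ with $C_b(T+1,\delta)$. This follows by the same argument with $T$ replaced by $T+1$. If one prefers to avoid invoking optimism at $t=1$, a cruder fallback is $-\tilde R_1\le C_J$, obtained from $\tilde J\ge$ the bonus-free value, but this would double the $C_J$ term. The optimism route gives exactly the stated constant.
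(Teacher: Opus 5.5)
Your proof is correct and follows the same route as the paper. You use optimism (Lemma~\ref{lem:valid_optimism}) together with optimality of $J^*$ to get $\tilde R_1 \geq 0$, and then bound $\tilde R_{T+1}$ by the sup-norm bounds $C_{\tilde J,T+1,\delta}$ and $C_J$ from Lemma~\ref{lem:unique_opt}. Your extra bookkeeping on extending the bonus bound to index $T+1$ and on $J^{\pi_{T+1}} \geq -\beta/(1-\gamma)$ only makes explicit what the paper uses implicitly.
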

\proof{Proof of Lemma~\ref{lem:bound_ii}}
As $J^*(x)\geq J^{\pi_t}(x)$ by the optimality of $J^*$, the $\tilde R$ terms are positive by chaining this inequality with optimism Lemma~\ref{lem:valid_optimism} ($\mac E_{\theta}(\nicefrac{\delta}{2})$ and $\mac E_{\mu}(\nicefrac{\delta}{2})$ hold by assumption), so it is sufficient to bound $\tilde R_{T+1}$. By definition, $\tilde R_{T+1}=\tilde J^\circ_{T+1,\delta}(x_{T+1})- J^{\pi_{T+1}}(x_{T+1})$. Both value functions are bounded by Lemma~\ref{lem:unique_opt}, i.e. $\Vert J^{\pi_{T+1}}\Vert_\infty \leq C_J$ and $\Vert \tilde J^{\circ}_{T+1,\delta}\Vert_\infty \leq C_{\tilde J,T+1,\delta}$. Thus $\tilde R_{T+1} - \tilde R_1 \leq \tilde R_{T+1}\leq \max_{x \in \mac X}\tilde J^{\circ}_{T+1,\delta}(x) - \min_{x \in \mac X}J^{\pi_{T+1}}(x) \leq C_J +C_{\tilde J,T+1,\delta}$.\halmos
\endproof

\begin{lemma}{}\label{lem:bound_iii}
Let $\mac E_\mu(\nicefrac{\delta}{2})$ hold. Then $\sum_{t=1}^T(\ex_{\theta^*,\hat\mu_t^\circ}[\tilde J^{\circ}_{t,\delta}(f(x_t,u_t,d,w) ] - \ex_{\theta^*,\mu^*}[\tilde J^{\circ}_{t,\delta}(f(x_t,u_t,d,w)) ]) \leq \nicefrac{1}{4}L_2 \bar c \nu_T(\nicefrac{\delta}{2})\sqrt{1+C_N}[\nicefrac{M}{\sqrt{\lambda_2}}+2\sqrt{\nicefrac{MT}{\underline q}} ]$.
\end{lemma}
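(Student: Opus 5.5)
The plan is to bound each summand in the same way as the analogous term in the proof of Lemma~\ref{lem:valid_optimism}, and then to sum the per-step bounds using Lemma~\ref{lem:mu_sums}.

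Fix $t$ and write $u_t$ for the action taken. If $u_t=\ve{0}_M$, the adherence outcome is $d=\ve{0}_M$ almost surely under both $\hat\mu_t^\circ$ and $\mu^*$, so the two expectations coincide and the summand is zero.

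Now suppose $u_t=e_i$. Expanding the Bernoulli adherence outcome (with the same $\theta^*$ and the same noise in both terms) gives
\begin{align}
&\ex_{\theta^*,\hat\mu_t^\circ}[\tilde J^{\circ}_{t,\delta}(f(\cdot))]-\ex_{\theta^*,\mu^*}[\tilde J^{\circ}_{t,\delta}(f(\cdot))]\nonumber\\
&\quad=\bigl(\sigma(x_t+\hat\mu^\circ_{i,t})-\sigma(x_t+\mu^*_i)\bigr)\,\ex_w\bigl[\tilde J^\circ_{t,\delta}(z_1^\top\theta^*+w)-\tilde J^\circ_{t,\delta}(z_0^\top\theta^*+w)\bigr],
\end{align}
where $z_1=[x_t \ e_i^\top \ e_i^\top]^\top$ and $z_0=[x_t \ e_i^\top \ \ve{0}_M^\top]^\top$.

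The inner difference is controlled by the $L_2$-Lipschitz continuity of $\tilde J^\circ_{t,\delta}$ (Lemma~\ref{lem:tj_xlip}, which applies because $\tilde J^\circ_{t,\delta}$ equals some $\tilde J_{\tau_k,\delta}$ with $\tau_k\le T$). Since $(z_1-z_0)^\top\theta^*=c^*_i$, this difference is at most $L_2\bar c$ in absolute value. One subtlety: $z^\top\theta^*+w$ could lie outside $\mac X$. By Proposition~\ref{prop:absolute_bounded_x}, however, next states under $\theta^*$ always lie in $\mac X$, so the Lipschitz bound on $\mac X$ suffices.

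The sigmoid factor is at most $\tfrac14|\hat\mu^\circ_{i,t}-\mu^*_i|$ by the sigmoid's $\nicefrac14$-Lipschitz continuity. On $\mac E_\mu(\nicefrac{\delta}{2})$, Proposition~\ref{prop:mu_confidence_set} bounds this by $\tfrac14\alpha^{\mu,\circ}_{i,t}(\nicefrac{\delta}{2})$. Applying it requires the confidence set to hold at the epoch start $\tau_k$, which the event guarantees for all $t$.

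Combining the two factors, each summand is at most $\tfrac14 L_2\bar c\sum_i u_t^i\alpha^{\mu,\circ}_{i,t}(\nicefrac{\delta}{2})$. Summing over $t$ and applying Lemma~\ref{lem:mu_sums} with $\nicefrac{\delta}{2}$ yields the claimed bound $\nicefrac14 L_2\bar c\,\nu_T(\nicefrac{\delta}{2})\sqrt{1+C_N}[\nicefrac{M}{\sqrt{\lambda_2}}+2\sqrt{\nicefrac{MT}{\underline q}}]$.

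The main point requiring care is the Lipschitz step. I must confirm that the $L_2$ in Lemma~\ref{lem:tj_xlip} is uniform over the epoch-indexed $\tilde J^\circ$, which holds since its constants use $\alpha_T^\theta$ and $\overline\alpha^\mu$. I must also confirm that both arguments $z_1^\top\theta^*+w$ and $z_0^\top\theta^*+w$ lie in $\mac X$, which follows because each is a valid next state of the true system.
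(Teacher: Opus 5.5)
Your proposal is correct and follows the paper's proof essentially step for step. It expands the Bernoulli adherence outcome, bounds the sigmoid gap by $\tfrac14|\hat\mu^\circ_{i,t}-\mu^*_i|\le\tfrac14\alpha^{\mu,\circ}_{i,t}(\nicefrac{\delta}{2})$ and the value gap by $L_2|c^*_i|\le L_2\bar c$ via Lemma~\ref{lem:tj_xlip}, then sums with Lemma~\ref{lem:mu_sums}; your extra check that both $z_1^\top\theta^*+w$ and $z_0^\top\theta^*+w$ lie in $\mac X$ is a detail the paper leaves implicit.
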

\proof{Proof of Lemma~\ref{lem:bound_iii}}
Let $\mac E_\mu(\nicefrac{\delta}{2})$ hold. We use a similar simplification to Lemma~\ref{lem:valid_optimism}. Let $z_1:=[x_t \ u_t^\top \ u_t^\top]^\top$ and $z_0:=[x_t \ u_t^\top \ \ve{0}_M^\top]^\top$ denote the possible state vectors under adherence and non-adherence. Consider a single time step and let $u_t=e_i$ (the difference is trivially $0$ for any time step with $u_t=\ve{0}_M$), with $\Delta := \ex_{\theta^*,\hat\mu_t^\circ}[\tilde J^{\circ}_{t,\delta}(f(x_t,u_t,d,w) ] - \ex_{\theta^*,\mu^*}[\tilde J^{\circ}_{t,\delta}(f(x_t,u_t,d,w)) ]$. We have:
\begin{align}
    \Delta &= \ex_{\hat \mu^\circ_t}[\tilde J^\circ_{t,\delta}(z^\top\theta^*+w) \vert x_t,u_t]-\ex_{\mu^*}[\tilde J^\circ_{t,\delta}(z^\top \theta^*+w) \vert x_t,u_t]\\
    &\overset{(a)}{\leq} |\sigma(x_t+\hat \mu_{i,t}^\circ) -\sigma(x_t+\mu_i^*)|\cdot\ex[|\tilde J^\circ_{t,\delta}(z_1^\top \theta^* +w)- \tilde J_{t,\delta}^\circ(z_0^\top \theta^* +w) |]\\
    &\overset{(b)}{\leq} \nicefrac{1}{4}\vert \hat \mu_{i,t}^\circ - \mu_i^*\vert \cdot L_2 \vert (z_1-z_0)^\top \theta^*\vert \overset{(c)}{\leq} \nicefrac{1}{4}\alpha_{i,t}^{\mu,\circ}(\nicefrac{\delta}{2})\cdot L_2 \bar c
\end{align}
$(a)$ upper bounds with absolute values, uses Jensen's inequality, expands adherence outcomes, and simplifies. $(b)$ uses Lipschitz continuity of the sigmoid and $\tilde J_{t,\delta}$ (Lemma~\ref{lem:tj_xlip}). $(c)$ uses Proposition~\ref{prop:mu_confidence_set} and $c_i \leq \bar c$ as $\theta^*\in\Theta$. Thus the full summation is upper bounded by $\frac{L_2 \bar c}{4}\sum_{t=1}^T\sum_{i=1}^Mu_t^i\alpha^{\mu,\circ}_{i,t}(\nicefrac{\delta}{2})$. As $\mac E_{\mu}(\nicefrac{\delta}{2})$ holds, we apply Lemma~\ref{lem:mu_sums} for the given $C_N$ and the lemma follows.\halmos
\endproof

\begin{lemma}{}\label{lem:bound_iv}
Let $\delta'\in(0,1)$ and $\mac E_{\theta}(\nicefrac{\delta}{2})$,$\mac E_{\mu}(\nicefrac{\delta}{2})$ hold. Then $\pr(\sum_{t=1}^T\big(\ex_{\hat \theta_t^\circ,\hat\mu_t^\circ}\big[\tilde J^{\circ}_{t,\delta}(f(\cdot)) \big] - \ex_{\theta^*,\hat\mu_t^\circ}\big[\tilde J^{\circ}_{t,\delta}(f(\cdot)) \big] +L_1\alpha_t^{\theta,\circ}(\nicefrac{\delta}{2})\ex_{\hat \mu_t^\circ}\big[ \Vert z\Vert_{(\bar V^\circ_t)^{-1}}\vert x_t,u_t\big]\big) \leq (L_1+L_2)\alpha_T^\theta(\nicefrac{\delta}{2}) \left[C_1+C_2+C_3\right])\geq 1-\delta'$.
\end{lemma}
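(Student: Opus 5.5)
The plan is to bound each summand by $(L_1+L_2)\alpha^{\theta,\circ}_t(\nicefrac{\delta}{2})\,\ex_{\hat\mu^\circ_t}[\Vert z\Vert_{(\bar V^\circ_t)^{-1}}\mid x_t,u_t]$, swap the expectation over the surrogate adherence for the realized feature $z_t$ at the cost of a martingale term and a $\mu$-mismatch term, and then sum the realized elliptical potentials in the standard way.

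\textbf{Step 1 (per-step bound).} With $u_t$ fixed, both expectations in the first difference share the adherence law $\hat\mu^\circ_t$ and the noise $w$. They differ only in the linear predictor, $z^\top\hat\theta^\circ_t$ versus $z^\top\theta^*$. Lemma~\ref{lem:tj_xlip} gives that $\tilde J^\circ_{t,\delta}$ is $L_2$-Lipschitz, and Cauchy--Schwarz in the $\bar V^\circ_t$ geometry gives $|z^\top(\hat\theta^\circ_t-\theta^*)|\le \Vert\hat\theta^\circ_t-\theta^*\Vert_{\bar V^\circ_t}\Vert z\Vert_{(\bar V^\circ_t)^{-1}}$. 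On $\mac E_\theta(\nicefrac{\delta}{2})$ the first factor is at most $\alpha^{\theta,\circ}_t(\nicefrac{\delta}{2})\le\alpha^\theta_T(\nicefrac{\delta}{2})$, using Proposition~\ref{prop:theta_confidence_set} applied at the epoch start and monotonicity in $t$. The summand is then at most $(L_1+L_2)\alpha^\theta_T(\nicefrac{\delta}{2})\,\ex_{\hat\mu^\circ_t}[\Vert z\Vert_{(\bar V^\circ_t)^{-1}}\mid x_t,u_t]$. What remains is to show $\sum_t \ex_{\hat\mu^\circ_t}[\Vert z\Vert_{(\bar V^\circ_t)^{-1}}]\le C_1+C_2+C_3$.

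\textbf{Step 2 (three-way split).} Write
\begin{align*}
\ex_{\hat\mu^\circ_t}[\Vert z\Vert_{(\bar V^\circ_t)^{-1}}]
&= \bigl(\ex_{\hat\mu^\circ_t}[\cdot]-\ex_{\mu^*}[\cdot]\bigr)
+\bigl(\ex_{\mu^*}[\Vert z\Vert_{(\bar V^\circ_t)^{-1}}]-\Vert z_t\Vert_{(\bar V^\circ_t)^{-1}}\bigr)
+\Vert z_t\Vert_{(\bar V^\circ_t)^{-1}}.
\end{align*}

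\emph{First piece ($C_1$).} If $u_t=\ve 0_M$ this term is zero. If $u_t=e_i$, it equals $(\sigma(x_t+\hat\mu^\circ_{i,t})-\sigma(x_t+\mu^*_i))(\Vert z_1\Vert-\Vert z_0\Vert)$. The first factor is at most $\tfrac14\alpha^{\mu,\circ}_{i,t}(\nicefrac{\delta}{2})$ on $\mac E_\mu(\nicefrac{\delta}{2})$. The second is at most $\Vert z_1-z_0\Vert_{(\bar V^\circ_t)^{-1}}\le 1/\sqrt{\lambda_1}$. Summing and applying Lemma~\ref{lem:mu_sums} yields exactly $C_1$.

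\emph{Second piece ($C_2$).} This is a martingale difference adapted to $\mac F^d_t$: $\bar V^\circ_t$ is fixed at the epoch start and $z_t$ depends only on $d_t$ given the past. Each increment is bounded by $1/\sqrt{\lambda_1}$, so Azuma--Hoeffding with probability $1-\delta'$ gives $\sqrt{2T\log(1/\delta')/\lambda_1}=C_2$.

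\emph{Third piece ($C_3$).} The main obstacle is that $\bar V^\circ_t$ is stale, frozen at the epoch start. The determinant trigger gives $\det\bar V_t\le(1+C_d)\det\bar V^\circ_t$, hence $(\bar V^\circ_t)^{-1}\preceq(1+C_d)\bar V_t^{-1}$ via the standard determinant-ratio lemma. Cauchy--Schwarz then gives $\sum_t\Vert z_t\Vert_{(\bar V^\circ_t)^{-1}}\le\sqrt{T(1+C_d)\sum_t\Vert z_t\Vert^2_{\bar V_t^{-1}}}$. Since $\Vert z_t\Vert^2_{\bar V_t^{-1}}$ may exceed $1$, I would use $\min(1,y)\le 2\log(1+y)$ and, when $y>1$, $y\le \tfrac{C_x^2+2}{\lambda_1}\cdot\tfrac{\log(1+y)}{\log 2}$. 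This gives $\sum_t\Vert z_t\Vert^2_{\bar V_t^{-1}}\le[2+\tfrac{C_x^2+2}{\log(2)\lambda_1}]\log\tfrac{\det\bar V_{T+1}}{\det\lambda_1 I}$. The elliptical potential lemma with Proposition~\ref{prop:absolute_bounded_x} bounds the log-determinant by $(2M+1)\log(1+T(C_x^2+2)/((2M+1)\lambda_1))$, which produces $C_3$.

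Combining the three pieces proves the statement with probability at least $1-\delta'$.
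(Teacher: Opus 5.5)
Your proposal is correct and follows the paper's proof essentially step for step. The shared steps are the $L_2$-Lipschitz/Cauchy--Schwarz bound on the $\hat\theta^\circ_t$ versus $\theta^*$ difference, and the same three-way split: a $\mu$-mismatch term (via Lemma~\ref{lem:mu_sums}, giving $C_1$), an Azuma--Hoeffding martingale term ($C_2$), and the realized stale-norm sum handled with Lemma~\ref{lem:old_norm} and Cauchy--Schwarz ($C_3$). The only cosmetic difference is that you get the elliptical-potential bound from the pointwise inequality $y\le[2+\tfrac{C_x^2+2}{\log(2)\lambda_1}]\log(1+y)$, whereas the paper splits into the index sets $\mac I_g,\mac I_b$ in Lemma~\ref{lem:epl_bound}; both give the same constant.
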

\proof{Proof of Lemma~\ref{lem:bound_iv}}
Let $(iv)_1=\sum_{t=1}^T(\ex_{\hat \theta_t^\circ,\hat\mu_t^\circ}[\tilde J^{\circ}_{t,\delta}(f(x_t,u_t,d,w)) ] - \ex_{\theta^*,\hat\mu_t^\circ}[\tilde J^{\circ}_{t,\delta}(f(x_t,u_t,d,w)) ])$ and $(iv)_2$ denote the remaining summation. We have: 
\begin{align}
    (iv)_1 & \overset{(a)}{\leq} \textstyle\sum_{t=1}^T \ex_{\hat \mu_t^\circ} [L_2  \vert \hat \theta^{\circ\top}_t z_t +w - \theta^{*\top} z_t - w  \vert \vert x_t,u_t ]
    = L_2\textstyle\sum_{t=1}^T\ex_{\hat \mu^\circ_t}[ \vert(\hat \theta^{\circ}_t - \theta^*)^\top z_t \vert \vert x_t,u_t]\\
    &\overset{(b)}{\leq} L_2\textstyle\sum\limits_{t=1}^T\ex_{\hat \mu_t^\circ}[ \Vert \hat \theta^{\circ}_t - \theta^*  \Vert_{\bar V^{\circ}_t}  \Vert z_t  \Vert_{ (\bar V^{\circ}_t)^{-1}} \vert x_t,u_t]
    \overset{(c)}{\leq} L_2 \textstyle\sum\limits_{t=1}^T\alpha^{\theta,\circ}_t(\nicefrac{\delta}{2})\ex_{\hat \mu_t^\circ}[ \Vert z_t  \Vert_{(\bar V^{\circ}_t)^{-1}} \vert x_t,u_t]
\end{align}
$(a)$ expands $f(\cdot)$ to remove $\theta$ from the expectation and uses the Lipschitz continuity of $\tilde J_{t,\delta}$ (Lemma~\ref{lem:tj_xlip}). $(b)$ uses Cauchy-Schwarz and $(c)$ uses Proposition~\ref{prop:theta_confidence_set} ($\mac E_\theta(\nicefrac{\delta}{2})$ holds by assumption). Note that $\alpha^{\theta,\circ}_t(\cdot)\leq \alpha^\theta_T(\cdot)$. Thus $(iv):=(iv)_1+(iv)_2\leq (L_1+L_2)\alpha_T^\theta(\nicefrac{\delta}{2})\sum_{t=1}^T\ex_{\hat \mu_t^\circ}[\Vert z_t\Vert_{(\bar V^{\circ}_t)^{-1}}\vert x_t,u_t]$. Define $\xi_t := \ex_{\mu^*}\big[  \Vert z_t  \Vert_{(\bar V^{\circ}_t)^{-1}} \vert x_t,u_t\big] -  \Vert z_t  \Vert_{(\bar V^{\circ}_t)^{-1}}$. We add and subtract $(L_1+L_2)\alpha_{T}^\theta(\nicefrac{\delta}{2})\sum_{t=1}^T\xi_t$:
\begin{align}
    (iv) &\leq (L_1+L_2)\alpha^\theta_T(\nicefrac{\delta}{2})\textstyle\sum\nolimits_{t=1}^T[
    \underbrace{\ex_{\hat \mu^\circ_t}[\Vert z_t \Vert_{(\bar V^{\circ}_t)^{-1}}\vert x_t,u_t ]-\ex_{\mu^*}[ \Vert z_t \Vert_{(\bar V^{\circ}_t)^{-1}}\vert x_t,u_t]}_{(a)_t}\nonumber\\
    &\qquad +\underbrace{\ex_{\mu^*}[ \Vert z_t \Vert_{(\bar V^{\circ}_t)^{-1}}\vert x_t,u_t]-\Vert z_t \Vert_{(\bar V^{\circ}_t)^{-1}}}_{(b)_t}+\underbrace{\Vert z_t \Vert_{(\bar V^{\circ}_t)^{-1}}}_{(c)_t} ]
\end{align}
Note that $\vert(a)_t\vert$ is trivially $0$ if $u_t=\ve{0}_M$, so assume $u_t=e_i$. Let $z_1:=[x_t \ u_t^\top \ u_t^\top]^\top$ and $z_0:=[x_t \ u_t^\top \ \ve{0}_M^\top]^\top$ denote the possible state vectors under adherence and non-adherence. Then by expanding the expectations and simplifying we have $\vert (a)_t\vert \leq |\sigma(x_t+\hat \mu_{i,t}^\circ)-\sigma(x_t+\mu_i^*)|\cdot\lvert\Vert z_1 \Vert_{(\bar V^{\circ}_t)^{-1}} - \Vert z_0 \Vert_{(\bar V^{\circ}_t)^{-1}} \rvert\leq \nicefrac{1}{4}\vert \hat \mu_{i,t}^\circ - \mu_i^*\vert\cdot \Vert z_1 -z_0\Vert_{(\bar V^{\circ}_t)^{-1}}$ (using the Lipschitz continuity of the sigmoid and the reverse triangle inequality). Note that $z_1-z_0=[0 \ \ve{0}_M^\top \ e_i^\top]^\top$ and $(\bar V^{\circ}_t)^{-1}\preceq \bar V_0^{-1}=(\nicefrac{1}{\lambda_1})I_{2M+1}$, so $\Vert z_1 -z_0\Vert_{(\bar V^{\circ}_t)^{-1}} \leq \nicefrac{1}{\sqrt{\lambda_1}}$. Thus by Proposition~\ref{prop:mu_confidence_set}, $\vert(a)_t\vert \leq \frac{1}{4\sqrt{\lambda_1}}\alpha_{i,t}^{\mu,\circ}(\nicefrac{\delta}{2})$. Thus $\vert \sum_{t=1}^T(a)_t\vert \leq \frac{1}{4\sqrt{\lambda_1}}\sum_{t=1}^T\sum_{i=1}^Mu_t^i\alpha_{i,t}^{\mu,\circ}(\nicefrac{\delta}{2})$. Applying Lemma~\ref{lem:mu_sums} with the given $C_N$ gives $\vert \sum_{t=1}^T(a)_t \vert \leq \frac{\nu_T(\nicefrac{\delta}{2})\sqrt{1+C_N}}{4\sqrt{\lambda_1}}[\nicefrac{M}{\sqrt{\lambda_2}}+2\sqrt{\nicefrac{MT}{\underline q}}]$. Next, observe that $(b)_t=\xi_t$ and that $\{\xi_t\}$ is a $\{\mac F_t\}$-adapted martingale difference sequence (with $\vert \xi_t\vert \leq \nicefrac{1}{\sqrt{\lambda_1}}$ by the same logic used to bound $\Vert z_1 -z_0\Vert_{(\bar V^{\circ}_t)^{-1}}$). We apply Azuma-Hoeffding \citep{wainwrightHighdimensionalStatisticsNonasymptotic2019} to $\sum_{t=1}^T\xi_t$. For some $\delta'\in(0,1)$ we have that $\pr(\sum_{t=1}^T\xi_t <
\sqrt{2\log(\nicefrac{1}{\delta'})\frac{T}{\lambda_1}})\geq 1-\delta'$. For $(c)_t$: 
\begin{align}
    \textstyle\sum\nolimits_{t=1}^T(c)_t &
    \overset{(i)}{\leq} \sqrt{\textstyle\sum\nolimits_{t=1}^T1^2}\sqrt{\textstyle\sum\nolimits_{t=1}^T\Vert z_t \Vert^2_{(\bar V_t^\circ)^{-1}}} \overset{(ii)}{\leq} \sqrt{T}\sqrt{(1+C_d)\textstyle\sum\nolimits_{t=1}^T\Vert z_t \Vert^2_{(\bar V_t)^{-1}}}
\end{align}
$(i)$ uses Cauchy-Schwarz. $(ii)$ uses Lemma~\ref{lem:old_norm} to convert $({\bar V}^\circ_t)^{-1}$ to $({\bar V}_t)^{-1}$ per $C_d$. Lemma~\ref{lem:epl_bound} bounds $\Vert z_t \Vert^2_{(\bar V_t)^{-1}}$ and the lemma follows by substituting the $(a)_t$, $(b)_t$, $(c)_t$ summation bounds.\halmos
\endproof

\begin{lemma}{\textbf{(\textcite{abbasi-yadkoriImprovedAlgorithmsLinear2011} Lemma 11)}}\label{lem:ay_lem11}
Let $\{X_t \}_{t=1}^\infty$ be a sequence in $\R^d$, $V$ be a $d\times d$ positive definite matrix, and $\bar V_t:= V + \sum_{s=1}^t X_s X_s^\top$. If $ \Vert X_t \Vert_2 \leq L$ for all $t$ then $\sum_{t=1}^n\min\{1, \Vert X_t  \Vert_{\bar V_{t-1}^{-1}}^2 \}\leq 2 \log(\frac{\det(\bar V_n)}{\det (V)} )\leq 2(d\log( \frac{\trace(V)+nL^2}{d})-\log \det (V))$.
\end{lemma}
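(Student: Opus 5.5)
The plan is the standard elliptical potential argument: turn the sum into a telescoping product of determinants, then bound the final determinant through its trace.

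\emph{Step 1 (determinant recursion).} Since $\bar V_t = \bar V_{t-1} + X_tX_t^\top$ with $\bar V_{t-1}\succ 0$, write $\bar V_t = \bar V_{t-1}^{1/2}\bigl(I + \bar V_{t-1}^{-1/2}X_tX_t^\top \bar V_{t-1}^{-1/2}\bigr)\bar V_{t-1}^{1/2}$. The middle matrix has eigenvalue $1+\Vert X_t\Vert^2_{\bar V_{t-1}^{-1}}$ along the direction $\bar V_{t-1}^{-1/2}X_t$, and eigenvalue $1$ on its orthogonal complement. This is the matrix determinant lemma, and it gives
\[
\det(\bar V_t) = \det(\bar V_{t-1})\bigl(1+\Vert X_t\Vert^2_{\bar V_{t-1}^{-1}}\bigr).
\]
Iterating from $\bar V_0 = V$ yields
\[
\log\frac{\det(\bar V_n)}{\det(V)} = \sum_{t=1}^n \log\bigl(1+\Vert X_t\Vert^2_{\bar V_{t-1}^{-1}}\bigr).
\]

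\emph{Step 2 (scalar inequality).} I claim $\min\{1,u\}\le 2\log(1+u)$ for all $u\ge 0$. On $[0,1]$, $\log(1+u)$ is concave and vanishes at $0$, so it lies above its chord and $\log(1+u)\ge u\log 2\ge u/2$. For $u>1$, $2\log(1+u)\ge 2\log 2>1$. Applying this termwise with $u=\Vert X_t\Vert^2_{\bar V_{t-1}^{-1}}$ and using Step 1 gives the first inequality,
\[
\sum_{t=1}^n\min\{1,\Vert X_t\Vert^2_{\bar V_{t-1}^{-1}}\}\le 2\log\bigl(\det(\bar V_n)/\det(V)\bigr).
\]

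\emph{Step 3 (trace bound).} Let $\lambda_1,\dots,\lambda_d>0$ be the eigenvalues of $\bar V_n$. By AM--GM, $\det(\bar V_n)=\prod_j\lambda_j\le\bigl(\trace(\bar V_n)/d\bigr)^d$. By linearity of trace, $\trace(\bar V_n)=\trace(V)+\sum_{t=1}^n\Vert X_t\Vert_2^2\le \trace(V)+nL^2$. Taking logarithms gives $\log\det(\bar V_n)\le d\log\bigl((\trace(V)+nL^2)/d\bigr)$. Subtracting $\log\det(V)$ and multiplying by $2$ yields the second inequality.

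No step is a serious obstacle. The only point needing care is Step 2: the $\min\{1,\cdot\}$ truncation is exactly what makes the comparison with $\log(1+u)$ hold with the constant $2$, since $u\le 2\log(1+u)$ fails for large $u$.
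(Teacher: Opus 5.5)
Your proof is correct. The paper gives no proof of its own and simply imports this result from Abbasi-Yadkori et al.\ (2011). Your argument is essentially theirs: the matrix determinant lemma to telescope $\log(\det(\bar V_n)/\det(V))$, the scalar bound $\min\{1,u\}\le 2\log(1+u)$, and AM--GM on the eigenvalues for the determinant--trace bound.
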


\begin{lemma}{}\label{lem:epl_bound}
$\sum_{t=1}^T \Vert z_t  \Vert^2_{\bar V_t^{-1}} \leq (2M+1)\log (1+\frac{T(C_x^2+2)}{(2M+1)\lambda_1} ) [2 +\frac{C_x^2+2}{\log(2)\lambda_1}]$.
\end{lemma}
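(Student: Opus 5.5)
The plan is to apply Lemma~\ref{lem:ay_lem11} after splitting each summand according to whether it exceeds one, since that lemma only controls $\sum_t\min\{1,\Vert z_t\Vert^2_{\bar V_{t-1}^{-1}}\}$. First we align indexing. Here $\bar V_t=\lambda_1 I_{2M+1}+\sum_{s=1}^{t-1}z_sz_s^\top$, which is exactly the matrix ``$\bar V_{t-1}$'' of Lemma~\ref{lem:ay_lem11} with $V=\lambda_1 I_{2M+1}$, $X_s=z_s$, $d=2M+1$ and $L^2=C_x^2+2$ (Proposition~\ref{prop:absolute_bounded_x}). Lemma~\ref{lem:ay_lem11} then gives
\[\textstyle\sum_{t=1}^T\min\{1,\Vert z_t\Vert^2_{\bar V_t^{-1}}\}\le 2\log\tfrac{\det\bar V_{T+1}}{\det(\lambda_1 I)}\le 2(2M+1)\log\bigl(1+\tfrac{T(C_x^2+2)}{(2M+1)\lambda_1}\bigr).\]
The second inequality comes from the trace form: $\trace(V)=(2M+1)\lambda_1$, and subtracting $\log\det V=(2M+1)\log\lambda_1$ yields the displayed expression.

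Next we handle the steps where $\Vert z_t\Vert^2_{\bar V_t^{-1}}>1$. On any step we have $\Vert z_t\Vert^2_{\bar V_t^{-1}}\le \Vert z_t\Vert_2^2/\lambda_1\le (C_x^2+2)/\lambda_1$. So it suffices to bound the number $n$ of such ``large'' steps by $\tfrac{2M+1}{\log 2}\log(1+\tfrac{T(C_x^2+2)}{(2M+1)\lambda_1})$. Combining this count with the uniform per-step bound gives the second term $\tfrac{C_x^2+2}{\log(2)\lambda_1}(2M+1)\log(\cdot)$, and adding the $\min$-sum from the first step gives the claim.

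To count the large steps we use the determinant update $\det\bar V_{t+1}=\det\bar V_t\,(1+\Vert z_t\Vert^2_{\bar V_t^{-1}})$. Each large step therefore at least doubles the determinant, and no step decreases it. Hence $2^n\le \det\bar V_{T+1}/\det(\lambda_1 I)$. Taking logs and bounding $\log\det\bar V_{T+1}$ by AM--GM on the eigenvalues, with $\trace\bar V_{T+1}\le(2M+1)\lambda_1+T(C_x^2+2)$, gives $n\log 2\le(2M+1)\log(1+\tfrac{T(C_x^2+2)}{(2M+1)\lambda_1})$.

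The only delicate point is the off-by-one alignment between the paper's $\bar V_t$, which excludes $z_t$, and the lemma's $\bar V_{t-1}$; the rest is routine. We then sum the two contributions, using $x\le\min\{1,x\}+x\mathbf 1\{x>1\}$.
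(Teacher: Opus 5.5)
Your proposal is correct and follows the paper's proof essentially step for step. The paper also splits the steps according to whether $\Vert z_t\Vert^2_{\bar V_t^{-1}}$ is below one or at least one, bounds the small part by the $\min$-sum of Lemma~\ref{lem:ay_lem11}, counts the large steps via determinant doubling, and multiplies that count by the uniform bound $(C_x^2+2)/\lambda_1$; your pointwise inequality $x\le\min\{1,x\}+x\mathbf 1\{x>1\}$ is just a compact form of the same index-set split.
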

\proof{Proof of Lemma~\ref{lem:epl_bound}}
We define index sets $\mac I_g := \{ t=1,\dots,T: \Vert z_t  \Vert^2_{\bar V_t^{-1}} < 1\}$ and $\mac I_b :=\{t=1,\dots,T: \Vert z_t  \Vert_{\bar V_t^{-1}}^2\geq 1\}$ with $\mac I_g +\mac I_b = \{1,\dots,T\}$ and $\sum_{t=1}^T \Vert z_t  \Vert_{\bar V_t^{-1}}^2=\sum_{t\in \mac I_g} \Vert z_t  \Vert_{\bar V_t^{-1}}^2+\sum_{t\in \mac I_b} \Vert z_t  \Vert_{\bar V_t^{-1}}^2$. For the $\mac I_g$ summation we directly apply Lemma~\ref{lem:ay_lem11} as for $t\in \mac I_g$ we have $ \Vert z_t  \Vert^2_{\bar V_t^{-1}} \leq \min  \{1, \Vert z_t  \Vert^2_{\bar V_t^{-1}}\}$ by definition. Note that our $\bar V_t$ is equivalent to Lemma~\ref{lem:ay_lem11}'s definition of $\bar V_{t-1}$, $z_t$ takes the place of $X_t$, $d=2M+1$, and $L=\sqrt{C_x^2+2}$. This gives $\sum_{t\in \mac I_g} \Vert z_t  \Vert_{\bar V_t^{-1}}^2 = \sum_{t\in \mac I_g} \min\{1,\Vert z_t  \Vert_{\bar V_t^{-1}}^2\} \leq \sum_{t=1}^T \min\{1,\Vert z_t  \Vert_{\bar V_t^{-1}}^2\}\leq 2(2M+1)\log (1+\frac{T(C_x^2+2)}{(2M+1)\lambda_1} )$.
For the $\mac I_b$ summation we upper bound the summands and show that the number of terms in the summation grows logarithmically in $T$. We first note that for $t \in \mac I_b$ we have $\det(\bar V_{t+1})\geq 2\det(\bar V_t)$, which can be seen as follows. By definition, $\bar V_{t+1} = V +\sum_{s=1}^{t}z_sz_s^\top$, so we equivalently have $\bar V_{t+1} = \bar V_{t}+z_{t}z_{t}^\top=\bar V_{t}(I_{2M+1} + \bar V_{t}^{-1}z_{t}z_{t}^\top)$. Taking the determinant of both sides and applying the matrix determinant lemma (\textcite{dingEigenvaluesRankoneUpdated2007} Lemma 1.1) we have $\det(\bar V_{t+1}) = \det(\bar V_{t})(1+ \Vert z_{t}\Vert_{\bar V_{t}^{-1}}^2 )\geq 2 \det(\bar V_{t})$. Since $\bar V_t$ is constructed from the sum of a PD matrix $V$ and PSD matrices, $\det(\bar V_{t+1})\geq \det(\bar V_t)$ during the time steps included in $\mac I_g$. Thus at the end of time step $T$ it must follow that $\det(\bar V_{T+1}) \geq 2^{|\mac I_b|}\det (V)$, further implying that $|\mac I_b|\leq \log_2( \frac{\det(\bar V_{T+1})}{\det (V)})=\frac{1}{\log(2)}\log( \frac{\det(\bar V_{T+1})}{\det (V)})$. Note that this quantity can be bounded directly by Lemma~\ref{lem:ay_lem11}, i.e.: $|\mac I_b| \leq \frac{2M+1}{\log(2)}\log (1+\frac{T(C_x^2+2)}{(2M+1)\lambda_1} )$. Thus we can complete the bounding of the $\mac I_b$ summation by upper bounding $ \Vert z_t  \Vert_{\bar V_t^{-1}}^2=z_t^\top \bar V_t^{-1} z_t$. Note that $z_t\neq 0$ for $t \in \mac I_b$ as this would imply $ \Vert z_t\Vert^2_{\bar V_t^{-1}} =0$ (and thus $t\in \mac I_g$). By the upper bound of the Rayleigh quotient (\textcite{hornMatrixAnalysis1985} Theorem 4.2.2) we have $ \Vert z_t  \Vert_{\bar V_t^{-1}}^2 \leq  \Vert z_t  \Vert_2^2 \lambda_{\text{max}}(\bar V_t^{-1})= \Vert z_t  \Vert_2^2 \frac{1}{\lambda_{\text{min}}(\bar V_t)}\leq  \Vert z_t  \Vert_2^2 \frac{1}{\lambda_1}\leq \frac{C_x^2+2}{\lambda_1}$. This holds uniformly for $t=1,\dots, T$. Thus we have $\sum_{t\in \mac I_b}\left \Vert z_t \right \Vert_{\bar V_t^{-1}}^2 \leq |\mac I_b| \nicefrac{(C_x^2+2)}{\lambda_1}$. The lemma follows by substituting each summation's bound: $\sum_{t=1}^T \left \Vert z_t \right \Vert^2_{\bar V_t^{-1}} = \sum_{t\in \mac I_g}\left \Vert z_t \right \Vert_{\bar V_t^{-1}}^2 + \sum_{t\in \mac I_b}\left \Vert z_t \right \Vert_{\bar V_t^{-1}}^2$.
\halmos
\endproof
\begin{lemma}{\textbf{(Excerpted \textcite{abbasi-yadkoriImprovedAlgorithmsLinear2011} Lemma 12)}}\label{lem:ay_lem12}
Let $A$, $B$, and $C$ be positive semidefinite matrices such that $A=B+C$. Then $\sup_{x\neq 0}\frac{x^\top A x}{x^\top Bx}\leq \frac{\det(A)}{\det(B)}$.
\end{lemma}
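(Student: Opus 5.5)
The plan is to derive the result from the matrix determinant identity combined with a simultaneous change of basis. Write $C := A - B \succeq 0$.

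First, the degenerate case. If $B$ is singular, choose $x \neq 0$ with $Bx = 0$. Then the ratio is $+\infty$ (or $0/0$), and $\det(B)=0$ makes the right-hand side $+\infty$ as well. So the statement is only meaningful, and the claim is read, for $B \succ 0$; this is the only situation in which it is used downstream, namely with $B = \bar V^\circ_t \succeq \lambda_1 I$. From here on I assume $B \succ 0$.

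Next, normalize. Since $B \succ 0$, set $y = B^{1/2}x$. Then
\[
\sup_{x \neq 0} \frac{x^\top A x}{x^\top B x} = \sup_{y \neq 0} \frac{y^\top (I + D)\, y}{y^\top y} = 1 + \lambda_{\max}(D),
\]
where $D := B^{-1/2} C B^{-1/2} \succeq 0$. On the determinant side,
\[
\frac{\det(A)}{\det(B)} = \det\bigl(B^{-1/2} A B^{-1/2}\bigr) = \det(I + D) = \prod_{j} \bigl(1 + \lambda_j(D)\bigr).
\]

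The conclusion then follows from the eigenvalues. Every $\lambda_j(D)$ is nonnegative, so each factor $1 + \lambda_j(D)$ is at least $1$. Dropping all factors except the largest gives
\[
\prod_j \bigl(1 + \lambda_j(D)\bigr) \geq 1 + \lambda_{\max}(D),
\]
which is exactly the claimed inequality.

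Nothing here should be an obstacle; the only point needing care is the invertibility of $B$. That is handled by the reduction above, and in the paper's application $B \succeq \lambda_1 I$ holds anyway.
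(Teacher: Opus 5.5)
Your argument is correct, but it takes a different route from the source the paper relies on; the paper gives no proof of its own and defers to Lemma 12 of Abbasi-Yadkori et al.

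\textbf{The cited argument.} That source first handles a rank-one update $A = B + mm^\top$. Cauchy--Schwarz gives $(x^\top m)^2 \le (x^\top B x)(m^\top B^{-1} m)$, so the ratio is at most $1 + \Vert m \Vert_{B^{-1}}^2$. This equals $\det(A)/\det(B)$ by the matrix determinant lemma. A general $C \succeq 0$ is then split into rank-one pieces $m_1 m_1^\top + \dots + m_k m_k^\top$. The bound is telescoped through the intermediate matrices $V_s = B + \sum_{j<s} m_j m_j^\top$.

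\textbf{Your argument.} You instead whiten once by $B^{-1/2}$. This expresses both sides exactly through the spectrum of the single matrix $D = B^{-1/2} C B^{-1/2}$, and reduces the claim to $\lambda_{\max}(I+D) \le \det(I+D)$.

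\textbf{Comparison.} Your route is shorter and needs no decomposition of $C$ or intermediate matrices. It also exposes the slack: equality holds exactly when $D$ (equivalently $C$) has rank at most one, which is the base case of the cited argument. The cited route, in turn, uses only Cauchy--Schwarz and the matrix determinant lemma, which the paper already invokes in Lemma~\ref{lem:epl_bound}. Its rank-one structure also mirrors how $\bar V_t$ is built from the updates $z_s z_s^\top$. Both arguments need $B \succ 0$, so your restriction to that case is consistent with the source.

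\textbf{A small correction.} Your remark about the downstream use misidentifies $B$. In Lemma~\ref{lem:old_norm} the role of $B$ is played by $(\bar V_t)^{-1}$, not $\bar V^\circ_t$. So the relevant fact is not $B \succeq \lambda_1 I$, but simply that $(\bar V_t)^{-1}$ is positive definite, which holds since $\bar V_t \succeq \lambda_1 I$.
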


\begin{lemma}{}\label{lem:old_norm}
$\left \Vert z_t \right \Vert^2_{(\bar V^{\circ}_t)^{-1}} \leq (1+C_d)\left \Vert z_t \right \Vert^2_{\bar V_{t}^{-1}}$ for $t=1,\dots,T$.
\end{lemma}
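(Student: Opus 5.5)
The plan is to apply Lemma~\ref{lem:ay_lem12} with $A=\bar V_t$ and $B=\bar V^{\circ}_t$, and then use the determinant trigger in Algorithm~\ref{alg:ucb_epoch} to bound the resulting ratio by $1+C_d$.

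First fix $t$ and let $\tau_k\le t$ be the start of the active epoch, so that $\bar V^\circ_t=\bar V_{\tau_k}$. Write $\bar V_t=\bar V_{\tau_k}+C$ with $C:=\sum_{s=\tau_k}^{t-1}z_sz_s^\top\succeq 0$. Both $\bar V_t$ and $\bar V_{\tau_k}$ are positive definite, since each contains $\lambda_1 I_{2M+1}$.

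Second, bound the determinant ratio. Within an epoch the determinant trigger has not fired at any step $s$ with $\tau_k<s\le t$. Hence $\det(\bar V_t)\le(1+C_d)\det(\bar V_{\tau_k})$. This uses the same indexing convention as the trigger analysis in Lemma~\ref{lem:mu_sums}: the check at step $t$ uses $\bar V_t$, and a new epoch would otherwise begin at $t$ itself, in which case $\bar V^\circ_t=\bar V_t$ and the claim is trivial. Lemma~\ref{lem:ay_lem12} then gives
\[
\sup_{y\neq 0}\frac{y^\top \bar V_t y}{y^\top \bar V^\circ_t y}\le \frac{\det(\bar V_t)}{\det(\bar V^\circ_t)}\le 1+C_d,
\]
that is, $\bar V_t\preceq(1+C_d)\bar V^\circ_t$.

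Third, pass to inverses. Matrix inversion is operator-antitone on positive definite matrices, so $(\bar V^\circ_t)^{-1}\preceq(1+C_d)\bar V_t^{-1}$. Evaluating this quadratic-form inequality at $z_t$ gives the claim.

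The main obstacle is the second step: one must check that the trigger condition is compared against exactly the same matrix $\bar V_t$ used in $\Vert z_t\Vert_{\bar V_t^{-1}}$. If the indexing of the trigger were off by one, one could only use the weaker bound $\det(\bar V_{t})\le \det(\bar V_{t-1})(1+\Vert z_{t-1}\Vert^2_{\bar V_{t-1}^{-1}})$, and that would not give $1+C_d$ without an extra factor. I expect the algorithm's update order (observe $x_t$, update $\bar V_t$, then check the trigger before acting) to resolve this cleanly.
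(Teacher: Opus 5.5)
Your proposal is correct and follows the paper's proof: both combine Lemma~\ref{lem:ay_lem12} with the determinant trigger $\det(\bar V_t)\le(1+C_d)\det(\bar V^\circ_t)$, using operator antitonicity of inversion once. The only difference is ordering. The paper applies the lemma directly to $A=(\bar V^\circ_t)^{-1}$, $B=\bar V_t^{-1}$, invoking antitonicity first to make $C$ PSD and then using $\det(A)/\det(B)=\det(\bar V_t)/\det(\bar V^\circ_t)$. You apply it to $\bar V_t$ and $\bar V^\circ_t$, where $C$ is PSD because it is a sum of outer products, and invert afterward. Your indexing concern resolves as you expected: the trigger at step $t$ is checked against $\bar V_t$, which already contains $z_{t-1}$, before $u_t$ is chosen.
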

\proof{Proof of Lemma~\ref{lem:old_norm}}
If $z_t=\ve{0}_{2M+1}$ then the lemma follows trivially, so we proceed for $z_t\neq \ve{0}_{2M+1}$. $\bar V^{\circ}_t$ is fixed at the beginning of the current epoch $k$, implying that $\bar V_t = \bar V^{\circ}_t + \sum_{s=\tau_k}^{t-1}z_sz_s^\top$. Since $\bar V_t$ and $\bar V^{\circ}_t$ are PD by construction and $\bar V_t \succeq \bar V_t^\circ$, PD matrices $(\bar V_t)^{-1}$ and $(\bar V^{\circ}_t)^{-1}$ exist with $(\bar V^{\circ}_t)^{-1}\succeq (\bar V_t)^{-1}$. Thus the difference $(\bar V^{\circ}_t)^{-1}-(\bar V_t)^{-1}$ is PSD. Trivially we have $(\bar V^{\circ}_t)^{-1} = (\bar V_t)^{-1} + [(\bar V^{\circ}_t)^{-1}-(\bar V_t)^{-1}]$. Applying Lemma~\ref{lem:ay_lem12} with $\ma{A}=(\bar V^{\circ}_t)^{-1}$, $\ma{B}=(\bar V_t)^{-1}$ and $\ma{C}=(\bar V^{\circ}_t)^{-1}-(\bar V_t)^{-1}$ implies for any $t=1,\dots,T$, $\frac{\left \Vert z_t \right \Vert^2_{(\bar V^{\circ}_t)^{-1}}}{\left \Vert z_t \right \Vert^2_{\bar V_{t}^{-1}}}\leq \sup_{z_t\neq0}\frac{z_t^\top(\bar V^{\circ}_t)^{-1}z_t }{z_t^\top(\bar V_t)^{-1} z_t} \leq \frac{\det\left((\bar V^{\circ}_t)^{-1}\right)}{\det\left((\bar V_t)^{-1} \right)} = \frac{\det \left(\bar V_t \right)}{\det \left(\bar V^{\circ}_t \right)}$. Algorithm~\ref{alg:ucb_epoch} triggers new epochs such that $\det\left(\bar V_t\right) \leq (1+C_d)\det\left(\bar V^{\circ}_t \right)$ holds for $t=1,\dots,T$. so the RHS is upper bounded by $(1+C_d)$ and the lemma follows.\halmos
\endproof

\begin{lemma}{}\label{lem:bound_v}
Let $\delta' \in (0,1)$ and define $\zeta_t := \ex_{\theta^*,\mu^*}\big[\tilde J^{\circ}_{t,\delta}(f(\cdot)) - J^{\pi_t}(f(\cdot))\big] -\big(\tilde J^{\circ}_{t,\delta}(x_{t+1}) - J^{\pi_t}(x_{t+1}) \big)$. With probability at least $1-\delta'$, $\sum_{t=1}^T \zeta_t \leq \sqrt{8T(C_{\tilde J,T+1,\delta}+C_J)^2\log(\nicefrac{1}{\delta'})}$. 
\end{lemma}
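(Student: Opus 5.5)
The plan is to show that $\{\zeta_t\}_{t=1}^T$ is a bounded martingale difference sequence and then apply Azuma--Hoeffding, as was done for $\xi_t$ in Lemma~\ref{lem:bound_iv}.

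\textbf{Measurability.} Take the filtration $\mac F_t$ from Appendix~\ref{app:gen_model}. The function $\tilde J^\circ_{t,\delta}$ is built only from estimators, confidence radii and $\bar V^\circ_t$ computed at the start of the active epoch. The policy $\pi_t$ is derived from $\tilde J^\circ_{t,\delta}$. The action is $u_t=\pi_t(x_t)$ with $x_t$ already observed. So $\tilde J^\circ_{t,\delta}$, $J^{\pi_t}$, $x_t$ and $u_t$ are all determined before $d_t$ and $w_t$ are drawn, i.e.\ they are measurable with respect to the $\sigma$-field generated by the history up to $x_t$ together with $u_t$ (essentially $\mac F^d_t$).

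\textbf{Zero conditional mean.} Conditional on that history, $x_{t+1}=f(x_t,u_t,d_t,w_t)$. Here $d_t$ is drawn from the true adherence model with $\mu^*$ and $w_t$ is independent noise. Hence
\[
\ex\bigl[\tilde J^\circ_{t,\delta}(x_{t+1})-J^{\pi_t}(x_{t+1})\,\big|\,\text{history}\bigr]=\ex_{\theta^*,\mu^*}\bigl[\tilde J^\circ_{t,\delta}(f(\cdot))-J^{\pi_t}(f(\cdot))\bigr],
\]
so $\ex[\zeta_t\mid\text{history}]=0$. Moreover $\zeta_t$ is measurable at time $t+1$, so $\{\zeta_t\}$ is a martingale difference sequence with respect to the shifted filtration.

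\textbf{Boundedness.} By Lemma~\ref{lem:unique_opt}, $|\tilde J^\circ_{t,\delta}|\le C_{\tilde J,T+1,\delta}$ for all $t\le T+1$. This uses that $C_b(t,\delta)$ is nondecreasing in $t$, since $\alpha^\theta_t$ increases in $t$, and $C_{\tilde J,T+1,\delta}$ is the constant already used in Lemma~\ref{lem:bound_ii}. Also $|J^{\pi_t}|\le C_J$. Therefore the random quantity $g_t:=\tilde J^\circ_{t,\delta}(x_{t+1})-J^{\pi_t}(x_{t+1})$ satisfies $|g_t|\le C_{\tilde J,T+1,\delta}+C_J$, and so does its conditional mean. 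This gives $|\zeta_t|\le 2(C_{\tilde J,T+1,\delta}+C_J)$.

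\textbf{Concentration.} Apply the one-sided Azuma--Hoeffding inequality with increments bounded by $c=2(C_{\tilde J,T+1,\delta}+C_J)$:
\[
\pr\Bigl(\textstyle\sum_t\zeta_t\ge q\Bigr)\le\exp\bigl(-q^2/(2Tc^2)\bigr).
\]
Setting the right-hand side to $\delta'$ gives $q=\sqrt{2Tc^2\log(1/\delta')}=\sqrt{8T(C_{\tilde J,T+1,\delta}+C_J)^2\log(1/\delta')}$, which is exactly the stated bound.

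\textbf{Main obstacle.} The delicate step is the measurability argument: confirming that the epoch-level quantities in $\tilde J^\circ_{t,\delta}$ and $\pi_t$ do not depend on $d_t$ or $w_t$. In Algorithm~\ref{alg:ucb_epoch} the epoch-trigger check happens after observing $x_t$, using only past data, and before $u_t$ is chosen, so this holds. A secondary point is ensuring the $C_{\tilde J}$ bound applies at index $t\le T$ with the $T+1$ constant; the monotonicity of $C_b$ in $t$ covers this.
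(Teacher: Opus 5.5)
Your proposal is correct and matches the paper's proof: Lemma~\ref{lem:unique_opt} bounds $\tilde J^\circ_{t,\delta}$ and $J^{\pi_t}$, giving $|\zeta_t|\le 2(C_{\tilde J,T+1,\delta}+C_J)$; $\zeta_t$ is a martingale difference sequence because the expectation is taken under $(\theta^*,\mu^*)$; and one-sided Azuma--Hoeffding yields exactly the stated bound. Your explicit check that the epoch-level quantities and $u_t$ are fixed before $d_t$ and $w_t$ are drawn is a detail the paper leaves implicit, but it is the same argument.
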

\proof{Proof of Lemma~\ref{lem:bound_v}}
By Lemma~\ref{lem:unique_opt}, for $t\leq T$ and $\delta\in(0,1)$ $\Vert\tilde J^\circ_{t,\delta}\Vert_{\infty}\leq C_{\tilde J,T,\delta} \leq C_{\tilde J,T+1,\delta}$ and $\Vert J^{\pi_t}\Vert_\infty\leq C_J$, so $\Vert J^\circ_{t,\delta} - J^{\pi_t}\Vert_\infty \leq  C_{\tilde J,T+1,\delta}+C_J$. Observe that $\{\zeta_t\}$ is a $\{\mac F_t\}$-adapted martingale difference sequence (as the expectation is with respect to the true parameters) with $\vert \zeta_t\vert \leq 2(C_{\tilde J,T+1,\delta}+C_J)$. We apply Azuma-Hoeffding \citep{wainwrightHighdimensionalStatisticsNonasymptotic2019} for $\delta'\in(0,1)$. Thus $\pr\big(\sum_{t=1}^T\zeta_t\geq q\big) \leq \exp{(-\frac{q^2}{8\sum_{t=1}^T(C_{\tilde J,T+1,\delta}+C_J)^2})}$. The lemma follows by solving for $q$ so the RHS equals $\delta'$.
\halmos
\endproof

\begin{lemma}{}\label{lem:bound_vi}
$\sum\limits_{t=1}^T [\tilde J^\circ_{t,\delta}(x_{t+1})-\tilde J^{\circ}_{t+1,\delta}(x_{t+1})+J^{\pi_{t+1}}(x_{t+1})-J^{\pi_t}(x_{t+1}) ]\leq 2(K_d +K_N)(C_{\tilde J,T+1,\delta}+C_J)$.
\end{lemma}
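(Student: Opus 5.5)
The plan is to exploit the epoch structure of Algorithm~\ref{alg:ucb_epoch}. Within an epoch, the operator $\tilde{\mac T}^\circ_{t,\delta}$ is frozen, and so is the stationary policy $\pi^{(k)}$. Hence whenever steps $t$ and $t+1$ lie in the same epoch, $\tilde J^\circ_{t,\delta}=\tilde J^\circ_{t+1,\delta}$ and $\pi_t=\pi_{t+1}$, which gives $J^{\pi_t}=J^{\pi_{t+1}}$, and the summand vanishes identically. The only nonzero summands are therefore those indices $t$ at which step $t+1$ starts a new epoch. This includes the final index $t=T$ if a recomputation is (notionally) performed at $T+1$; that case is absorbed by the same bound.

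For each such switching index, I would bound the summand crudely using Lemma~\ref{lem:unique_opt}. We have $\Vert\tilde J^\circ_{t,\delta}\Vert_\infty,\Vert\tilde J^\circ_{t+1,\delta}\Vert_\infty\leq C_{\tilde J,T+1,\delta}$ and $\Vert J^{\pi}\Vert_\infty\leq C_J$ for every stationary policy. The triangle inequality then gives
\begin{align}
\vert\tilde J^\circ_{t,\delta}(x_{t+1})-\tilde J^\circ_{t+1,\delta}(x_{t+1})\vert+\vert J^{\pi_{t+1}}(x_{t+1})-J^{\pi_t}(x_{t+1})\vert\leq 2(C_{\tilde J,T+1,\delta}+C_J),
\end{align}
so the whole sum is at most $2(C_{\tilde J,T+1,\delta}+C_J)$ times the number of epoch switches. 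It then suffices to show that the number of epochs is at most $K_d+K_N$.

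The main step is counting epochs, split by which trigger fires. \emph{Determinant triggers:} each such trigger multiplies $\det(\bar V^{(k)})$ by more than $(1+C_d)$, and $\det(\bar V^{(k)})$ never decreases across the other triggers because $\bar V_t$ is monotone in the PSD order. Hence after $n_d$ determinant triggers, $(1+C_d)^{n_d}\det(\lambda_1 I)\leq\det(\bar V_{T+1})$. Lemma~\ref{lem:ay_lem11} with $d=2M+1$ and $L^2=C_x^2+2$ bounds $\log\bigl(\det\bar V_{T+1}/\det(\lambda_1 I)\bigr)\leq(2M+1)\log\bigl(1+\frac{T(C_x^2+2)}{(2M+1)\lambda_1}\bigr)$, which yields $n_d\leq K_d$. \emph{Count triggers for action $i$:} I expect this to be the slightly delicate step, because the trigger $N_i>(1+C_N)N_i^{(k)}$ fires immediately whenever $N_i^{(k)}=0$ and $N_i\geq1$. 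I would handle it by charging one trigger per action to the transition $N_i^{(k)}=0\to N_i\geq 1$. After that, each trigger attributed to $i$ multiplies the recorded count $N_i^{(k)}$ by more than $(1+C_N)$, and the count never decreases under the other triggers. Since $N_i\leq T$, this gives at most $\log(T)/\log(1+C_N)$ further triggers, for a total of at most $1+\log T/\log(1+C_N)$ per action. Summing over the $M$ actions gives $K_N$. A trigger firing for both reasons is counted at most twice, which only loosens the bound.

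Combining these, the number of epoch switches is at most $K_d+K_N$. Multiplying by the per-switch bound gives the stated $2(K_d+K_N)(C_{\tilde J,T+1,\delta}+C_J)$.
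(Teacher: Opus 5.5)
Your proposal is correct and follows essentially the same route as the paper: summands vanish within an epoch, each switching summand is bounded by $2(C_{\tilde J,T+1,\delta}+C_J)$ via Lemma~\ref{lem:unique_opt}, and the number of switches is bounded by $K_d+K_N$ using determinant growth with Lemma~\ref{lem:ay_lem11} and geometric growth of the recorded action counts. Your explicit charging of the $N_i^{(k)}=0\to N_i\geq 1$ trigger is exactly what the ``$+1$'' in the paper's bound $n_{N,i}\leq 1+\log_{1+C_N}(T)$ accounts for.
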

\proof{Proof of Lemma~\ref{lem:bound_vi}}
For Algorithm~\ref{alg:ucb_epoch}, $\tilde J^{\circ}_{t,\delta}=\tilde J^{\circ}_{t+1,\delta}$ and $J^{\pi_t}=J^{\pi_{t+1}}$ for every time step except the final time step of each epoch. We define the index set $\mac I_c$ to contain such steps, i.e., $\{t\in\{1,\dots,T\}:t=\tau_{k}-1,\;k=2,\dots,K+1\}$. The lemma summands are nonzero only for time indices in $\mac I_c$. By Lemma~\ref{lem:unique_opt}, $\Vert \tilde J_{t,\delta}\Vert_\infty \leq C_{\tilde J,T,\delta}\leq C_{\tilde J,T+1,\delta}$ and $\Vert J^{\pi_t}\Vert_\infty \leq C_J$ for $t=1,\dots,T+1$, thus the differences between these value function are also bounded. Specifically, $\tilde J^\circ_{t,\delta}(x_{t+1})-\tilde J^{\circ}_{t+1,\delta}(x_{t+1})\leq 2C_{\tilde J,T+1,\delta}$ and $J^{\pi_{t+1}}(x_{t+1})-J^{\pi_t}(x_{t+1}) \leq 2C_J$. We next bound the number of new epoch triggers $\vert \mac I_c\vert$. First consider the determinant trigger. Denote $n_d$ as the number of times the determinant trigger condition in Algorithm~\ref{alg:ucb_epoch}, $\det(\bar V_{t+1}) > (1+C_d)\det (\bar V^{\circ}_t)$, is met. Further note that, as a second moment matrix, time steps where this condition on $\bar V_{t+1}$ is not met cannot decrease the determinant. Thus for a trajectory $t=1,\dots,T$, we have $\det(\bar V_{T+1}) \geq (1+C_d)^{n_d}\det(\bar V^{\circ}_1)=(1+C_d)^{n_d}\det(V)$. Consequently, $n_d \leq \log_{1+C_d}(\frac{\det(\bar V_{T+1})}{\det(V)} )=\frac{1}{\log(1+C_d)}\log(\frac{\det(\bar V_{T+1})}{\det(V)} )$. Using Lemma~\ref{lem:ay_lem11} to bound the $\log \det$ ratio, we then have $n_d \leq \frac{(2M+1)}{\log(1+C_d)}\log (1+\frac{T(C_x^2+2)}{(2M+1)\lambda_1})$. Next, we consider the action count trigger, with $N_{i,t}:=\sum_{s=1}^{t-1}\mathbf{1}_{\{u_s=e_i\}}$. Let $n_{N,i}$ denote the count of triggering time steps, i.e., when $N_{i,t+1} > (1+C_N)N_{i,\tau_k}$, for $i \in \{1,\dots,M\}$. $N_{i,T+1} \leq T$, implying $n_{N,i}\leq1+\log_{1+C_N}(T)=1+\frac{\log(T)}{\log(1+C_N)}$. Thus it follows that $\vert \mac I_c \vert \leq n_d +\sum_{i=1}^Mn_{N,i} \leq \frac{(2M+1)}{\log(1+C_d)}\log (1+\frac{T(C_x^2+2)}{(2M+1)\lambda_1}) + M(1+\frac{\log(T)}{\log(1+C_N)})$. The lemma follows by bounding the number of nonzero summation indices and the maximum contribution of each nonzero summand, i.e., $\vert \mac I_c \vert \cdot 2(C_{\tilde J,T+1,\delta}+C_J)$.
\halmos
\endproof

%
%
%



\end{document}